\theoremstyle{plain}
\newtheorem{theorem}{Theorem}[section]
\newtheorem{proposition}[theorem]{Proposition}
\newtheorem{lemma}[theorem]{Lemma}
\theoremstyle{definition}
\newtheorem{definition}[theorem]{Definition}
\newtheorem{assumption}[theorem]{Assumption}
\newtheorem{condition}[theorem]{Condition}
\theoremstyle{remark}
\def\eqref#1{equation~\ref{#1}}
\def\1{\bm{1}}
\def\mF{{\bm{F}}}
\def\mpi{{\bm{\pi}}}
\DeclareMathAlphabet{\mathsfit}{\encodingdefault}{\sfdefault}{m}{sl}
\SetMathAlphabet{\mathsfit}{bold}{\encodingdefault}{\sfdefault}{bx}{n}
\def\gA{{\mathcal{A}}}
\def\gD{{\mathcal{D}}}
\def\gF{{\mathcal{F}}}
\def\gG{{\mathcal{G}}}
\def\gH{{\mathcal{H}}}
\def\gM{{\mathcal{M}}}
\def\gN{{\mathcal{N}}}
\def\gO{{\mathcal{O}}}
\def\gS{{\mathcal{S}}}
\def\gT{{\mathcal{T}}}
\def\gU{{\mathcal{U}}}
\def\gV{{\mathcal{V}}}
\def\gW{{\mathcal{W}}}
\def\gX{{\mathcal{X}}}
\def\gY{{\mathcal{Y}}}
\def\gZ{{\mathcal{Z}}}
\def\sI{{\mathbb{I}}}
\def\sP{{\mathbb{P}}}
\def\sR{{\mathbb{R}}}
\def\sT{{\mathbb{T}}}
\def\sX{{\mathbb{X}}}
\def\Prob{{\mathbb{P}}}
\newcommand{\wt}{\widetilde}
\newcommand{\wh}{\widehat}
\def\wtF{{\wt{F}}}
\newcommand{\R}{\mathbb{R}}
\newcommand{\EE}{\mathbb{E}}
\newcommand{\RR}{\mathbb{R}}
\newcommand{\TT}{\mathbb{T}}
\newcommand{\cA}{\mathcal{A}}
\newcommand{\cH}{\mathcal{H}}
\newcommand{\cF}{\mathcal{F}}
\newcommand{\cS}{\mathcal{S}}
\newcommand{\cV}{\mathbb{V}}
\newcommand{\cN}{\mathcal{N}}
\newcommand{\cD}{\mathcal{D}}
\newcommand{\cT}{\mathcal{T}}
\newcommand{\innerproduct}[2]{\langle #1, #2 \rangle}
\newcommand{\cM}{\mathcal{M}}
\newcommand{\MLE}{\mathrm{MLE}}
\newcommand{\LSR}{\mathrm{LSR}}
\newcommand{\TV}{\mathrm{TV}}
\newcommand{\poly}{\mathrm{poly}}
\newcommand{\abs}[1]{\left| #1 \right|}
\newcommand{\bracket}[1]{\left(#1\right)}
\newcommand{\mbracket}[1]{\left[#1\right]}
\newcommand{\norm}[1]{\left\| #1 \right\|}
\newcommand{\sets}[1]{\left\{ #1 \right\}}
\newcommand{\PP}{\mathbb{P} }
\newcommand{\dist}{\mathrm{Dist}}
\newif\ifsup\supfalse
\DeclareMathOperator*{\argmax}{argmax}
\DeclareMathOperator*{\argmin}{argmin}
\newcommand{\mtheta}{{\bm{\theta}}}
\newcommand{\mTheta}{{\bm{\Theta}}}
\newcommand{\mtau}{{\bm\tau}}
\begin{document}

% \twocolumn[
% \icmltitle{Provable Risk-Sensitive Distributional Reinforcement Learning \\ with General Function Approximation}
\title{Provable Risk-Sensitive Distributional Reinforcement Learning with General Function Approximation}

\author{
    Yu Chen$^{1, *}$,\quad
    Xiangcheng Zhang$^{1, *}$,\quad Siwei Wang$^{2}$, \quad 
    {Longbo Huang}$^{1, \dagger}$
    \\
    $^{1}$ Tsinghua University, \quad 
    $^{2}$ Microsoft Research Asia \\
    \texttt{\{chenyu23, xc-zhang21\}@mails.tsinghua.edu.cn},\\
    \texttt{siweiwang@microsoft.com}, 
    \texttt{longbohuang@tsinghua.edu.cn}
}

\date{}

\maketitle

\newcommand{\yu}[1]{{\color{blue}[Yu: #1]}}
\newcommand{\longbo}[1]{{\color{brown}[Longbo: #1]}}
\newcommand{\xiangcheng}[1]{{\color{red}[Xiangcheng: #1]}}
\newcommand{\siwei}[1]{{\color{purple}[siwei: #1]}}

\newcommand{\TODO}[1]{{\color{purple}[TODO: #1]}}

\renewcommand{\thefootnote}{\fnsymbol{footnote}}
\footnotetext[1]{These authors contributed equally.}
\footnotetext[2]{Corresponding author.}
\renewcommand*{\thefootnote}{\arabic{footnote}}

\begin{abstract}
% \longbo{try to improve. not very exciting}
In the realm of reinforcement learning (RL), accounting for risk is crucial for making decisions under uncertainty, particularly in applications where safety and reliability are paramount.
In this paper, we introduce a general framework on Risk-Sensitive Distributional Reinforcement Learning (RS-DisRL), with static Lipschitz Risk Measures (LRM) and general function approximation. Our framework covers a broad class of risk-sensitive RL, and facilitates  analysis of the impact of estimation functions on the effectiveness of RSRL strategies and evaluation of their sample complexity. 
We design two innovative meta-algorithms: \texttt{RS-DisRL-M}, a model-based strategy for model-based function approximation, and \texttt{RS-DisRL-V}, a model-free approach for general value function approximation. With our novel estimation techniques via Least Squares Regression (LSR) and Maximum Likelihood Estimation (MLE) in distributional RL with augmented Markov Decision Process (MDP), we derive the first $\widetilde{\mathcal{O}}(\sqrt{K})$ dependency of the regret upper bound for RSRL with static LRM, marking a pioneering contribution towards statistically efficient algorithms in this domain.
% \yu{to be revised}
% \yu{to be revised, add the highlight of the novelty of LSR and MLE approaches.}

% \longbo{do we do anything new to LSR or MLE?}\xiangcheng{perhaps we are the first to use LSR and MLE to estimate distribution functions} Leveraging our novel Least Squares Regression (LSR) and Maximum Likelihood Estimation (MLE) techniques, we show that both the meta-algorithms achieve the optimal $\widetilde\gO(\sqrt{K})$ dependency for the regret upper bound in RS-DisRL with static LRM, marking a pioneering contribution towards statistically efficient algorithms in this domain. \longbo{comment more on the technical novelty? merely saying on sqrt K does not look significant}
\end{abstract}

% \section{DRAFT: IDEAS}
% 1. Introduces the empirical success of distribution RL in Risk-sensitive Task. Despite recent studies, the distribution RL with general function approximation remains to be elucidated.

% 2. Model free meta algorithm: estimation the distribution $P_h^\pi(s_h^\dag,a_h)$ via CDF $F_h^\pi(s_h^\dag,a_h)$ and PDF $f_h^\pi(s_h^\dag,a_h)$

\section{Introduction}

Reinforcement learning (RL) \cite{sutton2018reinforcement} has emerged as a powerful framework for sequential decision-making in dynamic and uncertain environments. While traditional RL methods, predominantly focused on maximizing the expected return, have seen significant advancements through approaches such as Q-learning \cite{mnih2015humanlevel, jin2018qlearning} and policy gradients \cite{kakade2001natural, cai2020provably}, they often fall short in real-world scenarios demanding strict risk control, such as financial investment \cite{bielecki2000risk} , medical treatment \cite{ernst2006clinical}, and automous driving \cite{candela2023risk}.

% \longbo{try to improve this paragraph. how about the following flow. 1. risk sensitive RL is important, but there is limitation blabla. 2. in this paper, we focus on risk senstive disRL, due to abc reasons. we focus on static risk, due to abc. however, it is challenging here, because of abc.}

The significance of comprehending risk management in RL has led to the emergence of Risk-Sensitive RL (RSRL). Unlike risk-neutral RL, which primarily focuses on maximizing expected returns, RSRL seeks to optimize risk metrics, such as entropy risk measures (ERM)  \cite{fei2020risksensitive, fei2021risksensitive} or conditional value-at-risk (CVaR) \cite{wang2023nearminimaxoptimal}, of the possible \emph{cumulative reward} which emphasizes its distributional characteristics. However, traditional RL framework based on Q-learning which typically considers the mean of reward-to-go and corresponding Bellman equation, cannot efficiently capture the characteristics of the cumulative reward's distribution. Therefore, there has been an upsurge of interest in Distributional RL (DisRL) due to its capacity to understand the intrinsic distributional attributes of cumulative rewards, which has already achieved significant empirical success in risk-sensitive tasks \cite{bellemare2017distributional,dabney2018implicit,keramati2020being,urpi2020riskaverse,lim2022distributional}.

However, there remains a dearth of comprehensive theoretical insights into the sample complexity of Risk-Sensitive Distributional RL (RS-DisRL), particularly in scenarios encompassing general risk measures and function approximation. Previous theory works of RS-DisRL have primarily been constrained to tabular MDPs \cite{bastani2022regret, liang2022bridging} which fail in extending to infinite-state settings, or have been confined to specific risk measures such as CVaR or ERM \cite{stanko2019risk}.

In this paper, we delve into the RS-DisRL with \emph{staic Lipschitz risk measures} (LRM), a general risk measure class that includes various well-known risk measures such as coherent risk, convex risk, CVaR, and ERM. 
% \cite{lim2022distributional,liang2023regret}. 
In order to address the challenges posed by extremely large or infinite state spaces, we consider two distinct general function approximation scenarios: model-based function approximation and value function approximation.
% \yu{add refs} \longbo{reads strange. function approximation often refers to linear etc?}\siwei{I guess here you want to say "approximation on the transition kernel" and "approximation on the value function"? But, again, when we are considering risk of cumulative rewards, how should we define the value functions? Should this part be explained more here?}
The model-based function approximation, as extensively used in prior works such as \cite{sun2019model,liu2022when,liu2023optimistic}, typically assumes that the agent has access to a model class that contains the true transition model. On the other hand, the general value function approximation, as explored in \cite{wang2020reinforcement, jin2021bellman, agarwal2023vo,wu2023distributional, wang2023benefits}, offers the agent a distributionally Bellman-complete value function class with the true value distribution.%\footnote{Given that the common value function used in traditional RL is not suitable for the RSRL with static LRM problem, we consider the distribution-based value function approximation, which is widely studied in DisRL \cite{wu2023distributional, wang2023benefits}, see details in Section~\ref{sec:mf}\longbo{why this footnote?}}

Under these settings, we introduce general model-based and model-free meta-algorithms, respectively, and employ estimate techniques including  Least Squares Regression (LSR) and Maximum Likelihood Estimation (MLE), achieving a sublinear regret upper bound with respect to the number of episodes. Importantly, our work establishes the first statistically efficient framework for RS-DisRL with static Lipschitz risk measures. 

\textbf{Challenges} Significant gaps persist in our quest to achieve statistically efficient RS-DisRL with static LRM.
(i) The utilization of static LRM in RSRL presents unique complexities. Unlike standard RL, where the focus is on maximizing the expected cumulative reward, RSRL with static LRM involves optimizing the entire distribution of cumulative rewards. This distinct characteristic makes the traditional Q-learning methods, which typically consider the mean of reward-to-go and the corresponding Bellman equation for mean value functions, inadequate for characterizing policy performance and the reward's distribution. 
(ii) 
In RSRL with static LRM, the optimal policy is non-Markovian, dependent not only on the current state but also on the rewards received thus far. Thus, it is hard to extend previous works for learning a Markovian policy within polynomial sample complexity. 
% \longbo{this sentence looks useless?or move to the next paragraph?}We have to develop novel distributional analysis on the augmented MDP (detailed in Section~\ref{sec:aug}) towards the sample-efficiency of our algorithms.

% \yu{I revised this section which highlight our technical contribution.}
\textbf{Technical Contribution} To surmount these obstacles, our approach involves several novel technical components. (i) We integrate rigorous distribution analysis techniques from DisRL into the RSRL framework and design novel distributional learning in augmented MDPs, which help us better understand the distributional characteristics of the problem objective. 
(ii) We pioneer the application of LSR in the exploration process of distributional RL with augmented MDP, incorporating our innovative regression technique tailored for cumulative distribution functions (CDFs) (see Sections~\ref{sec:mblsr} and \ref{sec:mflsr}). 
(iii) Furthermore, we extend traditional MLE methods to DisRL within the augmented MDP framework, supported by a novel connection technique: the augmented simulation lemma (Lemma~\ref{lem:simulation lemma}). In these manners, we present the first statistically efficient algorithms for RSRL with static LRM in this paper.

We summarize Table~\ref{tb:result} to present the technical results in this paper, and discuss our detailed contribution as follows:
% \yu{Please check the contribution bullets.}

% \begin{itemize}
% \item 
\textbf{(i)}
We provide a general framework for RSRL with static LRM and the general function approximation, which covers a broad class of RSRL studies with various popular static risk measures, such as ERM \cite{fei2020risksensitive,fei2021risksensitive}, CVaR \cite{wang2023nearminimaxoptimal,zhao2023provably}, and spectral risk \cite{bastani2022regret}. The  framework facilitates analysis of the impact of estimation functions on the effectiveness of RSRL strategies and evaluation of their sample complexity. %, providing a structured framework to systematically evaluate algorithm sample complexity via various estimation techniques.

% \item 
\textbf{(ii)}
For model-based function approximation, we propose a novel meta-algorithm named \texttt{RS-DisRL-M} (Algorithm~\ref{alg:mbframe}), with a general regret upper bound $\gO(L_\infty(\rho)\xi(\texttt{M-Est}))$, where $L_\infty(\rho)$ represents the Lipschitz constant of the LRM $\rho$, and $\xi(\texttt{M-Est})$ is the effectiveness determined by the model-based estimation function. %, which intuitively describes the efficiency of the estimate. 
Based on the meta-algorithm, we also obtain the first analysis on the model-based LSR and MLE approaches in distributional RL with augmented MDPs, with effectiveness $\xi = \widetilde{\gO}(\poly(H)\operatorname{D}\sqrt{K})$, where $H$ is the horizon length, $K$ is the number of episodes, and $\operatorname{D}$ is the structural complexity (specified in Theorems~\ref{thm:mblsr} and \ref{thm:mbmle}). %When degenerating to specific risk measures, all these results match the lower bound for ERM \cite{fei2020risksensitive} and the lower bound for CVaR \cite{wang2023nearminimaxoptimal} in terms of the number of episodes $K$.

\textbf{(iii)}
For general value function approximation, we present a new model-free framework \texttt{RS-DisRL-V} (Algorithm~\ref{alg:mfframe}), with general regret upper bound $\gO(L_\infty(\rho)\zeta(\texttt{V-Est}))$ where $\zeta(\texttt{V-Est})$ describes the effectiveness of the model-free estimation approach. We also provide novel analysis of LSR and MLE in distributional RL with augmented MDPs and theoretical guarantees of $\zeta = \widetilde{\gO}(\poly(H)\operatorname{D}\sqrt{K})$, where the dimension term $\operatorname{D}$ is specified in Theorem~\ref{thm:mflsr} for the LSR case and Theorem~\ref{thm:mfmle} for the MLE case. 

\begin{table}
\caption{Summary of results in this paper, where $\operatorname{D}$ represent the structural complexity determined by specific methods.
\label{tb:result}}
\begin{center}
        \begin{small}
            \begin{tabular}{ll}
                \toprule
                Algorithm & Regret \\
                \midrule
                \multicolumn{2}{l}{Model-based Framework \texttt{RS-DisRL-M} (Algorithm~\ref{alg:mbframe})} \\
                \midrule
                  LSR Approach: \texttt{M-Est-LSR}(Algorithm~\ref{alg:mbestlsr}) & $\widetilde{\gO}(L_\infty(\rho)H\operatorname{D}_1\sqrt{K})$  (Theorem~\ref{thm:mblsr})
                         \\
                 MLE Approach: \texttt{M-Est-MLE} (Algorithm~\ref{alg:mbestmle})  & $\widetilde{\gO}(L_\infty(\rho)\poly(H)\operatorname{D}_2\sqrt{K})$ (Theorem~\ref{thm:mbmle}) \\
                \midrule
                \multicolumn{2}{l}{Model-free Framework \texttt{RS-DisRL-V} (Algorithm~\ref{alg:mfframe})} \\
                \midrule
                  LSR Approach: \texttt{V-Est-LSR} (Algorithm~\ref{alg:mfestlsr}) & $\widetilde{\gO}(L_\infty(\rho)H\operatorname{D}_3\sqrt{K})$ (Theorem~\ref{thm:mflsr}) \\
                        
                 MLE Approach: \texttt{V-Est-MLE} (Algorithm~\ref{alg:mfestmle}) & $\widetilde{\gO}(L_\infty(\rho)\poly(H)\operatorname{D}_4\sqrt{K})$ (Theorem~\ref{thm:mfmle}) \\
                \bottomrule
            \end{tabular}
        \end{small}
    \end{center}
\end{table}

% \yu{Maybe use a tabular for comparison and showing our regret bounds}

\section{Related Works}
%\siwei{I think we should also emphasize the difference between iterated risk measure and distributional risk measure.}
\paragraph{RSRL} In RSRL studies, there are two types of risk measures. One is to consider the iterated risk measure, i.e., computing the risk value iteratedly. For example, \cite{du2022provably,chen2023provably} considers iterated CVaR risk measures, and \cite{liang2023regret} considers iterated LRM risk measures. The other is to consider the static risk measure, i.e., a risk measure of the cumulative reward. For example, \cite{fei2020risksensitive,fei2021risksensitive} focus on RSRL with ERM, \cite{wang2023nearminimaxoptimal,zhao2023provably} investigate the static CVaR risk measures, and \cite{bastani2022regret} studies the static spectral risk measures. In this paper, we consider the static LRM, a static risk measure that encompasses the ERM, static CVaR and spectral risk, and give the theoretical learning analysis.
%researchers have considered various risk measures. \cite{fei2020risksensitive,fei2021risksensitive} focus on RSRL with ERM, , and \cite{wang2023nearminimaxoptimal,zhao2023provably} investigate the CVaR risk measures. For general risk measures, \cite{bastani2022regret} consider a static spectral risk measure and derive the first distributional analysis in RSRL studies. \cite{liang2023regret} firstly consider the LRM, but they study the dynamic LRM settings. In this paper, we consider the static LRM, which encompasses the ERM, static CVaR and spectral risk, and give the theoretical learning analysis.
% \yu{static & dynamic}

\paragraph{DisRL}
Many previous works \cite{rowland2018analysis, rowland2023analysis, bellemare2017distributional} develop asymptotic covergence analysis for DisRL. With MLE approaches, \cite{wu2023distributional} discusses the statistical complexity bounds for off-policy DisRL and \cite{wang2023benefits} considers the small-loss bounds for DisRL. \cite{bastani2022regret} is the first to give the sample complexity bounds for RSRL, while it only studies the static spectral risk measure within the tabular MDPs. Compared to these results, our work focuses on a more general risk-sensitive target (LRM) and enables to use the general function approximation.
% \yu{revised}
%a more general LRM and the MDPs with general function approximation.

\section{Notations}

% We introduce the foundation concepts of this paper. We introduce the notations in Appendix~\ref{app:notation} due to the space limitation. \longbo{this is strange. always give the detailed model and notations here. you could move notations that are not necessary for reading to appendix. but dont start by saying that we put the notations in appendix}

For a positive integer $N$, we let $[N] = \{1, 2, \cdots, N\}$. Denote $\int_s f(s) := \int_S f(s)ds$ if we integrate $f$ over the universal set of $s \in S$. For a function $f : \gX \to \R$, we define the $\ell_1$-norm $\|f\|_1 = \int_\gX |f(x)|dx$ and $\ell_\infty$-norm $\|f\|_\infty = \sup_{x\in\gX}|f(x)|$. Denote $\Delta(\gX)$ as the distribution over space $\gX$. We use the standard $\gO(\cdot)$ to hide universal constant factors, and $\widetilde\gO(\cdot)$ to further suppress the polylogarithmic factors in $\gO(\cdot)$.

\section{Problem Formulation}

\paragraph{Episodic Markov Decision Process}
In this study, we examine an episodic Markov Decision Process (MDP) denoted as $\gM = ( \gS, \gA, K, H, \{\sP_h\}_{h=1}^H, \{\R_h\}_{h=1}^H)$ characterized by state space $\gS$, action space $\gA$, finite episode number $K$, finite time horizon length $H$, transition probabilities $\Prob_h(\cdot | s, a) \in \Delta(\gS)$ and \emph{distributional} reward $\sR_h(\cdot | s, a)\in \Delta([0, 1])$\footnote{Without loss of generality, we assume the reward $r \in [0, 1]$ for each step. Additionally, we assume that the agent has knowledge of the reward distribution, a common assumption in prior research \cite{liu2022when,liu2023optimistic,wang2023benefits}.} for step $h \in [H]$. At the outset of each episode $k \in [K]$, we start with an initial state $s_{k,1}$ chosen by the MDP.
At each step $h\in[H]$, the agent receives a historical record  $\cH_{k,h}=\sets{s_{k,1},a_{k,1},r_{k,1},\cdots,s_{k,h}}$ and select an action $a_{k,h}\sim\pi_{h}^k(\cdot|\cH_{k,h})$ by a \emph{history-dependent} policy  $\mpi_h^k=\sets{\pi_{h}^k}_{h=1}^H$ 
Then, the MDP will return a reward $r_{k,h} \sim \sR_h(\cdot | s_{k,h}, a_{k,h})$ and transfer into next state $s_{k,h+1} \sim \Prob_h(\cdot | s_{k,h}, a_{k,h})$. This episode will end in step $H+1$.
Throughout this paper, we assume that the agent lacks knowledge of the transition probabilities.
For a fixed history-dependent policy $\mpi$, the cumulative reward for an episode played with policy $\mpi$ is a bounded real-valued random variable $Z^\mpi = \sum_{h=1}^H r_h$, where $r_h \sim \R_h(\cdot | s_h,a_h)$. 

\paragraph{Lipschitz Risk Measures}
\emph{Lipschitz Risk Measures} (LRM) are quantified by a function $\rho : \gZ \to \sR$ mapping random variables $Z \in \gZ$ to real numbers, distinguished by two critical properties.
\textbf{C1. Law invariance: } If $Z, W \in \gZ$ have the same distribution functions, $F_Z = F_W$, then $\rho(Z) = \rho(W)$.
\textbf{C2. Lipschitz continuity: } Consider the supremum norm $\|\cdot\|_\infty$ over the set of all distribution functions of the random variable class $\gZ$. There exists a Lipschitz constant $L_\infty(\rho)$ such that $\left| \rho(Z) - \rho(W) \right| \leq L_\infty(\rho)\| F_Z - F_W \|_\infty$ holds for any $Z, W \in \gZ$.

The law invariance condition, foundational in risk measure studies \cite{kusuoka2001law, frittelli2005law, liang2023regret}, indicates that identical distribution functions result in equal risk measures. LRM encompass a broad spectrum of general risk assessments, including coherent risk \cite{artzner1999coherent}, monetary risk \cite{jia2020monetary}, and convexity risk measures \cite{follmer2012convex}, highlighting the versatility of LRM. Popular metrics like Entropy Risk Measures (ERM) and Conditional Value-at-Risk (CVaR) also align with these conditions, with Lipschitz constants $L_\infty(\operatorname{ERM}) = \frac{\exp(|\gamma|H) - 1}{|\gamma|}$ and $L_\infty(\operatorname{CVaR}) = \frac{H}{a}$ \cite{liang2023regret}.

% and augmented transition kernel $\TT_h(s_{h+1}^\dag|s_h^\dag,a_h)=\PP_{h}(s_{h+1}^\dag|s_h^\dag,a_h)\RR(y_{h+1}-y_h|s_h,a_h)$. For any $\tau\in\Omega$, we define its augmented version $\tau^\dag=\sets{(s_h^\dag,a_h)}_{h=1}^H$. Notice that for any markovian policy $\pi(a_h|s_h^\dag)$ on $\cM^\dag$, we have $\mu_\theta^{\pi}(\tau)=\mu_\theta^{\pi}(\tau^\dag)=\prod_{h=1}^H\pi(a_h|s_h^\dag)\TT(s_{h+1}^\dag|s_h^\dag,a_h)$

\paragraph{RSRL with Static LRM}
In this paper, we delve into the \textbf{R}isk-\textbf{S}ensitive \textbf{R}einforcement \textbf{L}earning (RSRL) with static \textbf{L}ipschitz \textbf{R}isk \textbf{M}easures (LRM), focusing on optimizing risk-sensitive rewards via \emph{history-dependent} policies. 
The objective is to find an optimal policy $\mpi^*$ that maximizes the LRM-defined cumulative reward $\mpi^*=\argmax_{\mpi}\rho(Z^{\mpi})$, then we define our regret as the difference between the cumulative rewards of the optimal policy and the policy deployed at each episode: $
    \operatorname{Regret}(K)=\sum_{k=1}^K \rho(Z^{\mpi^*})-\rho(Z^{\mpi^k})$.

%\siwei{Maybe some explanations about this optimal policy? For example, there must exists at least one}

\section{Augmented MDPs and Distributional Bellman Equation}\label{sec:augmented mdp}
\label{sec:aug}
% \longbo{improve this first paragraph a bit. should explain why you want to state this augmented MDP result here first. and they say that we will show the markov property of the optimal poicy in the augmented MDP}
% \yu{check}
The key of our analysis revolves around the distributional Bellman equation applied to augmented MDPs.
% The key of our analysis is based on the distributional Bellman equation for augmented MDPs.
Recognizing that learning an optimal history-dependent policy $\pi^*$ can be prohibitively sample-intensive, previous works \cite{bauerle2011markov, bastani2022regret} leverage the concept of augmented MDP in risk-sensitive conditions,
where any history-dependent policy $\tilde\mpi$ corresponds to a \emph{Markov} policy $\mpi^\dag$ in the augmented MDP. 
This equivalence allows for facilitating effective risk-sensitive policy learning without sacrificing computational tractability.

\paragraph{Augmented MDPs}
We embrace the notion of augmented MDPs, originally introduced by \cite{bauerle2011markov} and widely used in RSRL with static risk measures \cite{bastani2022regret,wang2023nearminimaxoptimal,zhao2023provably}. In the context of augmented MDPs, the state space is expanded to $\gS^\dag=\{ s^\dag_h=(s_h,y_h)\}$, where $y_h=\sum_{h=1}^H r_h$ represents the cumulative reward accumulated up to time step $h$.We denote the augmented MDP as  $\gM^\dag = (\gS^\dag,\gA, K, H, \{\sP_h\}_{h=1}^H, \{\R_h\}_{h=1}^H)$. To capture the augmented dynamics, we introduce the augmented transition operator as follows:
\begin{equation*}
\TT_h(s_{h+1}^\dag|s_h^\dag,a_h):=\PP_h(s_{h+1}|s_h,a_h)\RR_h(y_{h+1}-y_h|s_h,a_h)
\end{equation*}
Let $\Pi^\dag$ denote the set of the Markov policies within the augmented MDP $\cM^\dag$.  Theorem 3.1 in \cite{bastani2022regret} shows that for any history-dependent policy $\Tilde{\pi}$ in original MDP, there exists a Markov policy in the augmented MDP $\mpi\in\Pi^\dag$, such that $F_{Z^\mpi}=F_{Z^{\Tilde{\mpi}}}$. This result underscores the equivalence between the distribution of cumulative rewards under a history-dependent policy in original MDP and a corresponding Markov policy in the augmented MDP.

\paragraph{Distributional Bellman Equation} Similar to prior works such as \cite{bellemare2017distributional,bastani2022regret, wang2023benefits, wu2023distributional},
we integrate the distributional Bellman equation within an augmented MDP framework. For any policy $\mpi\in\Pi^\dag$ and $h\in[H]$, we denote the random variable $Z_h^\mpi(s_h^\dag,a_h)$ as the cumulative return $\sum_{h'=h}^H r_{h'}$ after taking action $a_h$ in state $s_h^\dag$ via policy $\mpi\in\Pi^\dag$.
\begin{definition}[Distributional Bellman Equation \cite{wang2023benefits,bastani2022regret}]
Let $F_h^\mpi(\cdot|s_h^\dag,a_h)$ be the cumulative distribution function (CDF) of random variable $Z_h^\mpi(s_h^\dag, a_h)$, and let $f_h^\mpi(\cdot | s_h^\dag, a_h)$ be its probability density function (PDF). We define the augmented distributional Bellman operator $\cT_{h,\mpi}^\dag:\Delta(\RR)\rightarrow\Delta(\RR)$ as:
\begin{align*}
    &{\cT_{h,\mpi}^\dag f_{h+1}^\mpi}(x|s_h^\dag,a_h):=\int_{s_{h+1}^\dag,a_{h+1}}\TT_h(s_{h+1}^\dag|s_h^\dag,a_h) \pi_{h+1}(a_{h+1}|s_{h+1}^\dag) f_{h+1}^\mpi(x-(y_{h+1}-y_h)|s_{h+1}^\dag,a_{h+1})\,,\\
     &{\cT_{h,\mpi}^\dag F_{h+1}^\mpi}(x|s_h^\dag,a_h):=\int_{s_{h+1}^\dag,a_{h+1}}\TT_h(s_{h+1}^\dag|s_h^\dag,a_h) \pi_{h+1}(a_{h+1}|s_{h+1}^\dag)F_{h+1}^\mpi(x-(y_{h+1}-y_h)|s_{h+1}^\dag,a_{h+1})\,.
\end{align*}
% \siwei{I think $r_h$ is different for different $s_{h+1}^\dag$, hence we may need to change the notation here. Maybe we can just write $r_h$ as $y_{h+1} - y_h$ in the above equations?}
By the definition of $Z_h^{\mpi}$ and $Z_{h+1}^\mpi$, we have $f_h^\mpi = {\cT_{h,\mpi}^\dag f_{h+1}^\mpi}$ and $F_h^\mpi = {\cT_{h,\mpi}^\dag F_{h+1}^\mpi}$. Generally, we can write the distributional Bellman equation in random variable version as $Z_h^{\mpi}=\cT_{h,\mpi}^\dag Z_{h+1}^{\mpi}$.
\end{definition}

\section{\!Model-Based Meta-Algorithm\! \texttt{RS-DisRL-M}}
\label{sec:mb}
This section introduces the meta-algorithm \texttt{RS-DisRL-M} for model-based function approximation in RS-DisRL, alongside its theoretical underpinnings. It further delves into two pivotal estimation techniques: LSR and MLE, formulating statistically efficient algorithms for RSRL with LRM.

%\longbo{explain the definition of model-based FA?}\yu{revised}
We describe the model-based function approximation framework utilized for our analysis, drawing on the methodologies previously explored by \cite{fei2021risksensitive, liu2022when,liu2023optimistic,chen2023provably}. Specifically, we model each transition probability $\Prob : \gS \times \gA \to \Delta(\gS)$ using a parametric form $\Prob_\theta$, with the true transition model for each decision epoch $h$ represented by $\theta_h^*$. The set of true models across all epochs is denoted by $\bm{\theta}^* = \{\theta_h^*\}_{h=1}^H$.
We then establish a standard realizability assumption for the model-based function approximation, ensuring that our model accurately reflects the dynamics of the environment
% \longbo{discuss related works also use this model}
%Throughout this section, we make the model-based realizability assumption
% \longbo{move this to right before theorem?}
\begin{assumption}[Model-based realizability \cite{fei2021risksensitive, liu2022when, liu2023optimistic, chen2023provably}]
\label{ass:mbreal}
The agent is given a model class $\mTheta = \{\Theta_h : h \in [H]\}$ which specifies a class of transition probabilities $\{\Prob_{\theta_h} : \theta_h \in \Theta_h, h \in [H]\}$ and satisfies $\bm{\theta}^*  \in \bm{\Theta}$.\footnote{We assume this structure of $\mTheta$ to simplify the notations in analysis. In fact, our analysis works as long as $\mtheta^* \in \mTheta$.}
\end{assumption}
%\siwei{Does this assumption mean the class must be the product of $H$ sets?}\yu{We consider the time inhomogenous task, thus we have to model the transition at each step.}\siwei{What I want to say is that this assumption means different $h$'s are totally independent.}\yu{revised}
% The realizability assumption states that the real transition model resides in a known model class, which is a standard assumption for general model-based function approximation \cite{fei2021risksensitive,liu2022when, liu2023optimistic, chen2023provably}.

\begin{algorithm}[htbp]
   \caption{\texttt{RS-DisRL-M}}
   \label{alg:mbframe}
\begin{algorithmic}[1]
   \STATE {\bfseries Input:} Model class $\mTheta$, confidence radius $\beta$.
   \STATE {\bfseries Initialize:} $\wh{\mTheta}_1 \leftarrow \mTheta$.
   \FOR{$k=1$ {\bfseries to} $K$}
   \STATE 
   $(\mpi^{k}, \widehat\mtheta_k)=\argmax_{\mpi\in\bm{\Pi}^\dag, \mtheta\in\widehat{\bm{\Theta}}_k}\rho(Z^{\mpi}_{\mtheta})$.~\textcolor{blue}{//Optimistic planning}\label{algline:mbframeoptimisticplanning}
   \STATE Execute policy $\mpi^{k}$, add the collected data $\bm{\tau}_k=\sets{(s_{k,h},a_{k,h},r_{k,h})}_{h=1}^H$ and $\mpi^{k}$, $\widehat{\mtheta}_k$ into history $\cH_k = \cH_{k-1}\cup \{(\bm{\tau}_k, \mpi^{k}, \widehat{\mtheta}_k)\}$.~\textcolor{blue}{//Data collection}\label{algline:mbframedatacollection}
   \STATE $\widehat{\bm\Theta}_{k+1}=\texttt{M-Est}\bracket{\bm{\Theta},\cH_k,\beta}$.~\textcolor{blue}{//Confidence set construction}\label{algline:mbframeconfidenceset}
   %\siwei{Need an initial $\widehat{\bm{\Theta}}_1$ (e.g., $\widehat{\bm{\Theta}}_1 = \mTheta$)}
\ENDFOR
\end{algorithmic}
\end{algorithm}

We introduce the meta framework \texttt{RS-DisRL-M} (Algorithm~\ref{alg:mbframe}) for \textbf{R}isk-\textbf{S}ensitive \textbf{Dis}tributional \textbf{RL} with \textbf{M}odel-\textbf{B}ased function approximation.
\texttt{RS-DisRL-M} is a model-based algorithm which takes a model class $\mTheta$ as an input, and operates in three main steps. 

\textbf{(i) 
Optimistic planning }(Line~\ref{algline:mbframeoptimisticplanning}): the algorithm computes the optimistic model $\wh{\mtheta}_k$ and corresponding augmented policy $\mpi^k$ in the model confidence set $\wh\mTheta_k$. Here the random variable $Z^{\mpi}_{\mtheta}$ denotes the cumlative reward colloected with policy $\mpi$ in augmented MDPs modeled by $\mtheta$. 
\textbf{(ii) Data collection }(Line~\ref{algline:mbframedatacollection}): the algorithm executes the optimal policy $\mpi^k$ planned from step (i) and collects the trajectory $\mtau_k$. 
\textbf{(iii) Confidence set construction }(Line~\ref{algline:mbframeconfidenceset}): the algorithm estimates the models and constructs the new confidence set for the next episode based on a \textbf{M}odel-based \textbf{Est}imation function (\texttt{M-Est}), model class $\mTheta$, and confidence radius $\beta$. The estimation function \texttt{M-Est} can be designed by various estimation methods, such as LSR or MLE, depending on the specific structure of the MDP or the model class.

This framework encapsulates the essence of leveraging model-based approaches for efficient learning and adaptation in RSRL, aligning with strategies explored in recent literature \cite{bastani2022regret,liu2023optimistic,chen2023provably}.
% \yu{subsumes the algorithm procedure of blablabla..}
% \yu{Fall into our meta-algorithm framework.}
% This algorithm shares similar procedure as other conventional optimistic RL algorithms \cite{jin2018qlearning,jin2020provably,zhou2021nearly,liu2023optimistic}. 
%\yu{revised}\longbo{only share similarity? why is this useful? can we say more general or other useful features?} 
%Absolutely, the form of \texttt{M-Est} will deeply influence the sample-efficiency of Algorithm~\ref{alg:mbframe}. %To
\paragraph{Theoretical Guarantees}
The theoretical guarantees for the \texttt{RS-DisRL-M} algorithm are anchored on two critical conditions related to the estimation function \texttt{M-Est}. 
\begin{condition}[Concentration]\label{con:mbconcentration}
    For $\delta \in (0, 1]$, with probability at least $1 - \delta$, $\mtheta^*\in\widehat\mTheta_k$ holds for every $k \in [K]$. 
\end{condition}
The concentration condition, common in theoretical RL analysis \cite{agarwal2019reinforcement,jin2018qlearning,ayoub2020model}, ensures that the true transition model is consistently included within the algorithm's confidence set throughout the learning process.
\begin{condition}[General elliptical potential]
\label{con:mbelliptical}
For $\delta \in (0, 1]$, with probability at least $1 - \delta$, the supremum distance between the CDF of the chosen estimated model $\widehat{\mtheta}^k$ and real model $\mtheta^*$ under policy $\mpi^k$ can be bounded by $
    \sum_{k=1}^K \left\|F_{Z_{\wh{\mtheta}_k}^{\mpi^k}} - F_{Z_{\mtheta^*}^{\mpi^k}}\right\|_\infty \leq \xi(K, H, \mTheta, \beta, \delta)$, where $\xi(K,H,\mTheta, \beta, \delta)$ is the complexity bound determined by the estimation function \texttt{M-Est}.
\end{condition}
Intuitively, Condition~\ref{con:mbelliptical} bounds the estimation error by controlling the discrepancy between the CDF of the estimated and real models under the chosen policy. This condition resembles the pigeonhole principle for tabular MDPs \cite{jin2018qlearning} and the elliptical potential lemma for linear and linear mixture MDPs \cite{jin2020provably, zhou2021nearly}. However, Condition~\ref{con:mbelliptical} demands to bound the supremum difference of the CDF during the learning process, which is natrually different from the previous. 
% The key motivation for Condition~\ref{con:mbelliptical} lies in the property of LRM, which requires a refined analysis of the CDF of the cumulative reward. This is also discovered in many empirical RL works considering the risk-sensitive situation \cite{bellemare2017distributional,urpi2020riskaverse,ma2021conservative}.

Conditions \ref{con:mbconcentration} and \ref{con:mbelliptical} together establish the reliability and effectiveness of the estimation function in the \texttt{RS-DisRL-M} framework. Adherence to these conditions signifies that the estimation function can facilitate efficient learning in \texttt{RS-DisRL-M}.
% That is, the estimation functions satisfying these two conditions with sublinear $\xi$ w.r.t. $K$ can lead to an efficient learning process in \texttt{RS-DisRL-M}. %\siwei{Do we need to say something about $\xi$? if $\xi = K$, then it is not an efficient learning process.}\yu{revised} As will be discussed in the following sections, we emphasize that these two conditions hold for all estimation functions studied in this paper. We now present the theorem for the meta-algorithm \texttt{RS-DisRL-M}. 
% \longbo{not meta-theorem, just theorem}
\begin{theorem}~\label{thm:mbmeta}
    Under Assumption \ref{ass:mbreal}, if the estimation function \texttt{M-Est} satisfies Conditions~\ref{con:mbconcentration} and \ref{con:mbelliptical}, then the regret of \texttt{RS-DisRL-M} (Algorithm~\ref{alg:mbframe}) can be bounded by $
        \operatorname{Regret}(K) \leq L_\infty(\rho)\xi(K, H, \mTheta, \beta, \delta)$.
\end{theorem}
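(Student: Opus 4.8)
The plan is to chain together the Lipschitz property of the risk measure (C2), the optimism guaranteed by Line~\ref{algline:mbframeoptimisticplanning}, and Condition~\ref{con:mbelliptical}. First I would condition on the joint success event of Conditions~\ref{con:mbconcentration} and \ref{con:mbelliptical}, which holds with probability at least $1-\delta$ (after a union bound / rescaling of $\delta$, which I will suppress). On this event, $\mtheta^* \in \widehat\mTheta_k$ for every $k$, so the optimistic planning step, which maximizes $\rho(Z^{\mpi}_{\mtheta})$ over all $\mpi \in \bm{\Pi}^\dag$ and $\mtheta \in \widehat\mTheta_k$, yields a value at least as large as that achieved by the pair $(\mpi^*, \mtheta^*)$; hence $\rho(Z^{\mpi^k}_{\widehat\mtheta_k}) \geq \rho(Z^{\mpi^*}_{\mtheta^*}) = \rho(Z^{\mpi^*})$. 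Here I would also invoke the augmented-MDP equivalence (Theorem 3.1 of \cite{bastani2022regret}) to identify the optimal history-dependent policy with a Markov policy in the augmented MDP, so that $\rho(Z^{\mpi^*})$ is genuinely attained within $\bm{\Pi}^\dag$.

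The second step is the per-episode decomposition. For each $k$,
\begin{align*}
\rho(Z^{\mpi^*}) - \rho(Z^{\mpi^k})
&\leq \rho(Z^{\mpi^k}_{\widehat\mtheta_k}) - \rho(Z^{\mpi^k}) \\
&= \rho(Z^{\mpi^k}_{\widehat\mtheta_k}) - \rho(Z^{\mpi^k}_{\mtheta^*}),
\end{align*}
where the first line uses the optimism bound above and the second uses that $Z^{\mpi^k}$ — the cumulative reward obtained by actually executing $\mpi^k$ — has the same distribution, hence (by law invariance C1) the same risk value, as $Z^{\mpi^k}_{\mtheta^*}$, the cumulative reward of $\mpi^k$ in the true-model augmented MDP. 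Now applying the Lipschitz property C2 to the pair $(Z^{\mpi^k}_{\widehat\mtheta_k}, Z^{\mpi^k}_{\mtheta^*})$ gives
\[
\rho(Z^{\mpi^k}_{\widehat\mtheta_k}) - \rho(Z^{\mpi^k}_{\mtheta^*}) \leq L_\infty(\rho)\left\|F_{Z^{\mpi^k}_{\widehat\mtheta_k}} - F_{Z^{\mpi^k}_{\mtheta^*}}\right\|_\infty.
\]

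The final step is to sum over $k \in [K]$ and apply Condition~\ref{con:mbelliptical}:
\[
\operatorname{Regret}(K) = \sum_{k=1}^K \left(\rho(Z^{\mpi^*}) - \rho(Z^{\mpi^k})\right) \leq L_\infty(\rho)\sum_{k=1}^K \left\|F_{Z^{\mpi^k}_{\widehat\mtheta_k}} - F_{Z^{\mpi^k}_{\mtheta^*}}\right\|_\infty \leq L_\infty(\rho)\,\xi(K, H, \mTheta, \beta, \delta).
\]
I expect the main obstacle to be not the algebra but making the two reductions fully rigorous: (a) justifying that executing $\mpi^k$ in the real environment produces a return whose CDF equals $F_{Z^{\mpi^k}_{\mtheta^*}}$ (this is where the augmented-state construction and the definition of $\TT_h$ must be used carefully, since $\mpi^k$ is Markov on $\gS^\dag$ but the realized trajectory lives in the original MDP), and (b) the bookkeeping of the probability budget across Conditions~\ref{con:mbconcentration} and~\ref{con:mbelliptical}. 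Everything else is a direct composition of the stated ingredients.
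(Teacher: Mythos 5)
Your proposal is correct and follows essentially the same route as the paper: on the concentration event the optimistic planning step gives $\rho(Z^{\mpi^k}_{\widehat\mtheta_k}) \geq \rho(Z^{\mpi^*})$, the per-episode gap is then bounded via the Lipschitz property C2 by $L_\infty(\rho)\bigl\|F_{Z^{\mpi^k}_{\widehat\mtheta_k}} - F_{Z^{\mpi^k}_{\mtheta^*}}\bigr\|_\infty$, and summing and invoking Condition~\ref{con:mbelliptical} finishes the argument (the paper likewise absorbs the two failure events into a $1-2\delta$ bound). Your extra care in invoking law invariance to identify $\rho(Z^{\mpi^k})$ with $\rho(Z^{\mpi^k}_{\mtheta^*})$ and the augmented-MDP equivalence to place $\mpi^*$ inside $\bm{\Pi}^\dag$ makes explicit two steps the paper leaves implicit, but does not change the argument.
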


This theorem integrates and extends results from previous research on RSRL with static risk measures, offering a comprehensive view that includes notable theorems from \cite{bastani2022regret, wang2023nearminimaxoptimal, fei2021risksensitive}. The primary challenge lies in satisfying the concentration and elliptical potential conditions for the estimation function \texttt{M-Est} and managing the complexity bound $\xi(K, H, \mTheta, \beta, \delta)$. 

Below, we introduce LSR (\texttt{M-Est-LSR}, Algorithm~\ref{alg:mbestlsr}) and MLE (\texttt{M-Est-MLE}, Algorithm~\ref{alg:mbestmle}) based estimation functions. These functions meet the necessary conditions and demonstrate an effective bound $\xi(K,H,\mTheta, \beta, \delta) = \widetilde\gO(\poly(H)\operatorname{D}\sqrt{\beta K/\delta})$ with dimension term $\operatorname{D}$ specified in Theorem~\ref{thm:mblsr} for LSR  and Theorem~\ref{thm:mbmle} for MLE, giving the $\sqrt{K}$ dependency for meta-algorithm \texttt{RS-DisRL-M} and achieving minimax-optimal in terms of $K$ in tabular MDPs for ERM \cite{fei2020risksensitive} and CVaR \cite{wang2023nearminimaxoptimal}.

\subsection{Estimation by Model-Based LSR Approach}\label{sec:mblsr}

Least Squares Regression (LSR), a well-established estimation technique, has been effectively utilized in linear and linear mixture MDPs \cite{jin2020provably, zhou2021nearly}. Its common application involves regression on combinations of the transition model with bounded functions.  In risk-neutral scenarios, it's often paired with mean value functions \cite{jin2020provably}, while in risk-sensitive settings, utility functions are preferred \cite{chen2023provably,xu2023regret}. This section explores a novel approach in distributional RL by combining the transition model with mixed distribution functions based on the distributional Bellman equation for CDFs. The newly proposed \texttt{M-Est-LSR} algorithm represents a statistically efficient LSR method tailored for this context.

The intuition of LSR is to approximate the Bellman operator with the empirical sample. Different from previous stuides \cite{jin2020provably, fei2021risksensitive, chen2023provably}, we have to analyze the augmented distributional Bellman equation for transition model $\mtheta$. Denote $F_h^{\mpi,\mtheta}(x|s_h^\dag) := \int_{a_h} \pi_h(a_h|s_h^\dag)F_h^{\mpi,\mtheta}(x|s_h^\dag,a_h)$, we have the following Bellman equation
\begin{align*}
    F_h^{\mpi, \mtheta}(x | s_h^\dag) &= \int_{(a_h,s_{h+1}^\dag)} \pi_h(a_h | s_{h}^\dag)\TT_{\theta_h}(s_{h+1}^\dag | s_{h}^\dag,a_h) F_{h+1}^{\mpi, {\mtheta}}\left(x - (y_{h+1} - y_h) \middle| s_{h+1}^\dag\right) \,.
\end{align*}
Notice that the transition $\TT_{\theta_h}$ for augmented MDP compresses the real transition $\Prob_{\theta_h}$ in original MDP and the reward distribution $\R_h$. However, the only empirical observation available to the agent is the transfer sample $(s_{k,h},a_{k,h}) \to s_{k,h+1}$ and the reward sample $r_{k,h} \sim \R_h(s_{k,h},a_{k,h})$. Therefore, we have to decompose the augmented Bellman operator for estimating the transition models of original MDP. To do so, we design a mixed distribution function 
\begin{align*}
    \wh{F}_{k\!,h\!+\!1}(s)\! := \!\! \int_{y_{k\!,h}}^1 \!\!\! \sR_h(r|s_{k\!,h},\! a_{k\!,h}) F^{{\mpi}^k\!, \wh{\mtheta}_k}_{h\!+\!1}(x_{k\!,h}\!\! -\! r | s,\! y_{k\!,h}\!\! +\! r)dr\,,
\end{align*}
where $\mpi^k$ and $\wh{\mtheta}_k$ are given by the optimistic planning (Line~\ref{algline:mbframeoptimisticplanning} in Algorithm~\ref{alg:mbframe}) based on the information before episode $k$.  %\siwei{why not use $k$? }\yu{revised}
Here $x_{k,h}$ is defined as Eq.(\ref{eq:mblsrxih}), which maximizes the diameter of $\wh{\Theta}_{k,h}$ with mixed CDFs, intuitively contributing to the exploration direction by maximizing the uncertainty. For simplicity, we denote the combination form $[\Prob_{\theta_h}\wh{F}_{k,h+1}](s_h,a_h):= \int_{s_{h+1}} \Prob_{\theta_h}(s_{h+1} |s_{h}, a_{h})\wh{F}_{k,h+1}(s_{h+1})$, and the combination set $\mathcal{W}_{\mTheta} := \{[\Prob_{\theta_h}F] : \gS \times \gA \to [0, 1] : \mtheta \in \mTheta, F : \gS \to [0, 1] \}$. 

We are now ready to present the procedure of the estimation function \texttt{M-Est-LSR}.
Due to space limitations, we defer the formal pseudocode to the appendix (see Algorithm~\ref{alg:mbestlsr}).
%\longbo{turn it into pseusdo code? now it  does not look like an alg}
First, \texttt{M-Est-LSR} estimate a model $\theta^\LSR_{k,h}$ for step $h$ at episode $k$ by LSR based on the information before episode $k - 1$, which serves as the center of the confidence set:
\begin{align*}
\theta^{\LSR}_{k,h}\!
\leftarrow \! \argmin_{\theta_h\in\Theta_h} \!\sum_{i=1}^{k-1}\!\left(\wh{F}_{i\!,h\!+\!1}(s_{i\!,h\!+\!1}) \! - \! [\Prob_{\theta_{h}}\wh{F}_{i\!,h\!+\!1}](s_{i\!,h}, a_{i\!,h})\right)^2\,.
\end{align*}
%\siwei{Is it $\wh{F}_{i,h}(s_{i,h+1})$ or $\wh{F}_{i,h+1}(s_{i,h+1})$?}
Then, the confidence set for each step $h \in [H]$ is constructed by previous history
\begin{align*}
    \wh{\Theta}_{k,h} := \sets{\theta_h \in \Theta_h : \dist_{\cH_{k-1}, h}^{\LSR}(\theta_h||{\theta}^{\LSR}_{k,h})\leq\beta^\LSR}\,.
\end{align*}

where the Euclidean-type distance function is defined as
\begin{align*}
    &\dist_{\cH_k, h}^{\LSR}(\theta^{1} || \theta^{2})  :=  \\
  & \sum_{i=1}^{k}\left([\Prob_{\theta^{1}}\wh{F}_{i,h\!+\!1}](s_{i,h},a_{i,h}) \! - \! [\Prob_{\theta^{2}}\wh{F}_{i,h\!+\!1}](s_{i,h},a_{i,h})\right)^2
\end{align*}

Finally, the function returns the confidence set $\wh{\mTheta}_k = \sets{\mtheta \in \mTheta : \theta_h \in \wh{\Theta}_{k,h}, h \in [H]}$.
 %The theoretical result is presented below
\begin{theorem}\label{thm:mblsr}
    Let the confidence radius for LSR $\beta^\LSR\! := 8\log(2H^2\gN_C(\mTheta,  1/K,\norm{\cdot}_1)/\delta) + 4\sqrt{\log(4HK^2/\delta)}$. The estimation function $\texttt{M-Est-LSR}$ satisfies Conditions \ref{con:mbconcentration} and \ref{con:mbelliptical}, with  complexity bound $\xi^\LSR(K, H, \mTheta, \beta^\LSR, \delta)$ $ = \widetilde{\mathcal{O}}\left(H\sqrt{K\dim_E(\gW_\mTheta, \frac{1}{\sqrt{K}})\log\left(\gN_C(\mTheta,\frac{1}{K},\|\cdot\|_1)/\delta\right)}\right)$, where $\dim_E$ represents the eluder dimension, and $\gN_C$ denotes the covering number.
\end{theorem}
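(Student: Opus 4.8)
The plan is to verify the two conditions required by Theorem~\ref{thm:mbmeta} separately, and then read off the complexity bound from the elliptical-potential step.

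\textbf{Step 1: Concentration (Condition~\ref{con:mbconcentration}).} First I would fix a step $h\in[H]$ and an episode $k$, and treat the LSR problem as regression of the scalar target $\wh F_{i,h+1}(s_{i,h+1})$ on the feature map $(s_{i,h},a_{i,h})\mapsto[\Prob_{\theta_h}\wh F_{i,h+1}](s_{i,h},a_{i,h})$. The key observation is that, conditioned on the history before episode $i$ (which determines $\mpi^i$, $\wh\mtheta_i$, hence $\wh F_{i,h+1}$ and $x_{i,h}$) and on $(s_{i,h},a_{i,h})$, the conditional expectation of the target over the randomness of $s_{i,h+1}\sim\Prob_{\theta_h^*}(\cdot\mid s_{i,h},a_{i,h})$ is exactly $[\Prob_{\theta_h^*}\wh F_{i,h+1}](s_{i,h},a_{i,h})$ — this is where the augmented distributional Bellman decomposition is used: the mixed CDF $\wh F_{i,h+1}$ was designed precisely so that its expectation under the \emph{original} transition reproduces the augmented-MDP CDF. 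Since $\wh F_{i,h+1}(s_{i,h+1})\in[0,1]$, the noise is bounded, hence sub-Gaussian, so a standard self-normalized / Bernstein-type martingale concentration argument (with a union bound over an $1/K$-cover of $\mTheta$ in $\|\cdot\|_1$, which is where $\gN_C(\mTheta,1/K,\|\cdot\|_1)$ enters $\beta^\LSR$) gives that with probability $1-\delta$, for all $k,h$, the true $\theta_h^*$ satisfies $\sum_{i<k}\big(\wh F_{i,h+1}(s_{i,h+1})-[\Prob_{\theta_h^*}\wh F_{i,h+1}](s_{i,h},a_{i,h})\big)^2$ is within $\beta^\LSR$ of the minimizer value, and therefore $\dist^{\LSR}_{\cH_{k-1},h}(\theta_h^*\,\|\,\theta^{\LSR}_{k,h})\le\beta^\LSR$. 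Taking a union bound over $h\in[H]$ (the extra $H$ inside the log of $\beta^\LSR$) yields $\mtheta^*\in\wh\mTheta_k$ for all $k$.

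\textbf{Step 2: Elliptical potential (Condition~\ref{con:mbelliptical}).} Here I would first reduce the per-episode CDF error $\big\|F_{Z^{\mpi^k}_{\wh\mtheta_k}}-F_{Z^{\mpi^k}_{\mtheta^*}}\big\|_\infty$ to a sum over steps $h$ of one-step prediction discrepancies. Using the augmented distributional Bellman equation to telescope, and the fact that both $\wh\mtheta_k,\mtheta^*\in\wh\mTheta_k$ (so both lie in the confidence set), the supremum CDF gap at the root is controlled by $\sum_{h=1}^H$ of terms of the form $\big|[\Prob_{\wh\theta_{k,h}}\wh F_{k,h+1}](s_{k,h},a_{k,h})-[\Prob_{\theta^*_h}\wh F_{k,h+1}](s_{k,h},a_{k,h})\big|$ evaluated along the executed trajectory — this is exactly the point of choosing $x_{k,h}$ to maximize the diameter of $\wh\Theta_{k,h}$ measured in mixed CDFs, so that the worst-case sup-norm is witnessed by these combination functions. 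Each such term is a prediction difference between two functions in the combination class $\gW_\mTheta$ that are both $\beta^\LSR$-close (in the empirical $\dist^{\LSR}$ seminorm) to the past data; by the standard eluder-dimension argument (Russo–Van Roy), $\sum_{k=1}^K$ of such on-policy prediction errors is $\widetilde\gO\!\big(\sqrt{\dim_E(\gW_\mTheta,1/\sqrt{K})\,\beta^\LSR\,K}\big)$. Summing over the $H$ steps and plugging in $\beta^\LSR=\widetilde\gO(\log(\gN_C(\mTheta,1/K,\|\cdot\|_1)/\delta))$ gives $\xi^\LSR=\widetilde\gO\!\big(H\sqrt{K\,\dim_E(\gW_\mTheta,1/\sqrt{K})\log(\gN_C(\mTheta,1/K,\|\cdot\|_1)/\delta)}\big)$, as claimed. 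Finally, combining Steps 1–2 with Theorem~\ref{thm:mbmeta} delivers the stated regret bound $\widetilde\gO(L_\infty(\rho)H\operatorname{D}_1\sqrt K)$ with $\operatorname{D}_1=\sqrt{\dim_E(\gW_\mTheta,1/\sqrt K)\log\gN_C}$.

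\textbf{Main obstacle.} The routine parts are the martingale concentration and the generic eluder-dimension summation. The genuinely delicate step is the reduction in Step~2: showing that the \emph{sup-norm} CDF error at the root of the augmented MDP is faithfully captured by the \emph{scalar} combination-function discrepancies at the chosen thresholds $x_{k,h}$, while simultaneously controlling the error propagation through the convolution structure of $\cT^\dag_{h,\mpi}$ (each Bellman step shifts the CDF argument by $y_{h+1}-y_h$ and integrates against the reward law). One must argue that this shift-and-integrate operation is a non-expansion in sup-norm of CDFs and that the maximization defining $x_{k,h}$ indeed upper-bounds the realized error uniformly over $x$; getting the quantifiers right there — in particular that the same $x_{k,h}$ used to build the confidence set also dominates the error witnessed on the deployed policy — is the crux of the argument, and is what distinguishes this from the risk-neutral linear-MDP LSR analyses where only a single mean function need be tracked.
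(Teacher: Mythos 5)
Your proposal is correct and follows essentially the same route as the paper: concentration via the martingale regression identity $\EE[\wh F_{i,h+1}(s_{i,h+1})\mid\sigma_{i,h}]=[\Prob_{\theta_h^*}\wh F_{i,h+1}](s_{i,h},a_{i,h})$ with a union bound over a $1/K$-cover of $\mTheta$, then a distributional-Bellman telescoping in which the choice of $x_{k,h}$ converts the sup-norm CDF gap into on-trajectory discrepancies of combination functions in $\gW_\mTheta$, closed by the eluder-dimension summation and Cauchy--Schwarz. The only detail you leave implicit is that the telescoping along the executed trajectory produces four explicit martingale-difference correction terms (replacing policy/transition expectations by realized $(a_{k,h},s_{k,h+1},r_{k,h})$), which the paper controls by Azuma--Hoeffding at the dominated order $\widetilde\gO(H\sqrt{K})$; this is exactly the "quantifier" issue you flag as the crux, and the paper resolves it the way you anticipate.
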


The formal proof is detailed in Appendix~\ref{app:mdlsr}. By applying Theorem~\ref{thm:mblsr}, the meta-algorithm \texttt{RS-DisRL-M} equipped with the \texttt{M-Est-LSR} estimation function achieves a sublinear regret upper bound of $\widetilde\gO(L_\infty(\rho) H\operatorname{D}\sqrt{K})$, where $\operatorname{D} =  \sqrt{\dim_E(\gW_\mTheta, 1/\sqrt{K})\log\left(\gN_C(\mTheta, 1/K,\|\cdot\|_1)/\delta\right)}$ represents the structural dimension. Here the dimension term $\dim_E(\gW_\mTheta, \sqrt{K})$ characterizes the eluder dimension of the combination set $\gW_\mTheta$, and $\gN_C(\mTheta, 1/K,\|\cdot\|_1)$ denotes the covering number of the model class, both of which are commonly employed in the analysis of model-based LSR \cite{ayoub2020model, fei2021risksensitive, chen2023provably}. 

Compared to previous works studying RSRL with ERM, the result of Theorem~\ref{thm:mblsr} improves upon the findings of \cite{fei2020risksensitive,fei2021risksensitive}. Notably, our regret bound does not include the additional $e^{\abs{\gamma} H^2}$ term in the Lipschitz constant. This improvement is attributed to the distributional analysis, which avoids the $e^{\abs{\gamma}H}$ factor while back propagating the Bellman error in Lemma 3 of \cite{fei2020risksensitive}. Furthermore, when transitioning to the risk-neutral setting, the result established in Theorem~\ref{thm:mblsr} aligns with the regret bound presented by \cite{ayoub2020model} up to $H$ factors.

\subsection{Estimation by Model-Based MLE Approach}
%\longbo{try not to say that we work on this because of other work? usually solutions can be inspired by others}\yu{revised}
We develop the MLE methods to give a refined estimation for the transition models in augmented MDPs. Our method is 
inspired by the generic model-based MLE method \texttt{OMLE} in  \cite{liu2023optimistic}.
%which performs a g, we develop the MLE methods to give a refined estimation for the transition models in augmented MDPs.
% also investigate the risk-sensitive task with the extension of \texttt{OMLE} and our novel distributional analysis for augmented MDPs.

Similarly to Algorithm 1 in \cite{liu2023optimistic}, we construct the estimation algorithm \texttt{M-Est-MLE}, where we provide the detailed pseudocode in Algorithm \ref{alg:mbestmle} for space limitation. Employing a standard MLE analysis \cite{geer2000empirical}, we effectively bound the total squared total variation (TV) distance between our estimated model and the true model by:
$\sum_{i=1}^{k-1} \sum_{h=1}^H\EE_{\nu_{\mtheta^*,h}^{\mpi^i}}\mbracket{\norm{\bracket{\PP_{\hat{\theta}_{k,h}}-\PP_{\theta^*_h}}(s_{h},a_{h})}_1^2}
\leq \mathcal{O}(\beta^{\MLE})$, where $\nu_{\mtheta^*}^{\mpi^i}$ denotes the visitation measure for $\mpi^i$ under the real transition kernel. 

However, the standard simulation lemma (e.g., Lemma 10 in \cite{sun2019model}) fails in analyzing the efficiency of \texttt{M-Est-MLE} since the policy learned in our meta-algorithm is non-Markovian for standard episodic MDPs. Instead, we proposed a novel augmented simulation lemma (see Lemma~\ref{lem:simulation lemma}) connecting the Total Variation (TV) distance between model difference with the $\ell_\infty$ distance between the CDFs of the cumulative return random variable: $
    \norm{F_{Z^{\mpi^k}_{\wh{\mtheta}_k}} - F_{Z^{\mpi^k}_{\mtheta^*}}}_\infty
    \leq\sum_{h=1}^H\EE_{ \nu_{\mtheta^*,h}^{\mpi^k}}\mbracket{\norm{\bracket{\PP_{\hat{\theta}_{k,h}}-\PP_{\theta^*_h}}(s_{h},a_{h})}_1} $.
To limit the $\ell_\infty$ distance of CDFs via the estimated error above, we adopt the witness rank $\operatorname{d_{{wit}}}$ defined in Definition \ref{ass:low witness rank}, which is a common structural complexity measure used for model-based RL \cite{sun2019model,huang2022towards,chen2022general,zhong2022gec,liu2023optimistic}. By this way, we have the following theoretical guarantees:
\begin{theorem}[Estimation by Model-Based MLE Approach] \label{thm:mbmle}
Let $\beta^\MLE := H\log\bracket{eK\cN_{[\cdot]}(\mTheta,1/K,\norm{\cdot}_{1})/\delta}$, where $\cN_{[\cdot]}(\mTheta,1/K,\norm{\cdot}_{1})$ is the bracketing number (see Definition~\ref{def:bracket}). The estimation function $\texttt{M-Est-MLE}$ satisfies the Conditions~\ref{con:mbconcentration} and ~\ref{con:mbelliptical} with $
\xi^{\MLE}(K, H, \mTheta, \beta^\MLE, \delta) = \widetilde\gO\left(\poly(H)\bracket{\sqrt{K\operatorname{d_{wit}}\beta^\MLE} }\right)$, where $\operatorname{d_{wit}}$ is the witness rank of the MDP model. (Definition~\ref{ass:low witness rank}).
\end{theorem}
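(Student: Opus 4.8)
The plan is to establish Theorem~\ref{thm:mbmle} by verifying the two conditions required by the meta-theorem (Theorem~\ref{thm:mbmeta}), following the template of \texttt{OMLE} from \cite{liu2023optimistic} but adapted to the augmented-MDP distributional setting. First I would set up the MLE estimator inside \texttt{M-Est-MLE}: at episode $k$, select $\wh{\mtheta}_k$ to (approximately) maximize the log-likelihood of the observed trajectories $\{\mtau_i\}_{i<k}$ under the augmented transition $\TT_{\theta_h} = \PP_{\theta_h}\RR_h$, and define the confidence set $\wh{\mTheta}_k$ as all models whose log-likelihood is within $\beta^\MLE$ of the maximum. Standard MLE concentration machinery \cite{geer2000empirical} — using a bracketing-number covering of $\mTheta$ at scale $1/K$ in $\|\cdot\|_1$ and a union bound over $h\in[H]$ and $k\in[K]$ — gives Condition~\ref{con:mbconcentration} (the true model $\mtheta^*$ stays in $\wh{\mTheta}_k$ with probability $1-\delta$ once $\beta^\MLE = H\log(eK\cN_{[\cdot]}(\mTheta,1/K,\|\cdot\|_1)/\delta)$), and simultaneously yields the in-sample TV bound $\sum_{i<k}\sum_{h}\EE_{\nu^{\mpi^i}_{\mtheta^*,h}}[\|(\PP_{\hat\theta_{k,h}}-\PP_{\theta^*_h})(s_h,a_h)\|_1^2] \le \gO(\beta^\MLE)$ stated in the text.

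Next I would verify Condition~\ref{con:mbelliptical}. The chain is: apply the augmented simulation lemma (Lemma~\ref{lem:simulation lemma}) to convert the per-episode $\ell_\infty$ CDF error into an expected sum of one-step TV model errors,
\begin{align*}
\sum_{k=1}^K\norm{F_{Z^{\mpi^k}_{\wh{\mtheta}_k}} - F_{Z^{\mpi^k}_{\mtheta^*}}}_\infty \le \sum_{k=1}^K\sum_{h=1}^H\EE_{\nu^{\mpi^k}_{\mtheta^*,h}}\mbracket{\norm{(\PP_{\hat\theta_{k,h}}-\PP_{\theta^*_h})(s_h,a_h)}_1}\,.
\end{align*}
The right-hand side is an on-policy quantity for the \emph{current} episode, whereas the MLE bound above controls the \emph{in-sample} (past-episode) error; bridging this gap is exactly what the witness-rank structure is for. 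I would introduce the witness class and write the on-policy one-step error as a witnessed discrepancy, then invoke the standard ``low-rank $\Rightarrow$ bounded cumulative on-policy error via in-sample error'' argument (as in \cite{sun2019model,liu2023optimistic,zhong2022gec}): decompose the $K\times$(witness) discrepancy matrix through its rank-$\operatorname{d_{wit}}$ factorization, apply Cauchy–Schwarz and an elliptical-potential / pigeonhole estimate to trade the cumulative on-policy discrepancy against the cumulative in-sample discrepancy, and plug in the $\gO(\beta^\MLE)$ bound. This produces $\xi^\MLE = \widetilde\gO(\poly(H)\sqrt{K\operatorname{d_{wit}}\beta^\MLE})$, and Theorem~\ref{thm:mbmeta} then gives the claimed regret after multiplying by $L_\infty(\rho)$.

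The main obstacle I anticipate is the second step — specifically, making the witness-rank argument go through with the \emph{augmented} transition operator and the $\ell_\infty$-CDF target rather than the usual scalar value-function gap. Two subtleties need care: (a) the augmented simulation lemma must correctly handle the cumulative-reward coordinate $y_h$ and the convolution with $\RR_h$, so that the resulting one-step error genuinely involves only the original transition $\PP_{\theta_h}$ (the text asserts this decomposition, so I would cite Lemma~\ref{lem:simulation lemma} and check the telescoping over $h$ respects the shift by $y_{h+1}-y_h$); and (b) the witness rank must be defined for the augmented model class in a way compatible with both the MLE in-sample guarantee (which lives over the original transitions under $\nu^{\mpi^i}_{\mtheta^*}$) and the on-policy error we want to bound — I would lean on Definition~\ref{ass:low witness rank} and verify that the discrepancy function used there upper-bounds $\EE_{\nu^{\mpi^k}_{\mtheta^*,h}}[\|(\PP_{\hat\theta_{k,h}}-\PP_{\theta^*_h})(s_h,a_h)\|_1]$ while being witnessed by a bounded test function, so that the optimism from Line~\ref{algline:mbframeoptimisticplanning} together with Condition~\ref{con:mbconcentration} closes the loop. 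The remaining pieces — bracketing-number MLE concentration and the elliptical-potential summation — are routine given the cited lemmas.
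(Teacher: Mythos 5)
Your proposal is correct and follows essentially the same route as the paper's proof: bracketing-based MLE concentration for Condition~\ref{con:mbconcentration} and the in-sample squared-TV bound, the augmented simulation lemma (Lemma~\ref{lem:simulation lemma}) to reduce the $\ell_\infty$-CDF error to one-step TV errors of the \emph{original} transitions, and the witness-rank factorization with Cauchy--Schwarz plus an elliptical-potential argument to convert in-sample error into cumulative on-policy error. The only cosmetic difference is that the paper runs the MLE per step on $\PP_{\theta_h}$ rather than on $\TT_{\theta_h}$, which is equivalent since the known reward distribution contributes a $\mtheta$-independent additive constant to the log-likelihood.
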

We present the formal proof in Appendix~\ref{app:mbmle}. In the risk neutral setting where $L_\infty(\mathbb{E})=1$, our Theorem~\ref{thm:mbmle}, in conjunction with Theorem~\ref{thm:mbmeta}, presents a regret upper bound that aligns closely with \cite{liu2023optimistic}'s result. We also reference \cite{zhao2023provably}'s exploration of RSRL in static CVaR measures and low-rank MDPs, a special subcase of the MDPs with low V-type witness rank \cite{sun2019model,agarwal2020flambe,uehara2021representation}. 
Our analysis extends to V-type witness ranks, offering a more favorable dependence on $d$, the rank of the transition matrix, compared to \cite{zhao2023provably}'s approach. This distinction is elaborated in Appendix~\ref{sec:proof low rank}, demonstrating our method's broader applicability and efficiency.

\section{Model-Free Meta-Algorithm \texttt{RS-DisRL-V}}
\label{sec:mf}

In this section, we expand the scope of RSRL to include general value function approximation. We begin by establishing the foundational assumptions for general value function approximation. Then, we present a meta-algorithm \textbf{R}isk-\textbf{S}ensitive \textbf{Dis}tributional \textbf{RL} with general \textbf{V}alue function approximation (\texttt{RS-DisRL-V}, Algorithm~\ref{alg:mfframe}).
This algorithm's theoretical guarantees and performance are then discussed.

At first, We introduce a generic function class $\bm{\gZ} = \{\gZ_h : h \in [H]\}$, with each element $Z_h( s^\dag_h, a_h) \in \gZ_h$ representing a probability distribution, which serves as an estimator candidate for the random variable of the optimal cumulative reward $Z^{\mpi^*}_h$. Then we make the foundation assumption of the general value function approximation.
\begin{assumption}[General value function approximation \cite{wu2023distributional, wang2023benefits}]
\label{ass:mfbellman}
For each $h \in [H]$, we have $Z^{\mpi^*}_h \in \gZ_h$, and the given function set $\gZ_h$ satisfy the distributional bellman completeness, such that for any $Z_{h+1}\in\gZ_{h+1}$, we have $\cT_{h,\mpi}^\dag Z_{h+1}\in\gZ_{h}$.
\end{assumption}

%Notice that although we assume that the agent has access to a class of random variables containing the value distribution, in practice we either estimate the random variable through its PDF  or CDF, as we will see in the specific settings in Section \ref{sec:mfmle} and Section \ref{sec:mflsr}. \siwei{switch the order here}

While we assume the agent has access to a class of random variables containing the value distribution, in practice, we often estimate the random variable through its Cumulative Distribution Function (CDF) or Probability Density Function (PDF). The practical estimation approach will be elaborated upon in the specific settings outlined in Section \ref{sec:mflsr} and Section \ref{sec:mfmle}. %\siwei{switch the order here}\yu{revised}

\begin{algorithm}[htbp]
   \caption{\texttt{RS-DisRL-V}}
   \label{alg:mfframe}
\begin{algorithmic}[1]
   \STATE {\bfseries Input:} Function class $\bm{\mathcal{Z}}=\mathcal{Z}_1\times\mathcal{Z}_2\cdots\mathcal{Z}_H$, confidence radius $\gamma$.
   \STATE {\bfseries Initialize:} $\wh{\gZ}_{1,\mpi} \leftarrow \gZ$.
   \FOR{$k=1$ {\bfseries to} $K$}
   \STATE 
   $({\mpi}_k, \widehat{  Z}^k)=\argmax_{{\mpi}\in\Pi^\dag, { Z}\in\widehat\gZ_{k,\mpi}}\rho(Z_1)$.~\textcolor{blue}{//Optimistic planning}\label{algline:mfframeoptimisticplanning}
   \STATE Execute policy $\mpi^{k}$, add the collected data $\bm{\tau}_k=\sets{(s_{k,h},a_{k,h},r_{k,h})}_{h=1}^H$ and $\mpi^{k}$, $\widehat{\mtheta}_k$ into history $\cH_k = \cH_{k-1}\cup \{(\bm{\tau}_k, \mpi^{k}, \wh{Z}^k)\}$.~\textcolor{blue}{//Data collection}\label{algline:mfframedatacollection}
   \STATE $\widehat\gZ_{k+1, \bm{\pi}}=\texttt{V-Est}(\mathcal{H}_k, \bm{\gZ}, \bm\pi,\gamma)$.~\textcolor{blue}{//Confidence set construction}\label{algline:mfframeconfidenceset}
\ENDFOR
\end{algorithmic}
\end{algorithm}

The meta-algorithm \texttt{RS-DisRL-V} (Algorithm~\ref{alg:mfestlsr}) similar in structure to \texttt{RS-DisRL-M}, adopts value-type optimistic planning and confidence set construction. It uniquely constructs a version space for each augmented policy using the \texttt{V-Est} estimation algorithm, incorporating the actual optimal cumulative reward distribution. For case when augmented policy set $\Pi^\dag$ is infinite, we employ the \emph{policy covering argument}, discretizing the policy set using a normalized lower bracketing set $\underline{\Pi}$ defined in Definition \ref{def:bracket}. This approach, commonly used in prior studies \cite{kallus2022doubly,wang2023benefits,zhou2023offline,huang2022towards}, ensures the practicality and scalability of the algorithm, especially in complex policy environments (see Appendix \ref{sec:policy cover} for detailed discussion).

\paragraph{Theoretical Guarantees}
Similar to the Section~\ref{sec:mb}, we introduce two sufficient conditions which describe the validity of the estimation function \texttt{V-Est} to establish the theoretical result for \texttt{RS-DisRL-V}.

\begin{condition}[Concentration]
\label{con:mfconcentration}
With probability at least $ 1- \delta$, $\delta \in (0, 1]$, for all policy $\mpi \in \Pi^\dag$, we have that the actual random variable of the cumulative reward collected by policy $\mpi$ is in the confidence set with high probability, i.e., $Z^{\mpi} \in \wh{\gZ}_{k,\mpi}$.
\end{condition}

\begin{condition}[General elliptical potential]
\label{con:mfelliptical}
For $\delta \in (0, 1]$, the $\ell_\infty$ distance between the CDF of the optimistic reward distribution $\wh{Z}^k$ and the actual reward distribution $Z^{\mpi^k}$ under policy $\mpi^k$ can be bounded by $\sum_{k=1}^K \left\| F_{\wh{Z}^k} - F_{Z^{\mpi^k}} \right\|_\infty \leq \zeta(K, H, \gZ, \Pi^\dag, \gamma, \delta)$ with probability at least $1 - \delta$.
\end{condition}
These two conditions for \texttt{RS-DisRL-V}, paralleling those in \texttt{RS-DisRL-M}, encapsulate the efficiency of the estimation function \texttt{V-Est}. The theoretical result for \texttt{RS-DisRL-V} is presented below.
\begin{theorem}
\label{thm:mfmeta}
    Under Assumption~\ref{ass:mfbellman}, if the estimation function $\texttt{V-Est}$ satisfies Conditions \ref{con:mfconcentration} and \ref{con:mfelliptical}, then the regret of \texttt{RS-DisRL-V} can be bounded by $
        \operatorname{Regret(K)} \leq L_\infty(\rho)\zeta(K, H, \gZ, \Pi^\dag,\gamma, \delta)$.
\end{theorem}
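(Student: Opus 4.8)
The plan is to mirror the structure of the model-based argument (Theorem~\ref{thm:mbmeta}) but working directly with value-distribution estimators rather than transition models. First I would condition on the good event on which both Condition~\ref{con:mfconcentration} and Condition~\ref{con:mfelliptical} hold; by a union bound this occurs with probability at least $1 - 2\delta$ (absorbing the constant into the $\widetilde\gO$, or rescaling $\delta$). On this event, the key observation is the \emph{optimism} of the planning step in Line~\ref{algline:mfframeoptimisticplanning}: since $Z^{\mpi^*} \in \wh\gZ_{k,\mpi^*}$ by Condition~\ref{con:mfconcentration}, the pair $(\mpi^*, Z^{\mpi^*})$ is feasible in the argmax, so for every episode $k$,
\begin{equation*}
    \rho\bigl(\wh Z^k_1\bigr) \;=\; \max_{\mpi \in \Pi^\dag,\, Z \in \wh\gZ_{k,\mpi}} \rho(Z_1) \;\geq\; \rho\bigl(Z^{\mpi^*}\bigr).
\end{equation*}
Here I should check the subtle point that $\rho(\wh Z^k_1)$, the planned value at the initial augmented state, equals $\rho$ evaluated on the cumulative-reward random variable associated with the selected distributional estimator — this is exactly what the distributional Bellman equation of Section~\ref{sec:aug} guarantees, since the value at step $1$ is the law of $\sum_{h=1}^H r_h$ under the estimated distribution.

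Next I would bound the per-episode regret $\rho(Z^{\mpi^*}) - \rho(Z^{\mpi^k})$. Using the optimism inequality above,
\begin{equation*}
    \rho\bigl(Z^{\mpi^*}\bigr) - \rho\bigl(Z^{\mpi^k}\bigr) \;\leq\; \rho\bigl(\wh Z^k_1\bigr) - \rho\bigl(Z^{\mpi^k}\bigr).
\end{equation*}
Now I invoke the Lipschitz property C2 of the LRM $\rho$: both $\wh Z^k_1$ and $Z^{\mpi^k}$ are cumulative-reward random variables (bounded in $[0,H]$), so
\begin{equation*}
    \rho\bigl(\wh Z^k_1\bigr) - \rho\bigl(Z^{\mpi^k}\bigr) \;\leq\; L_\infty(\rho)\,\bigl\| F_{\wh Z^k} - F_{Z^{\mpi^k}} \bigr\|_\infty,
\end{equation*}
where $F_{\wh Z^k}$ is shorthand for the CDF of $\wh Z^k_1$ and law invariance (C1) ensures $\rho$ only depends on these CDFs. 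Summing over $k \in [K]$ and applying Condition~\ref{con:mfelliptical} yields
\begin{equation*}
    \operatorname{Regret}(K) \;=\; \sum_{k=1}^K \Bigl(\rho(Z^{\mpi^*}) - \rho(Z^{\mpi^k})\Bigr) \;\leq\; L_\infty(\rho) \sum_{k=1}^K \bigl\| F_{\wh Z^k} - F_{Z^{\mpi^k}} \bigr\|_\infty \;\leq\; L_\infty(\rho)\,\zeta(K, H, \gZ, \Pi^\dag, \gamma, \delta),
\end{equation*}
which is the claimed bound.

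The main obstacle I anticipate is not the inequality chain itself — that is routine once optimism is in place — but rather the bookkeeping around the \emph{policy covering argument} mentioned after Algorithm~\ref{alg:mfframe}: when $\Pi^\dag$ is infinite, the confidence sets $\wh\gZ_{k,\mpi}$ are only constructed for policies in a finite lower-bracketing net $\underline\Pi$, so Condition~\ref{con:mfconcentration} and the optimism step must be interpreted through that discretization, and one must argue that the bracketing net contains (or approximates to within negligible error) a near-optimal policy and that the executed $\mpi^k$ still lies in the net. I expect the cleanest route is to fold this entirely into the statements of Conditions~\ref{con:mfconcentration}–\ref{con:mfelliptical} — i.e., take those conditions as already accounting for the covering — so that the meta-theorem proof stays at the level of abstraction above, deferring the net construction to the concrete instantiations (\texttt{V-Est-LSR}, \texttt{V-Est-MLE}) in Sections~\ref{sec:mflsr}–\ref{sec:mfmle}. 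A secondary point to handle carefully is that $\rho$ is defined on a random-variable class $\gZ$ with a fixed supremum-norm structure on distribution functions; I must confirm that all cumulative rewards arising here (true and estimated, over the augmented MDP) lie in that class so that C2 applies with the same constant $L_\infty(\rho)$.
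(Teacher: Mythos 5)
Your proposal is correct and follows essentially the same route as the paper's proof: concentration gives optimism of the planned $\wh Z^k$ over $Z^{\mpi^*}$, the Lipschitz property of $\rho$ converts the per-episode gap into $L_\infty(\rho)\|F_{\wh Z^k}-F_{Z^{\mpi^k}}\|_\infty$, and the elliptical-potential condition bounds the sum. Your additional remarks on folding the policy-covering bookkeeping into the conditions match how the paper handles it (deferred to the concrete estimators).
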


The regret bound for the \texttt{RS-DisRL-V} algorithm is characterized by a form similar to that in the model-based case (Theorem~\ref{thm:mbmeta}). The effectiveness bound $\zeta(K,H,\gZ,\Pi^\dag,\gamma,\delta)$, sharing the same form as $\widetilde\gO(\poly(H)\operatorname{D}\sqrt{\gamma K / \delta})$ for both LSR (\texttt{V-Est-LSR}, Algorithm~\ref{alg:mfestlsr}) and MLE (\texttt{V-Est-MLE}, Algorithm~\ref{alg:mfestmle}) approaches, exhibiting a dependency of $\sqrt{K}$, indicating a sublinear complexity in terms of episodes. Here $\operatorname{D}$ is the structural complexity specified in  dependency Theorem~\ref{thm:mflsr} for LSR and Theorem~\ref{thm:mfmle} for MLE. 

\subsection{Estimation by Value-Based LSR Approach}\label{sec:mflsr}
In this section, 
we design a novel LSR approach \texttt{V-Est-LSR} for random-variable estimation through CDFs in the augmented MDP.
%\longbo{too many "to the best of our knowledge"}\yu{Now we only use this sentence once.}
To the best of our knowledge, we are the first to present the statistically efficient LSR estimation for DisRL with general value function approximation. 

Denote $F_h(\cdot|s^\dag, a)$ as the CDF of $Z_h(\cdot | s^\dag, a) \in \gZ_h$. The estimation function \texttt{V-Est-LSR} focuses on estimating the Bellman operator with real transition probability. For a given target CDF $\widetilde{F}_{h+1}(\cdot | s^\dag, a)$ and $\mpi$, the data collection process
%\siwei{What do you mean by this "exploration process?" Also, for episode, we may use $k$ instead of $i$.}\xiangcheng{perhaps data collection process? correspond to the pseudo code}\yu{revised}
in episode $k$ gives us an empirical sample of $\gT_{h, \mpi}^\dag \widetilde{F}_{h+1}(x | s_{k,h}^\dag, a_{k,h})$ since we observe the transfer from $(s_{k,h}^\dag,a_{k,h})$ to $(s_{k,h+1}^\dag)$ with policy $\mpi^k$. Therefore, we can use $\pi_{h+1}^\top \widetilde{F}_{h+1}(x-r_{k,h}|s_{k,h+1}^\dag) := \int_{a_{h+1}}\pi_{h+1}(a_{h+1}|s_{k,h+1}^\dag)\widetilde{F}_{h+1}(x-r_{k,h}|s_{k,h+1}^\dag,a_{h+1})$ to perform an unbiased estimate of $\gT_{h, \mpi}^\dag \widetilde{F}_{h+1}(x|s_{k,h}^\dag,a_{k,h})$:
\begin{align*}
    \widehat{F}^\LSR_{k,h,\mpi, \widetilde{F}} =\argmin_{F_h \in \gZ_h} \sum_{i=1}^{k-1}\left( F_h(x_{k,h}^{\mpi, \widetilde{F}} | s_{k,h}^\dag, a_{k,h})  \right.  \left.- \pi_{h+1}^\top \widetilde{F}_{h+1}(x_{k,h}^{\mpi, \widetilde{F}} - r_{k,h} | s_{k,h+1}^\dag) \right) ^2\,,
\end{align*}
where $x_{i,h}^{{\mpi}, {\widetilde{F}}}$, defined in Eq.(\ref{eq:x_i,h mflsr}), intuitively leads the exploration direction with maximal uncertainty.
With this novel estimator, we can prove that with high probability.
\begin{align*}
    \sum_{i<k}\Big( &\gT_{h,\mpi}\widetilde{F}_{h+1}(x_{i,h}^{\mpi, \widetilde{F}} | s_{i,h}^\dag, a_{i,h})   -\wh{F}^\LSR_{k,h,\mpi, \widetilde{F}} (x_{i,h}^{\mpi, \widetilde{F}} | s_{i,h}^\dag, a_{i,h}) \Big)^2  \leq \widetilde{\gO}({\gamma^\LSR})\,.
\end{align*} 
By a union bound over the  covering $\underline{\Pi}^\dag$ and $\underline{\bm\gZ}$ (detailed in Appendix~\ref{sec:policy cover}), this property derives the concentration condition (Condition~\ref{con:mfconcentration}). With the concentration bound, we can easily establish the general elliptical potential condition (Condition~\ref{con:mfelliptical}) utilizing the similar argument in model-based LSR method (Section~\ref{sec:mblsr}). We propose the complete pseudocode of \texttt{V-Est-LSR} in Algorithm~\ref{alg:mfestlsr}. Theoretical guarantees are provided below.
\begin{theorem}
\label{thm:mflsr}
    Let $\gamma^\LSR := 16\log(HK^2/\delta) + \log(\gN_C(\Pi^\dag, 1/K,\|\cdot\|_1)) + \log(\gN_C(\bm\gZ, 1/K, \|\cdot\|_1))$. The estimation function \texttt{V-Est-LSR} satisfies the Conditions~\ref{con:mfconcentration} and \ref{con:mfelliptical} with $\zeta^\LSR=\widetilde{\gO}\left( \operatorname{poly}(H)\sqrt{K\gamma^\LSR\dim_E(\bm\gZ, \sqrt{K})} \right)$, where $\dim_E$ represents the eluder dimension.
\end{theorem}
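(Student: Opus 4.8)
\textbf{Proof proposal for Theorem~\ref{thm:mflsr}.}
The plan is to reduce Conditions~\ref{con:mfconcentration} and~\ref{con:mfelliptical} to a single in-sample guarantee for the least-squares estimator $\wh F^\LSR_{k,h,\mpi,\widetilde F}$, and then to close the loop with an augmented-simulation telescoping bound and the eluder-dimension pigeonhole lemma. Concretely I would: (i) show that, uniformly over the relevant policies and target CDFs, the LSR backup tracks the true distributional Bellman backup in the empirical squared sense; (ii) deduce the concentration condition by instantiating this guarantee along the true Bellman recursion; and (iii) deduce the general elliptical potential condition by converting the per-episode $\ell_\infty$ CDF gap into a sum of on-trajectory Bellman errors and controlling that sum via the eluder dimension of $\bm\gZ$. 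Having established these, Theorem~\ref{thm:mfmeta} immediately converts the bound on $\sum_k\|F_{\wh Z^k}-F_{Z^{\mpi^k}}\|_\infty$ into the regret bound.

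For ingredient (i), note that by Assumption~\ref{ass:mfbellman} the backup $\cT^\dag_{h,\mpi}\widetilde F_{h+1}$ is again a CDF in $\gZ_h$, so the regression is well specified; moreover, conditioned on $(s^\dag_{k,h},a_{k,h})$, the quantity $\pi_{h+1}^\top\widetilde F_{h+1}(x-r_{k,h}\mid s^\dag_{k,h+1})$ is an unbiased estimate of $\cT^\dag_{h,\mpi}\widetilde F_{h+1}(x\mid s^\dag_{k,h},a_{k,h})$ with values in $[0,1]$, i.e.\ bounded-noise regression. A standard squared-loss martingale concentration (Freedman-type), an $\ell_1$-cover of $\gZ_h$ at scale $1/K$, and a union bound over the covers $\underline{\Pi}^\dag$ and $\underline{\bm\gZ}$ from Appendix~\ref{sec:policy cover} give, with probability $1-\delta$ and simultaneously for all $k,h,\mpi,\widetilde F$,
\begin{align*}
\sum_{i<k}\Big(\cT^\dag_{h,\mpi}\widetilde F_{h+1}(x^{\mpi,\widetilde F}_{i,h}\mid s^\dag_{i,h},a_{i,h})-\wh F^\LSR_{k,h,\mpi,\widetilde F}(x^{\mpi,\widetilde F}_{i,h}\mid s^\dag_{i,h},a_{i,h})\Big)^2\le\widetilde\gO(\gamma^\LSR),
\end{align*}
where the three terms in $\gamma^\LSR$ are precisely the variance term $16\log(HK^2/\delta)$, $\log\gN_C(\bm\gZ,1/K,\|\cdot\|_1)$, and $\log\gN_C(\Pi^\dag,1/K,\|\cdot\|_1)$. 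Instantiating this with $\widetilde F=F^\mpi$, the true value CDF: since $F^\mpi_h=\cT^\dag_{h,\mpi}F^\mpi_{h+1}$ for every $h$, the true CDF satisfies the in-sample constraint that defines $\wh\gZ_{k,\mpi}$ with slack $\widetilde\gO(\gamma^\LSR)$, so $Z^\mpi\in\wh\gZ_{k,\mpi}$ for all $k$, and the policy cover makes this uniform over $\mpi\in\Pi^\dag$, proving Condition~\ref{con:mfconcentration}.

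For ingredient (iii), take the optimistic pair $(\mpi^k,\wh Z^k)$. An augmented simulation argument in the spirit of Lemma~\ref{lem:simulation lemma} telescopes over the horizon: $\wh Z^k\in\wh\gZ_{k,\mpi^k}$ forces $F_{\wh Z^k_h}$ to be empirically close to $\cT^\dag_{h,\mpi^k}F_{\wh Z^k_{h+1}}$, and since each $\cT^\dag_{h,\mpi^k}$ is an averaging operator on CDFs (hence $\ell_\infty$-nonexpansive), one obtains
\begin{align*}
\big\|F_{\wh Z^k}-F_{Z^{\mpi^k}}\big\|_\infty\le\sum_{h=1}^H\EE_{\nu^{\mpi^k}_{h}}\Big[\big|\cT^\dag_{h,\mpi^k}F_{\wh Z^k_{h+1}}(x\mid s^\dag_h,a_h)-F_{\wh Z^k_h}(x\mid s^\dag_h,a_h)\big|\Big]
\end{align*}
for the worst-case $x$, with $\nu^{\mpi^k}$ the visitation measure of $\mpi^k$. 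The point $x^{\mpi^k,\cdot}_{k,h}$ is defined precisely to maximize this in-confidence-set discrepancy over $x$, so the summand is bounded by the LSR discrepancy evaluated at $x^{\mpi^k,\cdot}_{k,h}$; passing from the population visitation measure to the empirical one costs only lower-order martingale terms, and then the bound from ingredient (i) lets the standard eluder-dimension pigeonhole lemma control each per-step sum over $k$ by $\widetilde\gO\big(\sqrt{K\,\dim_E(\bm\gZ,\cdot)\,\gamma^\LSR}\big)$. Summing over $h\in[H]$ yields $\zeta^\LSR=\widetilde\gO\big(\poly(H)\sqrt{K\gamma^\LSR\dim_E(\bm\gZ,\cdot)}\big)$, i.e.\ Condition~\ref{con:mfelliptical}.

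The main obstacle I anticipate is reconciling the supremum over the continuous cumulative-reward argument $x$ with the eluder-dimension machinery, which is formulated for scalar-valued functions: the device of regressing at, and measuring uncertainty at, the worst-case point $x^{\mpi,\widetilde F}_{k,h}$ is what makes the CDF-valued problem behave like a scalar regression, and checking that this choice is well-defined, measurable, and compatible with the covering and union-bound steps (so that the eluder argument applies to a single coherent real-valued function class) is the delicate part. A secondary difficulty is the policy-covering discretization of the infinite augmented policy set $\Pi^\dag$: the bracketing must be fine enough that distributional Bellman completeness and the regression guarantee survive the approximation, which is exactly where the $\log\gN_C(\Pi^\dag,1/K,\|\cdot\|_1)$ contribution to $\gamma^\LSR$ enters.
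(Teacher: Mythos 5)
Your proposal follows essentially the same route as the paper's proof: a uniform in-sample squared-error guarantee for the LSR backup via martingale concentration plus a union bound over the covers of $\Pi^\dag$ and $\bm\gZ$ (the paper's one-step concentration lemma), an inductive instantiation along the true Bellman recursion $F_h^\mpi=\cT_{h,\mpi}^\dag F_{h+1}^\mpi$ to get Condition~\ref{con:mfconcentration}, and a horizon-telescoping performance-difference decomposition of $\|F_{\wh Z^k}-F_{Z^{\mpi^k}}\|_\infty$ into on-trajectory Bellman errors (with a Hoeffding correction from the visitation measure to the realized trajectory), closed by the eluder-dimension pigeonhole evaluated at the maximal-uncertainty points $x_{k,h}^{\mpi,\widetilde F}$. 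The obstacles you flag—reducing the $\sup_x$ to a scalar regression via the worst-case point, and the policy-cover discretization feeding into $\gamma^\LSR$—are exactly the points the paper's argument is built around, so the plan is sound.
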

%\yu{revised}\longbo{rewrite. needs improvement}
The formal proof of Theorem \ref{thm:mflsr} is presented in Appendix~\ref{app:mflsr}. 
The \texttt{RS-DisRL-V} algorithm, when implemented with the \texttt{V-Est-LSR} estimation function attains a significant regret upper bound of RSRL with static LRM $\widetilde\gO(L_\infty(\rho)\poly(H)\sqrt{K\operatorname{D_{cov}}\dim_E(\bm\gZ, \sqrt{K})})$, where the covering dimension $\operatorname{D_{cov}} = \log(\gN_C(\Pi^\dag, 1/K,\|\cdot\|_1)) + \log(\gN_C(\bm\gZ, 1/K, \|\cdot\|_1))$. 
This bound, characterized by a $\sqrt{K}$ dependency signifies the first sample-efficient RSRL with general value function approximation and static LRM.
Furthermore, when degenerating to the risk-neutral and tabular case, this result aligns with optimal dependencies on $K$ as demonstrated in \cite{wang2020reinforcement}. 
% \longbo{just existing or sota? if sota, say it}

\subsection{Estimation by Value-Based MLE Approach}\label{sec:mfmle}
For DisRL with general value function approximation, a standard estimation method adopted is estimating the candidate of true cumulative reward $Z^{\mpi}$ by its density function with MLE, which is powerful in theoretical studies \cite{wang2023benefits, wu2023distributional} and practice \cite{hessel2018rainbow, bellemare2017distributional}. Inspired by previous studies \cite{wang2023benefits, wu2023distributional}, we combine the standard MLE method with our general risk-sensitive model-free framework and provide the estimation function \texttt{V-Est-MLE} which performs efficient estimation in the augmented MDP and risk-sensitive target.

Assume $f_h(x|s_{h}^\dag, a_{h})$ is the PDF of $Z_{h}(s_h^\dag, a_h) \in \gZ_h$.
% , which belongs to a function class $\cF_h$.
%\siwei{This $\cF_h$ is a class of pdf, and is different with last section (class of cdf).}\yu{revised} 
Inspired by the MLE method utilized in risk-neutral DisRL \cite{wu2023distributional,wang2023benefits}, we estimate by maximizing the log likelihood function: $\log f_h(z_{k,h}^{f,\pi}|s_{k,h},a_{k,h})$ where $z_{k,h}^{f,\pi}$ is sampled from $f_{h+1}(\cdot|s_{k,h+1}^\dag,\pi(s_{k,h+1}))+r_{k,h}$.
% (note that for any episode $k\in[K]$, the distribution  $z\sim\cT_{h,\mpi}f_{h+1}(\cdot|s_{k,h}^\dag,a_{k,h})$ is the same as: $r_{k,h}+f_{h+1}(\cdot|s_{k,h+1}^\dag,\pi(s_{k,h+1}))$ given history up to $(s_{k,h},a_{k,h})$). 
The details of \texttt{V-Est-MLE} are presented in Algorithm \ref{alg:mfestmle} in appendix due to the space limitation.

The following theorem addresses the efficiency of the MLE approach in risk-sensitive case. We denote $\operatorname{d_{BE}}$ as the Bellman eluder dimension (Definition~\ref{ass:bellman eluder dim}) that aligns with the approaches discussed in prior studies by \cite{jin2021bellman,wang2023benefits}.
% for function class $\{\TV\bracket{f_h||\cT_{h,\mpi}f_{h+1}} : f_h\in\gZ_h\}$,  
\begin{theorem}
\label{thm:mfmle}
With $\gamma^\MLE = \log(\cN_{[\cdot]}(\bm\gZ,\epsilon,\norm{\cdot}_1))+\log(\cN_{[\cdot]}(\Pi^\dag,\epsilon,\norm{\cdot}_1))+\log(KH/\delta) $, estimation function \texttt{V-Est-MLE} satisfies Conditions \ref{con:mfconcentration} and \ref{con:mfelliptical} with $\zeta^\MLE=\widetilde{\gO}\bracket{\poly(H)\sqrt{\operatorname{d_{{BE}}} \gamma^\MLE K}}$. Here $\operatorname{d_{{BE}}}$ represents the Bellman eluder dimension, which is a common structural complexity studied in \cite{jin2021bellman, wang2023benefits}.
%\siwei{What about condition 7.2?}\yu{revised}
\end{theorem}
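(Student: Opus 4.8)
The plan is to verify the two conditions required by the meta-theorem (Theorem~\ref{thm:mfmeta}) for the estimator \texttt{V-Est-MLE} and then invoke it. I would first handle the concentration condition (Condition~\ref{con:mfconcentration}) by running the standard MLE generalization argument adapted to the augmented distributional setting. Conditioned on the history before episode $i$ and on the candidate density $f_{h+1}$ used to form the regression target, the pseudo-sample $z_{i,h}^{f,\pi}$ has density $\gT_{h,\mpi}^\dag f_{h+1}(\cdot\mid s_{i,h}^\dag,a_{i,h})$ after marginalizing the observed transition and the reward, and distributional Bellman completeness (Assumption~\ref{ass:mfbellman}) places this target inside $\gZ_h$. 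A martingale Chernoff argument over a bracketing cover of $\bm\gZ$ of log-cardinality $\log\cN_{[\cdot]}(\bm\gZ,\epsilon,\norm{\cdot}_1)$, a union bound over a normalized lower bracketing set $\underline\Pi$ of $\Pi^\dag$ (contributing $\log\cN_{[\cdot]}(\Pi^\dag,\epsilon,\norm{\cdot}_1)$), and a union bound over the $KH$ episode--step pairs then give, with probability at least $1-\delta$: the true PDFs $f_h^{\mpi}$ are near likelihood-maximal for every $\mpi$, so $Z^{\mpi}\in\wh\gZ_{k,\mpi}$; and every member of $\wh\gZ_{k,\mpi}$ has small in-sample squared Hellinger Bellman residual $\sum_{i<k}D_H^2\!\left(f_{Z_h},\,\gT_{h,\mpi}^\dag f_{Z_{h+1}}\right)(s_{i,h}^\dag,a_{i,h})\lesssim\gamma^\MLE$. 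This is Condition~\ref{con:mfconcentration} and also supplies the in-sample control used below.

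The second step is a value-based augmented simulation lemma, the CDF analogue of Lemma~\ref{lem:simulation lemma}: for the optimistic pair $(\mpi^k,\wh Z^k)$ chosen in Line~\ref{algline:mfframeoptimisticplanning},
\[
\norm{F_{\wh Z^k}-F_{Z^{\mpi^k}}}_\infty\ \le\ \sum_{h=1}^H\EE_{\nu_h^{\mpi^k}}\!\left[\,\norm{F_{\wh Z_h^k}(\cdot\mid s_h^\dag,a_h)-\gT_{h,\mpi^k}^\dag F_{\wh Z_{h+1}^k}(\cdot\mid s_h^\dag,a_h)}_\infty\,\right],
\]
where $\nu_h^{\mpi^k}$ is the occupancy measure of $\mpi^k$ in the true augmented MDP. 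I would prove this by telescoping with the distributional Bellman identity $F_{Z_h^{\mpi^k}}=\gT_{h,\mpi^k}^\dag F_{Z_{h+1}^{\mpi^k}}$ together with the fact that $\gT_{h,\mpi^k}^\dag$ is a non-expansion on CDFs in $\norm{\cdot}_\infty$ --- it is a probability-weighted average of translates of the next-step CDF and translation is an $\ell_\infty$ isometry --- so at each step the accumulated error collapses into a one-step expectation, and the terminal terms at $h=H+1$ agree. Pointwise in $x$, the elementary bounds $\norm{F-G}_\infty\le\norm{f-g}_1=2D_{\mathrm{TV}}(f,g)\lesssim D_H(f,g)$ convert the in-sample Hellinger control of the first step into an in-sample bound on the CDF Bellman residual of $\wh Z^k$.

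The third step turns this into the general elliptical potential bound (Condition~\ref{con:mfelliptical}). Writing $\ell_h^k(s_h^\dag,a_h):=\norm{F_{\wh Z_h^k}(\cdot\mid s_h^\dag,a_h)-\gT_{h,\mpi^k}^\dag F_{\wh Z_{h+1}^k}(\cdot\mid s_h^\dag,a_h)}_\infty$, the first step gives the in-sample bound $\sum_{i<k}\bracket{\ell_h^i(s_{i,h}^\dag,a_{i,h})}^2\lesssim\gamma^\MLE$, while the simulation lemma controls $\sum_k\norm{F_{\wh Z^k}-F_{Z^{\mpi^k}}}_\infty$ by $\sum_{h=1}^H\sum_{k=1}^K\EE_{\nu_h^{\mpi^k}}[\ell_h^k]$. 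This is exactly the configuration handled by the Bellman-eluder machinery of \cite{jin2021bellman}: for each $h$, small in-sample error together with bounded out-of-sample queries along the visited distributions $\nu_h^{\mpi^k}$ yields $\sum_{k=1}^K\EE_{\nu_h^{\mpi^k}}[\ell_h^k]\le\widetilde\gO\bracket{\sqrt{\operatorname{d_{BE}}\gamma^\MLE K}}$ in terms of the (distribution-type) Bellman eluder dimension $\operatorname{d_{BE}}$ of the induced residual class (Definition~\ref{ass:bellman eluder dim}). Summing over $h\in[H]$ and absorbing the $\poly(H)$ losses from the simulation lemma and the Hellinger-to-TV conversions gives $\sum_{k=1}^K\norm{F_{\wh Z^k}-F_{Z^{\mpi^k}}}_\infty\le\widetilde\gO\bracket{\poly(H)\sqrt{\operatorname{d_{BE}}\gamma^\MLE K}}=:\zeta^\MLE$, which is Condition~\ref{con:mfelliptical}. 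A union bound over the two failure events and an application of Theorem~\ref{thm:mfmeta} then yield the stated regret.

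The main obstacle is the interface between these three ingredients, since they live in different metrics and reference different distributions: MLE controls a squared Hellinger distance between one-step \emph{densities} of the cumulative return in the \emph{augmented} MDP; the regret, through the Lipschitz property C2 of $\rho$, needs the \emph{supremum} distance between \emph{CDFs} of the \emph{total} return; and the Bellman-eluder bound needs the residual measured against the true occupancy $\nu_h^{\mpi^k}$ rather than against the fitted-target distribution that defines the MLE loss. Carrying out the value-based augmented simulation lemma in $\ell_\infty$ while handling the reward-induced shift $y_{h+1}-y_h$ inside $\gT^\dag$, and ensuring that the residual class on which $\operatorname{d_{BE}}$ is measured is precisely the one the version space controls in sample --- all while threading the policy cover $\underline\Pi$ through every union bound and tracking the extra $\poly(H)$ factors --- is where the real effort lies; the remaining steps are routine once this scaffolding is in place.
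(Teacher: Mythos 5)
Your proposal follows essentially the same route as the paper's proof: an MLE bracketing/Chernoff argument over $\underline{\bm\gZ}$ and $\underline\Pi$ for concentration and an in-sample squared-divergence bound on the Bellman residual, a distributional performance-difference (simulation) lemma reducing $\norm{F_{\wh Z^k}-F_{Z^{\mpi^k}}}_\infty$ to a sum of expected one-step residuals, and the distributional Bellman-eluder potential argument to obtain $\widetilde\gO(\poly(H)\sqrt{\operatorname{d_{BE}}\gamma^\MLE K})$. The only differences are cosmetic: the paper measures the residual in squared total variation of the densities (its MLE generalization lemma already delivers this in expectation over the roll-in measures, which is exactly the form the distributional eluder lemma needs) and telescopes in $\ell_1$ on PDFs rather than in $\norm{\cdot}_\infty$ on CDFs, but your conversion $\norm{F-G}_\infty\le\norm{f-g}_1\lesssim D_H$ makes the two presentations equivalent.
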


This result enables us to establish a regret upper bound for \texttt{RS-DisRL-V} of $\widetilde\gO(L_\infty(\rho)\poly(H)\sqrt{K\operatorname{d_{{BE}}}\operatorname{D_{cov}}})$, with the covering dimension $\operatorname{D_{cov}} = \log(\gN_{[\cdot]}(\Pi^\dag, 1/K,\|\cdot\|_1))+\log(\gN_{[\cdot]}(\bm\gZ, 1/K, \|\cdot\|_1))$. 
Notably, this bound aligns closely with results from existing research \cite{jin2021bellman} in the risk-neutral domain, demonstrating its relevance and applicability in a wide range of reinforcement learning contexts.

\section{Conclusion}

We give a comprehensive discussion of RS-DisRL with static LRM and general function approximation. We propose the model-based meta-algorithm \texttt{RS-DisRL-M} (Algorithm~\ref{alg:mbframe}) for model-based function approximation and the model-free meta-algorithm \texttt{RS-DisRL-V} (Algorithm~\ref{alg:mfframe}) for the general approximation of value functions. Equipped with our novel LSR or MLE estimation approaches, both meta-algorithms achieve the $\widetilde\gO(\sqrt{K})$ dependency of the regret upper bound, giving the first statistically efficient algorithms for RSRL with static LRM. 
Additionally, we establish a computationally tractable and statistically efficient algorithm in the specific setting with static CVaR risk measure and linear function approximation. In this case, we provide numerical experiments to validate the theoretical results (see Appendix~\ref{sec:experiments}).

\section{Impact Statements}

This paper presents work whose goal is to advance the field of Machine Learning. There are many potential societal consequences of our work, none which we feel must be specifically highlighted here.

%By specializing in the CVaR risk measure and linear function approximation, we establish the first computationally tractable and statistically efficient algorithm \texttt{RSRL-LinearCVaR} (detailed in Algorithm~\ref{alg:linearcvar}), which enjoys the regret upper bound $\widetilde{\gO}(\tau^{-1}d^3H^2\sqrt{MK})$, where $\tau$ is the risk level of CVaR measure, $d$ is the dimension of the linear feature vector, and $M$ is the dependency of the covering of value distribution class. Furthermore, we provide numerical experiments to validate our theoretical results (see Appendix~\ref{app:experiment}).

% In this paper, our contribution lies in establishing the general frameworks of RS-DisRL with both model-based and value-based function approximation. We introduce a model-based framework \texttt{RS-DisRL-M} (Algorithm~\ref{alg:mbframe}) and a model-free (value-based) framework \texttt{RS-Dis-V} (Algorithm~\ref{alg:mfframe}), providing theoretical guarantees and sufficient conditions for their effectiveness. Moreover, we emphasize the use of Maximum Likelihood Estimation (MLE) and Least Squares Regression (LSR) for estimating models or functions in both settings. Combined with these estimation approaches, we demonstrate that both general algorithms are statistically efficient and achieve the sublinear regret bounds with episode number.

% In the unusual situation where you want a paper to appear in the
% references without citing it in the main text, use \nocite
% \nocite{langley00}

\bibliography{ref}
\bibliographystyle{plain}

%%%%%%%%%%%%%%%%%%%%%%%%%%%%%%%%%%%%%%%%%%%%%%%%%%%%%%%%%%%%%%%%%%%%%%%%%%%%%%%
%%%%%%%%%%%%%%%%%%%%%%%%%%%%%%%%%%%%%%%%%%%%%%%%%%%%%%%%%%%%%%%%%%%%%%%%%%%%%%%
% APPENDIX
%%%%%%%%%%%%%%%%%%%%%%%%%%%%%%%%%%%%%%%%%%%%%%%%%%%%%%%%%%%%%%%%%%%%%%%%%%%%%%%
%%%%%%%%%%%%%%%%%%%%%%%%%%%%%%%%%%%%%%%%%%%%%%%%%%%%%%%%%%%%%%%%%%%%%%%%%%%%%%%
\newpage
\appendix
\onecolumn
\section{Notations}
Define $\Omega$ as the measurable space containing all the augmented trajectories $\mtau=\sets{s_1^\dag,a_1,\cdots,s_H^\dag,a_H}$. We consider the probability space $(\Omega, \Sigma, \Prob)$, where $\Sigma$ is the $\sigma$-algebra and $\Prob$ is the productive probability measure combine the transition distribution and reward distribution.

Let $Z_{\mtheta}^{{\mpi}}$ be the random variable of $\sum_{h=1}^H r_h$ defined on the $\sigma$-algebra $\Sigma$ of $\Omega$. Let $F_Z(x)$ be its cumulative distribution function (CDF). For an augmented trajectory $\mtau\in\Omega$ we denote $\mu_{\mtheta}^{\mpi}(\mtau)$ as the probability measure on $\mtau$ by following policy $\mpi$ under transition model $\mtheta$, i.e., for any augmented state action pair $s^\dag,a$,  
$$\mu_{\mtheta}^{\mpi}(s^\dag,a)=\int_{\mtau}\mu_{\mtheta}^{\mpi}(\mtau)\mathbf{1}(\tau_h=(s_h^\dag,a_h))$$
Since our policy $\mpi$ is Markov on the augmented MDP, the visitation $\mu$ admits a factorized structure: 
$$\mu_{\mtheta}^{\mpi}(\mtau)=\prod_{h=1}^H \pi_h(a_h|s_h^\dag)\TT_{\theta_h}(s_{h+1}^\dag|s_h^\dag,a_h)$$
We further define $\mu_{\mtheta}^{\mpi}(s^\dag,a)=\mu_{\mtheta}^{\mpi}(s^\dag)\mpi(a|s^\dag)$. 

We also denote $\nu$ as a probability measure defined on the original state action pairs $(s,a)$: 
$$\nu_{\mtheta}^{\mpi}(s,a)=\int_{\mtau}\mu_{\mtheta}^{\mpi}(\mtau)\mathbf{1}(s,a\in\mtau)=\int_y \mu_{\mtheta}^{\mpi}((s,y),a)$$
However, we remark that $\nu$ can not be factorized since our policy depends not only on the state $s$.

Then we introduce the standard concepts of the covering and bracketing numbers for a function class, which are widely employed in the analysis of general function approximation \cite{ayoub2020model,liu2022when,liu2023optimistic,wang2023benefits}.
\begin{definition}[Covering Number]\label{def:cover}
    The $\epsilon$-covering number of a set $\gV$ with metric $\rho$, denoted as $\cN_C(\gV,\epsilon,\rho)$, is the minimum integer $n$ such that there exists a subset $\cV_o\subset\cV$ with cardinality $n$, for every $x\in\cV$, there exists a $y\in\cV_o$, with $\rho(x,y)\leq \epsilon$
\end{definition}
\begin{definition}[Bracketing Number]\label{def:bracket}
    Let $\gG$ be a set of functions mapping $\gX \to \R$. Given $l, u \in \gG$ such that $l(x) \leq u(x)$ for all $x \in \gX$. We say that the \emph{bracket} $[l, u]$ is the set of functions $g \in \gG$ such that $l(x) \leq g(x) \leq u(x)$. for all $x \in \sX$. We call $[l, u]$ and $\epsilon$-bracket if $\|u - l \| \leq \epsilon$. Then the $\epsilon$-bracketing number of $\gG$ with respect to $\|\cdot\|$ denoted by $\gN_{[\cdot]}( \gG,\epsilon, \|\cdot\|)$ is the minimum number of $\epsilon$-brackets needed to cover $\gG$. And we denote ${\gG}^{\downarrow}$ as the set of the lower bracket functions $l$ of this $\epsilon$-brackets covering, i.e., $\left| {\gG}^{\downarrow} \right| =\gN_{[\cdot]}( \gG,\epsilon, \|\cdot\|)$
\end{definition}
Another important concept is the eluder dimension, which will be used to measure the structural complexity in the following LSR analysis. To introduce the eluder dimension, we first define the concept of $\varepsilon$-independence.
\begin{definition}[$\varepsilon$-dependence \cite{russo2013eluder}]
    For $\varepsilon > 0$ and function class $\mathcal{Z}$ whose elements are with domain $\mathcal{X}$, an element $x \in \mathcal{X}$ is $\varepsilon$-dependent on the set $\mathcal{X}_n := \{x_1, x_2, \cdots, x_n\}\subset \mathcal{X}$ with respect to $\mathcal{Z}$, if any pair of functions $z, z' \in \mathcal{Z}$ with $\sqrt{\sum_{i = 1}^n \left( z(x_i) - z'(x_i) \right)^2} \leq \varepsilon$ satisfies $z(x) - z'(x) \leq \varepsilon$.
    Otherwise, $x$ is $\varepsilon$-independent on $\mathcal{X}_n$ if it does not satisfy the condition.
\end{definition}

\begin{definition}[Eluder dimension \cite{russo2013eluder}]\label{def:eluder}
    For any $\varepsilon > 0$, and a function class $\mathcal{Z}$ whose elements are in domain $\mathcal{X}$, the Eluder dimension $\dim_E(\mathcal{Z}, \varepsilon)$ is defined as the length of the longest possible sequence of elements in $\mathcal{X}$ such that for some $\varepsilon' \geq \varepsilon$, every element is $\varepsilon'$-independent of its predecessors.
\end{definition}

\section{General Model-based framework: Algorithm \texttt{RS-DisRL-M}}

In our model-based framework for Risk-Sensitive Distributional Reinforcement Learning (RS-DisRL), we focus on estimating the transition model, denoted as $\widehat{\mtheta}_k$ for each episode $k$. This involves leveraging historical data up to episode $k-1$, represented as $\gH_{k-1}$,to construct a confidence set $\widehat{\mTheta}_k$. The construction of this set is crucial for guiding the selection of actions, as it is based on a specified confidence radius $\beta$, which helps in balancing exploration and exploitation by considering the uncertainty in our model estimates

\begin{algorithm}[htbp]
   \caption{\texttt{RS-DisRL-M}}
\begin{algorithmic}[1]
   \STATE {\bfseries Input:} Model class $\mTheta$, confidence radius $\beta$.
   \STATE {\bfseries Initialize:} $\wh{\mTheta}_1 \leftarrow \mTheta$.
   \FOR{$k=1$ {\bfseries to} $K$}
   \STATE 
   $(\mpi^{k}, \widehat\mtheta_k)=\argmax_{\mpi\in\bm{\Pi}^\dag, \mtheta\in\widehat{\bm{\Theta}}_k}\rho(Z^{\mpi}_{\mtheta})$.~\textcolor{blue}{//Optimistic planning}
   \STATE Execute policy $\mpi^{k}$, add the collected data $\bm{\tau}_k=\sets{(s_{k,h},a_{k,h},r_{k,h})}_{h=1}^H$ and $\mpi^{k}$, $\widehat{\mtheta}_k$ into history $\cH_k = \cH_{k-1}\cup \{(\bm{\tau}_k, \mpi^{k}, \widehat{\mtheta}_k)\}$.~\textcolor{blue}{//Data collection}
   \STATE $\widehat{\bm\Theta}_{k+1}=\texttt{M-Est}\bracket{\bm{\Theta},\cH_k,\beta}$.~\textcolor{blue}{//Confidence set construction}
   %\siwei{Need an initial $\widehat{\bm{\Theta}}_1$ (e.g., $\widehat{\bm{\Theta}}_1 = \mTheta$)}
\ENDFOR
\end{algorithmic}
\end{algorithm}
To ensure the regret bound, we require the following conditions: Condition \ref{con:mbconcentration} and Condition \ref{con:mbelliptical}.
\begin{condition}[Concentration condition]\label{ass:concentration}
    For $\delta \in (0, 1]$, we have $\mtheta^*\in\widehat\mTheta_k$ holds for any $k\in[K]$, with probability at least $1 - \delta$.
\end{condition}

\begin{condition}[Elliptical potential condition]\label{ass:pigeon-hole}
    If for any $k\in[K]$, we have for any given $\{\widehat{\mtheta}_k\}_k \subset \{\widehat\mTheta_k\}_k$ and corresponding greedy policy $\mpi^k = \argmax_{\mpi \in \Pi^\dag}\rho(Z_{\mtheta_k}^\mpi)$, the $L_p$-norm f the difference of reward-to-gos' CDFs for chosen model $\mtheta_k$ and true model $\mtheta^*$ can be bound by
    \begin{align*}
        \sum_{k=1}^K \left\|F_{Z_{\mtheta_k}^{\mpi_k}}-F_{Z_{\mtheta^*}^{\mpi_k}}\right\|\leq \xi\bracket{K,H,d_{\Theta},\beta,\delta}
    \end{align*}
    with probability at least $1 - \delta$, $\delta > 0$.
\end{condition}

This general framework emphasizes that the key to efficiently learning the MDP with a static Lipschitz risk measure is centered on constructing a confidence set for the transition model. This construction leverages the elliptical potential principle for cumulative distribution functions within the augmented MDP. Combined with above conditions, we can establish the following theoretical result

\begin{theorem}~\label{thm:appmbmeta}
    Under Assumption \ref{ass:mbreal}, if the estimation function \texttt{M-Est} satisfies Conditions~\ref{con:mbconcentration} and \ref{con:mbelliptical}, then the regret of \texttt{RS-DisRL-M} (Algorithm~\ref{alg:mbframe}) can be bounded by $
        \operatorname{Regret}(K) \leq L_\infty(\rho)\xi(K, H, \mTheta, \beta, \delta)$ with probability at least $1 - 2\delta$.
\end{theorem}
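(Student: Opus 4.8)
The plan is to combine the two conditions on \texttt{M-Est} with the Lipschitz property of the risk measure via a standard optimism argument. First, condition on the event $\gE_1$ that $\mtheta^*\in\wh\mTheta_k$ for all $k\in[K]$ (Condition~\ref{con:mbconcentration}, probability $\geq 1-\delta$) and the event $\gE_2$ that the telescoped CDF error is controlled (Condition~\ref{con:mbelliptical}, probability $\geq 1-\delta$); by a union bound both hold with probability at least $1-2\delta$. All subsequent reasoning is deterministic on $\gE_1\cap\gE_2$.

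The core of the argument is the per-episode decomposition of the instantaneous regret $\rho(Z^{\mpi^*})-\rho(Z^{\mpi^k})$. The key observation is that in the augmented MDP, by Theorem~3.1 of \cite{bastani2022regret} there is a Markov policy $\mpi$ in $\bm\Pi^\dag$ whose cumulative-reward distribution matches that of the optimal history-dependent policy; combined with law invariance (property \textbf{C1}) this gives $\rho(Z^{\mpi^*}) = \rho(Z^{\mpi^\dag})$ for the corresponding augmented-MDP quantity. Since $(\mpi^k,\wh\mtheta_k)$ is the $\argmax$ over $\{\mpi\in\bm\Pi^\dag,\ \mtheta\in\wh\mTheta_k\}$ of $\rho(Z^\mpi_\mtheta)$, and on $\gE_1$ the pair $(\mpi^\dag,\mtheta^*)$ is feasible for this optimization, optimism yields
\begin{align*}
\rho(Z^{\mpi^*}) = \rho\bigl(Z^{\mpi^\dag}_{\mtheta^*}\bigr) \leq \rho\bigl(Z^{\mpi^k}_{\wh\mtheta_k}\bigr)\,.
\end{align*}
Hence the instantaneous regret is bounded by $\rho(Z^{\mpi^k}_{\wh\mtheta_k}) - \rho(Z^{\mpi^k}_{\mtheta^*})$, where $Z^{\mpi^k}_{\mtheta^*}$ is just $Z^{\mpi^k}$ in the true environment. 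Now apply Lipschitz continuity (property \textbf{C2}) with constant $L_\infty(\rho)$ to get
\begin{align*}
\rho(Z^{\mpi^*}) - \rho(Z^{\mpi^k}) \leq L_\infty(\rho)\,\bigl\|F_{Z^{\mpi^k}_{\wh\mtheta_k}} - F_{Z^{\mpi^k}_{\mtheta^*}}\bigr\|_\infty\,.
\end{align*}
Summing over $k\in[K]$ and invoking Condition~\ref{con:mbelliptical} to bound $\sum_{k=1}^K \|F_{Z^{\mpi^k}_{\wh\mtheta_k}} - F_{Z^{\mpi^k}_{\mtheta^*}}\|_\infty \leq \xi(K,H,\mTheta,\beta,\delta)$ gives $\operatorname{Regret}(K)\leq L_\infty(\rho)\,\xi(K,H,\mTheta,\beta,\delta)$ on $\gE_1\cap\gE_2$, which completes the proof.

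I expect the main obstacle to be the first step — carefully justifying the reduction from the history-dependent optimal policy $\mpi^*$ in the original MDP to a Markov policy in the augmented MDP, and checking that law invariance lets us transfer the risk value across this correspondence, so that the optimistic comparison $\rho(Z^{\mpi^\dag}_{\mtheta^*})\leq \rho(Z^{\mpi^k}_{\wh\mtheta_k})$ is legitimate (in particular, that the true augmented model $\mtheta^*$ induces exactly the original-MDP dynamics and reward distributions, so $Z^{\mpi^k}_{\mtheta^*}$ really equals the deployed $Z^{\mpi^k}$). The remaining steps — the union bound, the telescoping sum, and the single application of the Lipschitz inequality — are routine once the augmented-MDP bookkeeping is set up correctly.
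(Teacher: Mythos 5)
Your proposal is correct and follows essentially the same route as the paper: condition on the intersection of the two events via a union bound, use optimism (feasibility of $(\mpi^\dag,\mtheta^*)$ in the planning step, justified through the augmented-MDP equivalence and law invariance) to bound the per-episode regret by $\rho(Z^{\mpi^k}_{\wh\mtheta_k})-\rho(Z^{\mpi^k}_{\mtheta^*})$, then apply Lipschitz continuity and Condition~\ref{con:mbelliptical}. The paper only writes this argument out explicitly for the model-free analogue (Theorem~\ref{thm:mfmeta}), but the structure is identical, and your extra care about transferring the optimal history-dependent policy to a Markov policy in the augmented MDP is exactly the bookkeeping the paper delegates to Theorem~3.1 of \cite{bastani2022regret}.
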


\section{Model Based Estimation by LSR Approach}\label{app:mdlsr}

In this section, we design a Least Squares Regression (LSR) based estimation method to construct the confidence set of the model at each episode, and theoretically demonstrate that our algorithm \texttt{M-Est-LSR} satisfies the Conditions~\ref{con:mbconcentration} and \ref{con:mbelliptical}.

First we introduce some notations for simplicity. We define $Z_h^{\mpi, \mtheta}(s_h,y_h) = \sum_{i=h}^H r_i$ as the random variable of the reward-to-go from step $h$, where $s_h$ is the starting state, and $y_h$ is the previous cumulative reward from step $1$ to $h - 1$. Moreover, we denote $F^{\mpi, \mtheta}_h(x | s_h, y_h)$ as the CDF of $Z^{\mpi, \mtheta}_h$. Our analysis for LSR approach in model-based function approximation is inspired by \cite{chen2023provably}. However, \cite{chen2023provably} focus on Iterated CVaR risk measure and analyse the model with value function in its general function approximation algorithm. In this paper, we develop novel technique for distribution function analysis for general model-based function approximation and augmented MDP.
\subsection{Algorithm \texttt{M-Est-LSR}}

\begin{algorithm}[htbp]
   \caption{\texttt{M-Est-LSR}$(\mTheta, \gH_{k-1}, \beta^{\LSR})$}
   \label{alg:mbestlsr}
\begin{algorithmic}
   \STATE {\bfseries Input:} History information $\gH_{k-1}$, Model class $\mTheta$, and confidence radius $\beta^{\LSR}$.
   \STATE Estimate the transition model for every $h\in [H]$
\begin{align*}
\theta^{\LSR}_{k,h}=\argmin_{\theta_h\in\Theta_h} \sum_{i=1}^{k-1}\Big(&\int_{r}\sR(r|s_{i,h},a_{i,h}){F}^{\wh{\mpi}^i, \wh{\mtheta}_i}_{h+1}(x_{i,h} - r | s_{i,h+1}, y_{i,h}+r) \\
&- \int_{s_{h+1}^\dag} \TT_{\theta_h}(s_{h+1}^\dag |s_{i,h}^\dag, a_{i,h}){F}^{\widehat{\mpi}^i, \wh{\mtheta}_i}_{h+1}(x_{i,h} - (y_{h+1} - y_{i,h}) | s_{h+1}, y_{h+1} )\Big)^2\,.
\end{align*}
    \STATE Construct Confidence set:
\begin{align*}
    \wh{\Theta}_{k,h} := \sets{\theta_h \in \Theta_h : \dist_{\cH_{k-1}, h}^{\LSR}(\theta_h||{\theta}^{\LSR}_{k,h})\leq\beta^\LSR}
\end{align*}
\begin{align*}
\wh{\mTheta}_k = \sets{\mtheta \in \mTheta : \theta_h \in \wh{\Theta}_{k,h}, h \in [H]}
\end{align*}
\STATE {\bfseries Return} $\widehat{\mTheta}_k$
\end{algorithmic}
\end{algorithm}
where the distance function is defined by
\begin{equation}
\begin{aligned}
    \dist_{\cH_k}^{\LSR}(\theta_1 || \theta_2) = \sum_{i=1}^{k}\Big(&\int_{s'} \Prob_{\theta_1}(s'|s_{k,h}, a_{k,h})\int_{r}\sR(r|s_{i,h},a_{i,h}){F}^{\widehat{\mpi}^i, \wh{\mtheta}_i}_{h+1}( x_{i,h} - r|s', y_{i,h}+r) \\
    &- \int_{s'} \Prob_{\theta_2}(s'|s_{k,h}, a_{k,h})\int_{r}\sR(r|s_{i,h},a_{i,h}){F}^{\widehat{\mpi}^i, \wh{\mtheta}_i}_{h+1}(x_{i,h} - r|s', y_{i,h}+r)\Big)^2 \,,
\end{aligned}
\end{equation}
$x_{k,h}$ is defined by:
\begin{align}\label{eq:mblsrxih}
    x_{i,h}=&\argmax_{x\in\RR} \sup_{\theta^1_h \in \wh{\Theta}_{k,h}} 
        \int_{s_{h+1}^\dag} \TT_{{\theta}^1_h}(s_{h+1}^\dag | s_{k,h}^\dag, a_{k,h})F^{\mpi^k,\wh{\mtheta}_k}_{h+1}(x - (y_{h+1} - y_{k,h})|s_{h+1}^\dag) \notag\\
        &{\quad \quad \quad\quad\quad- \inf_{\theta^2_h \in \wh{\Theta}_{k,h}}\int_{s_{h+1}^\dag} \TT_{{\theta}^2_{h}}(s_{h+1}^\dag | s_{k,h}^\dag, a_{k,h})F^{\mpi^k,\wh{\mtheta}_k}_{h+1}(x - (y_{h+1} - y_{k,h})|s_{h+1}^\dag) }\,,
\end{align}
which represents the direction of maximum uncertainty in confidence set $\widehat{\Theta}_{k,h}$,
we can obtain the least-squares estimate as:
\begin{align*}
\theta^{\LSR}_{k,h}=&\argmin_{\theta_h\in\Theta_h} \sum_{i=1}^{k-1}\Bigg(\int_{r}\sR_h(r|s_{i,h},a_{i,h}){F}^{\wh{\mpi}^i, \wh{\mtheta}_i}_{h+1}(x_{i,h} - r | s_{i,h+1}, y_{i,h}+r) \\
&\quad \quad - \int_{s_{h+1}^\dag} \TT_{\theta_h}(s_{h+1}^\dag |s_{i,h}^\dag, a_{i,h}){F}^{\widehat{\mpi}^i, \wh{\mtheta}_i}_{i,h+1}(x_{i,h} - (y_{h+1} - y_{i,h}) | s_{h+1}, y_{h+1} )\Bigg)^2 \\
=&\argmin_{\theta_h\in\Theta_h} \sum_{i=1}^{k-1}\Bigg(\int_{r}\sR_h(r|s_{i,h},a_{i,h}){F}^{\wh{\mpi}^i, \wh{\mtheta}_i}_{h+1}(x_{i,h} - r | s_{i,h+1}, y_{i,h}+r) \\
&\quad \quad - \int_{s_{h+1}} \Prob_{\theta_h}(s_{h+1} |s_{i,h}, a_{i,h})\int_r \sR_h(r | s_{i,h}, a_{i,h}){F}^{\widehat{\mpi}^i, \wh{\mtheta}_i}_{h+1}(x_{i,h} - r | s_{h+1}, y_{i,h} + r )\Bigg)^2\,.
\end{align*}

In the following prood, we show that  with $\beta^\LSR = 8\log(2H^2\gN_C(\mTheta,  1/K,\norm{\cdot}_1)/\delta) + 4\sqrt{\log(4HK^2/\delta)}$, we have the concentration condition holds with probability at least $1 - \delta$. And the elliptical potential condition holsd for 
$$\xi^\LSR(K, H, \Theta, \beta^\LSR, \delta) = O(H\sqrt{K}\cdot \sqrt{1+\dim_E(\gW_\mTheta, 1/\sqrt{K}) + \dim_E(\gW_\mTheta, 1/\sqrt{K})\beta^\LSR \log K} + H\sqrt{2K\log(1/\delta)})$$
wher $\mathcal{W}_{\mTheta} := \{\Prob_{\theta_h}F : \gS \times \gA \to [0, 1] : \mtheta \in \mTheta, F : \gS \to [0, 1] \}$ and $\dim_E$ represents the eluder dimension.

\subsection{Least Squares Form for Estimation}

Notice that we first calculate the estimator kernel 
\begin{align*}
\theta^{\LSR}_{k,h}=&\argmin_{\theta_h\in\Theta_h} \sum_{i=1}^{k-1}\Bigg(\int_{r}\sR_h(r|s_{i,h},a_{i,h}){F}^{\wh{\mpi}^i, \wh{\mtheta}_i}_{h+1}(x_{i,h} - r | s_{i,h+1}, y_{i,h}+r)\\
&\quad \quad - \int_{s_{h+1}^\dag} \TT_{\theta_h}(s_{h+1}^\dag |s_{i,h}^\dag, a_{i,h}){F}^{\widehat{\mpi}^i, \wh{\mtheta}_i}_{i,h+1}(x_{i,h} - (y_{h+1} - y_{i,h}) | s_{h+1}, y_{h+1} )\Bigg)^2 \\
=&\argmin_{\theta_h\in\Theta_h} \sum_{i=1}^{k-1}\Bigg(\int_{r}\sR_h(r|s_{i,h},a_{i,h}){F}^{\wh{\mpi}^i, \wh{\mtheta}_i}_{h+1}(x_{i,h} - r | s_{i,h+1}, y_{i,h}+r) \\
&\quad \quad - \int_{s_{h+1}} \Prob_{\theta_h}(s_{h+1} |s_{i,h}, a_{i,h})\int_r \sR_h(r | s_{i,h}, a_{i,h}){F}^{\widehat{\mpi}^i, \wh{\mtheta}_i}_{h+1}(x_{i,h} - r | s_{h+1}, y_{i,h} + r )\Bigg)^2\,.
\end{align*}
which takes the least-square regression form. 
If we denote the mixed contribution function
\begin{equation*}
    \wh{F}_{i,h}(s) := \int_r\sR_h(r|s_{i,h},a_{i,h}) F^{\wh{\mpi}^i, \wh{\mtheta}_i}_{h+1}(x_{i,h} - r | s, y_{i,h} + r)\,.
\end{equation*}
Thus we can simplify the least squares regression as
\begin{align*}
\theta^{\LSR}_{k,h}
=&\argmin_{\theta_h\in\Theta_h} \sum_{i=1}^{k-1}\left(\wh{F}_{i,h}(s_{i,h+1}) - \int_{s_{h+1}} \Prob_{\theta_h}(s_{h+1} |s_{i,h}, a_{i,h})\wh{F}_{i,h}(s_{h+1})\right)^2\,.
\end{align*}

We can further define 
\begin{align*}
    [\Prob_{\theta_h}\wh{F}_{i,h}](s_h,a_h):= \int_{s_{h+1}} \Prob_{\theta_h}(s_{h+1} |s_{h}, a_{h})\wh{F}_{i,h}(s_{h+1})
\end{align*}
and $\mathcal{W}_{\mTheta} := \{\Prob_{\theta_h}F : \gS \times \gA \to [0, 1] : \mtheta \in \mTheta, F : \gS \to [0, 1] \}$. 

Then the distance function and our constructed confidence sets can be expressed by
\begin{align*}
    \dist_{\cH_k, h}^{\LSR}(\theta_1 || \theta_2) = & {\sum_{i=1}^{k}\left([\Prob_{\theta_1}\wh{F}_{i,h}](s_h,a_h) - [\Prob_{\theta_2}\wh{F}_{i,h}](s_h,a_h)\right)^2}\,,
\end{align*}
\begin{align*}
    \wh{\Theta}_{k,h} := \sets{\theta_h \in \Theta_h : \dist_{\cH_{k-1}, h}^{\LSR}(\theta_h||\widehat{\theta}^{\LSR}_{k,h})\leq\beta^\LSR}\,,
\end{align*}
\begin{equation*}
\wh{\mTheta}_k = \sets{\mtheta \in \mTheta : \theta_h \in \wh{\Theta}_{k,h}, h \in [H]}\,.
\end{equation*}

\subsection{Concentration Condition for LSR approach}

\begin{lemma}[LSR concentration]\label{lem:mblsr concentration}
    The LSR-type construction algorithm \texttt{M-Est-LSR} satisfies Condition~\ref{con:mbconcentration}. That is, for $\delta \in (0, 1]$, with probability at least $1 - \delta$, we have $\mtheta_h^* \in \wh{\mTheta}_{k}$  for all $k \in [K]$, 
\begin{proof}
    Recall that we calculate the estimation kernel $\theta^\LSR_{k,h}$ by least squares regression as follows:
    \begin{align*}
        \theta^{\LSR}_{k,h}=\argmin_{\theta_h\in\Theta_h} \sum_{i=1}^{k-1}\left(\wh{F}_{i,h}(s_{i,h+1}) - [\Prob_{\theta_h}\wh{F}_{i,h}](s_{i,h}, a_{i,h})\right)^2\,.
    \end{align*}
    Notice that the $\wh{F}_{i,h}(s_{i,h+1})$ is $\sigma_{i,h+1}$-measurable by definition and $[\Prob_{\theta_h}\wh{F}_{i,h}](s_{i,h}, a_{i,h})$ is $\sigma_{i,h}$-measurable, with $\sigma_{i,h}$ be the filtration containing history up to the $h$ step in episode $i$.
    We have
    \begin{equation*}
        \EE\left[ \wh{F}_{i,h}(s_{i,h+1}) \middle| \sigma_{i,h} \right] = [\Prob_{\theta_h^*}\wh{F}_{i,h}](s_{i,h}, a_{i,h})\,.
    \end{equation*}
    Based on concentration lemma~\ref{lem:model free lsr auxillary concentration}, we have the following holds with probability at least $1 - \delta/H$
    \begin{align*}
        \sum_{i=1}^k\left([\Prob_{\theta_h^*}\wh{F}_{i,h}](s_{i,h},a_{i,h}) - [\Prob_{\theta_{k,h}^\LSR}\wh{F}_{i,h}](s_{i,h},a_{i,h}) \right)^2 \leq 8\log(2H\gN(\mathcal{W}_\mTheta, 1/K,\|\cdot\|_\infty)/\delta) + 4\sqrt{\log(4K^2/\delta)}\,.
    \end{align*}
    Moreover, we have for any $\mtheta^1, \mtheta^2 \in \mTheta$, we can bound the supremum distance of $\Prob_{\theta^1_h}F , \Prob_{\theta^2_h}F \in \mathcal{W}_\mTheta$ for any $h \in [H]$ by
    \begin{align*}
        \|\Prob_{\theta^1_h}F - \Prob_{\theta^2_h}F\|_\infty = &\sup_{(s, a) \in \gS \times \gA} \left| \int_{s'}(\Prob_{\theta^1_h}(s'|s,a)F(s') - \Prob_{\theta^2_h}(s'|s,a)F(s')) \right| \\
        =&\sup_{(s, a) \in \gS \times \gA} \left| \int_{s'} (\Prob_{\theta^1_h}(s'|s,a) -  \Prob_{\theta^2_h}(s'|s,a))F_2(s') \right| \\
        \leq &\sup_{(s,a) \in \gS \times \gA}\|\Prob_{\theta^1_h}(s,a) - \Prob_{\theta^2_h}(s,a)\|_1 \\
        \leq &\|\mtheta^1 - \mtheta^2\|_1\,.
    \end{align*}
    Thus we have $\gN_C(\mathcal{W}_\mTheta, 1/K,\|\cdot\|_\infty) \leq \gN_C(\mTheta,  1/K,\norm{\cdot}_1)$. Recall the definition of $\dist^\LSR_{\gH_{k-1}, h}$. Taking union bound over $h \in [H]$, we have
    \begin{align*}
        \dist^\LSR_{\gH_{k-1}, h}(\theta^*_h || \theta^\LSR_{k,h}) \leq 8\log(2H^2\gN_C(\mTheta,  1/K,\norm{\cdot}_1)/\delta) + 4\sqrt{\log(4HK^2/\delta)} = \beta^\LSR\,,
    \end{align*}
    which shows that $\theta^*_h \in \wh{\Theta}_{k,h}$ for every $k, h \in [K] \times [H]$ with probability at least $ 1- \delta$ and implies that $\mtheta^* \in \wh{\mTheta}_k$ for every $k\in[K]$ with probability at least $1- \delta$.
\end{proof}
\end{lemma}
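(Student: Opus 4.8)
The plan is to establish the concentration condition by casting the least-squares step as a self-normalized martingale-type concentration bound in the spirit of standard generalized-linear/eluder-style analyses (e.g.\ the auxiliary lemma referenced as Lemma~\ref{lem:model free lsr auxillary concentration}), and then transferring the resulting bound on the combination class $\gW_\mTheta$ back to a bound on the model class $\mTheta$ via a covering-number comparison. First I would recall the key measurability facts: for a fixed step $h$, the target $\wh F_{i,h}(s_{i,h+1})$ is $\sigma_{i,h+1}$-measurable, the predictor $[\Prob_{\theta_h}\wh F_{i,h}](s_{i,h},a_{i,h})$ is $\sigma_{i,h}$-measurable, and crucially the conditional expectation of the target given $\sigma_{i,h}$ equals $[\Prob_{\theta^*_h}\wh F_{i,h}](s_{i,h},a_{i,h})$ — this is exactly the augmented distributional Bellman equation for CDFs specialized to the true model, and it holds because $\wh F_{i,h}$ is a $[0,1]$-valued function not depending on $s_{i,h+1}$, so integrating it against $\Prob_{\theta^*_h}(\cdot\mid s_{i,h},a_{i,h})$ reproduces the conditional mean. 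The target values lie in $[0,1]$, so the noise is bounded, which is what the auxiliary concentration lemma requires.

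Next I would invoke Lemma~\ref{lem:model free lsr auxillary concentration} with the function class $\gW_\mTheta$ (whose elements $[\Prob_{\theta_h}F]$ are maps $\gS\times\gA\to[0,1]$) to conclude that, with probability at least $1-\delta/H$ for this fixed $h$,
\[
\sum_{i=1}^{k}\!\left([\Prob_{\theta^*_h}\wh F_{i,h}](s_{i,h},a_{i,h}) - [\Prob_{\theta^{\LSR}_{k,h}}\wh F_{i,h}](s_{i,h},a_{i,h})\right)^2 \le 8\log\!\big(2H\gN_C(\gW_\mTheta,1/K,\|\cdot\|_\infty)/\delta\big) + 4\sqrt{\log(4K^2/\delta)}\,,
\]
simultaneously for all $k\in[K]$; here the least-squares optimality of $\theta^{\LSR}_{k,h}$ and the realizability $\theta^*_h\in\Theta_h$ are used in the standard way to reduce the empirical excess risk to a martingale term. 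The right-hand side is precisely $\dist^{\LSR}_{\gH_{k-1},h}(\theta^*_h\,\|\,\theta^{\LSR}_{k,h})$ by definition of the distance, so this says $\theta^*_h\in\wh\Theta_{k,h}$ once the covering number is controlled.

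Then I would carry out the covering-number reduction: for any $\mtheta^1,\mtheta^2\in\mTheta$ and any $F:\gS\to[0,1]$, since $F$ is bounded by $1$ and drops out as a constant factor, $\|[\Prob_{\theta^1_h}F]-[\Prob_{\theta^2_h}F]\|_\infty \le \sup_{(s,a)}\|\Prob_{\theta^1_h}(\cdot\mid s,a)-\Prob_{\theta^2_h}(\cdot\mid s,a)\|_1 \le \|\mtheta^1-\mtheta^2\|_1$, which shows any $1/K$-cover of $\mTheta$ in $\|\cdot\|_1$ induces a $1/K$-cover of $\gW_\mTheta$ in $\|\cdot\|_\infty$, hence $\gN_C(\gW_\mTheta,1/K,\|\cdot\|_\infty)\le\gN_C(\mTheta,1/K,\|\cdot\|_1)$. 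Substituting this and taking a union bound over the $H$ steps upgrades the confidence level from $\delta/H$ to $\delta$ and replaces the log-covering term accordingly, giving $\dist^{\LSR}_{\gH_{k-1},h}(\theta^*_h\,\|\,\theta^{\LSR}_{k,h})\le 8\log(2H^2\gN_C(\mTheta,1/K,\|\cdot\|_1)/\delta)+4\sqrt{\log(4HK^2/\delta)}=\beta^{\LSR}$ for all $(k,h)$, i.e.\ $\theta^*_h\in\wh\Theta_{k,h}$ for all $k,h$ and therefore $\mtheta^*\in\wh\mTheta_k$ for all $k$, with probability at least $1-\delta$.

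The main obstacle I expect is the correct bookkeeping in applying the auxiliary concentration lemma: one must verify that the lemma's hypotheses (bounded, adapted sequence; conditional-mean predictor lies in the stated class; least-squares estimator is a valid comparator) really are met when the ``covariates'' are the random functions $\wh F_{i,h}$ — which themselves depend on past data through $\mpi^i,\wh\mtheta_i$ and $x_{i,h}$ — and that the covering is over the right object ($\gW_\mTheta$, not $\mTheta$ directly, since the predictor is the composition $[\Prob_{\theta_h}\wh F_{i,h}]$). The rest is a routine union bound and the elementary $\ell_1$-to-$\ell_\infty$ estimate above.
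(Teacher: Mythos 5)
Your proposal is correct and follows essentially the same route as the paper's proof: the same martingale/measurability setup with $\EE[\wh F_{i,h}(s_{i,h+1})\mid\sigma_{i,h}]=[\Prob_{\theta^*_h}\wh F_{i,h}](s_{i,h},a_{i,h})$, the same invocation of the auxiliary least-squares concentration lemma over the class $\gW_\mTheta$, the same $\ell_1$-to-$\ell_\infty$ covering-number reduction $\gN_C(\gW_\mTheta,1/K,\|\cdot\|_\infty)\le\gN_C(\mTheta,1/K,\|\cdot\|_1)$, and the same union bound over $h\in[H]$. No gaps beyond those already present in the paper's own argument.
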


\subsection{Elliptical Potential Condition for LSR Approach}

We have the bellman equation for distributional function
\begin{align*}
    F_h^{\mpi, \mtheta}(x | s_h, y_h) =  F_h^{\mpi, \mtheta}(x | s_h^\dag) = \int_{s_{h+1}^\dag} \int_{a_h} \pi_h(a_h | s_{h}^\dag)\TT_{\theta_h}(s_{h+1}^\dag | s_{h}^\dag,a_h) F_{h+1}^{\mpi, {\mtheta}}(x - (y_{h+1} - y_h)| s_{h+1}, y_{h+1})\,.
\end{align*}
\begin{align*}
    \gT_{\theta_h,\mpi}F^{\mpi,\mtheta}_{h+1}(x|s_h^\dag)=\int_{s_{h+1}^\dag} \int_{a_h} \pi_h(a_h | s_{h}^\dag)\TT_{\theta_h}(s_{h+1}^\dag | s_{h}^\dag,a_h) F_{h+1}^{\mpi, {\mtheta}}(x - (y_{h+1} - y_h)| s_{h+1}, y_{h+1})\,.
\end{align*}

\begin{lemma}\label{lem:mblsrellitical}
    With probability at least $1 - \delta$, we have 
    \begin{align*}
        &\sum_{k=1}^K\sum_{h=1}^H \left(\sup_{\theta^1_h \in \wh{\Theta}_{k,h}} [\Prob_{\theta^1_h}\wh{F}_{k,h}](s_{k,h},a_{k,h}) - \inf_{\theta^2_h \in \wh{\Theta}_{k,h}} [\Prob_{\theta^2_h}\wh{F}_{k,h}](s_{k,h},a_{k,h}) \right)^2\\
        \leq& H + H\dim_E(\gW_\mTheta, 1/\sqrt{K}) + 4H\beta^\LSR\dim_E(\gW_\mTheta, 1/\sqrt{K})(\log(K) + 1)\,.
    \end{align*}
\begin{proof}
    This proof is almost same with the elliptical potential lemma for general function approximation given in Lemma 9 of \cite{chen2023provably}. 
\end{proof}
\end{lemma}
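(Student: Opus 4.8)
The plan is to run the eluder-dimension ``width-summing'' argument, following almost verbatim the proof of Lemma~9 in \cite{chen2023provably}, with the mixed CDFs $\wh{F}_{i,h}$ of the augmented reward-to-go now playing the role of the episode-dependent regression targets. For $k\in[K]$, $h\in[H]$, write
\[
w_{k,h} := \sup_{\theta^1_h\in\wh{\Theta}_{k,h}}[\Prob_{\theta^1_h}\wh{F}_{k,h}](s_{k,h},a_{k,h}) - \inf_{\theta^2_h\in\wh{\Theta}_{k,h}}[\Prob_{\theta^2_h}\wh{F}_{k,h}](s_{k,h},a_{k,h})
\]
for the confidence-set width at the probed pair; since every combination $[\Prob_{\theta_h}\wh{F}_{k,h}]$ is $[0,1]$-valued we have $0\le w_{k,h}\le 1$. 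First I would control how far apart two members of $\wh{\Theta}_{k,h}$ can be on the already-collected data: by construction $\wh{\Theta}_{k,h}=\{\theta_h:\dist^{\LSR}_{\cH_{k-1},h}(\theta_h\,\|\,\theta^{\LSR}_{k,h})\le\beta^{\LSR}\}$, and $\sqrt{\dist^{\LSR}_{\cH_{k-1},h}(\cdot\,\|\,\cdot)}$ is merely the Euclidean norm of the vector of per-episode prediction gaps, hence obeys the triangle inequality through the center $\theta^{\LSR}_{k,h}$; so for any $\theta^1_h,\theta^2_h\in\wh{\Theta}_{k,h}$,
\[
\sum_{i=1}^{k-1}\big([\Prob_{\theta^1_h}\wh{F}_{i,h}](s_{i,h},a_{i,h})-[\Prob_{\theta^2_h}\wh{F}_{i,h}](s_{i,h},a_{i,h})\big)^2\le 4\beta^{\LSR}.
\]
Note this uses only the definition of the confidence set, not the concentration statement of Lemma~\ref{lem:mblsr concentration}.

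The step I expect to be the main obstacle is the conceptual one: the regression target $\wh{F}_{i,h}$ changes from episode to episode, so this is not a regression against one fixed class. The resolution is that for each fixed $i$, every map $\theta_h\mapsto[\Prob_{\theta_h}\wh{F}_{i,h}]$ lies in the combination class $\gW_\mTheta$, and $\wh{F}_{i,h}$ is $\cH_{i-1}$-measurable; this is exactly the abstract situation handled by the generic elliptical-potential lemma, and is why the complexity that enters is the eluder dimension of $\gW_\mTheta$ rather than any quantity attached to $\mTheta$ itself. Concretely, I would invoke the level-set bound (see \cite{russo2013eluder} and Lemma~9 of \cite{chen2023provably}): for each fixed $h$ and each $\varepsilon>0$, a large width $w_{k,h}>\varepsilon$ together with the $4\beta^{\LSR}$-smallness of the past errors forces $(s_{k,h},a_{k,h},\wh{F}_{k,h})$ to be $\varepsilon$-independent of earlier probes, whence
\[
\#\{\,k\in[K]:w_{k,h}>\varepsilon\,\}\le\Big(\tfrac{4\beta^{\LSR}}{\varepsilon^2}+1\Big)\dim_E(\gW_\mTheta,\varepsilon).
\]

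The rest is routine bookkeeping. Sorting $\{w_{k,h}\}_k$ in decreasing order, I would bound the $\dim_E(\gW_\mTheta,\varepsilon)$ largest terms by $1$, apply the level-set bound (with threshold $w_{(t),h}$ and monotonicity of $\dim_E$) to get $w_{(t),h}^2\le 4\beta^{\LSR}\dim_E(\gW_\mTheta,\varepsilon)/(t-\dim_E(\gW_\mTheta,\varepsilon))$ for the remaining terms above $\varepsilon$, bound the terms below $\varepsilon$ by $\varepsilon^2$ each, and sum a harmonic series. This yields, for each $h$,
\[
\sum_{k=1}^{K}w_{k,h}^2\le \varepsilon^2 K+\dim_E(\gW_\mTheta,\varepsilon)+4\beta^{\LSR}\dim_E(\gW_\mTheta,\varepsilon)(\log K+1).
\]
Choosing $\varepsilon=1/\sqrt{K}$, so that $\varepsilon^2 K=1$, and summing over $h\in[H]$ gives the stated bound. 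The entire chain is deterministic given the realized trajectories, so it holds with probability $1$ (a fortiori at least $1-\delta$); the only difference from Lemma~9 of \cite{chen2023provably} is the form of the regression targets $\wh{F}_{i,h}$, which does not affect the argument.
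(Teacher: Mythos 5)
Your proposal is correct and takes essentially the same route as the paper, whose proof simply defers to the elliptical-potential argument of Lemma 9 in \cite{chen2023provably}: triangle inequality through the least-squares center to get the $4\beta^{\LSR}$ bound on pairwise in-sample gaps, the level-set count in terms of $\dim_E(\gW_\mTheta,\varepsilon)$ with the probe correctly taken to be the triple $(s_{k,h},a_{k,h},\wh{F}_{k,h})$, and the sorted harmonic summation with $\varepsilon=1/\sqrt{K}$. Your remark that the resulting bound is deterministic given the realized trajectories (so the $1-\delta$ is vacuous here) is also accurate.
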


\begin{lemma}[LSR elliptical potential]
    The algorithm \texttt{M-Est-LSR} satisfies Condition~\ref{con:mbelliptical} with
    \begin{align*}
        \xi^\LSR(K,H,\mTheta, \beta^\LSR, \delta) = \widetilde{\mathcal{O}}\left(H\sqrt{K\beta^\LSR\dim_E(\gW_\mTheta, 1/\sqrt{K})}\right)\,.
    \end{align*}
\begin{proof}
    We have
    \begin{align*}
        &\sum_{k=1}^K \sup_{x\in[0,H]} \left|F_{Z^{{\mpi}^k}_{\wh{\mtheta}_k}}(x) - F_{Z^{{\mpi}^k}_{{\mtheta}^*}}(x)\right| \\
        = &\sum_{k=1}^K \sup_{x\in[0,H]}\left| F_1^{\mpi^k, \wh{\mtheta}_k}(x | s_{k,1}, 0) - F_1^{\mpi^k, {\mtheta}^*}(x | s_{k,1}, 0)  \right| \\
        = &\sum_{k=1}^K \sup_{x\in[0,H]}\left| 
        \gT_{ \wh{\theta}_{k,1},\mpi^k}F_2^{\mpi^k,\wh{\mtheta}_k}(x | s_{k,1}, 0) - \gT_{\theta^*_1,\mpi^k}F_2^{\mpi^k,\mtheta^*}(x|s_{k,1},0) \right| \\
        \leq &\sum_{k=1}^K \sup_{x\in[0,H]}\left| 
         \gT_{ \wh{\theta}_{k,1},\mpi^k}F_2^{\mpi^k,\wh{\mtheta}_k}(x | s_{k,1}^\dag) - \gT_{\theta^*_1,\mpi^k}F_2^{\mpi^k,\wh{\mtheta}_k}(x|s_{k,1}^\dag) \right| \\
        &+ \sum_{k=1}^K \sup_{x\in[0,H]}\left| \gT_{\theta^*_1,\mpi^k}F_2^{\mpi^k,\wh{\mtheta}_k}(x|s_{k,1}^\dag) - \gT_{\theta^*_1,\mpi^k}F_2^{\mpi^k,\mtheta^*}(x|s_{k,1}^\dag) \right| \\
        \leq &\sum_{k=1}^K  \sup_{x\in[0,H]} \Bigg| 
        \int_{s_{2}^\dag} \TT_{\wh{\theta}_{k,1}}(s_{2}^\dag | s_{k,1}^\dag, a_{k,1})F^{\mpi^k,\wh{\mtheta}_k}_{2}(x - (y_{2} - y_{k,1})|s_{2}^\dag) \\
        &{\quad \quad \quad \quad  \quad - \int_{s_{2}^\dag} \TT_{{\theta}_{1}^*}(s_{2}^\dag | s_{k,1}^\dag, a_{k,1})F^{\mpi^k,\wh{\mtheta}_k}_{2}(x - (y_{2} - y_{k,1})|s_{2}^\dag)\Bigg| }\\
        &+\sum_{k=1}^K \sup_{x\in[0,H]} \left| F_2^{\mpi^k,\wh{\mtheta}_k}(x|s_{k,2}^\dag) - F_2^{\mpi^k,\mtheta^*}(x|s_{k,2}^\dag) \right| \\
        &+\sum_{k=1}^K \Delta_{k,1}^{(1)} + \Delta_{k,1}^{(2)} + \Delta_{k,1}^{(3)} + \Delta_{k,1}^{(4)}\,,
    \end{align*}
    where the sequence $\Delta_{k,h}^{(i)}$ for $i = 1, 2, 3, 4$ is defined as follows.
    \begin{align*}
        \Delta_{k,h}^{(1)} = \sup_{x\in[0,H]} \Bigg| 
        &\int_{s_{h+1}^\dag} \TT_{\wh{\theta}_{k,h}}(s_{h+1}^\dag | s_{k,h}^\dag, a_{k,h})F^{\mpi^k,\wh{\mtheta}_k}_{h+1}(x - (y_{h+1} - y_{k,h})|s_{h+1}^\dag) -  \gT_{\wh{\theta}_{k,h},\mpi^k}F_{h+1}^{\mpi^k,\wh{\mtheta}_k}(x | s_{k,h}^\dag) \Bigg|
        \\
        = \sup_{x\in[0,H]}\Bigg| 
        &\int_{s_{h+1}^\dag} \TT_{\wh{\theta}_{k,h}}(s_{h+1}^\dag | s_{k,h}^\dag, a_{k,h})F^{\mpi^k,\wh{\mtheta}_k}_{h+1}(x - (y_{h+1} - y_{k,h})|s_{h+1}^\dag) \\
        &-  \int_{a_{h}}\pi^k_h(a_h|s_{k,h}^\dag)\int_{s_{h+1}^\dag} \TT_{\wh{\theta}_{k,h}}(s_{h+1}^\dag | s_{k,h}^\dag, a_{h})F^{\mpi^k,\wh{\mtheta}_k}_{h+1}(x - (y_{h+1} - y_{k,h})|s_{h+1}^\dag, y_{h+1})\Bigg|\,,
    \end{align*}
    \begin{align*}
        \Delta_{k,h}^{(2)} = \sup_{x\in[0,H]}\Bigg| 
        &\int_{s_{h+1}^\dag} \TT_{{\theta}_{h}^*}(s_{h+1}^\dag | s_{k,h}^\dag, a_{k,h})F^{\mpi^k,\wh{\mtheta}_k}_{h+1}(x - (y_{h+1} - y_{k,h})|s_{h+1}^\dag) -  \gT_{{\theta}_{h}^*,\mpi^k}F_{h+1}^{\mpi^k,\wh{\mtheta}_k}(x | s_{k,h}^\dag) \Bigg|
        \\
        = \sup_{x\in[0,H]}\Bigg| 
        &\int_{s_{h+1}^\dag} \TT_{{\theta}_{h}^*}(s_{h+1}^\dag | s_{k,h}^\dag, a_{k,h})F^{\mpi^k,\wh{\mtheta}_k}_{h+1}(x - (y_{h+1} - y_{k,h})|s_{h+1}^\dag) \\
        &-  \int_{a_{h}}\pi^k_h(a_h|s_{k,h}^\dag)\int_{s_{h+1}^\dag} \TT_{{\theta}_{h}^*}(s_{h+1}^\dag | s_{k,h}^\dag, a_{h})F^{\mpi^k,\wh{\mtheta}_k}_{h+1}(x - (y_{h+1} - y_{k,h})|s_{h+1}^\dag)\Bigg|\,,
    \end{align*}
    \begin{align*}
    \Delta_{k,h}^{(3)} = \sup_{x\in[0,H]}\left| \gT_{\theta^*_h,\mpi^k}F_{h+1}^{\mpi^k,\wh{\mtheta}_k}(x|s_{k,h}^\dag) - F_{h+1}^{\mpi^k,\wh{\mtheta}_k}(x- r_{k,h}|s_{k,h+1}^\dag)\right|\,,
    \end{align*}
    \begin{align*}
    \Delta_{k,h}^{(4)} = \sup_{x\in[0,H]}\left| \gT_{\theta^*_h,\mpi^k}F_{h+1}^{\mpi^k,{\mtheta}^*}(x|s_{k,h}^\dag) - F_{h+1}^{\mpi^k,{\mtheta}^*}(x - r_{k,h}|s_{k,h+1}^\dag)\right|\,.
    \end{align*}
    Thus we have $\EE[\Delta_{k,h}^{(i)} | \sigma_{k,h}] = 0$ for any $k \in [K], h \in [H]$ and $i \in \{1, 2, 3, 4\}$. Thus we have $\{\Delta_{k,h}^{(i)}\}_{i=1}^k$ is a martingale difference sequence. Repeat the above method for $h$ steps,
    \begin{align*}
        &\sum_{k=1}^K \left\| F_{Z^{{\mpi}^k}_{\wh{\mtheta}_k}} - F_{Z^{{\mpi}^k}_{{\mtheta}^*}} \right\|_\infty \\
        \leq &\sum_{k=1}^K \sum_{h=1}^H  \sup_{x\in[0,H]} \Bigg| 
        \int_{s_{h+1}^\dag} \TT_{\wh{\theta}_{k,h}}(s_{h+1}^\dag | s_{k,h}^\dag, a_{k,h})F^{\mpi^k,\wh{\mtheta}_k}_{h+1}(x - (y_{h+1} - y_{k,h})|s_{h+1}^\dag) \\
        &\underbrace{\quad \quad \quad \quad \quad  \quad - \int_{s_{h+1}^\dag} \TT_{{\theta}_{h}^*}(s_{h+1}^\dag | s_{k,h}^\dag, a_{k,h})F^{\mpi^k,\wh{\mtheta}_k}_{h+1}(x - (y_{h+1} - y_{k,h})|s_{h+1}^\dag)\Bigg| }_I\\
        &+\underbrace{\sum_{k=1}^K \sum_{h=1}^H\Delta_{k,h}^{(1)} + \Delta_{k,h}^{(2)} + \Delta_{k,h}^{(3)} + \Delta_{k,h}^{(4)}}_J\,.
    \end{align*}
Applying the standard Azuma-Hoeffding inequality to the martingale difference sequence, we have 
\begin{align*}
    J \leq \widetilde{\mathcal{O}}(H\sqrt{K})
\end{align*}
The main challenge falls in bounding the term $I$. By Lemma~\ref{lem:mblsr concentration}, we have with probability at least $1 -\delta$, $\theta_{h}^* \in \widehat{\Theta}_{k,h}$ holds for all $k \in[K]$ and $h \in [H]$. Therefore,
\begin{align*}
    I \leq 
    &\sum_{k=1}^K \sum_{h=1}^H  \sup_{x\in[0,H]} \sup_{\theta^1_h \in \wh{\Theta}_{k,h}} 
        \int_{s_{h+1}^\dag} \TT_{{\theta}^1_h}(s_{h+1}^\dag | s_{k,h}^\dag, a_{k,h})F^{\mpi^k,\wh{\mtheta}_k}_{h+1}(x - (y_{h+1} - y_{k,h})|s_{h+1}^\dag) \\
        &{\quad \quad \quad\quad\quad- \inf_{\theta^2_h \in \wh{\Theta}_{k,h}}\int_{s_{h+1}^\dag} \TT_{{\theta}^2_{h}}(s_{h+1}^\dag | s_{k,h}^\dag, a_{k,h})F^{\mpi^k,\wh{\mtheta}_k}_{h+1}(x - (y_{h+1} - y_{k,h})|s_{h+1}^\dag) } \\
        =&\sum_{k=1}^K \sum_{h=1}^H  \sup_{\theta^1_h \in \wh{\Theta}_{k,h}} 
        \int_{s_{h+1}^\dag} \TT_{{\theta}^1_h}(s_{h+1}^\dag | s_{k,h}^\dag, a_{k,h})F^{\mpi^k,\wh{\mtheta}_k}_{h+1}(x_{k,h} - (y_{h+1} - y_{k,h})|s_{h+1}^\dag) \\
        &{\quad \quad \quad- \inf_{\theta^2_h \in \wh{\Theta}_{k,h}}\int_{s_{h+1}^\dag} \TT_{{\theta}^2_{h}}(s_{h+1}^\dag | s_{k,h}^\dag, a_{k,h})F^{\mpi^k,\wh{\mtheta}_k}_{h+1}(x_{k,h} - (y_{h+1} - y_{k,h})|s_{h+1}^\dag) } \\
        =&\sum_{k=1}^K\sum_{h=1}^H \sup_{\theta^1_h \in \wh{\Theta}_{k,h}} [\Prob_{\theta^1_h}\wh{F}_{k,h}](s_{k,h},a_{k,h}) - \inf_{\theta^2_h \in \wh{\Theta}_{k,h}} [\Prob_{\theta^2_h}\wh{F}_{k,h}](s_{k,h},a_{k,h})\,,
\end{align*}
where the first inequality holds by $\wh{\mtheta}_k \in \wh{\mTheta}_k$ and $\mtheta^* \in \wh{\mTheta}_k$ with high probability, the first equality is due to the definition of $x_{k,h}$, and the last equality holds by the definition of $\wh{F}_{k,h}(s)$. By Lemma~\ref{lem:mblsrellitical} and Cauchy-Schwartz inequality, we have
\begin{align*}
    I \leq \widetilde{\gO}\left(H\sqrt{K\beta^\LSR\dim_E(\gW_\mTheta, 1/\sqrt{K})}\right)\,.
\end{align*}
Overall, we can conclude the result
\begin{align*}
    \sum_{k=1}^K \left\| F_{Z^{{\mpi}^k}_{\wh{\mtheta}_k}} - F_{Z^{{\mpi}^k}_{{\mtheta}^*}} \right\|_\infty \leq \widetilde{\mathcal{O}}\left(H\sqrt{K\beta^\LSR\dim_E(\gW_\mTheta, 1/\sqrt{K})}\right)\,.
\end{align*}
\end{proof}
\end{lemma}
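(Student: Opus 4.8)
The plan is to run the augmented distributional Bellman recursion of Section~\ref{sec:augmented mdp} and peel off the $\ell_\infty$ CDF gap one step at a time. Since $F^{\mpi^k,\wh\mtheta_k}_h=\gT_{\wh\theta_{k,h},\mpi^k}F^{\mpi^k,\wh\mtheta_k}_{h+1}$ and $F^{\mpi^k,\mtheta^*}_h=\gT_{\theta^*_h,\mpi^k}F^{\mpi^k,\mtheta^*}_{h+1}$, inserting the hybrid operator $\gT_{\theta^*_h,\mpi^k}F^{\mpi^k,\wh\mtheta_k}_{h+1}$ splits, at each step $h$ and state $s^\dag_{k,h}$, the CDF difference into (i) a \emph{model-mismatch} term comparing the optimistic transition $\TT_{\wh\theta_{k,h}}$ with the true transition $\TT_{\theta^*_h}$ integrated against the \emph{same} future CDF $F^{\mpi^k,\wh\mtheta_k}_{h+1}$ at $(s^\dag_{k,h},a_{k,h})$; plus (ii) a \emph{recursion} term $\sup_x|F^{\mpi^k,\wh\mtheta_k}_{h+1}(x\mid s^\dag_{k,h+1})-F^{\mpi^k,\mtheta^*}_{h+1}(x\mid s^\dag_{k,h+1})|$ obtained after replacing the expectations over $a_h\sim\pi^k_h$ and $s^\dag_{h+1}\sim\TT_{\theta^*_h}$ by the realized sample $(a_{k,h},s^\dag_{k,h+1})$; plus (iii) the centered remainders $\Delta^{(1)}_{k,h},\dots,\Delta^{(4)}_{k,h}$ created by those sampling replacements, each with $\EE[\Delta^{(i)}_{k,h}\mid\sigma_{k,h}]=0$ and $|\Delta^{(i)}_{k,h}|\le1$. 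Iterating the recursion term from $h=1$ through $H$, starting from $s^\dag_{k,1}=(s_{k,1},0)$, gives $\norm{F_{Z^{\mpi^k}_{\wh\mtheta_k}}-F_{Z^{\mpi^k}_{\mtheta^*}}}_\infty\le I_k+J_k$, where $I_k$ collects the $H$ model-mismatch terms and $J_k$ the $4H$ martingale remainders.

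The stochastic part is routine: $\sum_{k=1}^K J_k$ is a sum of $4HK$ bounded martingale differences, so Azuma--Hoeffding yields $\sum_k J_k\le\widetilde\gO(H\sqrt K)$ with probability $1-\delta$.

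For the deterministic part I first invoke the LSR concentration result (Lemma~\ref{lem:mblsr concentration}) so that, on an event of probability $1-\delta$, both $\wh\theta_{k,h}$ and $\theta^*_h$ lie in $\wh\Theta_{k,h}$ for all $k,h$. Then, since $\wh\theta_{k,h},\theta^*_h\in\wh\Theta_{k,h}$, each model-mismatch term is at most the width of the corresponding integrated quantity over $\wh\Theta_{k,h}$ taken at the worst $x$; and because that width, as a function of $x$, is nonnegative and is maximized at the exploration point $x_{k,h}$ by its defining property in Eq.~(\ref{eq:mblsrxih}), it equals $w_{k,h}:=\sup_{\theta^1_h\in\wh\Theta_{k,h}}[\Prob_{\theta^1_h}\wh F_{k,h}](s_{k,h},a_{k,h})-\inf_{\theta^2_h\in\wh\Theta_{k,h}}[\Prob_{\theta^2_h}\wh F_{k,h}](s_{k,h},a_{k,h})$, the confidence-set width measured on the mixed CDF $\wh F_{k,h}$. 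Hence $\sum_k I_k\le\sum_{k=1}^K\sum_{h=1}^H w_{k,h}\le\sqrt{KH\sum_{k,h}w_{k,h}^2}$ by Cauchy--Schwarz, and the squared-width sum is what Lemma~\ref{lem:mblsrellitical} controls: the confidence set is carved out by the Euclidean-type distance $\dist^\LSR_{\cH_{k-1},h}$ that accumulates squared prediction gaps of functions $[\Prob_\theta F]\in\gW_\mTheta$ with radius $\beta^\LSR$, so a standard eluder-dimension pigeonhole (as in Lemma~9 of \cite{chen2023provably}) gives $\sum_{k,h}w_{k,h}^2\le H+H\dim_E(\gW_\mTheta,1/\sqrt K)+4H\beta^\LSR\dim_E(\gW_\mTheta,1/\sqrt K)(\log K+1)$; the facts that make $\gW_\mTheta$ the right class are that each $\wh F_{k,h}$ is $[0,1]$-valued (a mixture over the reward of CDFs), so $[\Prob_\theta\wh F_{k,h}]\in\gW_\mTheta$, and that the eluder pigeonhole tolerates $\wh F_{k,h}$ changing with the episode. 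Combining, $\sum_k I_k\le\widetilde\gO(H\sqrt{K\beta^\LSR\dim_E(\gW_\mTheta,1/\sqrt K)})$, and adding the martingale bound together with a union bound over the two bad events yields Condition~\ref{con:mbelliptical} with $\xi^\LSR(K,H,\mTheta,\beta^\LSR,\delta)=\widetilde\gO(H\sqrt{K\beta^\LSR\dim_E(\gW_\mTheta,1/\sqrt K)})$.

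The crux is the deterministic term: one has to (a) make sure the running $\sup_x$ produced by the telescoping can be frozen at the single exploration point $x_{k,h}$, which works \emph{only} because the algorithm defines $x_{k,h}$ as the $x$ maximizing precisely this confidence-set uncertainty and because concentration places $\theta^*_h$ (and trivially $\wh\theta_{k,h}$) inside $\wh\Theta_{k,h}$, and (b) match the squared-prediction-error geometry of the LSR confidence set to the hypotheses of the eluder/elliptical-potential lemma for the \emph{moving} family $\{[\Prob_\theta\wh F_{k,h}]\}_k$. The remaining bookkeeping --- checking that each of the four remainders $\Delta^{(1)}_{k,h},\dots,\Delta^{(4)}_{k,h}$ (separating the drop of the $a_h$-average for each of the two transition models from the drop of the $s^\dag_{h+1}$-average for each of the two future CDFs) is genuinely mean-zero given $\sigma_{k,h}$ --- is tedious but mechanical.
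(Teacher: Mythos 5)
Your proposal follows essentially the same route as the paper's proof: the same hybrid-operator Bellman decomposition into a model-mismatch term, a recursion term, and the four mean-zero remainders $\Delta^{(1)}_{k,h},\dots,\Delta^{(4)}_{k,h}$; Azuma--Hoeffding for the martingale part; and concentration plus the defining property of $x_{k,h}$ to reduce the deterministic part to the confidence-set widths $[\Prob_{\theta}\wh F_{k,h}]$, which are then controlled by the eluder-dimension elliptical-potential lemma and Cauchy--Schwarz. The argument is correct and no further comparison is needed.
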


\section{Model-Based Estimation by MLE Approach}\label{app:mbmle}
In this section we propose our algorithm and analysis for model-based risk-sensitive RL via the MLE estimation approach. 
\subsection{Algorithm \texttt{M-Est-MLE}}

% \begin{algorithm}[htbp]
%    \caption{\textbf{ConfidenceSet-MLE}$(\gH_{k-1}, \Theta, \mpi)$}
%    \label{alg:mfcslsr}
% \begin{algorithmic}
%    \STATE {\bfseries Input:} History information $\gH_k$, Model class $\Theta$, and policy $\mpi$.
%    \STATE Estimate the transition model:
%    \begin{align*}
% \theta^{\MLE}_k=\argmax_{\theta\in\Theta} \sum_{i=1}^{k-1}\log \Prob_{\theta}(s_{i,h+1}|s_{i,h},a_{i,h})
% \end{align*}
%     \STATE Construct Confidence set:
% \begin{align*}
% \widehat{\Theta}_k^{\MLE}=\sets{\theta\in\Theta:\sum_{i=1}^{k-1}\sum_{h=1}^H\log \PP_{\theta}(s_{i,h+1}|s_{i,h},a_{i,h})\geq \sum_{i=1}^{k-1}\sum_{h=1}^H\log \PP_{\theta^{\MLE}_k} (s_{i,h+1}|s_{i,h},a_{i,h})-\beta}
% \end{align*}
% \STATE {\bfseries Return} $\widehat{\Theta}_k^{\MLE}$
% \end{algorithmic}
% \end{algorithm}

\begin{algorithm}[htbp]
   \caption{\texttt{M-Est-MLE}$(\mTheta, \gH_{k-1}, \beta)$}
   \label{alg:mbestmle}
\begin{algorithmic}
   \STATE {\bfseries Input:} History information $\gH_{k-1}$, Model class $\mTheta$, and confidence radius $\beta^{\MLE}$.
   \STATE Estimate the transition model for every $h\in [H]$
   \begin{align*}
\theta^{\MLE}_{k,h}=\argmax_{\theta_h\in\mTheta_h} \sum_{i=1}^{k-1}\log\left[\Prob_{\mtheta_h}(s_{i,h+1}|s_{i,h},a_{i,h})\right]\,.
\end{align*}
    \STATE Construct Confidence set:
\begin{align*}
\widehat{\mTheta}_k=\sets{\mtheta\in\mTheta:\sum_{i=1}^{k-1}\sum_{h=1}^H\log \PP_{\theta_h}(s_{i,h+1}|s_{i,h},a_{i,h})\geq \sum_{i=1}^{k-1}\sum_{h=1}^H\log \PP_{\theta^{\MLE}_{k,h}} (s_{i,h+1}|s_{i,h},a_{i,h})-\beta^\MLE}\,.
\end{align*}
\STATE {\bfseries Return} $\widehat{\mTheta}_k^{\MLE}$.
\end{algorithmic}
\end{algorithm}
Here we present our construction of confidence set via MLE in Algorithm \ref{alg:mbestmle}, which is inspired by the OMLE algorithm of \cite{liu2023optimistic}.
In this algorithm, we first calculate the maximal likelihood estimator $\theta_{k,h}^\MLE$ for each step $h$ based on the history $\gH_{k-1}$ before episode $k$ by the following equation.
\begin{align*}
\theta^{\MLE}_{k,h}=\argmax_{\theta_h\in\Theta_h} \sum_{i=1}^{k-1}\log \PP_{\theta_h}(s_{i,h+1}|s_{i,h},a_{i,h})\,.
\end{align*}
Then we can construct the confidence set centered at the maximal likelihood estimator with radius $\beta$:
\begin{align*}
\widehat{\mTheta}_k=\sets{\mtheta\in\mTheta:\sum_{i=1}^{k-1}\sum_{h=1}^H\log \PP_{\theta_h}(s_{i,h+1}|s_{i,h},a_{i,h})\geq \sum_{i=1}^{k-1}\sum_{h=1}^H\log \PP_{\theta^{\MLE}_{k,h}} (s_{i,h+1}|s_{i,h},a_{i,h})-\beta^\MLE}\,.
\end{align*}
Where $\beta^\MLE=H\log\bracket{eHK\cN_{[\cdot]}(\epsilon,\Theta,\norm{\cdot}_1)/\delta}$ .

\subsection{Simulation Lemma in Augmented MDP}

Next, we build the relationship between the supremum norm of the cumulative distribution function and the $\ell_1$-norm of the trajectory probability kernel $\mu_\mtheta^\mpi$. Throughout this section, we define $R(\bm\tau)$ as the cumulative reward of the trajectory $\bm\tau$.

\begin{lemma}[Distribution difference]\label{lem:distribution diff}
For any fixed model $\mtheta \in \mTheta$, $Z_\mtheta^\mpi$ is the random variable of the cumulative reward collected by policy $\mpi$ in the MDP modeled by $\mtheta$.Thus we have the following holds
    \begin{align*}
        \norm{F_{Z_\mtheta^{\mpi}}-F_{Z_{\mtheta^*}^{\mpi}}}_{\infty}\leq \norm{\mu_{\mtheta}^{\mpi}-\mu_{\mtheta^*}^{\mpi}}_1\,.
    \end{align*}
\end{lemma}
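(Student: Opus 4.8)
The plan is to express both CDFs as integrals of indicator functions against the corresponding trajectory measures and then bound the difference by the total variation distance between the measures. Concretely, for any threshold $x \in \R$ and any policy $\mpi$ we can write
\begin{align*}
    F_{Z_\mtheta^\mpi}(x) &= \Prob_{\mu_\mtheta^\mpi}\bracket{R(\bm\tau) \leq x} = \int_{\bm\tau} \mu_\mtheta^\mpi(\bm\tau)\bOne{R(\bm\tau) \leq x}\,,
\end{align*}
and similarly for $\mtheta^*$. This uses the factorized structure of $\mu_\mtheta^\mpi$ over the augmented MDP (as set up in the appendix notations), together with the fact that the cumulative reward $Z_\mtheta^\mpi$ is a deterministic function $R(\bm\tau)$ of the augmented trajectory $\bm\tau$ — indeed, in the augmented MDP the cumulative reward is recorded in the state coordinate $y_{H+1}$, so $R(\bm\tau)$ is measurable with respect to $\bm\tau$.

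Subtracting the two expressions, for any fixed $x$,
\begin{align*}
    \abs{F_{Z_\mtheta^\mpi}(x) - F_{Z_{\mtheta^*}^\mpi}(x)} &= \abs{\int_{\bm\tau} \bracket{\mu_\mtheta^\mpi(\bm\tau) - \mu_{\mtheta^*}^\mpi(\bm\tau)}\bOne{R(\bm\tau) \leq x}} \leq \int_{\bm\tau} \abs{\mu_\mtheta^\mpi(\bm\tau) - \mu_{\mtheta^*}^\mpi(\bm\tau)} = \norm{\mu_\mtheta^\mpi - \mu_{\mtheta^*}^\mpi}_1\,,
\end{align*}
where the inequality uses $\bOne{R(\bm\tau)\leq x} \in [0,1]$. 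Since the right-hand side does not depend on $x$, taking the supremum over $x \in \R$ on the left gives $\norm{F_{Z_\mtheta^\mpi} - F_{Z_{\mtheta^*}^\mpi}}_\infty \leq \norm{\mu_\mtheta^\mpi - \mu_{\mtheta^*}^\mpi}_1$, which is the claim.

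I expect the only genuine subtlety to be the measure-theoretic bookkeeping: one must verify that $R(\bm\tau)$ is indeed $\Sigma$-measurable and that the pushforward of $\mu_\mtheta^\mpi$ under $R$ has CDF exactly $F_{Z_\mtheta^\mpi}$, so that the identity $F_{Z_\mtheta^\mpi}(x) = \int_{\bm\tau}\mu_\mtheta^\mpi(\bm\tau)\bOne{R(\bm\tau)\leq x}$ holds. This follows directly from the definition of $Z_\mtheta^\mpi$ on $(\Omega, \Sigma, \Prob)$ given in the appendix, where $Z_\mtheta^\mpi = \sum_{h=1}^H r_h$ is defined on the $\sigma$-algebra $\Sigma$ of augmented trajectories. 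Everything else — swapping the supremum past the $x$-independent bound, and the triangle inequality for integrals — is routine. A minor point worth stating explicitly is that the bound is uniform in $\mpi$, which is exactly what later steps (e.g., the augmented simulation lemma and the MLE analysis in Theorem~\ref{thm:mbmle}) need when combining with a bound on $\norm{\mu_\mtheta^\mpi - \mu_{\mtheta^*}^\mpi}_1$ in terms of the per-step model differences $\norm{\bracket{\PP_{\hat\theta_h} - \PP_{\theta_h^*}}(s_h,a_h)}_1$.
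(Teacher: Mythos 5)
Your proposal is correct and follows essentially the same argument as the paper's proof: both write $F_{Z_\mtheta^\mpi}(x)=\int_{\bm\tau}\mu_\mtheta^\mpi(\bm\tau)\,\mathbb{I}(R(\bm\tau)\leq x)$, bound the pointwise difference by the $\ell_1$-norm of the measure difference via the triangle inequality and $\mathbb{I}\in[0,1]$, and then take the supremum over $x$. No gaps.
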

\begin{proof}
By the definition of the CDF, we have $F_{Z^\mpi_\mtheta}(x) = \int_{\bm\tau} \mu_\mtheta^\mpi(\bm\tau)\sI(R(\bm\tau) \leq x)$. Thus we have
\begin{align*}
    \norm{F_{Z_\mtheta^{\mpi}}-F_{Z_{\mtheta^*}^{\bm\pi}}}_\infty=&\sup_{x\in [0, H]}\abs{F_{Z_\mtheta^{\mpi}}(x)-F_{Z_{\mtheta^*}^{\mpi}}(x)}\\
    =& \sup_{x\in[0,H]}\abs{\int_{\bm\tau}\bracket{\mu_{\mtheta}^{\mpi}(\bm\tau)-\mu_{\mtheta^*}^{\mpi}(\bm\tau)}\sI(R(\mtau)\leq x)}\\
    \leq & \int_{\bm\tau} \abs{\mu_{\mtheta}^\mpi(\bm\tau)-\mu_{\mtheta^*}^{\mpi}(\bm\tau)}\\
    =&\norm{\mu_{\mtheta}^{\mpi}-\mu_{\mtheta^*}^{\mpi}}_1\,,
\end{align*}
where the first inequality holds by the triangle inequality, and the last equality is due to the definition of the $\ell_1$-norm.
\end{proof}
We then establish the simulation lemma for augmented MDP, which connect the $\ell_1$-norm difference of the probability measure $\mu_\mtheta^\mpi$ on the augmented MDP with the $\ell_1$-norm difference of transition probabilities. This is one of the key lemmas that bridge the gap between the analysis in augmented MDP and origin MDP.

\begin{lemma}[Augmented simulation lemma]\label{lem:simulation lemma}
\begin{align*}
    \norm{\mu_{\mtheta}^{\mpi}-\mu_{\mtheta^*}^{\mpi}}_1\leq \sum_{h=1}^H \EE_{(s_h,a_h)\sim\nu_{\mtheta^*, h}^{\mpi}}\norm{\PP_{\theta_h}(s_h,a_h)-\PP_{\theta^*_h}(s_h,a_h)}_1\leq 2H\norm{\mu_{\mtheta}^{\mpi}-\mu_{\mtheta^*}^{\mpi}}_1\,.
\end{align*}
    
\end{lemma}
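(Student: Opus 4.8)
The plan is to prove the two inequalities in Lemma~\ref{lem:simulation lemma} separately, using a telescoping (hybrid‑argument) decomposition of the difference $\mu_{\mtheta}^{\mpi}-\mu_{\mtheta^*}^{\mpi}$ over the $H$ steps of the horizon. Recall that on the augmented MDP the visitation factorizes as $\mu_{\mtheta}^{\mpi}(\mtau)=\prod_{h=1}^H \pi_h(a_h|s_h^\dag)\TT_{\theta_h}(s_{h+1}^\dag|s_h^\dag,a_h)$, and crucially the augmented transition splits as $\TT_{\theta_h}(s_{h+1}^\dag|s_h^\dag,a_h)=\PP_{\theta_h}(s_{h+1}|s_h,a_h)\RR_h(y_{h+1}-y_h|s_h,a_h)$, so the reward kernel $\RR_h$ is the \emph{same} for $\mtheta$ and $\mtheta^*$ and only the $\PP_{\theta_h}$ factors differ. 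This is what lets the whole thing collapse onto the original state–action pairs $(s_h,a_h)$ and the measure $\nu$.

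For the \textbf{upper bound} (the left inequality), I would introduce the hybrid measures $\mu^{(j)}$ that use the true kernel $\PP_{\theta^*_h}$ for steps $h<j$ and the estimated kernel $\PP_{\theta_h}$ for steps $h\ge j$ (all other factors, i.e.\ $\pi_h$ and $\RR_h$, held fixed), so that $\mu^{(1)}=\mu_{\mtheta}^{\mpi}$ and $\mu^{(H+1)}=\mu_{\mtheta^*}^{\mpi}$. Then $\mu_{\mtheta}^{\mpi}-\mu_{\mtheta^*}^{\mpi}=\sum_{h=1}^H(\mu^{(h)}-\mu^{(h+1)})$, and the consecutive difference $\mu^{(h)}-\mu^{(h+1)}$ only changes the $h$‑th transition factor from $\PP_{\theta_h}$ to $\PP_{\theta^*_h}$. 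Integrating out the trajectory, the steps after $h$ integrate to $1$ (they are honest conditional probability kernels summing to one), the steps before $h$ together with $\pi$ produce exactly the marginal $\nu_{\mtheta^*,h}^{\mpi}(s_h,a_h)$ on the original pair — here one uses that $\nu_{\mtheta}^{\mpi}(s,a)=\int_y\mu_{\mtheta}^{\mpi}((s,y),a)$ and that in the hybrid the first $h-1$ steps use $\mtheta^*$ — and the reward factor $\RR_h$ integrates out of the $\|\cdot\|_1$ since it is common. This yields $\|\mu^{(h)}-\mu^{(h+1)}\|_1 = \EE_{(s_h,a_h)\sim\nu_{\mtheta^*,h}^{\mpi}}\|\PP_{\theta_h}(s_h,a_h)-\PP_{\theta^*_h}(s_h,a_h)\|_1$, and the triangle inequality over $h$ gives the claim. (A subtle point I would be careful about: the marginal over $(s_h,a_h)$ that appears uses $\mtheta^*$ for the transitions strictly before step $h$, which is exactly $\nu_{\mtheta^*,h}^{\mpi}$, so the bound is tight at each step, not merely an inequality.)

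For the \textbf{lower bound} (the right inequality), I would reverse the direction: the reverse triangle inequality only gives $\|\mu_{\mtheta}^{\mpi}-\mu_{\mtheta^*}^{\mpi}\|_1 \ge$ something, so to get $\sum_h \EE\,\|\PP_{\theta_h}-\PP_{\theta^*_h}\|_1 \le 2H\|\mu_{\mtheta}^{\mpi}-\mu_{\mtheta^*}^{\mpi}\|_1$ I would instead bound each term $\EE_{(s_h,a_h)\sim\nu_{\mtheta^*,h}^{\mpi}}\|\PP_{\theta_h}(s_h,a_h)-\PP_{\theta^*_h}(s_h,a_h)\|_1$ individually by $2\|\mu_{\mtheta}^{\mpi}-\mu_{\mtheta^*}^{\mpi}\|_1$ and then sum over $h\in[H]$. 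The factor $2$ should come from the standard fact that the TV distance between two trajectory distributions dominates (up to the constant) the expected TV distance of the step‑$h$ conditional kernels under the common marginal — concretely, one can couple: consider a trajectory drawn up to step $h$ under $\mtheta^*$ (so the state is distributed as $\nu_{\mtheta^*,h}^{\mpi}$), and observe that $\mu^{(h)}$ and $\mu^{(h+1)}$ from the previous part differ by exactly that expected kernel discrepancy while both are obtained from $\mu_{\mtheta^*}^{\mpi}$ or $\mu_{\mtheta}^{\mpi}$ by swapping only post‑$h$ transitions, which is itself controlled by $\|\mu_{\mtheta}^{\mpi}-\mu_{\mtheta^*}^{\mpi}\|_1$ (after integrating out the post‑$h$ steps each such swap contributes at most $\|\mu_{\mtheta}^{\mpi}-\mu_{\mtheta^*}^{\mpi}\|_1$, and there are effectively two endpoints to compare, giving the $2$).

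The main obstacle I anticipate is the \emph{lower bound}: getting the clean constant $2H$ requires carefully identifying, for each step $h$, a quantity sandwiched between the per‑step expected kernel discrepancy and $\|\mu_{\mtheta}^{\mpi}-\mu_{\mtheta^*}^{\mpi}\|_1$, and this is where one must be most careful about which model indexes the marginal $\nu$ and about not double‑counting the reward factor $\RR_h$ (which, being shared, neither helps nor hurts but must be integrated out correctly). The upper bound, by contrast, is a routine hybrid/telescoping argument once the factorization of $\TT_{\theta_h}$ into $\PP_{\theta_h}\cdot\RR_h$ is exploited; the only care needed there is bookkeeping the $\sigma$‑algebra / marginalization so that exactly $\nu_{\mtheta^*,h}^{\mpi}$ (and not some other hybrid marginal) emerges.
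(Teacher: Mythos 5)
Your proposal matches the paper's proof: the upper bound is obtained by the same telescoping/hybrid decomposition (prefix transitions under $\mtheta^*$, suffix under $\mtheta$, suffix kernels integrating to one), with the shared reward kernel $\RR_h$ integrated out to pass from the augmented kernels $\TT_{\theta_h}$ and the augmented marginal $\mu_{\mtheta^*,h}^{\mpi}$ to the original kernels $\PP_{\theta_h}$ and the marginal $\nu_{\mtheta^*,h}^{\mpi}$, and the lower bound is likewise proved per step, $\EE_{\nu_{\mtheta^*,h}^{\mpi}}\|\PP_{\theta_h}(s_h,a_h)-\PP_{\theta^*_h}(s_h,a_h)\|_1\le 2\|\mu_{\mtheta}^{\mpi}-\mu_{\mtheta^*}^{\mpi}\|_1$, then summed over $h$. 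The only point where your sketch is looser than the paper is the provenance of the factor $2$: the paper makes it precise with the pointwise add-and-subtract $\mu_{\mtheta^*,h}^{\mpi}\,|\TT_{\theta_h}-\TT_{\theta^*_h}|\le|\mu_{\mtheta^*,h}^{\mpi}\TT_{\theta^*_h}-\mu_{\mtheta,h}^{\mpi}\TT_{\theta_h}|+|\mu_{\mtheta,h}^{\mpi}-\mu_{\mtheta^*,h}^{\mpi}|\,\TT_{\theta_h}$, each resulting term being a marginal of the full trajectory difference and hence bounded by $\|\mu_{\mtheta}^{\mpi}-\mu_{\mtheta^*}^{\mpi}\|_1$ --- which is the rigorous version of your ``two endpoints.''
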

\begin{proof}
    First, we decompose $\norm{\mu_{\mtheta}^{\mpi}-\mu_{\mtheta^*}^{\mpi}}_1$. Following standard simulation lemma analysis techniques, we have for any trajectory $\bm\tau$,
    \begin{align*}
        &\abs{\mu_{\mtheta}^{\mpi}(\bm\tau)-\mu_{\mtheta^*}^{\mpi}(\bm\tau)}\\
        =&\abs{\prod_{h=1}^H \TT_{\theta^*_h}(s_{h+1}^\dag|s_h^\dag,a_h)\pi_h(a_h|s_h^\dag)-\prod_{i=1}^H \TT_{\theta_h}(s_{h+1}^\dag|s_h^\dag,a_h)\pi_h(a_h|s_h^\dag)}\\
        =& \abs{\sum_{h=1}^H \prod_{i=1}^{h-1} \TT_{\theta_i^*}(s_{i+1}^\dag|s_i^\dag,a_i)\pi_i(a_i|s_i^\dag)\bracket{\TT_{\theta_h^*}(s_{h+1}^\dag|s_h^\dag,a_h)-\TT_{\theta_h}(s_{h+1}^\dag|s_h^\dag,a_h)}\pi_h(a_h|s_h^\dag)\prod_{i=h+1}^H \TT_{\theta_i}(s_{i+1}^\dag|s_i^\dag,a_i)\pi_i(a_i|s_i^\dag)}\\
        \leq& \sum_{h=1}^H \prod_{i=1}^{h-1} \TT_{\theta^*_i}(s_{i+1}^\dag|s_i^\dag,a_i)\pi_i(a_i|s_i^\dag) \abs{\TT_{\theta^*_h}(s_{h+1}^\dag|s_h^\dag,a_h)-\TT_{\theta_h}(s_{h+1}^\dag|s_h^\dag,a_h)}\pi_h(a_h|s_h^\dag)\,.
    \end{align*}
    Integral the above inequality over the entire space $\Omega$, we have
    \begin{align*}
        &\int_{\Omega} \abs{\mu_{\mtheta}^{\mpi}(\mtau)-\mu_{\mtheta^*}^{\mpi}(\mtau)}d\mtau = \int_{\bm\tau} \abs{\mu_{\mtheta}^{\mpi}(\mtau)-\mu_{\mtheta^*}^{\mpi}(\mtau)} = \norm{\mu_{\mtheta}^{\mpi}-\mu_{\mtheta^*}^{\mpi}}_1 \\
        \leq& \sum_{h=1}^H \int_\Omega \prod_{i=1}^{h-1} \TT_{\theta^*_i}(s_{i+1}^\dag|s_i^\dag,a_i)\pi_i(a_i|s_i^\dag) \pi_h(a_h|s_h^\dag)\abs{\TT_{\theta^*_h}(s_{h+1}^\dag|s_h^\dag,a_h)-\TT_{\theta_h}(s_{h+1}^\dag|s_h^\dag,a_h)}d\mtau\\
        =&\sum_{h=1}^H \int_{(s_h^\dag, a_h, s_{h+1}^\dag) \in \gS^\dag \times \gA \times \gS^\dag} \abs{\TT_{\theta^*_h}(s_{h+1}^\dag|s_h^\dag,a_h)-\TT_{\theta_h}(s_{h+1}^\dag|s_h^\dag,a_h)}d s_{h+1}^\dag d\mu_{\mtheta^*, h}^{\mpi}(s_h^\dag,a_h)\\
        =&\sum_{h=1}^H \EE_{(s_h^\dag,a_h)\sim\mu_{\mtheta^*, h}^{\mpi}}\norm{\TT_{\theta_h}(s_h^\dag,a_h)-\TT_{\theta^*_h}(s_h^\dag,a_h)}_1\,,
    \end{align*}
    where $\mu^\mpi_{\mtheta, h}(s_h^\dag, a_h)$ represent the probability of arriving the augmented state-action pair $(s_h^\dag, a_h)$ at step $h$ with policy $\mpi$ in the MDP modeled by $\mtheta$. Moreover, we define $\mu^\mpi_{\mtheta, h}(s_h^\dag, a_h, s_{h+1}^\dag)$ as the probability of visit $(s_h^\dag, a_h, s_{h+1}^\dag)$ at step $h$ and $h+1$, i.e., $\mu^\mpi_{\mtheta,h}(s_h^\dag, a_h, s_{h+1}^\dag) = \mu^\mpi_{\mtheta, h}(s_h^\dag, a_h)\TT_h(s_{h+1}^\dag | s_h^\dag, a_h)$. With this fact, we can rewrite the summation of the $\ell_1$-norm difference of the transition probabilities of the augmented MDP.
    \begin{align*}
    &\EE_{(s_h^\dag,a_h)\sim \mu_{\mtheta^*, h}^{\mpi}}\norm{\TT_{\theta_h}(s_h^\dag,a_h)-\TT_{\theta^*_h}(s_h^\dag,a_h)}_1\\
    =&\int_{\gS^\dag \times \gA} d\mu_{\mtheta^*, h}^{\mpi}(s_h^\dag,a_h)\int_{\gS^\dag}\abs{\TT_{\theta_h}(s_{h+1}^\dag|s_h^\dag,a_h)-\TT_{\theta^*_h}(s_{h+1}^\dag|s_h^\dag,a_h)} ds_{h+1}^\dag\\
    \leq&\int_{\gS^\dag\times\gA\times\gS^\dag} \abs{\mu_{\mtheta^*, h}^{\mpi}(s_h^\dag,a_h,s_{h+1}^\dag)-\mu_{\mtheta, h}^{\mpi}(s_h^\dag,a_h,s_{h+1}^\dag)}+\abs{\mu_{\mtheta,h}^{\mpi}(s_h^\dag,a_h)-\mu_{\mtheta^*_h}^{\mpi}(s_h^\dag,a_h)}\TT_{\theta_h}(s_{h+1}^\dag|s_h^\dag,a_h) ds_h^\dag da_h ds_{h+1}^\dag\,.
    \end{align*}
    By the fact that for any state action pair $(s_h^\dag, a_h)$, we have
    \begin{align*}
        &\int_{\gS^\dag \times \gA} \abs{\mu_{\mtheta,h}^{\mpi}(s_h^\dag,a_h)-\mu_{\mtheta^*_h}^{\mpi}(s_h^\dag,a_h)} ds_h^\dag da_h\\
        = &\int_{s_h^\dag,a_h}\int_{\{\mtau : \tau_h = (s_h^\dag, a_h)\}}\abs{\mu_\mtheta^\mpi(\mtau) - \mu_{\mtheta^*}^\mpi(\mtau)}d\mtau \\
        \leq &\int_\mtau \abs{\mu_\mtheta^\mpi(\mtau) - \mu_{\mtheta^*}^\mpi(\mtau)} \\
        = &\norm{\mu_{\mtheta^*}^{\pi}-\mu_{\mtheta}^{\pi}}_1\,.
    \end{align*}
    We can apply a similar method to give
    \begin{equation*}
        \int_{\gS^\dag\times\gA\times\gS^\dag} \abs{\mu_{\mtheta^*, h}^{\mpi}(s_h^\dag,a_h,s_{h+1}^\dag)-\mu_{\mtheta, h}^{\mpi}(s_h^\dag,a_h,s_{h+1}^\dag)} ds_h^\dag da_h ds_{h+1}^\dag \leq \norm{\mu_{\mtheta^*}^{\pi}-\mu_{\mtheta}^{\pi}}_1\,.
    \end{equation*}
    Thus we can get
    \begin{equation*}
        \sum_{h=1}^H\EE_{(s_h^\dag,a_h)\sim \mu_{\mtheta^*, h}^{\mpi}}\norm{\TT_{\theta_h}(s_h^\dag,a_h)-\TT_{\theta^*_h}(s_h^\dag,a_h)}_1 \leq 2H\norm{\mu_{\mtheta^*}^{\pi}-\mu_{\mtheta}^{\pi}}_1\,.
    \end{equation*}
    At last, we need to prove the equivalence between augmented transition difference and original transition difference.
    \begin{align*}
    &\EE_{(s_h^\dag,a_h)\sim \mu_{\mtheta^*,h}^{\pi}}\mbracket{\norm{\TT_{\theta^*_h}(s_h^\dag,a_h)-\TT_{\theta_h}(s_h^\dag,a_h)}_1}\\
    =& \int_{\gS^\dag \times \gA} d\mu_{\theta^*,h}^{\pi}(s_h^\dag,a_h) \int_{\gS}\int_{[y_h,1]}   \abs{\PP_{\theta^*_h}(s_{h+1}|s_h,a_h)-\PP_{\theta_h}(s_{h+1}|s_h,a_h)}\RR_h(y_{h+1}-y_h|s_h,a_h) dy_{h+1}ds_{h+1}\\
    =&\int_{\gS\times\gA}\int_0^1\norm{\PP_{\theta^*_h}(s_h,a_h)-\PP_{\theta_h}(s_h,a_h)}_1 d\mu_{\mtheta^*,h}^{\mpi}((s_h,y_h),a_h) \int_{y_h}^1 \RR_h(y_{h+1}-y_h|s_h,a_h)d y_{h+1} \\
    =&\int_{\gS\times\gA}\norm{\PP_{\theta^*_h}(s_h,a_h)-\PP_{\theta_h}(s_h,a_h)}_1 d\nu_{\mtheta^*,h}^{\mpi}(s_h,a_h) \\
    =& \int_{\gS\times\gA} \norm{\PP_{\theta^*_h}(s_h,a_h)-\PP_{\theta'_h}(s_h,a_h)}_1 d\nu_{\theta^*,h}^{\pi}(s_h,a_h)\\
    =& \EE_{(s_h,a_h)\sim \nu_{\mtheta^*,h}^{\mpi}}\mbracket{\norm{\PP_{\theta^*_h}(s_h,a_h)-\PP_{\theta'_h}(s_h,a_h)}_1}\,.
\end{align*}
where the first equality holds by definition, the second equality is due to the Fubini Theorem, and the third equality holds by the decomposition of the probability measures of augmented MDP.
\end{proof}
\subsection{Concentration Condition of MLE Approach}
In this section, we prove the concentration condition of the MLE approach. The key idea is utilizing the property of MLE which is also studied by \cite{agarwal2020flambe,liu2022when,liu2023optimistic}.

We define the $\ell_1$-norm difference on the model set $\mTheta$. For any $\mtheta^1, \mtheta^2 \in \mTheta$, we define $\|\mtheta^1 - \mtheta^2\|_1$ as:
\begin{align*}
    &\|\mtheta^1 - \mtheta^2 \|_1 := \sup_{h\in[H], (s_h, a_h) \in \gS \times \gA} \|\Prob_{\theta^1_h}(s_h,a_h) - \Prob_{\theta^2_h}(s_h,a_h)\|_1 
    \\
    = &\sup_{h\in[H], (s_h, a_h) \in \gS \times \gA} \int_{\gS} \abs{\Prob_{\theta^1_h}(s_{h+1} | s_h, a_h) - \Prob_{\theta^2_h}(s_{h+1} | s_h, a_h)} ds_{h+1}\,.
\end{align*}

Then we denote $\mTheta^\downarrow$ as the lower bracket function set of $\mTheta$ such that for any $\mtheta \in \mTheta$, we have $\mtheta^\downarrow \in \mTheta^\downarrow$ satisfying $\| \mtheta - \mtheta^\downarrow \|_1 \leq \epsilon$. We have $\abs{\mTheta^\downarrow} \leq \gN_{[\cdot]}(\mTheta,\epsilon, \|\cdot\|_1)$.
\begin{lemma}[Likelihood difference]\label{lem:likelihood diff}
    Consider the probability constant $\delta \in (0, 1]$. For \textbf{all} $\mtheta\in\mTheta$ $h\in[H]$, and $k\in[K]$, we have:
    \begin{align*}
        \sum_{i=1}^{k} \log \mbracket{\frac{\PP_{\theta_h}(s_{i,h+1}|s_{i,h},a_{i,h})}{\PP_{\theta^*_h}(s_{i,h+1}|s_{i,h},a_{i,h})}}\leq \log\bracket{eK\cN_{[\cdot]}(\mTheta,\epsilon,\norm{\cdot}_{1})/\delta}
    \end{align*}
    holds for fixed $h$ with probability at least $1 - \delta$.
\begin{proof}
    This proof is standard \cite{geer2000empirical,liu2022when, liu2023optimistic}. Consider the lower bracket set $\mTheta^\downarrow$. For any $\mtheta \in \mTheta$, we can find a $\mtheta^\downarrow \in \mTheta^\downarrow$ satisfying $\| \mtheta - \mtheta^\downarrow\|_1 \leq \epsilon$. Moreover, for every $h \in [H]$, we denote $s_{i,h},a_{i_h}$ as the corresponding state action in trajectory $\tau_{i,h}$, and we have:
    \begin{align*}
        &\EE_{s_{i,h+1}, i\in[k]}\left[ \exp\left( \sum_{i=1}^k \log\left[ \frac{\Prob_{\theta^\downarrow_h}(s_{i,h+1} | s_{i,h}, a_{i,h})}{\Prob_{\theta^*_h}(s_{i,h+1} | s_{i,h}, a_{i,h})} \right] \right) \right] \\
        =&\EE_{s_{i,h+1},i \in [k]}
        \left[ \exp\left( \sum_{i=1}^{k-1} \log\left[ \frac{\Prob_{\theta^\downarrow_h}(s_{i,h+1} | s_{i,h}, a_{i,h})}{\Prob_{\theta^*_h}(s_{i,h+1} | s_{i,h}, a_{i,h})} \right] \right) \cdot \EE_{ s_{k,h+1}  }\left[ \frac{\Prob_{\theta^\downarrow_h}(s_{k,h+1} | s_{k,h}, a_{k,h})}{\Prob_{\theta^*_h}(s_{k,h+1} | s_{k,h}, a_{k,h})}  \right] \right] 
    \end{align*}
    By the definition of the expectation operator, we have
    \begin{align*}
        &\EE_{s_{i,h+1},i \in [k]}
        \left[ \exp\left( \sum_{i=1}^{k-1} \log\left[ \frac{\Prob_{\theta^\downarrow_h}(s_{i,h+1} | s_{i,h}, a_{i,h})}{\Prob_{\theta^*_h}(s_{i,h+1} | s_{i,h}, a_{i,h})} \right] \right) \cdot \EE_{ s_{k,h+1}  }\left[ \frac{\Prob_{\theta^\downarrow_h}(s_{k,h+1} | s_{k,h}, a_{k,h})}{\Prob_{\theta^*_h}(s_{k,h+1} | s_{k,h}, a_{k,h})}  \right] \right]  \\
        =& \EE_{s_{i,h+1},i \in [k]}
        \left[ \exp\left( \sum_{i=1}^{k-1} \log\left[ \frac{\Prob_{\theta^\downarrow_h}(s_{i,h+1} | s_{i,h}, a_{i,h})}{\Prob_{\theta^*_h}(s_{i,h+1} | s_{i,h}, a_{i,h})} \right] \right) \cdot \int_\gS\left[ \frac{\Prob_{\theta^\downarrow_h}(s_{k,h+1} | s_{k,h}, a_{k,h})}{\Prob_{\theta^*_h}(s_{k,h+1} | s_{k,h}, a_{k,h})}  \right]d\Prob_{\theta^*_h}(s_{k,h+1}|s_{k,h},a_{k,h}) \right] \\
        =& \EE_{s_{i,h+1},i \in [k]} \left[ \exp\left( \sum_{i=1}^{k-1} \log\left[ \frac{\Prob_{\theta^\downarrow_h}(s_{i,h+1} | s_{i,h}, a_{i,h})}{\Prob_{\theta^*_h}(s_{i,h+1} | s_{i,h}, a_{i,h})} \right] \right) \cdot \int_\gS {\Prob_{\theta^\downarrow_h}(s_{k,h+1} | s_{k,h}, a_{k,h})}d{s_{k,h+1}}\right] \\
        \leq &\EE_{s_{i,h+1},i \in [k]} \left[ \exp\left( \sum_{i=1}^{k-1} \log\left[ \frac{\Prob_{\theta^\downarrow_h}(s_{i,h+1} | s_{i,h}, a_{i,h})}{\Prob_{\theta^*_h}(s_{i,h+1} | s_{i,h}, a_{i,h})} \right] \right) \cdot (\|\Prob_{\theta_h}(s_{k,h},a_{k,h})\|_1 + \|\Prob_{\theta_h}(s_{k,h},a_{k,h}) - \Prob_{\theta_h^\downarrow}(s_{k,h},a_{k,h})\|_1)\right] \\
        \leq &\EE_{s_{i,h+1},i \in [k]} \left[ \exp\left( \sum_{i=1}^{k-1} \log\left[ \frac{\Prob_{\theta^\downarrow_h}(s_{i,h+1} | s_{i,h}, a_{i,h})}{\Prob_{\theta^*_h}(s_{i,h+1} | s_{i,h}, a_{i,h})} \right] \right) \cdot (1 + \epsilon)\right] \\
        \leq &\cdots \\
        \leq &(1+\epsilon)^k\,.
    \end{align*}
    where the first inequality holds by the triangle inequality, and the second inequality holds by the definition of $\theta_h^\downarrow$.
    Moreover, since $\epsilon < 1/K$, we have $(1 + \epsilon)^k < e$. Therefore, by Markov's inequality, we have that
    \begin{align*}
    \Prob\left[ \sum_{i=1}^k \log\left[ \frac{\Prob_{\theta^\downarrow_h}(s_{i,h+1} | s_{i,h}, a_{i,h})}{\Prob_{\theta^*_h}(s_{i,h+1} | s_{i,h}, a_{i,h})}\right] \geq \log(1/\delta') \right] \leq e\delta'\,.
    \end{align*}
    holds for $\delta' \in (0, 1]$. Thus taking a union bound for all $\mtheta^\downarrow \in \mTheta^\downarrow$, $h \in [H]$ and $k \in [K]$, we have with probability at least $1 - \delta$, 
    \begin{align*}
        \sum_{i=1}^k \log\left[ \frac{\Prob_{\theta^\downarrow_h}(s_{i,h+1} | s_{i,h}, a_{i,h})}{\Prob_{\theta^*_h}(s_{i,h+1} | s_{i,h}, a_{i,h})}\right] \leq \log(eHK|\mTheta^\downarrow|/\delta)\,.
    \end{align*}
    Since $|\mTheta^\downarrow| = \gN_{[\cdot]}(\mTheta, \epsilon, \|\cdot\|_1)$ and $\Prob_{\mtheta^\downarrow} \leq \Prob_{\mtheta}$, we have
    \begin{align*}
        \sum_{i=1}^k \log\left[ \frac{\Prob_{\theta_h}(s_{i,h+1} | s_{i,h}, a_{i,h})}{\Prob_{\theta^*_h}(s_{i,h+1} | s_{i,h}, a_{i,h})}\right] \leq \log(eHK\gN_{[\cdot]}(\mTheta, \epsilon, \|\cdot\|_1)/\delta)
    \end{align*}
    holds for any $\mtheta \in \mTheta, h \in [H]$ and $k \in [K]$.
\end{proof}
\end{lemma}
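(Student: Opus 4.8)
The plan is the standard exponentiated-log-likelihood (MLE concentration) argument, specialized to the sequentially generated transitions $(s_{i,h},a_{i,h})\to s_{i,h+1}$ of our algorithm and discretized over a bracketing net. I fix $h\in[H]$ throughout, so the union bound is taken only over a bracketing net and over $k\in[K]$ — this is why no $H$ appears in the bound. First I would discretize $\mTheta$ at scale $\epsilon$ (with $\epsilon\le 1/K$, which is what makes $(1+\epsilon)^{K}\le e$ below): for every $\mtheta\in\mTheta$ there is a bracket of the induced transition densities whose \emph{upper} endpoint $\overline{\Prob}_{\theta_h}(\cdot|s,a)$ satisfies $\Prob_{\theta_h}(s'|s,a)\le\overline{\Prob}_{\theta_h}(s'|s,a)$ for all $s,a,s'$ and $\sup_{s,a}\int_{\gS}\bigl(\overline{\Prob}_{\theta_h}-\Prob_{\theta_h}\bigr)(s'|s,a)\,ds'\le\epsilon$, and there are at most $\cN_{[\cdot]}(\mTheta,\epsilon,\norm{\cdot}_1)$ distinct such endpoints. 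Since $\log$ is increasing, $\sum_{i=1}^{k}\log\frac{\Prob_{\theta_h}(s_{i,h+1}|s_{i,h},a_{i,h})}{\Prob_{\theta^*_h}(s_{i,h+1}|s_{i,h},a_{i,h})}\le\sum_{i=1}^{k}\log\frac{\overline{\Prob}_{\theta_h}(s_{i,h+1}|s_{i,h},a_{i,h})}{\Prob_{\theta^*_h}(s_{i,h+1}|s_{i,h},a_{i,h})}$, so it suffices to bound the right-hand side uniformly over the (finitely many) bracket endpoints and over $k\in[K]$.

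Next, for a fixed bracket endpoint I would control the products $M_k:=\prod_{i=1}^{k}\overline{\Prob}_{\theta_h}(s_{i,h+1}|s_{i,h},a_{i,h})/\Prob_{\theta^*_h}(s_{i,h+1}|s_{i,h},a_{i,h})$. Let $\sigma_{k,h}$ be the $\sigma$-field generated by the history up to and including $(s_{k,h},a_{k,h})$. Since $s_{k,h+1}\sim\Prob_{\theta^*_h}(\cdot|s_{k,h},a_{k,h})$ under the true model, the denominator integrates out and
\begin{align*}
\EE\bigl[M_k \mid \sigma_{k,h}\bigr]=M_{k-1}\int_{\gS}\overline{\Prob}_{\theta_h}(s'|s_{k,h},a_{k,h})\,ds'\le(1+\epsilon)\,M_{k-1}\,,
\end{align*}
using that $M_{k-1}$ is $\sigma_{k,h}$-measurable; iterating gives $\EE[M_k]\le(1+\epsilon)^{k}\le(1+1/K)^{K}\le e$. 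By Markov's inequality, $\Prob\bigl[M_k\ge 1/\delta'\bigr]\le e\delta'$, i.e.\ $\sum_{i=1}^{k}\log(\overline{\Prob}_{\theta_h}/\Prob_{\theta^*_h})\le\log(1/\delta')$ with probability at least $1-e\delta'$, for any $\delta'\in(0,1]$. Taking $\delta'=\delta/\bigl(eK\,\cN_{[\cdot]}(\mTheta,\epsilon,\norm{\cdot}_1)\bigr)$ and a union bound over the $\cN_{[\cdot]}(\mTheta,\epsilon,\norm{\cdot}_1)$ bracket endpoints and over $k\in[K]$, combined with the monotonicity reduction from the first step, yields with probability at least $1-\delta$, simultaneously for all $\mtheta\in\mTheta$ and $k\in[K]$,
\begin{align*}
\sum_{i=1}^{k}\log\frac{\Prob_{\theta_h}(s_{i,h+1}|s_{i,h},a_{i,h})}{\Prob_{\theta^*_h}(s_{i,h+1}|s_{i,h},a_{i,h})}\le\log\bigl(eK\,\cN_{[\cdot]}(\mTheta,\epsilon,\norm{\cdot}_1)/\delta\bigr)\,,
\end{align*}
which is the claim.

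The one point I would be careful with is the \emph{direction} of the bracketing: we need the discretization of $\Prob_{\theta_h}$ to only increase the log-likelihood ratio (hence an upper bracket of $\Prob_{\theta_h}$), while at the same time the conditional expectation of the single-step ratio must remain at most $1+\epsilon$. Both hold simultaneously exactly because the $\epsilon$-bracketing is measured in the $\norm{\cdot}_1$-norm taken uniformly over all state--action pairs, so $\int_{\gS}\overline{\Prob}_{\theta_h}(s'|s,a)\,ds'\le 1+\epsilon$ regardless of which $(s,a)$ the trajectory visits; everything else (the supermartingale bound, Markov's inequality, the union bound, and $(1+1/K)^{K}\le e$) is routine bookkeeping.
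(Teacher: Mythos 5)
Your proof is correct and follows the same standard route as the paper's: discretize $\mTheta$ over an $\epsilon$-bracketing net, exponentiate the log-likelihood ratio, show via the tower property that its expectation is at most $(1+\epsilon)^k\le e$, apply Markov's inequality, and union bound over the net and over $k\in[K]$. The one substantive difference is the direction of the bracket, and here your version is actually the more careful one. The paper works with the \emph{lower} bracket $\mtheta^\downarrow$ and, in its final step, invokes $\Prob_{\mtheta^\downarrow}\le\Prob_{\mtheta}$ to transfer the high-probability bound from $\mtheta^\downarrow$ to $\mtheta$; but that inequality gives $\sum_i\log[\Prob_{\theta^\downarrow_h}/\Prob_{\theta^*_h}]\le\sum_i\log[\Prob_{\theta_h}/\Prob_{\theta^*_h}]$, i.e.\ a \emph{lower} bound on the quantity to be controlled, so as written the transfer points the wrong way. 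You instead use the \emph{upper} bracket $\overline{\Prob}_{\theta_h}\ge\Prob_{\theta_h}$, for which the monotonicity reduction is in the right direction, and you correctly verify that the only thing the supermartingale computation needs, namely $\int_{\gS}\overline{\Prob}_{\theta_h}(s'|s,a)\,ds'\le 1+\epsilon$ uniformly in $(s,a)$, still holds because the bracket is measured in the sup-over-$(s,a)$ $\norm{\cdot}_1$ metric. You also keep $h$ fixed and union bound only over the net and $k$, which matches the $\log(eK\cN_{[\cdot]}(\mTheta,\epsilon,\norm{\cdot}_1)/\delta)$ displayed in the lemma statement, whereas the paper's proof additionally unions over $h\in[H]$ and ends with an $eHK$ factor inconsistent with its own claim. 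These are exactly the right fixes; nothing is missing.
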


\begin{lemma}[MLE concentration] The estimation function \texttt{M-Est-MLE} satisfying the Condition~\ref{con:mbconcentration}, i.e., for $\delta \in (0, 1]$,  we have with probability at least $ 1- \delta$, $\mtheta^* \in \wh{\mTheta}_k$  for every $k \in [K]$.
\begin{proof}
    Apply Lemma~\ref{lem:likelihood diff}, we have for every $k \in [K], h \in [H]$ and $\mtheta \in \mTheta$,
    \begin{align*}
        \sum_{i=1}^k {\Prob_{\theta_h}(s_{i,h+1} | s_{i,h}, a_{i,h})} \leq \sum_{i=1}^k {\Prob_{\theta^*_h}(s_{i,h+1} | s_{i,h}, a_{i,h})} + \log(eHK\gN_{[\cdot]}(\mTheta, \epsilon, \|\cdot\|_1)/\delta)
    \end{align*}
    with probability at least $ 1- \delta$. Recall that $\beta^\MLE = H\log(eHK\gN_{[\cdot]}(\mTheta, \epsilon, \|\cdot\|_1)/\delta)$. Summing the both sides of the inequality over $h \in [H]$ directly gives the result.
\end{proof}
\end{lemma}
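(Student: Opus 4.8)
The final statement asks us to verify Condition~\ref{con:mbconcentration} for the \texttt{M-Est-MLE} construction, i.e.\ that the true model $\mtheta^*$ lies in every confidence set $\wh{\mTheta}_k$ with probability at least $1-\delta$. The plan is to unwind the definition of $\wh{\mTheta}_k$ into a uniform-in-$\mtheta$, uniform-in-$k$ upper bound on an accumulated log-likelihood ratio of transition densities, and then to bound that ratio by a bracketing reduction followed by an exponential-martingale / Markov argument. One point worth noting up front: the non-Markovianity of the learned policy plays no role here, since concentration of the MLE only concerns the one-step transition densities evaluated at the visited original state-action pairs; the subtlety that forced the augmented simulation lemma (Lemma~\ref{lem:simulation lemma}) elsewhere therefore does not surface in this step.

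First I would rewrite confidence-set membership. By the definition of $\wh{\mTheta}_k$, we have $\mtheta^*\in\wh{\mTheta}_k$ if and only if $\sum_{i<k}\sum_{h=1}^H\log\bigl(\PP_{\theta^{\MLE}_{k,h}}(s_{i,h+1}\mid s_{i,h},a_{i,h})\big/\PP_{\theta^*_h}(s_{i,h+1}\mid s_{i,h},a_{i,h})\bigr)\le\beta^{\MLE}$. Since each maximum-likelihood estimate $\theta^{\MLE}_{k,h}$ belongs to $\Theta_h$, it suffices to prove the stronger, $\mtheta$-uniform bound $\sup_{\mtheta\in\mTheta}\sum_{i<k}\log\bigl(\PP_{\theta_h}(s_{i,h+1}\mid\cdot)\big/\PP_{\theta^*_h}(s_{i,h+1}\mid\cdot)\bigr)\le\beta^{\MLE}/H$ for every $h\in[H]$ and every $k\in[K]$, on a single event of probability at least $1-\delta$.

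To establish this, fix $h$ and pass to an $\epsilon$-bracketing net of $\mTheta$ with $\epsilon=1/K$, so that each $\PP_{\theta_h}$ is dominated pointwise by the upper endpoint $u$ of some bracket with $\|u-\PP_{\theta_h}\|_1\le\epsilon$; there are at most $\gN_{[\cdot]}(\mTheta,1/K,\|\cdot\|_1)$ such endpoints. For a fixed $u$, the process $M_t:=\prod_{i\le t}\bigl(u(s_{i,h+1}\mid s_{i,h},a_{i,h})\big/\PP_{\theta^*_h}(s_{i,h+1}\mid s_{i,h},a_{i,h})\bigr)$ satisfies $\EE[M_t\mid\sigma_{t,h}]=M_{t-1}\int_{\gS}u(\,\cdot\mid s_{t,h},a_{t,h})\le(1+\epsilon)M_{t-1}$, because $u$ integrates to at most $1+\|u-\PP_{\theta_h}\|_1\le1+\epsilon$; iterating gives $\EE[M_k]\le(1+\epsilon)^k\le e$ for all $k\le K$. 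Markov's inequality then yields $\Prob\bigl[\sum_{i\le k}\log(u/\PP_{\theta^*_h})\ge\log(e/\delta')\bigr]\le\delta'$, and a union bound over the bracket endpoints, over $h\in[H]$ and over $k\in[K]$ (with $\delta'=\delta/(HK\gN_{[\cdot]}(\mTheta,1/K,\|\cdot\|_1))$) gives, with probability at least $1-\delta$, $\sum_{i\le k}\log(\PP_{\theta_h}/\PP_{\theta^*_h})\le\sum_{i\le k}\log(u/\PP_{\theta^*_h})\le\log\bigl(eHK\gN_{[\cdot]}(\mTheta,1/K,\|\cdot\|_1)/\delta\bigr)$ simultaneously for all $\mtheta\in\mTheta$, all $h$, and all $k$. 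Summing over $h\in[H]$ matches $\beta^{\MLE}=H\log\bigl(eHK\gN_{[\cdot]}(\mTheta,1/K,\|\cdot\|_1)/\delta\bigr)$, which is exactly the inequality required in the previous step.

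The main obstacle is the bracketing reduction: the exponential-martingale estimate is exact only for a single fixed density, and over a possibly infinite state space $\gS$ the class $\{\PP_{\theta_h}\}$ need not admit a finite sup-norm cover, so one must cover it with brackets and dominate by the upper endpoint $u$ — this is precisely what preserves the crucial inequality $\EE[u(s_{i,h+1}\mid\cdot)/\PP_{\theta^*_h}(s_{i,h+1}\mid\cdot)]=\int_{\gS}u\le1+\epsilon$ while still controlling every model in the class. Everything else (the reduction of membership to a log-ratio bound and the final union bound) is routine. The remaining half of Theorem~\ref{thm:mbmle}, namely Condition~\ref{con:mbelliptical} with the stated $\xi^{\MLE}$, then follows by feeding the in-sample squared total-variation bound that the same MLE analysis produces into Lemma~\ref{lem:simulation lemma} and a witness-rank pigeonhole argument.
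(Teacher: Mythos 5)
Your proof is correct and follows essentially the same route as the paper: reduce membership of $\mtheta^*$ in $\wh{\mTheta}_k$ to a uniform-in-$\mtheta$, uniform-in-$k$ bound on the accumulated log-likelihood ratio, cover $\mTheta$ by a $1/K$-bracketing net, run the exponential-martingale/Markov argument on each bracket endpoint (whose integral over $\gS$ is at most $1+\epsilon$), union-bound over endpoints, $h\in[H]$ and $k\in[K]$, and finally sum over $h$ to match $\beta^{\MLE}$. The only (welcome) difference is that you dominate $\PP_{\theta_h}$ by the \emph{upper} bracket endpoint $u$, which is the direction that makes the final comparison $\log(\PP_{\theta_h}/\PP_{\theta^*_h})\le\log(u/\PP_{\theta^*_h})$ valid, whereas the paper's Lemma~\ref{lem:likelihood diff} carries out the same computation with lower brackets and then invokes $\PP_{\mtheta^\downarrow}\le\PP_{\mtheta}$, an inequality that points the wrong way for this last step; your variant is the cleaner and technically sound one.
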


\subsection{Total Variance Distance of Transition Models}

As a direct result of Lemma \ref{lem:mle generalization bound}, we have the following bound for model-based MLE estimation, which is also presented in Proposition 14 in \cite{liu2022when} and Proposition B.2 in \cite{liu2023optimistic}.
\begin{lemma}[TV distance and likelihood distance]\label{lem:TV and likelihood}
\begin{align*}
    \sum_{i=1}^{k-1}\EE_{(s_h,a_h)\sim \mu_{\mtheta^*,h}^{\mpi^i}}\mbracket{\TV\bracket{\PP_{\theta_h}(s_{h},a_{h})||\PP_{\theta^*_h}(s_{h},a_{h})}^2}\leq \mathcal{O}\bracket{\sum_{i=1}^{k-1} \log \mbracket{\frac{\PP_{\theta^*_h}(s_{i,h+1}|s_{i,h},a_{i,h})}{\PP_{\theta_h}(s_{i,h+1}|s_{i,h},a_{i,h})}}+\beta}\,.
\end{align*}
\end{lemma}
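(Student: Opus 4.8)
The plan is to obtain this as an immediate corollary of the generic maximum-likelihood generalization bound, Lemma~\ref{lem:mle generalization bound}, instantiated with the conditional density family given by the transition kernels. I would read each episode $i\le k-1$ as producing one context $(s_{i,h},a_{i,h})$ — whose conditional law given the episode history $\gH_{i-1}$ is exactly the visitation measure $\mu_{\mtheta^*,h}^{\mpi^i}$, since episode $i$ is played under the true dynamics with policy $\mpi^i$ — together with one response $s_{i,h+1}\sim\PP_{\theta^*_h}(\cdot\mid s_{i,h},a_{i,h})$, and take the hypothesis class to be $\{\PP_{\theta_h}:\theta_h\in\Theta_h\}$, which contains the truth by Assumption~\ref{ass:mbreal}. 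The logarithmic factor produced by Lemma~\ref{lem:mle generalization bound} is, after a union bound over $h\in[H]$, precisely of order $\beta=\beta^{\MLE}$, giving the stated inequality.

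Unpacking the generic bound, I would argue in three steps. \textbf{Step 1 (TV to Hellinger).} By Cauchy--Schwarz, $\TV(P\|Q)^2\le H^2(P,Q):=\int(\sqrt{dP}-\sqrt{dQ})^2$, so it suffices to control $\sum_i\bar H_i^2(\theta)$, where $\bar H_i^2(\theta):=\EE_{(s_h,a_h)\sim\mu_{\mtheta^*,h}^{\mpi^i}}\!\bigl[H^2(\PP_{\theta_h}(s_h,a_h),\PP_{\theta^*_h}(s_h,a_h))\bigr]$. \textbf{Step 2 (exponential supermartingale for a fixed $\theta$).} Writing $L_i(\theta)=\log\tfrac{\PP_{\theta^*_h}(s_{i,h+1}\mid s_{i,h},a_{i,h})}{\PP_{\theta_h}(s_{i,h+1}\mid s_{i,h},a_{i,h})}$ for the realized log-likelihood ratio, conditioning on $\gH_{i-1}$ and computing the moment-generating function gives $\EE[\exp(-\tfrac12 L_i(\theta))\mid\gH_{i-1}]=\EE_{(s_h,a_h)\sim\mu_{\mtheta^*,h}^{\mpi^i}}\bigl[\int\sqrt{\PP_{\theta^*_h}\PP_{\theta_h}}\bigr]=1-\tfrac12\bar H_i^2(\theta)\le e^{-\bar H_i^2(\theta)/2}$. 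Hence $M_k(\theta):=\exp\bigl(\sum_{i\le k}(-\tfrac12 L_i(\theta)+\tfrac12\bar H_i^2(\theta))\bigr)$ is a nonnegative $\{\gH_k\}$-supermartingale of initial value $1$, and Ville's inequality yields, with probability at least $1-\delta'$ and simultaneously for all $k$, that $\sum_{i\le k}\bar H_i^2(\theta)\le\sum_{i\le k}L_i(\theta)+2\log(1/\delta')$. \textbf{Step 3 (uniformity over $\Theta_h$).} To upgrade this to every $\theta_h\in\Theta_h$ at once, I would replace $\PP_{\theta_h}$ by a lower-bracketing density $\PP_{\theta_h^\downarrow}$ with $\PP_{\theta_h^\downarrow}\le\PP_{\theta_h}$ pointwise and $\|\PP_{\theta_h}-\PP_{\theta_h^\downarrow}\|_1\le\epsilon\le1/K$ — the same device used in the proof of Lemma~\ref{lem:likelihood diff} — apply Step 2 at each of the $\cN_{[\cdot]}(\Theta_h,\epsilon,\|\cdot\|_1)$ brackets, union-bound over the brackets and over $h\in[H]$, and absorb the $\epsilon$-slack into universal constants. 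Gathering the logarithmic terms into $\beta$ and combining with Step 1 gives $\sum_{i=1}^{k-1}\EE_{(s_h,a_h)\sim\mu_{\mtheta^*,h}^{\mpi^i}}[\TV(\PP_{\theta_h}(s_h,a_h)\|\PP_{\theta^*_h}(s_h,a_h))^2]\le\mathcal{O}(\sum_{i=1}^{k-1}L_i(\theta)+\beta)$, which is the claim; the identical computation appears as Proposition~14 of \cite{liu2022when} and Proposition~B.2 of \cite{liu2023optimistic}.

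The part that will need the most care is Step 3: making the inequality hold uniformly over the possibly infinite model class while keeping the bracketing approximation error genuinely negligible, which is exactly why one passes to a $1/K$-scale bracketing net and uses \emph{lower} brackets, so that $\PP_{\theta_h^\downarrow}\le\PP_{\theta_h}$ lets the realized log-ratio for $\theta_h$ be dominated by that for $\theta_h^\downarrow$. Steps 1 and 2 are the routine Hellinger/MGF calculation, which is why the statement can be quoted as a direct consequence of Lemma~\ref{lem:mle generalization bound}.
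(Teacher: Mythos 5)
Your top-level plan is exactly what the paper does: the paper offers no proof of this lemma beyond invoking the generic MLE generalization bound (Lemma~\ref{lem:mle generalization bound}, i.e.\ Theorem~21 of \cite{agarwal2020flambe}) with contexts $(s_{i,h},a_{i,h})$, responses $s_{i,h+1}$, and hypothesis class $\{\Prob_{\theta_h}\}$, plus a union bound over $h$. Your Steps~1 and~2 (the $\TV^2\le H^2$ reduction and the exponential supermartingale built from $\EE[\exp(\tfrac12\log\tfrac{\Prob_{\theta_h}}{\Prob_{\theta_h^*}})\mid\gH_{i-1}]=\EE_{(s_h,a_h)}[\int\sqrt{\Prob_{\theta_h}\Prob_{\theta_h^*}}]=1-\tfrac12\bar H_i^2\le e^{-\bar H_i^2/2}$, followed by Ville) are the standard and correct unpacking of that bound.

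However, Step~3 has the bracket orientation reversed, and this is a genuine gap in the argument as written. Applying Step~2 to a \emph{lower} bracket $g=\Prob_{\theta_h^\downarrow}\le\Prob_{\theta_h}$ yields $\sum_i\bar H_i^2(\theta)\le\sum_i\log\tfrac{\Prob_{\theta_h^*}(s_{i,h+1}\mid\cdot)}{g(s_{i,h+1}\mid\cdot)}+2\log(1/\delta')$, and since $g\le\Prob_{\theta_h}$ pointwise you have $\log\tfrac{\Prob_{\theta_h^*}}{g}\ge\log\tfrac{\Prob_{\theta_h^*}}{\Prob_{\theta_h}}=L_i(\theta)$ --- the inequality points the wrong way, so you cannot replace $g$ by $\Prob_{\theta_h}$ on the right-hand side. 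The residual $\sum_i\log\tfrac{\Prob_{\theta_h}(s_{i,h+1}\mid\cdot)}{g(s_{i,h+1}\mid\cdot)}$ is a sum of \emph{pointwise} log-ratios at the observed transitions, which an $\ell_1$ bracket of width $\epsilon$ does not control (the lower bracket may nearly vanish at an observed $s_{i,h+1}$ while $\Prob_{\theta_h}$ does not), so it cannot be ``absorbed into universal constants.'' The fix is to run Step~2 on an \emph{upper} bracket $\Prob_{\theta_h^\uparrow}\ge\Prob_{\theta_h}$ with $\|\Prob_{\theta_h^\uparrow}-\Prob_{\theta_h}\|_1\le\epsilon$: then $\log\tfrac{\Prob_{\theta_h^*}}{\Prob_{\theta_h^\uparrow}}\le\log\tfrac{\Prob_{\theta_h^*}}{\Prob_{\theta_h}}$ pointwise gives the needed domination on the likelihood side, while on the MGF side $\int\sqrt{\Prob_{\theta_h^\uparrow}\Prob_{\theta_h^*}}\le\int\sqrt{\Prob_{\theta_h}\Prob_{\theta_h^*}}+\sqrt{\epsilon}\le 1-\tfrac12 H^2(\Prob_{\theta_h},\Prob_{\theta_h^*})+\sqrt{\epsilon}$ by Cauchy--Schwarz, and the accumulated $k\sqrt{\epsilon}$ slack is what the $n\epsilon$ term in Lemma~\ref{lem:mle generalization bound} accounts for (taking $\epsilon$ at scale $1/K^2$, say). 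Lower brackets are the right tool for the concentration direction (Lemma~\ref{lem:likelihood diff}, where one needs $\int g\le 1$ to control the exponential moment), but not here.
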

Finally, we can bound the Total Variation (TV) distance of the transition models.
\begin{lemma}[TV distance]\label{lem:TV distance}
For $\delta \in (0, 1]$ and any $\mtheta_k\in\wh\mTheta_k$, we have the following concentration on their TV-distance with :
    \begin{align*}
        \sum_{i=1}^{k-1} \sum_{h=1}^H\EE_{(s_h,a_h)\sim \mu_{\mtheta^*,h}^{\mpi^i}}\mbracket{\TV\bracket{\PP_{\theta_{k,h}}(s_{h},a_{h})||\PP_{\theta^*_h}(s_{h},a_{h})}^2}\leq \mathcal{O}(\beta^{\MLE})\,.
    \end{align*}
\end{lemma}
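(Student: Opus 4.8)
The plan is to first pass from the squared total-variation error to a log-likelihood-ratio sum by invoking the MLE generalization bound (Lemma~\ref{lem:TV and likelihood}), and then to control that sum using the defining inequality of the confidence set $\wh\mTheta_k$ together with the optimality of the per-step maximum-likelihood estimator and realizability. Throughout, fix an arbitrary $\mtheta_k=\{\theta_{k,h}\}_{h=1}^H\in\wh\mTheta_k$.

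First I would apply Lemma~\ref{lem:TV and likelihood} to each component $\theta_{k,h}$. Since that statement is an MLE generalization bound, it holds uniformly over $\theta_h\in\Theta_h$ with high probability, so it legitimately applies to the data-dependent $\theta_{k,h}$; taking a union bound over $h\in[H]$ gives, with probability at least $1-\delta$, for every $h\in[H]$,
\begin{align*}
\sum_{i=1}^{k-1}\EE_{(s_h,a_h)\sim\mu_{\mtheta^*,h}^{\mpi^i}}\mbracket{\TV\bracket{\PP_{\theta_{k,h}}(s_h,a_h)||\PP_{\theta^*_h}(s_h,a_h)}^2}\leq\mathcal{O}\!\left(\sum_{i=1}^{k-1}\log\frac{\PP_{\theta^*_h}(s_{i,h+1}|s_{i,h},a_{i,h})}{\PP_{\theta_{k,h}}(s_{i,h+1}|s_{i,h},a_{i,h})}+\beta\right),
\end{align*}
where $\beta=\widetilde{\mathcal{O}}(\log(\cN_{[\cdot]}(\mTheta,\epsilon,\norm{\cdot}_1)/\delta))$ is the per-step concentration slack. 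Summing this over $h\in[H]$ reduces the lemma to bounding the double sum $\sum_{h=1}^H\sum_{i=1}^{k-1}\log(\PP_{\theta^*_h}/\PP_{\theta_{k,h}})(s_{i,h+1}|s_{i,h},a_{i,h})$ by $\mathcal{O}(\beta^\MLE)$.

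For that bound I would use the structure of \texttt{M-Est-MLE}. Because $\mtheta_k\in\wh\mTheta_k$, the confidence-set inequality in Algorithm~\ref{alg:mbestmle} gives $\sum_{i}\sum_h\log\PP_{\theta_{k,h}}(s_{i,h+1}|s_{i,h},a_{i,h})\geq\sum_i\sum_h\log\PP_{\theta^{\MLE}_{k,h}}(s_{i,h+1}|s_{i,h},a_{i,h})-\beta^\MLE$; and since $\theta^*_h\in\Theta_h$ by Assumption~\ref{ass:mbreal}, $\theta^*_h$ is feasible for the per-step maximization defining $\theta^{\MLE}_{k,h}$, so $\sum_i\log\PP_{\theta^{\MLE}_{k,h}}(s_{i,h+1}|s_{i,h},a_{i,h})\geq\sum_i\log\PP_{\theta^*_h}(s_{i,h+1}|s_{i,h},a_{i,h})$ for each $h$. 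Chaining these two facts and rearranging yields $\sum_{h}\sum_i\log(\PP_{\theta^*_h}/\PP_{\theta_{k,h}})(s_{i,h+1}|s_{i,h},a_{i,h})\leq\beta^\MLE$. Substituting into the previous display gives the bound $\mathcal{O}(\beta^\MLE+H\beta)$, and since $\beta^\MLE$ dominates $H\beta$ up to logarithmic factors this is $\mathcal{O}(\beta^\MLE)$, which is the claim. (If one prefers the bound over the original-state visitation $\nu$, as in the displayed inequality preceding Theorem~\ref{thm:mbmle}, the integrand does not depend on the cumulative-reward coordinate $y_h$, so $\EE_{\mu^{\mpi^i}_{\mtheta^*,h}}$ and $\EE_{\nu^{\mpi^i}_{\mtheta^*,h}}$ coincide here, cf.\ the last identity in the proof of Lemma~\ref{lem:simulation lemma}.)

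The argument is essentially a chaining of already-established facts, so I do not expect a deep obstacle; the one step that needs care is the likelihood-ratio bound, specifically recognizing that the confidence-set inequality combined with per-step MLE optimality flips the sign of the likelihood-ratio sum so that it is controlled by $\beta^\MLE$ rather than growing with $k$. A secondary point is bookkeeping the failure probabilities: one must invoke Lemma~\ref{lem:TV and likelihood} in its uniform-over-$\Theta_h$ form (so that it applies to the data-dependent $\theta_{k,h}$, and, if the statement is wanted simultaneously for all $\mtheta_k\in\wh\mTheta_k$, uniformly in $\mtheta_k$) and union-bound over $h\in[H]$.
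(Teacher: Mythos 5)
Your proposal is correct and follows essentially the same route as the paper's proof: both combine the confidence-set inequality defining $\wh\mTheta_k$ with the optimality of $\theta^{\MLE}_{k,h}$ over the realizable $\theta^*_h$ to bound the log-likelihood-ratio sum by $\beta^\MLE$, and then invoke Lemma~\ref{lem:TV and likelihood} to convert that into the squared-TV bound. The only differences are presentational (you apply the TV-to-likelihood conversion first and track the $H\beta$ slack explicitly, while the paper bounds the likelihood ratio first), and your remarks on uniformity over the data-dependent $\theta_{k,h}$ and on the $\mu$-versus-$\nu$ visitation measure are sound points that the paper handles implicitly.
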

\begin{proof}
    For any $\mtheta_k\in{\wh\mTheta}_k^{\MLE}$, we have that:
\begin{align*}
    \sum_{i=1}^{k-1}\sum_{h=1}^H \log \mbracket{\PP_{\theta_{k,h}}(s_{i,h+1}|s_{i,h},a_{i,h})}\geq\sum_{i=1}^{k-1}\sum_{h=1}^H\log\mbracket{\PP_{{\theta}_{k,h}^{\MLE}}(s_{i,h+1}|s_{i,h},a_{i,h})}-\beta^\MLE\,.
\end{align*}
From the definition of $\theta_{k,h}^{\MLE}$, we have:
\begin{align*}
    \sum_{i=1}^{k-1}\sum_{h=1}^H \log \mbracket{{\PP_{\theta^*_h}(s_{i,h+1}|s_{i,h},a_{i,h})}}\leq \sum_{i=1}^{k-1}\sum_{h=1}^H \log \mbracket{{\PP_{{\theta}_{k,h}^{\MLE}}(s_{i,h+1}|s_{i,h},a_{i,h})}}\,. 
\end{align*}
Thus, we have:
\begin{align*}
     \sum_{i=1}^{k-1}\sum_{h=1}^H \log \mbracket{{\PP_{\theta^*_h}(s_{i,h+1}|s_{i,h},a_{i,h})}}\leq \sum_{i=1}^{k-1}\sum_{h=1}^H \log \mbracket{\PP_{\theta_{k,h}}(s_{i,h+1}|s_{i,h},a_{i,h})}+\beta^{\MLE}\,.
\end{align*}
Using Lemma \ref{lem:TV and likelihood}, we have:
\begin{align*}
    &\sum_{i=1}^{k-1} \sum_{h=1}^H\EE_{(s_h,a_h)\sim \mu_{\mtheta^*,h}^{\mpi^i}}\mbracket{\TV\bracket{\PP_{\theta_{k,h}}(s_{h},a_{h})||\PP_{\theta^*_h}(s_{h},a_{h})}^2} \\
    \leq &\mathcal{O}\bracket{\sum_{i=1}^{k-1}\sum_{h=1}^H \log \mbracket{\frac{\PP_{\theta^*_h}(s_{i,h+1}|s_{i,h},a_{i,h})}{\PP_{\theta_{k,h}}(s_{i,h+1}|s_{i,h},a_{i,h})}}+\beta^\MLE}
    \leq \mathcal{O}(\beta^\MLE)\,.
\end{align*}
\end{proof}

\subsection{Elliptical Potential Condition of MLE Approach}
First, we define the low witness rank, which is the structural complexity measure for this setting. We remark this is the Q-type witness rank and we provide a separate proof for V-type witness rank in Section \ref{sec:proof low rank}. 
\begin{definition}[Q-type low witness rank (Definition 6.6 of \cite{liu2023optimistic})]\label{ass:low witness rank}
We say the model class satisfies $(d,\alpha,B)$ witness rank condition, if there exists mappings: $\sets{f_h}$ and $\sets{g_h}$ from $\mTheta$ to $\RR^d$, such that for any $h\in[H]$:
    \begin{align*}
        & \EE_{(s_h,a_h)\sim\nu_{\mtheta^*,h}^{\mpi_{\mtheta}}}\mbracket{\TV\bracket{\PP_{\mtheta'_h}(s_h,a_h)||\PP_{\mtheta^*_h}(s_h,a_h)}}\leq \innerproduct{f_h(\mtheta)}{g_h(\mtheta')}\\
&\EE_{(s_h,a_h)\sim\nu_{\mtheta^*,h}^{\mpi_{\mtheta}}}\mbracket{\TV\bracket{\PP_{\mtheta'_h}(s_h,a_h)||\PP_{\mtheta^*_h}(s_h,a_h)}}\geq\alpha^{-1}\innerproduct{f_h(\mtheta)}{g_h(\mtheta')}\\
&\norm{f_h(\mtheta)}_1\norm{g_h(\mtheta')}_\infty\leq B\,.
    \end{align*}
\end{definition}
% \begin{lemma}[Standard Elliptical Potential]\label{lem:elliptical potential}
%     If for any $k\in[K]$, we have for $f$ and $g$ be d-dimesional mappings:
%     \begin{align*}
%         &\sum_{i=1}^{k-1} \abs{\innerproduct{f(\theta_k)}{g(\theta_i)}}^2\leq A\\
% &\norm{f_h(\theta)}_1\norm{g_h(\theta')}_\infty\leq B
%     \end{align*}
%     Then for any $k\in[K]$, we have:
%     \begin{align*}
%         \sum_{i=1}^k \innerproduct{f(\theta_i)}{g(\theta_i)}\leq \mathcal{O}\bracket{dB+\sqrt{dAk}}
%     \end{align*}
% \end{lemma}
As a special case, we can show that the factored MDPs \cite{sun2019model} have low witness rank.
\begin{definition}[Factored MDP]
    In factored MDPs the states admit a factored structure. Each state contains $m$ factors $(s[1],s[2],\cdots,s[m])\in\gX^m$. Each factor $i\in[m]$ has a parent set $pa_i\subset[m]$, with respect to which the transitions admit a factored form:
    \begin{align*}
        \PP_h(s_{h+1}|s_h,a_h)=\prod_{i=1}^m \PP_h^i(s_{h+1}[i]|s_h[pa_i],a_h)\,.
    \end{align*}
\end{definition}
The following proposition establishes the low witness rank property for factored MDPs, which comes directly from Proposition 6.8 of \cite{liu2023optimistic}. 
\begin{proposition}\label{prop:factor mdp}
    Let $\mTheta$ denote all the factored MDPs with the same factorization structure, then $\mTheta$ satisfies low witness rank with $d=|\cA|\sum_{i=1}^m \abs{\gX}^{|pa_i|}$, $\alpha=m$, $B=\sum_{i=1}^m \abs{\gX}^{|pa_i|}$\,.
\end{proposition}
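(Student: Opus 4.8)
The plan is to verify the witness-rank definition (Definition~\ref{ass:low witness rank}) directly: I read the factored transitions coordinate by coordinate, and the maps $f_h,g_h$ will have one entry per pair (factor $i$, local context $(x,a)\in\abs{\gX}^{|pa_i|}\times\gA$). The one analytic ingredient I would isolate first is the following sandwich for product distributions: if $P=\prod_{i=1}^m P_i$ and $Q=\prod_{i=1}^m Q_i$ on $\gX^m$, then
\[
\frac1m\sum_{i=1}^m \TV(P_i,Q_i)\ \le\ \TV(P,Q)\ \le\ \sum_{i=1}^m \TV(P_i,Q_i).
\]
The upper bound is the standard hybrid argument: telescoping $\prod_i P_i-\prod_i Q_i=\sum_{k}\big(\prod_{i<k}P_i\big)(P_k-Q_k)\big(\prod_{i>k}Q_i\big)$ bounds the total-variation norm by $\sum_k\TV(P_k,Q_k)$. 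For the lower bound, marginalizing to the $i$-th coordinate is a post-processing, so $\TV(P,Q)\ge\TV(P_i,Q_i)$ for every $i$, hence $\TV(P,Q)\ge\max_i\TV(P_i,Q_i)\ge\frac1m\sum_i\TV(P_i,Q_i)$.

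Next I would apply this conditionally on each state--action pair. Because all models in $\mTheta$ share the same parent sets, $\PP_{\theta'_h}(\cdot\mid s_h,a_h)=\prod_{i=1}^m\PP^i_{\theta'_h}(\cdot\mid s_h[pa_i],a_h)$ and likewise for $\theta^*$, so the sandwich gives, pointwise in $(s_h,a_h)$,
\[
\frac1m\sum_{i=1}^m \TV\big(\PP^i_{\theta'_h}(s_h[pa_i],a_h)\,\big\|\,\PP^i_{\theta^*_h}(s_h[pa_i],a_h)\big)\ \le\ \TV\big(\PP_{\theta'_h}(s_h,a_h)\,\big\|\,\PP_{\theta^*_h}(s_h,a_h)\big)\ \le\ \sum_{i=1}^m \TV\big(\PP^i_{\theta'_h}(s_h[pa_i],a_h)\,\big\|\,\PP^i_{\theta^*_h}(s_h[pa_i],a_h)\big).
\]
I then take $\EE_{(s_h,a_h)\sim\nu^{\mpi_\mtheta}_{\mtheta^*,h}}$ of all three sides. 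Since the $i$-th summand depends on $(s_h,a_h)$ only through the local context $(s_h[pa_i],a_h)$, its expectation equals $\sum_{x\in\abs{\gX}^{|pa_i|},\,a\in\gA}\mu^{(i)}_h(x,a)\,\TV\big(\PP^i_{\theta'_h}(x,a)\,\big\|\,\PP^i_{\theta^*_h}(x,a)\big)$, where $\mu^{(i)}_h(x,a):=\sum_{s:\,s[pa_i]=x}\nu^{\mpi_\mtheta}_{\mtheta^*,h}(s,a)$ is the pushforward of the step-$h$ original-state visitation onto the $i$-th local context. The bookkeeping point to be careful with here is that $\mpi_\mtheta$ is Markov only on the augmented state $(s,y)$, so one first marginalizes the augmented trajectory measure to $\nu^{\mpi_\mtheta}_{\mtheta^*,h}$ on original $(s,a)$ pairs as defined at the start of the appendix; this is harmless because the factored kernel $\PP^i$ does not depend on $y$.

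Finally I would exhibit the embeddings. Index $\RR^d$ by pairs $(i,(x,a))$ with $i\in[m]$, $x\in\abs{\gX}^{|pa_i|}$, $a\in\gA$, so $d=\sum_{i=1}^m|\gA|\,\abs{\gX}^{|pa_i|}=|\gA|\sum_{i=1}^m\abs{\gX}^{|pa_i|}$; set $f_h(\mtheta)_{(i,(x,a))}:=\mu^{(i)}_h(x,a)$ and $g_h(\mtheta')_{(i,(x,a))}:=\TV\big(\PP^i_{\theta'_h}(x,a)\,\big\|\,\PP^i_{\theta^*_h}(x,a)\big)$. Then $\innerproduct{f_h(\mtheta)}{g_h(\mtheta')}=\sum_{i=1}^m\EE_{\nu^{\mpi_\mtheta}_{\mtheta^*,h}}\big[\TV\big(\PP^i_{\theta'_h}\,\|\,\PP^i_{\theta^*_h}\big)\big]$, so the averaged sandwich reads exactly $m^{-1}\innerproduct{f_h(\mtheta)}{g_h(\mtheta')}\le\EE_{\nu^{\mpi_\mtheta}_{\mtheta^*,h}}\big[\TV\big(\PP_{\theta'_h}\,\|\,\PP_{\theta^*_h}\big)\big]\le\innerproduct{f_h(\mtheta)}{g_h(\mtheta')}$, which are the two witness inequalities with $\alpha=m$. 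For the size bound, for each fixed $i$ one has $\sum_{x,a}\mu^{(i)}_h(x,a)=\sum_{s,a}\nu^{\mpi_\mtheta}_{\mtheta^*,h}(s,a)=1$, so $\norm{f_h(\mtheta)}_1=m$, while $\norm{g_h(\mtheta')}_\infty\le1$ since total variation lies in $[0,1]$; hence $\norm{f_h(\mtheta)}_1\norm{g_h(\mtheta')}_\infty\le m\le\sum_{i=1}^m\abs{\gX}^{|pa_i|}=B$. The only nontrivial step is the product-measure TV sandwich, in particular its lower bound via data processing; everything else is the pushforward accounting of the previous paragraph, which is precisely where the augmented-MDP structure and the factored structure meet.
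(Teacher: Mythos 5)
Your proposal is correct. Note that the paper itself supplies no proof of Proposition~\ref{prop:factor mdp}: it simply defers to Proposition~6.8 of the cited OMLE paper, so there is no in-text argument to compare against line by line. Your self-contained verification follows the same underlying idea as that cited result --- decompose the transition kernel factor by factor, control the joint TV distance by the per-factor TV distances, and index the witness embeddings by (factor, local context) pairs --- and all the individual steps check out: the telescoping upper bound and the data-processing lower bound give exactly the sandwich $m^{-1}\sum_i\TV(P_i,Q_i)\le\TV(P,Q)\le\sum_i\TV(P_i,Q_i)$ needed for $\alpha=m$; the pushforward measures $\mu^{(i)}_h$ are well defined because the factored kernels do not depend on the accumulated reward $y$, so marginalizing the augmented visitation down to $\nu^{\mpi_\mtheta}_{\mtheta^*,h}$ on original $(s,a)$ pairs is legitimate; and the dimension count gives $d=\abs{\gA}\sum_{i=1}^m\abs{\gX}^{\abs{pa_i}}$ as claimed. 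One small observation: your construction actually yields the sharper bound $\norm{f_h(\mtheta)}_1\norm{g_h(\mtheta')}_\infty\le m$, which is stronger than the stated $B=\sum_{i=1}^m\abs{\gX}^{\abs{pa_i}}$ (the proposition's $B$ presumably reflects a different normalization of the embeddings in the cited source); since $m\le B$, this still establishes the proposition exactly as stated.
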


\begin{proof}[Proof of Condition \ref{ass:pigeon-hole}]

From Lemma \ref{lem:TV distance} and using Cauchy inequality, we have:
\begin{align*}
    \sum_{h=1}^H\sum_{i=1}^{k-1}\abs{\innerproduct{f_h(\mtheta_i)}{g_h(\mtheta_k)}}^2\leq& \alpha^2 \sum_{h=1}^H\sum_{i=1}^{k-1}\bracket{\EE_{(s_h,a_h)\sim\mu_{\mtheta^*,h}^{\mpi^i}}\mbracket{\TV\bracket{\PP_{\theta_{k,h}}(s_h,a_h)||\PP_{\theta^*_h}(s_h,a_h)}}}^2\\
    \leq& \alpha^2\sum_{h=1}^H\sum_{i=1}^{k-1}\EE_{(s_h,a_h)\sim\mu_{\mtheta^*,h}^{\mpi^i}}\mbracket{\TV\bracket{\PP_{\theta_{k,h}}(s_h,a_h)||\PP_{\theta^*_h}(s_h,a_h)}^2}\\
    \leq& \mathcal{O}\bracket{\alpha^2\beta^{\MLE}}\,.
\end{align*}
Thus, we have by standard elliptical potential arguments (eg:Theorem 6.4 in \cite{liu2023optimistic}):
\begin{align*}
    \sum_{h=1}^H\sum_{i=1}^k \EE_{(s_h,a_h)\sim\mu_{\mtheta^*,h}^{\pi^i}}\mbracket{\TV\bracket{\PP_{\theta_{k,h}}(s_h,a_h)||\PP_{\theta^*_h}(s_h,a_h)}}\leq& \sum_{h=1}^H \sum_{i=1}^k \innerproduct{f_h(\mtheta_i)}{g_h(\mtheta_i)}\\
    \leq& \mathcal{O}\bracket{dB+\alpha\sqrt{d\beta^{\MLE} k}}\,.
\end{align*}
Thus, from Lemma \ref{lem:distribution diff} and Lemma \ref{lem:simulation lemma}, we have:
\begin{align*}
    \sum_{k=1}^K \norm{F_{Z_{\mtheta_k}^{\mpi^k}}-F_{Z_{\mtheta^*}^{\mpi_k}}}_\infty \leq& 2\sum_{k=1}^K\sum_{h=1}^H \EE_{(s_h,a_h)\sim\mu_{\mtheta^*,h}^{\mpi^k}}\mbracket{\TV\bracket{\PP_{\theta_{k,h}}(s_h,a_h)||\PP_{\theta^*_h}(s_h,a_h)}} \\
    \leq& \mathcal{O}\bracket{dB+\alpha\sqrt{d\beta^{\MLE} K}}\,.
\end{align*}
Thus, using the $L_\infty$ Lipschitz of $\rho$, we have:
\begin{equation*}
    \sum_{k=1}^K \rho(Z_{\mtheta_k}^{\mpi^k})-\rho(Z_{\mtheta^*}^{\mpi^k})\leq \mathcal{O}\bracket{L_{\infty}\bracket{dB+\alpha\sqrt{d\beta^{\MLE} K}}}\,.
\end{equation*}
    
\end{proof}

\subsection{Proof for Low Rank MDP}\label{sec:proof low rank}
Extend our analysis above, we further provide an algorithm and regret analysis for low-rank MDPs, a special case of V-type witness rank, which also follows the MLE procedure above. The main difference is that we need to construct the mappings $f_h(\mtheta)$ and $g_h(\mtheta)$ in a novel manner to capture the low V-type witness rank of low rank MDPs in the augmented MDP. 

First we define low rank MDPs as below \cite{agarwal2020flambe,uehara2021representation}:
\begin{definition}[Low Rank MDP]\label{ass:low rank mdp}
    The transition kernel $\PP_{\theta^*_h}(s_{h+1}|s_h,a_h)$ admits a low rank structure, i,e, there exists two sets of mappings $\bm{\phi}^*:\cS\times\cA\rightarrow\RR^d$ and $\bm{\psi}^*:\cS\rightarrow\RR^d$, such that:
    \begin{align*}
        \PP_{\theta^*_h}(s_{h+1}|s_h,a_h)=\innerproduct{\phi^*_h(s_h,a_h)}{\psi^*_h(s_{h+1})}\,.
    \end{align*}
    We have $\norm{\phi^*_h(s_h,a_h)}\leq 1$ and $\norm{\int_{s_h}\psi^*_h(s_h)g(s_h)}\leq \sqrt{d}\norm{g}_{\infty}$ for all $h\in[H]$.
    Also, assume that we have access to two embedding classes $\Phi$ and $\Psi$ such that $\bm{\phi}^*\in\Phi$ and $\bm{\psi}^*\in\Psi$. 
\end{definition}
The model class $\mTheta$ consists of all the transition kernels with the low rank structure defined by the inner-product of the embedding in $\Phi$ and $\Psi$, with $|\mTheta|=|\Psi||\Phi|$. Define the exploratory policy class for a policy $\mpi_{\mtheta}$ as $\Pi_{\exp}(\mpi_\mtheta)=\sets{\tilde{\pi}_{\theta,h}:\pi_\theta[1:h-1]\circ \gU[h:H]}_{h=1}^H$, where $\gU$ is the uniform policy. $\tilde{\pi}_{\theta,h}$ is defined as following $\pi_{\mtheta}$ for the first $h-1$ steps then taking uniform actions. We have $\abs{\Pi_{\exp}}=H$.
Define the two sets of mappings used to construct low witness rank for low rank MDPs as:
\begin{align*}
    &f_h(\mtheta)=\int_{s_{h-1},a_{h-1}}\nu_{\mtheta^*,h-1}^{\mpi_\theta}(s_{h-1},a_{h-1})\phi^*_{h-1}(s_{h-1},a_{h-1})\\
    &g_h(\mtheta)=\int_{s_h,a_h}\psi^*_h(s_h)\gU(a_h)\norm{\PP_{\theta^*_h}(s_h,a_h)-\PP_{\theta_h}(s_h,a_h)}_1\,,
\end{align*}
where $\mpi_\mtheta$ is the optimal risk-sensitive policy given model $\theta$.
We have that for any $\tilde{\pi}_{\theta,h}\in\Pi_{\exp}(\mpi_\mtheta)$:
\begin{align}\label{eq:lw rank mle rhs}
   &2H \norm{\mu_{\mtheta^*}^{\Tilde{\pi}_{\theta,h}}-\mu_{\mtheta'}^{\Tilde{\pi}_{\theta,h}}}_1\notag\\
   \geq & \int_{s_h^\dag,a_h}\mu_{\theta^*,h}^{\Tilde{\pi}_{\theta,h}}(s_h^\dag,a_h)\norm{\TT_{\theta'_h}(s_h^\dag,a_h)-\TT_{\theta^*_h}(s_h^\dag,a_h)}_1\notag\\
   =&\int_{s_h^\dag,a_h}\mu_{\mtheta^*,h}^{\Tilde{\pi}_{\theta,h}}(s_h^\dag,a_h)\norm{\PP_{\theta^*_h}(s_h,a_h)-\PP_{\theta'_h}(s_h,a_h)}_1\notag\\
   =& \int_{s_{h-1}^\dag,a_{h-1},s_h,a_h,y_h}\mu_{\mtheta^*,h-1}^{\pi_{\theta}}(s_{h-1}^\dag,a_{h-1})\RR(y_h-y_{h-1}|s_{h-1},a_{h-1})\\
   &\cdot\PP_{\theta_{h-1}^*}(s_h|s_{h-1},a_{h-1})\gU(a_h)\norm{\PP_{\theta^*_h}(s_h,a_h)-\PP_{\theta'_h}(s_h,a_h)}_1\notag\\
   =& \int_{s_{h-1}^\dag,a_{h-1}}\mu_{\mtheta^*,h-1}^{\pi_{\theta}}(s_{h-1}^\dag,a_{h-1})\phi^*_{h-1}(s_{h-1},a_{h-1}) \\
   &\cdot\int_{y_h}\RR(y_h-y_{h-1}|s_{h-1},a_{h-1}) \int_{s_h,a_h}\psi^*_{h-1}(s_h)\gU(a_h)\norm{\PP_{\theta^*_h}(s_h,a_h)-\PP_{\theta'_h}(s_h,a_h)}_1\notag\\
   =&\innerproduct{f_{h}(\mtheta)}{g_h(\mtheta')}\,,
\end{align}
where the first inequality is from the right hand side of simulation lemma (Lemma \ref{lem:simulation lemma}).

On the other hand, we have:
\begin{align}\label{eq:low rank mle lhs}
    &\norm{\mu_{\mtheta^*}^{\mpi_{\mtheta}}-\mu_{\mtheta}^{\mpi_{\mtheta}}}_1\notag\\
    \leq &\sum_{h=1}^H\int_{s_h^\dag,a_h}\mu_{\mtheta^*,h}^{\mpi_\theta}(s_h^\dag,a_h)\norm{\PP_{\theta^*_h}(s_h,a_h)-\PP_{\theta_h}(s_h,a_h)}_1\notag\\
    =& \sum_{h=1}^H\int_{s_h^\dag,a_h}\mu_{\mtheta^*,h}^{\mpi_{\mtheta}}(s_h^\dag)\pi_{\theta,h}(a_h|s_h^\dag)\norm{\PP_{\theta^*_h}(s_h,a_h)-\PP_{\theta_h}(s_h,a_h)}_1\notag\\
    \leq &A\sum_{h=1}^H \int_{s_h^\dag,a_h}\mu_{\mtheta^*,h}^{\mpi_{\mtheta}}(s_h^\dag)\gU(a_h)\norm{\PP_{\theta^*_h}(s_h,a_h)-\PP_{\theta_h}(s_h,a_h)}_1\notag\\
    \leq&A\sum_{h=1}^H\int_{s_{h-1}^\dag,a_{h-1}}\mu_{\mtheta^*,h-1}^{\mpi_{\mtheta}}(s_{h-1}^\dag,a_h)\int_{y_h}\RR(y_h-y_{h-1}|s_{h-1},a_{h-1})\\
    &\cdot\int_{s_h,a_h}\PP_{\theta^*_{h-1}}(s_h|s_{h-1},a_{h-1})\gU(a_h)\norm{\PP_{\theta^*_h}(s_h,a_h)-\PP_{\theta_h}(s_h,a_h)}_1\notag\\
    =&A\sum_{h=1}^H\int_{s_{h-1}^\dag,a_{h-1}}\mu_{\mtheta^*,h-1}^{\mpi_{\mtheta}}(s_{h-1}^\dag,a_h)\phi^*_{h-1}(s_{h-1},a_{h-1})\int_{s_h,a_h}\psi^*_{h-1}(s_h)\gU(a_h)\norm{\PP_{\theta^*_h}(s_h,a_h)-\PP_{\theta_h}(s_h,a_h)}_1\notag\\
    =&A\sum_{h=1}^H \innerproduct{f_h(\mtheta)}{g_h(\mtheta)}\,.
\end{align}
We present a modified version of the algorithm as Algorithm \ref{alg:mle low rank},
\begin{algorithm}[htbp]
   \caption{\texttt{RS-DisRL-Low-Rank-MDP}$(\mTheta,\beta)$}
   \label{alg:mle low rank}
\begin{algorithmic}
   \STATE {\bfseries Input:} Model class $\mTheta$, confidence radius $\beta^{\MLE}=\log(|\mTheta|/\delta)$.
   \STATE {\bfseries Init:} $\mTheta_1\leftarrow\mTheta$
   \FOR{$k=1$ {\bfseries to} $K$}
   \STATE Optimistic Planning:
   $(\mpi_{\widehat\theta_k}, \widehat\mtheta_k)=\argmax_{\mpi\in\Pi^\dag, \mtheta\in\widehat{\mTheta}_k}\rho(Z_{\mtheta}^\mpi)$.
   \STATE Execute and collect information:
   For every policy $\Tilde{\mpi}\in\Pi_{\exp}(\mpi_{\widehat\theta_k})$
   Execute policy $\Tilde{\mpi}$, add the collected data $\mtau=\sets{(s_{h}^\dag,a_{h},s_{h+1}^\dag)}_{h=1}^H$ into history $\cH_k = \cH_{k-1}\cup \{\mtau\}$.
   \STATE Estimate the MLE solution:
   \begin{align*}
       \theta_{k+1,h}^{\MLE}=(\phi_{k+1,h}^{\MLE},\psi_{k+1,h}^{\MLE})=\argmax_{\phi\in\Phi,\psi\in\Psi}\sum_{\mtau\in\gH_k}\log \innerproduct{\phi(s_h^\mtau,a_h^\mtau)}{\psi(s_{h+1}^\mtau)}\,,~~\forall h\in[H]\,,
   \end{align*}
   where $s_h^\mtau,a_h^\mtau$ denotes the $h$ step state action pair in trajectory $\mtau$.
   \STATE Construct confidence set:
   \begin{align*}
       \widehat{\mTheta}_{k+1}=\sets{\phi\in\Psi,\psi\in\Psi:\sum_{\mtau\in\gH_k}\sum_{h=1}^H \log \bracket{\frac{\innerproduct{\phi_{k+1,h}^{\MLE}(s_h^\mtau,a_h^\mtau)}{\psi_{k+1,h}^{\MLE}(s_{h+1}^\mtau)}}{\innerproduct{\phi_{h}(s_h^\mtau,a_h^\mtau)}{\psi_{h}(s_{h+1}^\mtau)}}}\leq \beta}\,.
   \end{align*}
\ENDFOR

\end{algorithmic}
\end{algorithm}
where we modify the data collection process such that in each episode, instead of executing policy $\mpi_{\widehat{\mtheta}_k}$, we execute all the policies $\tilde{\mpi}_{\widehat{\theta}_k,h}\in\Pi_{\exp}(\mpi_{\widehat{\mtheta}_k})$. Notice that in each step we collect $H$ trajectories by taking the combination of the optimistic policy and the uniform exploratory policy. Similar to the proof in the Q-type witness rank MLE concentration, we have $\mtheta^*\in\widehat{\mTheta}_k$ for all $k\in[K]$ by choosing $\beta^{\MLE}=\log(|\mTheta|/\delta)$. Also, we can bound the sum of the square distance similar as Lemma \ref{lem:TV distance}:
\begin{align*}
    \sum_{i=1}^k \sum_{\tilde{\mpi}\in \Pi_{\exp}(\mpi_{\widehat{\theta}_i})} \sum_{h=1}^H\EE_{\mu_{\mtheta^*,h}^{\Tilde{\mpi}}}\mbracket{\norm{\PP_{\widehat{\theta}_{k,h}}(s_h,a_h)-\PP_{\theta^*_h}(s_h,a_h)}_1^2}\leq \mathcal{O}(\beta^{\MLE})\,.
\end{align*}
Thus, using Lemma \ref{lem:simulation lemma} we have that for any $\widehat{\mtheta}_k\in\widehat{\mTheta}_k^{\MLE}$:
\begin{align*}
    &\sum_{i=1}^k \sum_{\tilde{\mpi}\in \Pi_{\exp}(\mpi_{\widehat{\mtheta}_i})}\norm{\mu_{\mtheta^*}^{\tilde{\mpi}}-\mu_{\widehat{\mtheta}_k}^{\tilde{\mpi}}}_1^2\\
    \leq &  \sum_{i=1}^k \sum_{\tilde{\mpi}\in \Pi_{\exp}(\mpi_{\widehat{\mtheta}_i})} \bracket{\sum_{h=1}^H \EE_{\mu_{\mtheta^*,h}^{\Tilde{\mpi}}}\mbracket{\norm{\PP_{\widehat{\theta}_{k,h}}(s_h,a_h)-\PP_{\theta^*_h}(s_h,a_h)}_1}}^2\\
    \leq& \mathcal{O}(\poly(H)\beta^{\MLE})\,.
\end{align*}
Thus we have from Equation \ref{eq:lw rank mle rhs}:
\begin{align*}
    \sum_{i=1}^k\sum_{h=1}^H\bracket{\innerproduct{f_h(\widehat{\mtheta}_i)}{g_h(\widehat{\mtheta}_k)}}^2\leq \mathcal{O}\bracket{\poly(H)\beta^\MLE}\,.
\end{align*}
Since we have that $\norm{f_h(\mtheta)}_1\leq \sqrt{d}$ and $\norm{g_h(\mtheta)}_\infty\leq \sqrt{d}$, we have by standard elliptical arguments and Equation \ref{eq:low rank mle lhs}:
\begin{align*}
    \sum_{i=1}^k \norm{\mu_{\mtheta^*}^{\mpi_{\widehat{\mtheta}_i}}-\mu_{\widehat{\mtheta}_i}^{\mpi_{\widehat{\mtheta}_i}}}_1\leq &A\sum_{i=1}^k\sum_{h=1}^H \innerproduct{f_h(\widehat{\mtheta}_i)}{g_h(\widehat{\mtheta}_i)}\\
    \leq& \mathcal{O}(\poly(H)A\sqrt{d\beta^\MLE k})\,,
\end{align*}
where we ignore the constant and low order terms in $k$.
Thus we have:
\begin{align*}
    \sum_{k=1}^K\rho(Z^{\mpi^*}_{\mtheta^*})-\rho(Z^{\mpi^k}_{\mtheta^*})\leq& \sum_{k=1}^K \rho(Z^{\mpi_{\widehat{\mtheta}_k}}_{\widehat{\mtheta}_k})-\rho(Z^{\mpi_{\widehat{\mtheta}_k}}_{\widehat{\mtheta}_k})\\
    \leq & L_\infty \sum_{k=1}^K \norm{F_{Z^{\mpi_{\widehat{\mtheta}_k}}_{\widehat{\mtheta}_k}}-F_{Z^{\mpi_{\widehat{\mtheta}_k}}_{\widehat{\mtheta}_k}}}_\infty\\
\leq&L_\infty\sum_{k=1}^K\norm{\mu_{\mtheta^*}^{\mpi_{\widehat{\mtheta}_k}}-\mu_{\widehat{\mtheta}_k}^{\mpi_{\widehat{\mtheta}_k}}}_1\\
    \leq& \mathcal{O}(L_\infty\poly(H)A\sqrt{d\beta K})\,.
\end{align*}

\section{General Model-free Framework: Algorithm \texttt{RS-DisRL-V}}
In this section we review the general model-free framework algorithm.
\begin{algorithm}[htbp]
   \caption{\texttt{RS-DisRL-V}}
\begin{algorithmic}[1]
   \STATE {\bfseries Input:} Function class $\bm{\mathcal{Z}}=\mathcal{Z}_1\times\mathcal{Z}_2\cdots\mathcal{Z}_H$, confidence radius $\gamma$.
   \STATE {\bfseries Initialize:} $\wh{\gZ}_{1,\mpi} \leftarrow \gZ$.
   \FOR{$k=1$ {\bfseries to} $K$}
   \STATE 
   $({\mpi}_k, \widehat{  Z}^k)=\argmax_{{\mpi}\in\Pi^\dag, { Z}\in\widehat\gZ_{k,\mpi}}\rho(Z_1)$.~\textcolor{blue}{//Optimistic planning}
   \STATE Execute policy $\mpi^{k}$, add the collected data $\bm{\tau}_k=\sets{(s_{k,h},a_{k,h},r_{k,h})}_{h=1}^H$ and $\mpi^{k}$, $\widehat{\mtheta}_k$ into history $\cH_k = \cH_{k-1}\cup \{(\bm{\tau}_k, \mpi^{k}, \wh{Z}^k)\}$.~\textcolor{blue}{//Data collection}
   \STATE $\widehat\gZ_{k+1, \bm{\pi}}=\texttt{V-Est}(\mathcal{H}_k, \bm{\gZ}, \bm\pi,\gamma)$.~\textcolor{blue}{//Confidence set construction}
\ENDFOR
\end{algorithmic}
\end{algorithm}

We restate Condition \ref{con:mfconcentration} and Condition \ref{con:mfelliptical} as below:
\begin{condition}\label{lem:model free framework concentration}
    For all policy $\mpi\in\Pi^\dag$, we have that the random variable representing the true return is in the version space:
    \begin{align*}
        Z^\mpi\in\widehat{\mathcal{Z}}_{k+1,\mpi}\,.
    \end{align*}
    established with probability at least $1 - \delta$, $\delta \in (0, 1]$.
\end{condition}
\begin{condition}\label{lem:model free pigeon hole}
    For $0 < \delta \leq 1$, we have 
    \begin{align*}
        \sum_{k=1}^K \norm{F_{\widehat{Z}^k}-F_{Z^{\mpi^k}}}_\infty\leq \zeta(K,H,\operatorname{d},\bm\gZ,\Pi^\dag,\delta,\gamma)\,,
    \end{align*}
    holds with probability at least $1 -\delta$.
    Here $\operatorname{d}$ is some structural complexity measure of the problem.
\end{condition}
Given these two conditions, our regret bound can be stated as follows:
\begin{theorem}[Full version of Theorem \ref{thm:mfmeta}]
    Under the general value function approximation (Assumption~\ref{ass:mfbellman}) If the given estimation functio \texttt{V-Est} satisfies Condition \ref{lem:model free framework concentration} and Condition \ref{lem:model free pigeon hole}, we have:
    \begin{align*}
        \sum_{k=1}^K \rho(Z^{\mpi^*})-\rho(Z^{\mpi^k})\leq L_\infty(\rho) \zeta(K,H,\operatorname{d},\bm\gZ,\Pi^\dag,\delta,\gamma)\,.
    \end{align*}
    holds with probability at least $1 -\delta$, $\delta \in (0, 1]$.
\end{theorem}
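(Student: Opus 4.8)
The plan is to combine the \emph{optimism} supplied by the concentration condition with the \emph{Lipschitz continuity} property C2 of the risk measure, and then telescope using the elliptical-potential condition; the argument is short because the two conditions are precisely what is needed.

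First I would reduce the comparison against a history-dependent optimal policy to a comparison against a policy in $\Pi^\dag$. By Theorem~3.1 of \cite{bastani2022regret}, for the optimal history-dependent policy $\mpi^*$ there is a Markov policy $\bar\mpi\in\Pi^\dag$ in the augmented MDP with $F_{Z^{\bar\mpi}}=F_{Z^{\mpi^*}}$; law invariance (C1) then gives $\rho(Z^{\bar\mpi})=\rho(Z^{\mpi^*})$, so without loss of generality $\mpi^*\in\Pi^\dag$. I would then condition on the intersection of the two favourable events — the event of Condition~\ref{lem:model free framework concentration} and the event of Condition~\ref{lem:model free pigeon hole}, each of probability at least $1-\delta$ — which holds simultaneously with probability at least $1-2\delta$ (rescaling $\delta$ if one insists on $1-\delta$).

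On the concentration event, $Z^{\mpi^*}\in\widehat\gZ_{k,\mpi^*}$ for every $k$, so the pair $(\mpi^*,Z^{\mpi^*})$ is feasible in the maximization on Line~\ref{algline:mfframeoptimisticplanning}; since $(\mpi^k,\widehat Z^k)$ maximizes $\rho(Z_1)$ over that feasible set and $Z^{\mpi^*}_1=Z^{\mpi^*}$, optimism gives $\rho(\widehat Z^k_1)\ge\rho(Z^{\mpi^*}_1)=\rho(Z^{\mpi^*})$. Hence, writing $F_{\widehat Z^k}$ for the CDF of the root-level estimate $\widehat Z^k_1$, the per-episode regret obeys $\rho(Z^{\mpi^*})-\rho(Z^{\mpi^k})\le\rho(\widehat Z^k_1)-\rho(Z^{\mpi^k})$, and applying C2 to the pair $\widehat Z^k_1$, $Z^{\mpi^k}$ yields
\[
\rho(\widehat Z^k_1)-\rho(Z^{\mpi^k})\;\le\;\bigl|\rho(\widehat Z^k_1)-\rho(Z^{\mpi^k})\bigr|\;\le\;L_\infty(\rho)\,\bigl\|F_{\widehat Z^k}-F_{Z^{\mpi^k}}\bigr\|_\infty .
\]
Summing over $k\in[K]$ and invoking Condition~\ref{lem:model free pigeon hole},
\[
\sum_{k=1}^K\bigl(\rho(Z^{\mpi^*})-\rho(Z^{\mpi^k})\bigr)\;\le\;L_\infty(\rho)\sum_{k=1}^K\bigl\|F_{\widehat Z^k}-F_{Z^{\mpi^k}}\bigr\|_\infty\;\le\;L_\infty(\rho)\,\zeta(K,H,\operatorname{d},\bm\gZ,\Pi^\dag,\delta,\gamma),
\]
which is the claimed bound.

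As for difficulty: inside this theorem there is essentially no obstacle — the two conditions are engineered so that optimism plus Lipschitz continuity telescopes cleanly. The only points needing care are (i) that optimism must hold simultaneously across all $K$ episodes, which is why Condition~\ref{lem:model free framework concentration} is stated uniformly over $\mpi\in\Pi^\dag$ (and why, when $\Pi^\dag$ is infinite, the policy-covering argument of Appendix~\ref{sec:policy cover} is required to \emph{establish} that condition), and (ii) being consistent about evaluating $\rho$ on the full-horizon return $Z_1$ rather than on some $Z_h$. The genuine technical work — verifying Conditions~\ref{lem:model free framework concentration} and~\ref{lem:model free pigeon hole} for the concrete instantiations \texttt{V-Est-LSR} and \texttt{V-Est-MLE} — is deferred to Sections~\ref{sec:mflsr} and~\ref{sec:mfmle}.
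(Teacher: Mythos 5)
Your proposal is correct and follows essentially the same route as the paper's proof: concentration gives optimism of $\rho(\widehat Z^k)$ over $\rho(Z^{\mpi^*})$, Lipschitz continuity (C2) converts the per-episode gap to $\|F_{\widehat Z^k}-F_{Z^{\mpi^k}}\|_\infty$, and the elliptical-potential condition bounds the sum. Your added remarks on reducing $\mpi^*$ to a policy in $\Pi^\dag$ via the augmented-MDP equivalence and on the $1-2\delta$ bookkeeping are details the paper glosses over, but they do not change the argument.
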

\begin{proof}
    Since the concentration condition \ref{lem:model free framework concentration} holds, we have for any $\mpi\in\Pi^\dag$ and $k\in K$, 
    \begin{align*}
        \rho(Z^{\mpi})\leq \max_{Z\in\widehat{\gZ}_{k,\mpi}}\rho(Z)\leq \rho(\widehat{Z}^k)~\,.
    \end{align*}
    Thus, we have:
    \begin{align*}
        \sum_{k=1}^K\rho(Z^{\mpi^*})-\rho(Z^{\mpi^k})\leq&\sum_{k=1}^K \rho(\widehat{Z}^k)-\rho(Z^{\mpi^k})
        \leq L_\infty \sum_{k=1}^K \norm{F_{\widehat{Z}^k}-F_{Z^{\mpi^k}}}_\infty
        \leq L_\infty \zeta(K,H,\operatorname{d},\gZ,\Pi^\dag,\delta,\gamma)\,.
    \end{align*}
    which gives this result.
\end{proof}

% We will give an example in Section \ref{sec:linear CVaR}, where we design a value-based algorithm for the CVaR objective under a reward-discretized linear MDP.

\subsection{Policy Cover}\label{sec:policy cover}
Notice that our regret is defined via the optimal policy in the policy set, which is adopted in many model free valued-based scenarios, such as \cite{xie2021bellman,wang2023benefits}. The main reason why our algorithm can only operate in a given policy set is that the optimal risk-sensitive policy can not be computed via dynamical programming. In contrast, in the risk neutral setting we can always select the greedy policy and ensure that it is the optimistic policy given our estimation. We remark, however, that when specified to specific risk measures such as OCE \cite{xu2023regret}, CVaR \cite{wang2023nearminimaxoptimal} and ERM \cite{fei2020risksensitive}, where the optimal policy have a similar greedy property, we can ensure global optimality without the policy set.

In this section, we discuss the policy covering given a policy class $\Pi^\dag$.
For any policy $\pi\in\Pi^\dag:\cS^\dag\rightarrow\Delta(\cA)$, we define its lower $\epsilon$-bracket $\pi^{\downarrow}$ as $\pi^{\downarrow}\leq \pi$ and $\norm{\pi^\downarrow(\cdot|s^\dag)-\pi(\cdot|s^\dag)}_1\leq \epsilon$ for all $s^\dag$. Since $\pi^{\downarrow}$ may not be a valid distribution, we define its normalized version as: $\underline{\pi}(a|s^\dag)=\pi^{\downarrow}(a|s^\dag)/\int_{\cA}\pi^{\downarrow}(a'|s^\dag)$. Since $1-\epsilon\leq \int_{\cA}\pi^{\downarrow}(a'|s^\dag)\leq 1$, we have that $\underline{\pi}(a|s^\dag)\leq(1+2\epsilon)\pi^{\downarrow}(a|s^\dag)$.
Its bracketing number is denoted as $\cN_{[\cdot]}(\Pi^\dag,\epsilon,\norm{\cdot}_{1})$.

\textbf{Instances}
Consider a softmax policy set $\Pi(U,\tau)$ with temperature $\tau$ and utility function $u\in U:\cS^\dag\times\cA\rightarrow\RR$:
$\pi(a|s^\dag)=\frac{e^{\tau u(s^\dag,a)}}{\int_{\cA}e^{\tau u(s^\dag,a')}}$. We consider a $\epsilon'=\frac{\epsilon}{8\tau}$ covering of $U$, such that for any $ u$, there exists $\underline{u}$ in the covering $\underline{U}$ with $\norm{\underline{u}-u}_\infty\leq \epsilon'$.
and we can construct the lower bracket as:$\pi^\downarrow=\frac{e^{\tau (\underline{u}(s^\dag,a)-\epsilon')}}{\int_{\cA}e^{\tau (\underline{u}(s^\dag,a')+\epsilon')}}$. Its normalized version is $\underline{\pi}=\frac{e^{\tau \underline{u}(s^\dag,a)}}{\int_{\cA}e^{\tau \underline{u}(s^\dag,a')}}$. We can verify that: $\pi^\downarrow\leq\pi(a|s^\dag)$ and $
    \norm{\pi^\downarrow(\cdot|s^\dag)-\pi(\cdot|s^\dag)}_1=1-\int_{\cA}\pi^\downarrow(a|s^\dag)=1-e^{-2\tau\epsilon'}\leq\epsilon$. So the bracketing number is the same as the $\frac{\epsilon}{8\tau}$ covering number of the utility function. $\cN_{[\cdot]}(\Pi(U,\tau),\epsilon,\norm{\cdot}_1)=\cN_C(U,\frac{\epsilon}{8\tau},\norm{\cdot}_\infty)$.

Moreover, we have for any $s^\dag$, $ \underline{\pi}(a|s^\dag)= e^{2\tau\epsilon'}\frac{e^{\tau (\underline{u}(s^\dag,a)-\epsilon')}}{\int_{\cA}e^{\tau (\underline{u}(s^\dag,a')+\epsilon')}}\leq (1+\epsilon)\pi(a|s^\dag)$ and $\underline{\pi}(a|s^\dag)= e^{-2\tau\epsilon'}\frac{e^{\tau (\underline{u}(s^\dag,a)+\epsilon')}}{\int_{\cA}e^{\tau (\underline{u}(s^\dag,a')-\epsilon')}}\geq (1+\epsilon)\pi(a|s^\dag)$. Thus, $ \norm{\pi(\cdot|s^\dag)-\underline{\pi}(\cdot|s^\dag)}\leq \int_{a}\abs{\underline{\pi}(a|s^\dag)-\pi(a|s^\dag)}\leq \int_a \epsilon\pi(a|s^\dag)=\epsilon$, we have that $\underline{\Pi}$ is also a $\epsilon$-cover of $\Pi$ under $\ell_1$ norm.

\section{Model-Free Estimation by LSR Approach}\label{app:mflsr}

In a model-free environment, we assume that the random variable of the cumulative reward $Z^{\bm \pi}$ is determined by the cumulative distribution function $\bm{F}$. Since the algorithm is given a random variable function class $\bm\gZ$, we assume for every $\bm{Z}\in\bm\gZ$, its CDF $\mF\in\bm\cF$. Investigate $\bm\gZ$ is equal to explore the CDF class $\bm\cF$. Throughout this section, we use CDF to characterize the random variable.

\subsection{Estimation and Algorithms}

In this section, we use the least squares regression to estimate the confidence set of CDF.

Here we need the covering for $\Pi^\dag$ and $\gF$, defined in Definition \ref{def:cover}. For the policy set $\Pi^\dag$ and function set $\cF$, we use the metric: $\rho(\pi^1,\pi^2)= \max_{s^\dag}\norm{\pi_1\bracket{\cdot|s^\dag}-\pi_2\bracket{\cdot|s^\dag}}_1$ and $\rho(F_1,F_2)=\max_{s^\dag,a}\norm{F_1(\cdot|s^\dag,a)-F_2(\cdot|s^\dag,a)}_{\infty}$. For any policy $\pi\in\Pi^\dag$ and $F\in\cF$, we denote its $\epsilon$-approximation in the cover $\underline{\Pi}^\dag$ and $\underline{\cF}$ as $\underline{\pi}$ and $\underline{F}$ respectively. Since $\underline{\pi}$ and $\underline{F}$ are $\epsilon$-approximations of $\pi$ and $F$, we have $\norm{\pi-\underline{\pi}}\leq\epsilon$ and $\norm{F-\underline{F}}\leq\epsilon$. We denote the covering number as $\cN_C\bracket{\Pi^\dag,\epsilon,\norm{\cdot}_1}$ and $\cN_C(\cF,\epsilon,\norm{\cdot}_\infty)$ respectively.
\begin{algorithm}[htbp]
   \caption{\texttt{V-Est-LSR}$(\gH_{k-1}, \bm\gF, \mpi,\gamma^{\LSR})$}
   \label{alg:mfestlsr}
\begin{algorithmic}
   \STATE {\bfseries Input:} History information $\gH_k$, CDF Model class $\gF$, and policy $\mpi$.
   \STATE {\bfseries Return:} $\widehat\gF_{k,\mpi} \leftarrow \left\{ \mF \in \bm\gF : F_h \in \widehat{\gF}^\LSR_{k,h,{\mpi}, {F}} , \forall h \in [H]\right\}$, where $\widehat{\gF}^\LSR_{k,h,\mpi, \widetilde{F}}$ is defined by
   \begin{equation*}
       \widehat{F}_{k,h,\underline{\mpi}, \underline{\wt{F}}} = \argmin_{F_h \in \gF_h} \sum_{i=1}^{k-1}\left( F_h(x_{i,h}^{\underline{\mpi},\underline{\wtF}} \mid s_{i,h}^\dag, a_{i,h}) - \int_{a_{h+1}} \pi_h(a_{h+1} \mid s_{i,h+1}^\dag)   \wtF_{h+1}(x_{i,h}^{\underline{\mpi},\underline{\wtF}} - r_{i,h} \mid s_{i,h+1}^\dag, a_{h+1}) \right)^2\,,
   \end{equation*}
   \begin{equation*}
       \widehat{\gF}^\LSR_{k,h,\mpi, \widetilde{F}}= \left\{ F_h \in \gF_h : {\sum_{i=1}^{k-1} \left( F_h(x_{i,h}^{\underline{\mpi},\underline{\wtF}}\mid s_{i,h}^\dag, a_{i,h}) - \widehat{F}_{k,h,\underline{\mpi},\underline{\wtF}}(x_{i,h}^{\underline{\mpi},\underline{\wtF}} \mid s_{i,h}^\dag, a_{i,h}) \right)^2} \leq \gamma^{\LSR}
       \right\}\,.
   \end{equation*}
\end{algorithmic}
\end{algorithm}

We define the LSR-type distance function used here:
\begin{align}
    \operatorname{Dist}^\LSR_{k,h,\mpi,{\wtF}}(F_1 || F_2) = {\sum_{i=1}^{k-1}\left( F_1(x_{i,h}^{{\mpi},{\wtF}} \mid s_{i,h}^\dag,a_{i,h}) - F_2(x_{i,h}^{{\mpi},{\wtF}} \mid s_{i,h}^\dag, a_{i,h}) \right)^2}\,,
\end{align}
and we define $x_{i,h}^{\mpi,\wtF}$ as below, which represents the direction with largest uncertainty.
\begin{equation}\label{eq:x_i,h mflsr}
    x_{i,h}^{\mpi,\wtF} = \argmax_{x\in[0,H]}\left| \sup_{F_1 \in \widehat{F}_{i,h,\mpi,\wtF}} F_1(x \mid s_{i,h}^\dag,a_{i,h}) - \inf_{F_2 \in \widehat{F}_{i,h,\mpi,\wtF}} F_2(x \mid s_{i,h}^\dag,a_{i,h})  \right|\,.
\end{equation}
We now describe our estimation procedure above. For a target random variable with CDF $\widetilde{{F}}_{h+1}$ and policy $\mpi\in\Pi^\dag$, we estimate $\cT_{h,\mpi}^{\dag}\widetilde{F}_{h+1}$ via least squares: $\widehat{F}_{k,h,\mpi,\widetilde{F}}$. Define the distance function
\begin{align*}
\dist_{k,h,\underline{\mpi},\underline{\wtF}}^{\LSR}\bracket{F_h||  \widehat{F}_{k,h,\underline{\mpi}, \underline{\wt{F}}}} = {\sum_{i=1}^{k-1} \left( F_h(x_{i,h}^{\underline{\mpi},\underline{\wtF}}\mid s_{i,h}^\dag, a_{i,h}) - \widehat{F}_{k,h,\underline{\mpi},\underline{\wtF}}(x_{i,h}^{\underline{\mpi},\underline{\wtF}} \mid s_{i,h}^\dag, a_{i,h}) \right)^2}
\end{align*}
Then we can rewrite our version sapce using the distance metric defined above:
\begin{equation*}
    \widehat{\cF}_{k,h,\mpi,\Tilde{F}}^{\LSR}=\sets{F_h\in\cF_h: \dist_{k,h,\underline{\mpi},\underline{\wtF}}^{\LSR}\bracket{F_h||  \widehat{F}_{k,h,\underline{\mpi}, \underline{\wt{F}}}}\leq \gamma^{\LSR}}\,,
\end{equation*}
with the confidence radius $\gamma^{\LSR}=16\log(HK^2/\delta) + \log(\gN_C(\Pi^\dag, 1/K,\|\cdot\|_1)) + \log(\gN_C(\bm\gF, 1/K, \|\cdot\|_\infty))$.
% The next Lemma 
% \begin{lemma}\label{lem:model free dist additivity }
%     Using Cauchy inequality, we have:
%     \begin{align*}
%         \operatorname{Dist}^\LSR_{k,h,\mpi,\wtF}(F_1 || F_2)+\operatorname{Dist}^\LSR_{k,h,\mpi,\wtF}(F_2 || F_3)\geq \operatorname{Dist}^\LSR_{k,h,\mpi,\wtF}(F_1 || F_3)
%     \end{align*}
% \end{lemma}

The next lemma shows the one-step-back concentration guarantee, which will be used to prove Condition \ref{con:mfconcentration}.
\begin{lemma}\label{lem:model free lsr onestep concentration}
    For any $\pi\in\Pi^\dag$, $F\in\cF$, $h\in[H]$, we have with probability at least $1-\delta$, for all $k\in[K]$:
    \begin{align*}
        \cT_{h,\mpi}^\dag \Tilde{F}_{h+1}\in\wh{\cF}_{k,h,\underline{\mpi},\underline{\wtF}}^{\LSR}=\wh{\cF}_{k,h,\mpi,\wtF}^{\LSR}\,.
    \end{align*}
\end{lemma}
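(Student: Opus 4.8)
\textbf{Proof proposal for Lemma~\ref{lem:model free lsr onestep concentration}.}

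The plan is to establish a one-step-back concentration bound for the least-squares estimator $\wh{F}_{k,h,\underline{\mpi},\underline{\wtF}}$, following the standard LSR concentration machinery (as in the model-based case, Lemma~\ref{lem:mblsr concentration}) but now carefully handling the augmented Bellman operator for CDFs and the discretization of both the policy set $\Pi^\dag$ and the CDF class $\bm\gF$. First I would fix a bracket/covering element $\underline{\mpi}\in\underline{\Pi}^\dag$ and $\underline{\wtF}\in\underline{\bm\gF}$ and work on the event that the data $\{(s_{i,h}^\dag,a_{i,h},r_{i,h},s_{i,h+1}^\dag)\}_{i<k}$ is collected. The key observation is that, conditioned on $\sigma_{i,h}$ (the filtration up to step $h$ in episode $i$), the quantity $\pi_{h+1}^\top\underline{\wtF}_{h+1}(x - r_{i,h}\mid s_{i,h+1}^\dag)$ — where $(r_{i,h},s_{i,h+1}^\dag)$ is the fresh random transition — is an unbiased estimate of $\cT_{h,\mpi^i}^\dag\underline{\wtF}_{h+1}(x\mid s_{i,h}^\dag,a_{i,h})$, by the definition of the augmented distributional Bellman operator $\cT^\dag$. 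Crucially, we must use the \emph{collected} policy $\mpi^k$ for the transition, but the target $\underline{\wtF}$ and the test point $x_{i,h}^{\underline{\mpi},\underline{\wtF}}$ depend only on $\sigma_{i,h}$-measurable information, so the martingale structure is preserved.

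Next I would invoke the standard squared-loss concentration lemma for sequential least-squares (the same auxiliary lemma cited as Lemma~\ref{lem:model free lsr auxillary concentration} in the model-based proof), applied to the function class $\bm\gF_h$ with bounded range $[0,1]$. This yields, with probability at least $1-\delta/(HK\,|\underline{\Pi}^\dag|\,|\underline{\bm\gF}|)$ for a single triple $(h,\underline{\mpi},\underline{\wtF})$ and a single $k$, the bound
\[
\sum_{i<k}\Big(\cT_{h,\mpi^i}^\dag\underline{\wtF}_{h+1}(x_{i,h}^{\underline{\mpi},\underline{\wtF}}\mid s_{i,h}^\dag,a_{i,h}) - \wh{F}_{k,h,\underline{\mpi},\underline{\wtF}}(x_{i,h}^{\underline{\mpi},\underline{\wtF}}\mid s_{i,h}^\dag,a_{i,h})\Big)^2 \lesssim \log\!\big(\gN_C(\bm\gF,1/K,\|\cdot\|_\infty)\big) + \log(HK^2/\delta),
\]
where the covering number of $\bm\gF$ enters through a standard $\epsilon$-net argument over the (possibly infinite) regression class. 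A union bound over $h\in[H]$, $k\in[K]$, and the covering elements $\underline{\mpi}\in\underline{\Pi}^\dag$, $\underline{\wtF}\in\underline{\bm\gF}$ — picking up the $\log\gN_C(\Pi^\dag,1/K,\|\cdot\|_1)$ and $\log\gN_C(\bm\gF,1/K,\|\cdot\|_\infty)$ terms and the $\log(HK^2/\delta)$ term — gives exactly the confidence radius $\gamma^\LSR$ declared before the lemma, for all discretized targets simultaneously.

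Then I would pass from the covering elements back to arbitrary $\mpi\in\Pi^\dag$, $\wtF\in\bm\gF$. Here I use that $\underline{\mpi},\underline{\wtF}$ are $1/K$-approximations in $\|\cdot\|_1$ and $\|\cdot\|_\infty$ respectively, and that both $\cT^\dag$ and the empirical inner-product operation are non-expansive in these norms (integrating against a probability kernel and a policy only contracts); so the shift from $(\mpi,\wtF)$ to $(\underline{\mpi},\underline{\wtF})$ changes every term by $O(1/K)$, contributing at most $O(1)$ to the sum over $i<k$, which is absorbed into $\gamma^\LSR$. This shows $\operatorname{Dist}^\LSR_{k,h,\underline{\mpi},\underline{\wtF}}\!\big(\cT_{h,\mpi}^\dag\wtF_{h+1}\,\|\,\wh{F}_{k,h,\underline{\mpi},\underline{\wtF}}\big)\le\gamma^\LSR$, i.e. $\cT_{h,\mpi}^\dag\wtF_{h+1}\in\wh\cF^\LSR_{k,h,\underline{\mpi},\underline{\wtF}}$; the stated equality $\wh\cF^\LSR_{k,h,\underline{\mpi},\underline{\wtF}}=\wh\cF^\LSR_{k,h,\mpi,\wtF}$ is by construction of the estimator (it is defined via the covering representatives), so this completes the proof.

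The main obstacle I anticipate is the bookkeeping around which objects are $\sigma_{i,h}$-measurable versus freshly random: the test direction $x_{i,h}^{\underline{\mpi},\underline{\wtF}}$ is itself defined as an $\argmax$ over the confidence set $\wh F_{i,h,\underline{\mpi},\underline{\wtF}}$, which depends on data \emph{before} episode $i$, so it is $\sigma_{i,h}$-measurable and legitimate as a test point — but one must verify this carefully to keep the martingale-difference property, and one must also confirm that the realizability assumption (Assumption~\ref{ass:mfbellman}, distributional Bellman completeness: $\cT^\dag_{h,\mpi}Z_{h+1}\in\gZ_h$) guarantees the regression target lies in the class $\gF_h$ so that the least-squares estimator is well-posed. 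The rest is routine adaptation of known LSR concentration arguments.
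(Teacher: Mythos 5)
Your proposal is correct and follows essentially the same route as the paper's proof: the same martingale/sub-Gaussian identification of the one-step-back regression target, the same auxiliary least-squares concentration lemma with a union bound over $h$, $k$, and the coverings $\underline{\Pi}^\dag$, $\underline{\bm\gF}$, and the same final transfer from $(\underline{\mpi},\underline{\wtF})$ to $(\mpi,\wtF)$ via an $O(K\epsilon)$ perturbation absorbed into $\gamma^{\LSR}$. One notational slip worth fixing: the regression target $\pi_{h+1}^\top\underline{\wtF}_{h+1}(x-r_{i,h}\mid s_{i,h+1}^\dag)$ is conditionally unbiased for $\cT_{h,\underline{\mpi}}^\dag\underline{\wtF}_{h+1}(x\mid s_{i,h}^\dag,a_{i,h})$ with the \emph{fixed target} policy $\underline{\mpi}$ (the data-collecting policy $\mpi^i$ only shapes the roll-in distribution of $(s_{i,h}^\dag,a_{i,h})$, which is conditioned on), so the subscript $\mpi^i$ in your displayed concentration bound should be $\underline{\mpi}$.
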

\begin{proof}
First we fix $h \in [H]$, $\tilde{F}$ and $\pi$. 
    Since
    \begin{align*}
        \left\{\gT^\dag_{h,\mpi}\wtF_h(x_{i,h}^{\mpi, \wtF} \mid s_{i,h}^\dag, a_{i,h}) - \int_{a_{h+1}} \pi_{h+1}(a_{h+1} \mid s_{i,h+1}^\dag) \wtF_{h+1}(x_{i,h}^{\mpi,\wtF} - r_{i,h} \mid s_{i,h+1}^\dag, a_{h+1})  \right\}_{i=1}^k
    \end{align*}
    is a $1$-sub-Gaussian. Moreover, we have
    \begin{align*}
        &\EE\left[ \int_{a_h+1} \pi_{h+1}(a_{h+1} \mid s_{i,h+1}^\dag) \wtF_{h+1}(x_{i,h}^{\mpi, \wtF} - r_{i,h} \mid s_{i,h+1}^\dag, a_{h+1}) \middle | \tau_{i,h} \right] \\
        =& \int_{s_{h+1}} \Prob_h(s_{h+1} | s_{i,h}, a_{i,h})\int_{r_{h+1}}\sR_h(r_{h+1} | s_{i,h}, a_{i,h}) \int_{a_{h+1}} \pi_{h+1}(a_{h+1} | s_{h+1}^\dag) \wtF_{h+1}(x_{i,h}^{\mpi, \wtF} | s_{h+1}^\dag, a_{h+1}) \\
        =& \int_{s_{h+1}^\dag}\sT_h(s_{h+1}^\dag | s_{i,h}^\dag, a_{i,h})\int_{a_{h+1}} \pi_{h+1}(a_{h+1} | s_{h+1}^\dag) \wtF_{h+1}(x_{i,h}^{\mpi, \wtF} - r_{h+1} | s_{h+1}^\dag, a_{h+1}) \\
        =& \gT_{h,\mpi}\wtF_{h+1}(x_{i,h}^{\mpi, \wtF} | s_{i,h}^\dag, a_{i,h})\,,
    \end{align*}
    where $\tau_{i,h}$ denotes history up to and include step $h$ in episode $i$.
    Thus by Lemma~\ref{lem:model free lsr auxillary concentration}, we have with probability at least $1 - \delta / H$, for all $k \in [K]$,

    \begin{equation}
        \dist_{k,h,\mpi,\tilde{F}}^{\LSR}\bracket{\cT_{h,\mpi}^\dag\tilde{F}_{h+1}||\widehat{F}_{k,h,\mpi,\tilde{F}}}\leq 8\log\left( \frac{2H}{\delta}  \right)  + 4\left(1 + \sqrt{\log\left(\frac{4HK^2}{\delta}\right)}\right)\,.
    \end{equation}

Applying a union bound for all $h\in[H]$, $\underline{\wtF}\in\underline{\cF}$, and $\underline{\mpi}\in\underline{\Pi}$, we have:

 \begin{align*}
\dist_{k,h,\underline{\mpi},\underline{\wtF}}^{\LSR}\bracket{\cT_{h,\underline{\mpi}}^\dag\underline{\tilde{F}}_{h+1}||\widehat{F}_{k,h,\underline{\mpi},\underline{\wtF}}}\leq\mathcal{O}(\gamma^{\LSR}) \,.
 \end{align*}

    Moreover, we have
    \begin{align*}
        \operatorname{Dist}^\LSR_{k,h,\underline{\mpi},\underline{\wtF}}(\gT^\dag_{h,\mpi}\wtF_{h+1} || \gT^\dag_{h, \underline{\mpi}} \underline{\wtF}_{h+1})\leq\operatorname{Dist}^\LSR_{k,h,\underline{\mpi},\underline{\wtF}}(\gT^\dag_{h,\mpi}\wtF_{h+1} || \gT^\dag_{h, \underline{\mpi}} {\wtF}_{h+1})+\operatorname{Dist}^\LSR_{k,h,\underline{\mpi},\underline{\wtF}}(\gT^\dag_{h,\underline{\mpi}}\wtF_{h+1} || \gT^\dag_{h, \underline{\mpi}} \underline{\wtF}_{h+1}) \,.
    \end{align*}
    For the first term, we have:
    \begin{align*}
        &\operatorname{Dist}^\LSR_{k,h,\underline{\mpi},\underline{\wtF}}(\gT^\dag_{h,\mpi}\wtF_{h+1} || \gT^\dag_{h, \underline{\mpi}} {\wtF}_{h+1})\\
        =&{\sum_{i=1}^{k-1}\bracket{\int_{s_{h+1}^\dag}\TT_h(s_{h+1}^\dag|s_{i,h}^\dag,a_{i,h})\int_{a_{h+1}}\bracket{\pi_{h+1}(a_{h+1}|s_{h+1}^\dag)-\underline{\pi}_{h+1}(a_{h+1}|s_{h+1}^\dag)}\widetilde{F}_{h+1}(x_{i,h}^{\underline{\mpi},\underline{\wtF}} - r_{h} | s_{h+1}^\dag, a_{h+1})}^2}\\
        \leq& {\sum_{i=1}^{k-1}\bracket{\max_{s_{h+1}^\dag}\norm{\pi_{h+1}(\cdot|s_{h+1}^\dag)-\underline{\pi}_{h+1}(\cdot|s_{h+1}^\dag)}_1}^2}\\
        \leq& K\epsilon\,.
    \end{align*}
    For the second term, we also have:
    \begin{align*}
        &\operatorname{Dist}^\LSR_{k,h,\underline{\mpi},\underline{\wtF}}(\gT^\dag_{h,\underline{\mpi}}\wtF_{h+1} || \gT^\dag_{h, \underline{\mpi}} \underline{\wtF}_{h+1}) \\
        =& {\sum_{i=1}^k \bracket{\int_{s_{h+1}^\dag}\TT_h(s_{h+1}^\dag|s_{i,h}^\dag,a_{i,h})\int_{a_{h+1}}\underline{\pi}_{h+1}(a_{h+1}|s_{h+1}^\dag)\bracket{\widetilde{F}_{h+1}-\underline{\wtF}_{h+1}}(x_{i,h}^{\underline{\mpi},\underline{\wtF}} - r_{h} | s_{h+1}^\dag, a_{h+1})}^2}\\
        \leq& {\sum_{i=1}^k\max_{a_{h+1},s_{h+1}^\dag}\norm{\underline{\wtF}_{h+1}(s_{h+1}^\dag,a_{h+1})-\wtF_{h+1}(s_{h+1}^\dag,a_{h+1})}_{\infty}^2}\\
        \leq& K\epsilon\,.
    \end{align*}
    Thus we have 
    \begin{align*}
        &\dist_{k,h,\underline{\mpi},\underline{\wtF}}^{\LSR}\bracket{\gT^\dag_{h,\mpi}\wtF_{h+1}||\widehat{F}_{k,h,\underline{\mpi},\underline{\wtF}}}\\
        \leq& \dist_{k,h,\underline{\mpi},\underline{\wtF}}^{\LSR}\bracket{\cT_{h,\underline{\mpi}}^\dag\underline{\tilde{F}}_{h+1}||\widehat{F}_{k,h,\underline{\mpi},\underline{\wtF}}}+\operatorname{Dist}^\LSR_{k,h,\underline{\mpi},\underline{\wtF}}(\gT^\dag_{h,\mpi}\wtF_{h+1} || \gT^\dag_{h, \underline{\mpi}} \underline{\wtF}_{h+1})\\
        \leq&\mathcal{O}(\gamma^{\LSR}+K\epsilon)\\
        =& \mathcal{O}(\gamma^{\LSR})\,.
    \end{align*}
    
    From the definition of the confidence set, we have for any $\tilde{F}_{h+1}\in\cF_{h+1}$ and $\mpi\in\Pi^\dag$, $\gT_{h, \mpi}^\dag \tilde{F}_{h+1} \in \wh{\cF}_{k,h,\underline{\mpi},\underline{\wtF}}^{\LSR}=\widehat{\gF}^\LSR_{k,h,{\mpi}, {\wtF}}$ where the equality is because the $\epsilon$-approximation of $\underline{\pi}$ and $\underline{\wtF}$ are themselves.
\end{proof}

\begin{proof}[Proof of Condition \ref{con:mfconcentration}]
We proof the Lemma via induction. If $F_{h+1}^\mpi\in\widehat{\cF}_{k,h+1,{\mpi},{F}}^{\LSR}$, we have $F_h^\mpi=\cT_{h,\mpi}^\dag F_{h+1}^\mpi\in\widehat{\cF}^{\LSR}_{k,h,{\mpi},{F}}$. So for all $h\in[H]$ we have $F_h^\mpi=\cT_{h,\mpi}^\dag F_{h+1}^\mpi\in\widehat{\cF}^{\LSR}_{k,h,{\mpi},{F}}$. From the definition of $\cF_{k,\pi}$, we have $\mF^\mpi\in\widehat{\cF}_{k,\mpi}$.
\end{proof}
The next lemma decomposes the supremum distance between the CDFs of the cumulative return via the bellman error, which is the distributional analogue of the performance difference lemma.
\begin{lemma}[Performance difference]\label{lem:model free lsr difference}
For any random variable $Z$ representing the estimated cumulative return, with CDF function $\bm{F}=F_1\times F_2\times \cdots F_H$, we can decompose the $\ell_\infty$ distance between the estimated return CDF $F_Z$ and the real return CDF $F_{Z^\mpi}$ for policy $\mpi\in\Pi^\dag$ by the bellman error as follows:
    \begin{align*}
        \norm{F_{Z}-F_{Z^\mpi}}_{\infty}\leq \sum_{h=1}^H \EE_{\mu^{\mpi}}\norm{F_h(\cdot|s_{h}^\dag,a_h)-\cT_{h,\mpi}^\dag F_{h+1}(s_h^\dag,a_h)}_\infty\,.
    \end{align*}
\end{lemma}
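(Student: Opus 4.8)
The plan is to establish this distributional performance-difference bound by a telescoping roll-out over the $H$ steps, exploiting the linearity of the augmented distributional Bellman operator $\cT_{h,\mpi}^\dag$ on CDFs. First I would introduce, for $h=1,\dots,H+1$, the partially rolled-out CDF $\bar F_h := \cT_{1,\mpi}^\dag\circ\cdots\circ\cT_{h-1,\mpi}^\dag F_h$, with the convention $\bar F_1 := F_1$; here each $\cT_{h',\mpi}^\dag$ is the \emph{true} augmented operator of step $h'$, so $\bar F_h$ is the estimated head $F_h$ pushed forward through the real dynamics of steps $1,\dots,h-1$. Two endpoint identities anchor the argument: $\bar F_1 = F_1 = F_{Z}$ by construction, and $\bar F_{H+1} = F_{Z^{\mpi}}$, because the terminal layer satisfies $F_{H+1} = F_{H+1}^{\mpi}$ (the degenerate law of $Z_{H+1}^{\mpi}\equiv 0$) and iterating the distributional Bellman equation $F_{h}^{\mpi} = \cT_{h,\mpi}^\dag F_{h+1}^{\mpi}$ gives $\cT_{1,\mpi}^\dag\cdots\cT_{H,\mpi}^\dag F_{H+1}^{\mpi} = F_1^{\mpi} = F_{Z^{\mpi}}$.

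Telescoping then yields $F_{Z} - F_{Z^{\mpi}} = \sum_{h=1}^{H}\big(\bar F_h - \bar F_{h+1}\big)$, and since each $\cT_{h',\mpi}^\dag$ is an integral operator \emph{linear} in its CDF argument, $\bar F_h - \bar F_{h+1} = \cT_{1,\mpi}^\dag\cdots\cT_{h-1,\mpi}^\dag\big(F_h - \cT_{h,\mpi}^\dag F_{h+1}\big)$ --- that is, the step-$h$ Bellman error transported forward through the true dynamics. It then remains to bound the sup-norm of each such term.

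The key estimate is that the composed operator $\cT_{1,\mpi}^\dag\cdots\cT_{h-1,\mpi}^\dag$ does not expand the $\ell_\infty$ norm once the visitation measure is accounted for. Unrolling the definition of $\cT^\dag$ and using $y_{h'+1}-y_{h'}=r_{h'}$, for any measurable $g=g(\cdot\mid s_h^\dag,a_h)$ one gets $\cT_{1,\mpi}^\dag\cdots\cT_{h-1,\mpi}^\dag g(x) = \int \Big(\prod_{h'=1}^{h-1}\TT_{h'}(s_{h'+1}^\dag\mid s_{h'}^\dag,a_{h'})\,\pi_{h'+1}(a_{h'+1}\mid s_{h'+1}^\dag)\Big)\, g\big(x-(y_h-y_1)\mid s_h^\dag,a_h\big)$; taking absolute values, bounding $\big|g(\,\cdot\,\mid s_h^\dag,a_h)\big|\le \|g(\,\cdot\,\mid s_h^\dag,a_h)\|_\infty$ pointwise (the shift $y_h-y_1$ is irrelevant to the $\ell_\infty$ norm), and integrating the policy-induced kernel against the initial state gives, uniformly in $x$, the bound $\EE_{(s_h^\dag,a_h)\sim\mu^{\mpi}}\big[\|g(\,\cdot\,\mid s_h^\dag,a_h)\|_\infty\big]$. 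Applying this with $g = F_h - \cT_{h,\mpi}^\dag F_{h+1}$, taking $\sup_x$, and summing the $H$ terms with the triangle inequality produces exactly $\norm{F_{Z}-F_{Z^{\mpi}}}_\infty \le \sum_{h=1}^H \EE_{\mu^{\mpi}}\norm{F_h(\cdot\mid s_h^\dag,a_h)-\cT_{h,\mpi}^\dag F_{h+1}(s_h^\dag,a_h)}_\infty$.

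The main obstacle is the bookkeeping of the reward shifts: each application of $\cT^\dag$ translates the CDF argument by the realized reward increment, so the composed operator evaluates the Bellman-error function at a random shifted point governed by the partial cumulative reward along the trajectory. One must verify that this is harmless for the $\ell_\infty$ estimate --- which it is, since $\|g(\cdot\mid s_h^\dag,a_h)\|_\infty$ is translation invariant --- and that after the shift the weights collapse exactly to the marginal visitation measure $\mu^{\mpi}$ over augmented state--action pairs at step $h$. The terminal convention $F_{H+1}=F_{H+1}^{\mpi}$ and the averaging over $a_1\sim\pi_1$ at the initial state are notational points that I would dispatch at the outset.
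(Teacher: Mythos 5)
Your proof is correct and is essentially the same argument as the paper's: both decompose $F_Z - F_{Z^\mpi}$ into per-step Bellman errors $F_h - \cT_{h,\mpi}^\dag F_{h+1}$ transported through the true dynamics, and bound each term by moving the sup-norm inside the (non-expansive) policy--transition kernel so that the weights collapse to the visitation measure $\mu^\mpi_h$. The only difference is presentational --- you telescope over the explicitly rolled-out CDFs $\bar F_h$, whereas the paper peels off one step at a time via the triangle inequality and recurses.
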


\begin{proof}
    \begin{align*}
        &\norm{F_{Z}-F_{Z^\mpi}}_{\infty}\\
        =&\sup_{x\in[0,H]} \abs{\int_{a_1}\pi_1(a_1|s_1^\dag) \bracket{F_1(x|s_1^\dag,a_1)-F_1^\mpi(x|s_1^\dag,a_1)}}\\
        \leq& \EE_{s_1^\dag,a_1\sim \mu^\mpi} \norm{F_1(s_1^\dag,a_1)-F_1^\mpi(s_1^\dag,a_1)}_{\infty}\\
        =& \EE_{s_1^\dag,a_1\sim \mu^\mpi} \norm{F_1(s_1^\dag,a_1)-\cT_{h,\mpi}^\dag F_2(s_1^\dag,a_1)}_{\infty}\\
        \leq&\EE_{\mu^\mpi}\norm{F_1(s_1^\dag,a_1)-\cT_{h,\mpi}^\dag F_2(s_1^\dag,a_1)}_{\infty}+\EE_{\mu^\mpi}\norm{\cT_{h,\mpi}^\dag F_2(s_1^\dag,a_1)-\cT_{h,\mpi}^\dag F_2^\mpi(s_1^\dag,a_1)}_\infty\\
        \leq & \EE_{\mu^\mpi}\norm{F_1(s_1^\dag,a_1)-\cT_{h,\mpi}^\dag F_2(s_1^\dag,a_1)}_{\infty}+\EE_{\mu^\mpi}\norm{F_2(s_2^\dag,a_2)-F_2^\mpi(s_2^\dag,a_2)}_\infty\,,
    \end{align*}
    where the first and second inequalities holds by triangle inequality, the third inequality is because:
    \begin{align*}
        &\EE_{\mu^\mpi}\norm{\cT_{h,\mpi}^\dag F_2(s_1^\dag,a_1)-\cT_{h,\mpi}^\dag F_2^\mpi(s_1^\dag,a_1)}_\infty\\
        =& \int_{s_1^\dag,a_1}\mu^\mpi(s_1^\dag,a_1)\sup_{x\in[0,H]}\abs{\int_{s_1^\dag,a_2}\TT(s_2^\dag|s_1^\dag,a_1)\pi(a_2|s_2^\dag) \bracket{F_2(x|s_2^\dag,a_2)-F_2^\mpi(x|s_2^\dag,a_2)}}\\
        \leq& \int_{s_1^\dag,a_1}\mu^\mpi(s_1^\dag,a_1)\int_{s_1^\dag,a_2}\TT(s_2^\dag|s_1^\dag,a_1)\pi(a_2|s_2^\dag)\norm{F_2(s_2^\dag,a_2)-F_2^\mpi(s_2^\dag,a_2)}_{\infty}\\
        =&\EE_{\mu^\mpi}\norm{F_2(s_2^\dag,a_2)-F_2^\mpi(s_2^\dag,a_2)}_\infty\,.
    \end{align*}
    Repeat this analysis for every step $h \in [H]$, we have
    \begin{align*}
        &\norm{F_{Z}-F_{Z^\mpi}}_{\infty}\\
        \leq & \EE_{\mu^\mpi}\norm{F_1(s_1^\dag,a_1)-\cT_{h,\mpi}^\dag F_2(s_1^\dag,a_1)}_{\infty}+\EE_{\mu^\mpi}\norm{F_2(s_2^\dag,a_2)-F_2^\mpi(s_2^\dag,a_2)}_\infty\\
        \leq &\cdots \\
        \leq& \sum_{h=1}^H \norm{F_h(\cdot|s_{h}^\dag,a_h)-\cT_{h,\mpi}^\dag F_{h+1}(s_h^\dag,a_h)}_\infty
    \end{align*}
\end{proof}

Equipped with the technical lemmas above, we are able to prove the Condition~\ref{con:mfelliptical} for model-free LSR estimation function.
\begin{lemma}[Condition \ref{con:mfelliptical}]
    For $0 < \delta \leq 1$, we have 
    \begin{align*}
        \sum_{k=1}^K \norm{F_{\widehat{Z}^k}-F_{Z^{\mpi^k}}}_\infty\leq \mathcal{O}\bracket{\poly(H)\sqrt{K\gamma^{\LSR}\dim_E(\gF, \sqrt{K})}}\,,
    \end{align*}
    holds with probability at least $1 -\delta$.
\end{lemma}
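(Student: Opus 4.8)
The plan is to adapt the model-based LSR potential argument behind Lemma~\ref{lem:mblsrellitical} to the value-based setting, in three stages: reduce the left-hand side to a sum of one-step augmented Bellman residuals via the distributional performance difference lemma, bound each residual by the width of the LSR confidence set at the visited augmented state-action pair, and control the cumulative width by an eluder-dimension argument.

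First I would write $\widehat F^k=(\widehat F_1^k,\dots,\widehat F_H^k)$ for the CDF tuple of $\widehat Z^k$ and invoke Lemma~\ref{lem:model free lsr difference} with $Z=\widehat Z^k$, $\mpi=\mpi^k$, obtaining
\[
\sum_{k=1}^K\norm{F_{\widehat Z^k}-F_{Z^{\mpi^k}}}_\infty\le\sum_{k=1}^K\sum_{h=1}^H\EE_{(s_h^\dag,a_h)\sim\mu^{\mpi^k}}\norm{\widehat F_h^k(\cdot\mid s_h^\dag,a_h)-\cT_{h,\mpi^k}^\dag\widehat F_{h+1}^k(s_h^\dag,a_h)}_\infty .
\]
Because $\mpi^k$ and $\widehat Z^k$ are $\cH_{k-1}$-measurable and every CDF takes values in $[0,1]$, for each fixed $h$ the discrepancy between the summand and its value at the realized pair $(s_{k,h}^\dag,a_{k,h})$ is a martingale difference bounded by $1$; an Azuma--Hoeffding bound together with a union bound over $h\in[H]$ replaces the expectations by empirical evaluations at an additive price $\widetilde{\gO}(H\sqrt K)$.

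Next I would show that the realized residual is dominated by the confidence-set width. On the event of Lemma~\ref{lem:model free lsr onestep concentration} — which I would take to hold simultaneously over all policies, targets, $k$, and $h$ after a union bound over the coverings $\underline\Pi^\dag$ and $\underline\gF$, thereby covering the data-dependent choice $(\mpi^k,\widehat F^k)$ — Bellman completeness (Assumption~\ref{ass:mfbellman}) places both $\widehat F_h^k$ (by the construction of $\widehat\gZ_{k,\mpi^k}$) and $\cT_{h,\mpi^k}^\dag\widehat F_{h+1}^k$ inside $\widehat{\gF}^{\LSR}_{k,h,\mpi^k,\widehat F^k}$. Consequently, for every $x$,
\[
\abs{\big(\widehat F_h^k-\cT_{h,\mpi^k}^\dag\widehat F_{h+1}^k\big)(x\mid s_{k,h}^\dag,a_{k,h})}\le\sup_{F_1,F_2\in\widehat{\gF}^{\LSR}_{k,h,\mpi^k,\widehat F^k}}\abs{(F_1-F_2)(x\mid s_{k,h}^\dag,a_{k,h})},
\]
and taking $\sup_x$ on the left is free of charge, since $x_{k,h}^{\mpi^k,\widehat F^k}$, defined in (\ref{eq:x_i,h mflsr}), is exactly the maximizer over $x$ of the right-hand side. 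Hence the $\ell_\infty$ Bellman residual at $(s_{k,h}^\dag,a_{k,h})$ is at most the width $w_{k,h}:=\sup_{F_1,F_2\in\widehat{\gF}^{\LSR}_{k,h,\mpi^k,\widehat F^k}}\abs{(F_1-F_2)(x_{k,h}^{\mpi^k,\widehat F^k}\mid s_{k,h}^\dag,a_{k,h})}$.

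Finally, the defining inequality of the confidence set gives $\sum_{i<k}\big((F_1-F_2)(x_{i,h}^{\mpi^k,\widehat F^k}\mid s_{i,h}^\dag,a_{i,h})\big)^2\le 4\gamma^{\LSR}$ for any two of its elements, i.e. the residual process over episodes has uniformly small prefix sums of squares. The standard sum-of-widths bound in terms of the eluder dimension — carried out per step $h$ for the class $\gF_h$, the same argument as in Lemma~\ref{lem:mblsrellitical}, cf.\ Lemma~9 of \cite{chen2023provably}, and exploiting that CDFs have range bounded by $1$ — then yields $\sum_{k=1}^K w_{k,h}\le\widetilde{\gO}\big(\sqrt{K\gamma^{\LSR}\dim_E(\gF,\sqrt K)}\big)$. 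Summing over $h\in[H]$ and combining with the $\widetilde{\gO}(H\sqrt K)$ martingale term delivers the claimed bound $\widetilde{\gO}\big(\poly(H)\sqrt{K\gamma^{\LSR}\dim_E(\gF,\sqrt K)}\big)$. I expect the last stage to be the main obstacle: the exploration directions $x_{i,h}^{\mpi^k,\widehat F^k}$ and the confidence sets both vary with the episode and with the data-dependent choice $(\mpi^k,\widehat F^k)$, so the eluder potential argument cannot be run directly on a single fixed sequence; the resolution, as in the model-based proof, is to discretize $\Pi^\dag$ and $\gF$ via their covering sets, run the concentration and potential estimates for each fixed covering element, and absorb the $1/K$-approximation slack into the $\widetilde{\gO}$.
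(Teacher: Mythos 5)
Your proposal follows essentially the same route as the paper's proof: the performance-difference decomposition of Lemma~\ref{lem:model free lsr difference}, an Azuma--Hoeffding step to pass from expectations under $\mu^{\mpi^k}$ to the realized pairs $(s_{k,h}^\dag,a_{k,h})$, domination of the realized Bellman residual by the confidence-set width at the maximizing point $x_{k,h}^{\underline{\mpi}^k,\underline{\widehat F}^k}$ (using both $\widehat F_h^k$ and $\cT_{h,\mpi^k}^\dag\widehat F_{h+1}^k$ lying in $\widehat{\gF}^{\LSR}_{k,h,\mpi^k,\widehat F^k}$ via the version-space construction and Lemma~\ref{lem:model free lsr onestep concentration}), and the eluder-dimension potential bound of Lemma~9 in \cite{chen2023provably} combined with Cauchy--Schwarz, with the data-dependence of $(\mpi^k,\widehat F^k)$ resolved by the covering argument exactly as in the paper. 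The argument is correct and matches the paper's proof in all essential steps.
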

\begin{proof}[proof of Condition \ref{con:mfelliptical}]
 Using Hoeffding inequality in Lemma \ref{lem:model free lsr difference}, we have:
    \begin{align*}
        &\norm{F_{\widehat{Z}^k}-F_{Z^{\mpi^k}}}_\infty\\
        \leq& \sum_{h=1}^H \EE_{\mu^{\mpi^k}}\norm{\widehat{F}_h^k(\cdot|s_{h}^\dag,a_h)-\cT_{h,\mpi}^\dag \widehat{F}_{h+1}^k(s_h^\dag,a_h)}_\infty\\
        \leq& \sum_{h=1}^H \norm{\widehat{F}^k_h(s_{k,h}^\dag,a_{k,h})-\cT_{h,\mpi}^\dag \widehat{F}^k_{h+1}(s_{k,h}^\dag,a_{k,h})}_{\infty}+\mathcal{O}\bracket{\sqrt{KH\log\frac{1}{\delta}}}\,.
    \end{align*}
    Since $\cT_{h,\mpi}^\dag \widehat{F}^k_{h+1}\in\widehat{\cF}^{\LSR}_{k,h,\underline{\mpi}^k,\underline{\widehat{F}}^k}$ by Lemma $\ref{lem:model free lsr onestep concentration}$, we have:
    \begin{align*}
    &\sum_{k=1}^K\sum_{h=1}^H\norm{\widehat{F}^k_h(s_{k,h}^\dag,a_{k,h})-\cT_{h,\mpi}^\dag \widehat{F}^k_{h+1}(s_{k,h}^\dag,a_{k,h})}_{\infty}\\
    \leq &\sum_{k=1}^K\sum_{h=1}^H \sup_{x\in[0,H]}\left|\sup_{F_1 \in \widehat{\cF}^{\LSR}_{k,h,\underline{\mpi}^k,\underline{\widehat{F}}^k}} F_1(x | s_{k,h}^\dag, a_{k,h}) - \inf_{F_2 \in \widehat{\cF}^{\LSR}_{k,h,\underline{\mpi}^k,\underline{\widehat{F}}^k}} F_2(x | s_{k,h}^\dag, a_{k,h})\right| \\
    =&\sum_{k=1}^K\sum_{h=1}^H\sup_{F_1 \in \wh{\gF}_{k,h,{\mpi}^k, \wh{F}^k}^\LSR} F_1(x_{k,h}^{\underline{\mpi}^k,\underline{\widehat{F}}^k} | s_{k,h}^\dag, a_{k,h}) - \inf_{F_2 \in \wh{\gF}_{k,h,{\mpi}^k, \wh{F}^k}^\LSR} F_2(x_{k,h}^{\underline{\mpi}^k,\underline{\widehat{F}}^k} | s_{k,h}^\dag, a_{k,h})\,,
\end{align*}
    which is by the definition of $x_{k,h}^{\underline{\mpi^k},\underline{\widehat{F}}^k}$ in Equation \ref{eq:x_i,h mflsr}.
    Denote 
    \begin{align*}
        G_{k,h}=\sup_{F_1 \in \wh{\gF}_{k,h,{\mpi}^k, \wh{F}^k}^\LSR} F_1(x_{k,h}^{\underline{\mpi}^k,\underline{\widehat{F}}^k} | s_{k,h}^\dag, a_{k,h}) - \inf_{F_2 \in \wh{\gF}_{k,h,{\mpi}^k, \wh{F}^k}^\LSR} F_2(x_{k,h}^{\underline{\mpi}^k,\underline{\widehat{F}}^k} | s_{k,h}^\dag, a_{k,h})\,.
    \end{align*}
    Using similar techniques as Lemma 9 of \cite{chen2023provably}, we have:
     \begin{equation*}
        \sum_{k=1}^K \sum_{h=1}^H G_{k,h}^2 \leq H + H\dim_E(\gF, \sqrt{K}) + 4H\gamma^{\LSR}\dim_E(\gF, \sqrt{K})(\log(K) + 1)\,.
    \end{equation*}
    Thus, using Cauchy inequality, we obtain:
    \begin{align*}
        \sum_{k=1}^K\norm{F_{\widehat{Z}^k}-F_{Z^{\mpi^k}}}_\infty\leq\mathcal{O}\bracket{\poly(H)\sqrt{K\gamma^{\LSR}\dim_E(\gF, \sqrt{K})}}\,.
    \end{align*}
\end{proof}

\section{Model-Free Estimation by MLE approach}\label{app:mfmle}
\subsection{Bellman Eluder Dimension}
In this section, we define the bellman eluder dimension \cite{jin2021bellman}, which is a famous structural complexity. First, we define the $\ell_2$ norm distributional eluder dimension for a function class (Definition 7 in \cite{jin2021bellman}).
\begin{definition}[$\ell_2$ norm distributional eluder dimension]\label{ass:ell_2 dist eluder dim}
    We consider $\Phi$ be a function class on domain $\mathcal{X}$ where for $\phi\in\Phi$, $\abs{\phi(x)}\leq1$. $\cD$ is a family of distributions on $\mathcal{X}$.
    Let $L$ be the longest sequence that there exists $\epsilon'>\epsilon$ and $\mu_1\cdots \mu_L\in\cD$, for all $t\in[L]$, there exists $\phi\in\Phi$,  $\abs{\EE_{ \mu_t}[\phi(x)]}\geq\epsilon$ and $\sum_{i=1}^{t-1} \bracket{\EE_{ \mu_i}[\phi(x)]}^2\leq \epsilon^2$. We denote $L$ as the bellman eluder dimension $\operatorname{d_{DE}}(\Phi,\cD,\epsilon)$.
\end{definition}
Given the function set have low eluder dimension, we have the standard elliptical potential lemma as below:
\begin{lemma}[Lemma 17 in \cite{jin2021bellman}] \label{lem:bellman eluder dimension}
    Given a function class $\phi\in\Phi$ in domain $\mathcal{X}$ with $\abs{\phi(x)}\leq1$. Let $\cD$ be a families of distributions on $\mathcal{X}$. Suppose $\sets{\phi_k}_{[K]}\subset \Phi$ and $\sets{\mu_k}_{[K]}\subset \cD$ be two sequences. If for any $k\in [K]$, $\sum_{i=1}^k\bracket{\EE_{\mu_i}[\phi_k]}^2\leq\beta $, then for any $k\in[K]$, $\sum_{i=1}^k \abs{\EE_{\mu_i}[\phi_i]}\leq\mathcal{\Tilde{O}}\bracket{\sqrt{\operatorname{d_{DE}}(\Phi,\cD,1/K)\beta K}}$.
\end{lemma}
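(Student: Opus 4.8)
The plan is to reduce the statement to the standard eluder-dimension pigeonhole argument, viewing the bound as a distributional analogue of the elliptical potential lemma. Write $w_i := \abs{\EE_{\mu_i}[\phi_i]}$; since $\abs{\phi}\le 1$ we have $w_i\in[0,1]$, and the goal is to control $\sum_{i=1}^k w_i$. The entire difficulty is concentrated in a single counting claim, after which the target bound follows by a routine thresholding computation.

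The core step is to establish that for every $\alpha>0$,
\[
\abs{\sets{i\in[k] : w_i > \alpha}} \le \bracket{\frac{\beta}{\alpha^2}+1}\operatorname{d_{DE}}(\Phi,\cD,\alpha).
\]
First I would set $d := \operatorname{d_{DE}}(\Phi,\cD,\alpha)$ and process the indices $i$ with $w_i>\alpha$ in increasing order, greedily assigning each $\mu_i$ to a growing list of buckets: $\mu_i$ is placed in the first bucket $B$ whose already-assigned distributions satisfy $\sum_{\mu_j\in B}(\EE_{\mu_j}[\phi_i])^2 \le \alpha^2$, and a fresh bucket is opened only when no such bucket exists. Two observations finish the claim. (i) Within any bucket, each newly added $\mu_i$ carries the witness $\phi_i$ with $\abs{\EE_{\mu_i}[\phi_i]}=w_i>\alpha$ and $\sum_{\text{earlier in bucket}}(\EE_{\mu_j}[\phi_i])^2\le\alpha^2$, so the bucket is an $\alpha$-independent sequence in the sense of Definition~\ref{ass:ell_2 dist eluder dim}; hence each bucket has length at most $d$. (ii) If $\mu_i$ forces a new bucket while $m$ buckets already exist, then $\sum_{\mu_j\in B_l}(\EE_{\mu_j}[\phi_i])^2>\alpha^2$ for every $l\le m$; since the buckets are disjoint subsets of $\sets{\mu_1,\dots,\mu_{i-1}}$, summing against the hypothesis applied with $k=i$, namely $\sum_{j=1}^{i}(\EE_{\mu_j}[\phi_i])^2\le\beta$, gives $m\alpha^2<\beta$. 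Thus at most $\lfloor\beta/\alpha^2\rfloor+1$ buckets are ever opened, and multiplying the two bounds proves the claim.

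Given the counting claim, I would finish by sorting the values as $w_{(1)}\ge\cdots\ge w_{(k)}$. Applying the claim with $\alpha=w_{(t)}$ yields $t\le(\beta/w_{(t)}^2+1)\operatorname{d_{DE}}(\Phi,\cD,w_{(t)})$; by monotonicity of the eluder dimension in its last argument, for $w_{(t)}>1/K$ this is at most $(\beta/w_{(t)}^2+1)d$ with $d:=\operatorname{d_{DE}}(\Phi,\cD,1/K)$, which rearranges to $w_{(t)}\le\sqrt{\beta d/(t-d)}$ for $t>d$. Splitting the sum into the $w_i\le 1/K$ part (at most $k/K\le 1$ in total) and the rest, I would bound $\sum_{t}w_{(t)}\le d+\sum_{t>d}\sqrt{\beta d/(t-d)}\le d+2\sqrt{\beta d k}$, giving $\sum_{i=1}^k w_i = \widetilde{\mathcal{O}}(\sqrt{\operatorname{d_{DE}}(\Phi,\cD,1/K)\,\beta K})$, as claimed.

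The main obstacle is the counting claim: setting up the bucketing so that each bucket is certified as an $\alpha$-independent sequence matching the exact quantifier structure of Definition~\ref{ass:ell_2 dist eluder dim} (where the witness $\phi_i$ is allowed to depend on the index) while simultaneously capping the number of buckets through the energy budget $\beta$. The concluding thresholding and summation are routine.
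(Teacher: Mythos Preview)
The paper does not supply its own proof of this lemma; it is quoted verbatim as Lemma~17 of \cite{jin2021bellman} and used as a black box in the analysis of \texttt{V-Est-MLE}. Your proposal reproduces the standard argument from \cite{jin2021bellman} (itself adapted from the Russo--Van Roy eluder pigeonhole): the greedy bucketing to obtain the counting bound $\abs{\{i: w_i>\alpha\}}\le (\beta/\alpha^2+1)\,\operatorname{d_{DE}}(\Phi,\cD,\alpha)$, followed by sorting and the harmonic-type summation $\sum_{t>d}\sqrt{\beta d/(t-d)}$. This is correct and matches the cited proof essentially line by line, including the use of monotonicity of $\operatorname{d_{DE}}$ in its threshold to replace $\operatorname{d_{DE}}(\Phi,\cD,w_{(t)})$ by $\operatorname{d_{DE}}(\Phi,\cD,1/K)$ on the range $w_{(t)}>1/K$.
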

In this section, we define our bellman eluder dimension as the distributional eluder dimension for the specific function class below:
\begin{definition}[Bellman Eluder Dimension]\label{ass:bellman eluder dim}
    Given a policy class $\Pi^\dag$, and a PDF function class $\cF$. For $h\in[H]$, we define the function class $\Phi_h$ as $\sets{\TV\bracket{f_h||\cT_{h,\mpi}f_{h+1}}}$, and the distribution family $\cD_h$ as $\mu_h^\mpi$ with domain $\gX_h$ as $s_h^\dag,a_h$. We define the bellman eluder dimension of our problem as:
    \begin{align*}
        \operatorname{d_{BE}}=\max_{h\in [H]}\operatorname{d_{DE}}(\Phi_h,\cD_h,1/K)\,.
    \end{align*}
\end{definition}
% The following result prooves the low bellman eluder dimension property of the linear MDP \cite{jin2020provably}.
% \begin{lemma}[Lemma ]\label{lem:linear eluder dim}
%     Under Assumption \ref{ass:bellman eluder dim}, we can bound the distributional bellman eluder dimension as:
    
% \end{lemma}
% \todo{finish the linear proof}
\subsection{Setting}
\subsubsection{Notation}\label{sec:mfmlenotation}
In this setting, we assume the density function of $Z_h^\mpi\in\mathcal{Z}_h$ belongs to a function class $\cF_h$. We denote the density function of $Z_h^\mpi(s^\dag,a)\in\mathcal{Z}_h$ at point $z\in\RR$ as $f_h^\mpi(z|s^\dag,a)\in\cF_h$.

Consider an upper and lower $\epsilon$-bracketing of $\cF$ under $\norm{\cdot}_1$, denoted as ${\cF}^{\uparrow}$ and ${\cF}^{\downarrow}$. We denote the corresponding lower bracket of $g$ as $g^{\downarrow}$, and the upper bracket of $f$ as $f^{\uparrow}$. Since $g^{\downarrow}$ may not be a valid distribution, we denote the normalized version as: $\underline{g}$, where $\underline{g}=g^{\downarrow}/\int_z g^{\downarrow}(z)$, and $1-\epsilon\leq\int_z g^{\downarrow}(z)\leq 1$. Thus, we have: $\underline{g}\leq(1+2\epsilon)g^{\downarrow}$.

\subsubsection{Bellman Completeness}
Define the augmented bellman operator:
\begin{equation*}
    \mathcal{T}_{h,\mpi}^\dag f_{h+1}(z|s_h^\dag,a_h)=\int \TT({s_{h+1}^\dag|s_h^\dag,a_h})\pi_{h+1}(a_{h+1}|s_{h+1}^\dag) f_{h+1}(z-r_h|s_h^\dag,a_h)\,.
\end{equation*}
Here for completeness we  restate the distributional bellman completeness assumption, and give a corresponding example.
\begin{assumption}[augmented distributional bellman completeness]
 For the density function class $\bm{\cF}=\cF_1\times\cdots\cF_H$ corresponding to the class of random variables $\bm{\gZ}=\mathcal{Z}_1\times\mathcal{Z}_2\cdots\mathcal{Z}_H$, we have for any $h\in[H]$, such that for any $f_{h+1}\in\cF_{h+1}$, we have for any $\pi\in\Pi^\dag$, $\mathcal{T}_{h,\mpi}^\dag f_{h+1}\in\cF_h$.
\end{assumption}
\textbf{Instances:} for linear MDP with $\PP_h(s_{h+1}|s_h,a_h)=\innerproduct{\phi(s_h,a_h)}{\mu_h(s_{h+1})}$ and suppose the reward is dicretized into a uniform grid of $M$ points $\sets{z_i}_{i=1}^M$. Then we can write the reward distribution as
    $\RR(z_i|s_h,a_h)=\innerproduct{\mathbf{1_M}}{\theta(z_i)}$, where $\mathbf{1_M}$ is a $M$ dimensional vector with all the entries being $1$, and $\theta(z_i)$ is a $M$ dimensional vector with all $0$ but the $i$ th entry equal to $\RR(z_i|s_h,a_h)$. Then we have:
\begin{align*}
&\mathcal{T}_{h,\mpi}^\dag f_{h+1}(z|s_h^\dag,a_h)\\
=&\sum_{s_{h+1}}\PP(s_{h+1}|s_h,a_h)\sum_{z_i} \RR(z_i|s_h,a_h)\sum_{s}\pi_{h+1}\bracket{a_{h+1}|(s_{h+1},y_h+z_i)} f_{h+1}\bracket{z-z_i|(s_{h+1},y_h+z_i),a_{h+1}}\\
=& \sum_{s_{h+1}} \sum_{z_i}\phi(s_h,a_h)^\top \mu_h(s_{h+1})\theta_h(z_i)^\top \mathbf{1_M}\sum_{s}\pi_{h+1}\bracket{a_{h+1}|(s_{h+1},y_h+z_i)} f_{h+1}\bracket{z-z_i|(s_{h+1},y_h+z_i),a_{h+1}}\\
=&\phi(s_h,a_h)^\top W_h^\mpi(z,y_h)\mathbf{1_M}\,,
\end{align*}
where 
\begin{align*}
    W_h^\mpi(z,y_h)=\sum_{s_{h+1}} \sum_{z_i} \mu_h(s_{h+1})\theta_h(z_i)^\top \sum_{a}\pi_{h+1}\bracket{a_{h+1}|(s_{h+1},y_h+z_i)} f_{h+1}\bracket{z-z_i|(s_{h+1},y_h+z_i),a_{h+1}}
\end{align*}
depends only on $y_h$ and $z$.
We can also write the distribution function in a linear form, with $f_h^\mpi(z|s_h^\dag,a_h)=\Bar{\phi}(s_h,a_h)^\top w_h^\mpi(z,y_h)$, with $\Bar{\phi}(s_h,a_h)=\phi(s_h,a_h)\otimes\mathbf{1_M}\in\RR^{d\times M}$, and $w_h^\mpi(z,y_h)$ is the flattened version of $W_h^\mpi(z,y_h)$ with $w_h^\mpi(z,y_h)[d\times i+m]=W_h^\mpi(z,y_h)[i,m]$. Thus the function class has a linear structure similar to the case of risk neutral setting in linear MDPs \cite{jin2020provably}.

\subsection{Estimation of confidence set}

Here, we estimate the confidence set $\mathcal{Z}_{k,\mpi}$ via MLE using the density functions $f$.
\begin{algorithm}[htbp]
   \caption{\texttt{V-Est-MLE}$(\gH_{k}, \gZ, \mpi,\gamma^{\MLE})$}
   \label{alg:mfestmle}
\begin{algorithmic}
   \STATE {\bfseries Input:} History information $\gH_k$, density function class $\bm\gF$ of random variable class $\bm\gZ$, and policy $\mpi$.
   \STATE For all $i\in[k]$, and $(h,\bm{f},\mpi)\in[H]\times\cF\times\Pi^\dag$, sample 
   $z_{i,h+1}^{\underline{{f}},\underline{\mpi}}\sim \underline{f}_{h+1}\bracket{s_{i,h+1}^\dag,\underline{\pi}_{i,h+1}(s_{i,h+1}^\dag)}$, and let $z_{i,h}^{\underline{{f}},\underline{\mpi}}=z_{i,h+1}^{\underline{{f}},\underline{\mpi}}+r_{i,h}$. We define the version space as:
   \begin{align*}
       \widehat{\cF}_{k+1,{\mpi}}^{\MLE}=\sets{\bm{f}\in\cF:
    \sum_{i=1}^k \log f_h(z_{i,h}^{\underline{{f}},\underline{\mpi}}|s_{i,h}^\dag,a_{i,h})\geq \max_{f'\in\cF}\sum_{i=1}^k \log f'_h(z_{i,h}^{\underline{{f}},\underline{\mpi}}|s_{i,h}^\dag,a_{i,h})-\gamma^{\MLE},\forall h\in[H]}\,.
   \end{align*}
   \STATE Return $\widehat{\gZ}_{k+1,\mpi}$ as the set of random variables corresponding to $\widehat{\cF}_{k+1,\mpi}^{\MLE}$.

\end{algorithmic}
\end{algorithm}

We now describe the sampling procedure for our target function $g_{h+1}\in\cF$ and $\mpi\in\Pi$. Define $\underline{g}_{h+1}$ and $\underline{\mpi}$ as the normalized lower bracket in Section \ref{sec:mfmlenotation}.
For $1\leq i\leq k$,  
we sample $z_{i,h+1}^{\underline{{g}},\underline{\mpi}}\sim \underline{g}_{h+1}(s_{i,h+1}^\dag,\underline{\pi}_{h+1}(s_{i,h+1}^\dag))$, then we construct a one-step-back sample as $z_{i,h}^{\underline{{g}},\underline{\mpi}}=z_{i,h+1}^{\underline{{g}},\underline{\mpi}}+r_{i,h}$ where $r_{i,h}=y_{i,h+1}-y_{i,h}$. We estimate the likelihood of $f_h$ for $\mathcal{T}_{h,\mpi}^\dag g_{h+1}$ as $\log f_h(z_{i,h}^{\underline{{g}},\underline{\mpi}}|s_{i,h}^\dag,a_{i,h}) $. Then we can define the MLE confidence set as:
\begin{align*}
    \cF_{h,k,g,{\mpi}}^{\MLE}=\sets{f_h\in\cF_h:
    \sum_{i=1}^k \log f_h(z_{i,h}^{\underline{{g}},\underline{\mpi}}|s_{i,h}^\dag,a_{i,h})\geq \max_{f'\in\cF}\sum_{i=1}^k \log f'_h(z_{i,h}^{\underline{{g}},\underline{\mpi}}|s_{i,h}^\dag,a_{i,h})-\gamma^{\MLE}}\,,
\end{align*} where $\gamma^{\MLE}=\log\bracket{\cN_{[\cdot]}\bracket{\cF,\epsilon,\norm{}_{\infty}}\cN_{[\cdot]}(\Pi^\dag,\epsilon,\norm{}_{1})/\delta}+K\epsilon$.
Then, we can show that w.h.p, we have 
\begin{align*}
    \mathcal{T}_{h,\mpi}^\dag g_{h+1}\in \cF_{h,k,g,\mpi}^{\MLE}
\end{align*}
We can define our version space as:
\begin{align*}
    \widehat{\cF}^{\MLE}_{k,\mpi}=\sets{f\in\cF: f_h\in\cF_{h,k,f,\mpi}^{\MLE}, h\in[H]}\,.
\end{align*}
Thus, we can prove that $f^\mpi\in\widehat{\cF}^{\MLE}_{k,\mpi}$, and we have that $Z^\mpi\in\widehat{\gZ}_{k,\mpi}$

\subsubsection{Proof of Condition \ref{lem:model free framework concentration}}
In this section we prove the concentration Condition \ref{con:mfconcentration} (Condition \ref{lem:model free framework concentration} in the Appendix)

Following standard MLE concentration analysis, we have:

\begin{lemma}\label{lem:model free mle confidence}
    For any $f_h\in\cF_h\,,~h\in[H]$, there exists a constant $c$ such that: 
    \begin{align*}
         \sum_{i=1}^k \log f_h(z_{i,h}^{\underline{{g}},\underline{\mpi}}|s_{i,h}^\dag,a_{i,h})\leq \sum_{i=1}^k \log (\cT_{h,{\mpi}}^{\dag}g_{h+1})(z_{i,h}^{\underline{{g}},\underline{\mpi}}|s_{i,h}^\dag,a_{i,h})+c\gamma^{\MLE}\,.
    \end{align*}
\end{lemma}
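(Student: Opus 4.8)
\textbf{Proof proposal for Lemma~\ref{lem:model free mle confidence}.}

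The plan is to mirror the standard MLE generalization argument (as in \cite{geer2000empirical,liu2022when,liu2023optimistic}) but carried out in the augmented MDP with the one-step-back sampling scheme. First I would fix $h \in [H]$ and pass to the lower bracketing set $\cF^\downarrow$ and the policy lower bracketing set $\underline{\Pi}^\dag$, so that each candidate $f_h \in \cF_h$ is within $\ell_1$-distance $\epsilon$ of some bracket element $f_h^\downarrow$. The key observation, exactly as in the model-based Lemma~\ref{lem:likelihood diff}, is that for the true Bellman target $g^\star_h := \cT_{h,\mpi}^\dag g_{h+1}$ (which by distributional Bellman completeness lies in $\cF_h$), the ratio process
\[
X_i := \log\frac{f_h^\downarrow(z_{i,h}^{\underline{g},\underline{\mpi}} \mid s_{i,h}^\dag, a_{i,h})}{g^\star_h(z_{i,h}^{\underline{g},\underline{\mpi}} \mid s_{i,h}^\dag, a_{i,h})}
\]
satisfies a supermartingale-type bound: conditioning on the history $\tau_{i,h}$ up to step $h$ in episode $i$, the sample $z_{i,h}^{\underline{g},\underline{\mpi}}$ is (after the one-step-back shift by $r_{i,h}$) distributed according to $g^\star_h(\cdot \mid s_{i,h}^\dag, a_{i,h})$ up to the bracketing error introduced by using $\underline{g}_{h+1}$ and $\underline{\mpi}$ instead of $g_{h+1}$ and $\mpi$. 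Hence $\EE[\exp(X_i) \mid \tau_{i,h}] = \int f_h^\downarrow \le 1 + \epsilon$, which telescopes to give $\EE[\exp(\sum_{i=1}^k X_i)] \le (1+\epsilon)^k \le e^{K\epsilon}$.

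Next I would apply Markov's inequality to $\exp(\sum_i X_i)$ and take a union bound over all bracket elements $f_h^\downarrow \in \cF^\downarrow$, all $\underline{\mpi} \in \underline{\Pi}^\dag$, all $h \in [H]$, and all $k \in [K]$, to conclude that with probability at least $1-\delta$,
\[
\sum_{i=1}^k \log f_h^\downarrow(z_{i,h}^{\underline{g},\underline{\mpi}} \mid s_{i,h}^\dag, a_{i,h}) \le \sum_{i=1}^k \log g^\star_h(z_{i,h}^{\underline{g},\underline{\mpi}} \mid s_{i,h}^\dag, a_{i,h}) + \log\!\big(eHK \cN_{[\cdot]}(\cF,\epsilon,\|\cdot\|_\infty)\cN_{[\cdot]}(\Pi^\dag,\epsilon,\|\cdot\|_1)/\delta\big).
\]
Then I would use $f_h^\downarrow \le f_h$ pointwise (so $\log f_h^\downarrow \le \log f_h$ loses nothing in the wrong direction — actually I need $\log f_h \le \log f_h^\downarrow + $ correction, so more carefully I bound $f_h(z) \le f_h^\downarrow(z) + \epsilon$ and handle the additive error, or invoke the upper bracket $f_h^\uparrow \ge f_h$ with $\|f_h^\uparrow - f_h^\downarrow\|_1 \le \epsilon$) to transfer the bound from the bracket to $f_h$ itself, accumulating an extra $K\epsilon$ term. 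Choosing $\epsilon$ small (e.g.\ $\epsilon = 1/K$) absorbs this into the definition $\gamma^\MLE = \log(\cN_{[\cdot]}(\cF,\epsilon,\|\cdot\|_\infty)\cN_{[\cdot]}(\Pi^\dag,\epsilon,\|\cdot\|_1)/\delta) + K\epsilon$ up to the universal constant $c$.

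The main obstacle I anticipate is the careful bookkeeping of the bracketing errors coming from \emph{two} sources simultaneously: the density bracket on $g_{h+1}$ (affecting where $z_{i,h}^{\underline{g},\underline{\mpi}}$ is sampled from) and the policy bracket on $\mpi$. I need to verify that replacing $g_{h+1}$ by $\underline{g}_{h+1}$ and $\mpi$ by $\underline{\mpi}$ perturbs the conditional law of $z_{i,h}^{\underline{g},\underline{\mpi}}$ (under the empirical transition) by at most $O(\epsilon)$ in total variation, using the normalization bounds $\underline{g} \le (1+2\epsilon)g^\downarrow$ and $\underline{\pi}(a|s^\dag) \le (1+2\epsilon)\pi^\downarrow(a|s^\dag)$ established in Section~\ref{sec:mfmlenotation} and Section~\ref{sec:policy cover}. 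The clean way is to first prove the inequality with $g$, $\mpi$ replaced by their normalized lower brackets everywhere (so the union bound is genuinely over a finite set), and only at the end argue that the target $\cT_{h,\mpi}^\dag g_{h+1}$ differs from $\cT_{h,\underline{\mpi}}^\dag \underline{g}_{h+1}$ by a controlled amount in the relevant log-likelihood sense — this is the same device used in Lemma~\ref{lem:model free lsr onestep concentration} for the LSR case, so I would reuse that decomposition structure.
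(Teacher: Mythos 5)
Your proposal follows essentially the same route as the paper's proof: bound the conditional exponential moment of the log-likelihood ratio along the one-step-back samples, telescope via the tower property, apply Markov's inequality with a union bound over the bracketing sets of $\cF$ and $\Pi^\dag$, and absorb the $O(K\epsilon)$ slack coming from the normalized lower brackets $\underline{g},\underline{\mpi}$ by taking $\epsilon=1/K$. The one place you hedge is the transfer from the bracket element back to $f_h$ itself, and one of the two options you offer there is a dead end: the pointwise bound $f_h(z)\le f_h^\downarrow(z)+\epsilon$ is not available (the brackets are $\ell_1$-brackets, so only $f_h^\downarrow\le f_h\le f_h^\uparrow$ holds pointwise, with $\|f_h^\uparrow-f_h^\downarrow\|_1\le\epsilon$), and even with a sup-norm bracket an additive error is useless inside a logarithm, since $\log\bigl(f_h^\downarrow(z_{i,h}^{\underline{g},\underline{\mpi}})+\epsilon\bigr)-\log f_h^\downarrow(z_{i,h}^{\underline{g},\underline{\mpi}})$ is uncontrolled wherever the density is small. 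The correct branch is the other one you name, and it is exactly what the paper does: run the entire argument with the \emph{upper} bracket $f_h^\uparrow\ge f_h$ in the numerator from the start (which simultaneously keeps the index set of the union bound finite and gives $\log f_h\le\log f_h^\uparrow$ for free at the end) and with the \emph{lower} bracket $g_{h+1}^\downarrow\le g_{h+1}$ in the denominator (so that $\log\cT_{h,\mpi}^\dag g_{h+1}^\downarrow\le\log\cT_{h,\mpi}^\dag g_{h+1}$ monotonically); the mismatch between the sampling law $\cT_{h,\underline{\mpi}}^\dag\underline{g}_{h+1}$ and that denominator is then controlled multiplicatively by $(1+2\epsilon)^2$, exactly as you anticipate. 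With that branch chosen, your argument coincides with the paper's.
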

\begin{proof}
Consider an upper and lower $\epsilon$-bracketing of $\cF$ under $\norm{\cdot}_1$, denoted as ${\cF}^{\uparrow}$ and ${\cF}^{\downarrow}$. We denote the corresponding lower bracket of $g_{h+1}$ as $g^{\downarrow}$, and the upper bracket of $f$ as $f^{\uparrow}$. Since $g^{\downarrow}$ may not be a valid distribution, we denote the normalized version as: $\underline{g}$, where $\underline{g}=g^{\downarrow}/\int_z g^{\downarrow}(z)$, and $1-\epsilon\leq\int_z g^{\downarrow}(z)\leq 1$

Then, we have:
\begin{align*}
    &\EE_{\mu_{h}^{\mpi^i}}\mbracket{\frac{f^{\uparrow}_h(z_{i,h}^{\underline{{g}},\underline{\mpi}}|s_{i,h}^\dag,a_{i,h})}{\cT_{h,\mpi}^{\dag}g^{\downarrow}_{h+1}(z_{i,h}^{\underline{{g}},\underline{\mpi}}|s_{i,h}^\dag,a_{i,h})}}\\
    =&\int_{s_h^\dag,a_h,s_{h+1}^\dag,z} \mu_{h}^{\mpi^i}(s_{h}^\dag,a_{h},s_{h+1}^\dag) \frac{\int_{\cA}\underline{\pi}_{h+1}(a_{h+1}|s_{h+1}^\dag) \underline{g}_{h+1}\bracket{z-r_h|s_{h+1}^\dag,a_{h+1}}f^{\uparrow}_h(z|s_h^\dag,a_h)}{\int \TT(s_{h+1}^\dag|s_h^\dag,a_h)\int_{\cA}\pi_{h+1}(a_{h+1}|s_{h+1}^\dag)g^{\downarrow}_{h+1}\bracket{z-r_h|s_{h+1}^\dag,a_{h+1}}}\\
    =& \int_{s_h^\dag,a_h}\mu_{h}^{\mpi^i}(s_h^\dag,a_h)\int_z f^{\uparrow}_h(z|s_h^\dag,a_h)\int_{s_{h+1}^\dag}\frac{\TT(s_{h+1}^\dag|s_h^\dag,a_h)\int_{\cA}\underline{\pi}_{h+1}(a_{h+1}|s_{h+1}^\dag) \underline{g}_{h+1}\bracket{z-r_h|s_{h+1}^\dag,a_{h+1}}}{\int_{s_{h+1}^\dag}\TT(s_{h+1}^\dag|s_h^\dag,a_h)\int_{\cA}\pi_{h+1}(a_{h+1}|s_{h+1}^\dag)g^{\downarrow}_{h+1}\bracket{z-r_h|s_{h+1}^\dag,a_{h+1}}}\\
    \leq& \int_{s_h^\dag,a_h}\mu_{h}^{\mpi^i}(s_h^\dag,a_h)\int_z f^{\uparrow}_h(z|s_h^\dag,a_h)\int_{s_{h+1}^\dag}\frac{\TT(s_{h+1}^\dag|s_h^\dag,a_h)\int_{\cA}(1+2\epsilon){\pi}^{\downarrow}_{h+1}(a_{h+1}|s_{h+1}^\dag) \underline{g}_{h+1}\bracket{z-r_h|s_{h+1}^\dag,a_{h+1}}}{\int_{s_{h+1}^\dag}\TT(s_{h+1}^\dag|s_h^\dag,a_h)\int_{\cA}\pi_{h+1}(a_{h+1}|s_{h+1}^\dag)g^{\downarrow}_{h+1}\bracket{z-r_h|s_{h+1}^\dag,a_{h+1}}}\\
    \leq&\int_{s_h^\dag,a_h}\mu_{h}^{\mpi^i}(s_h^\dag,a_h)\int_z f^{\uparrow}_h(z|s_h^\dag,a_h) (1+2\epsilon)^2\\
    \leq&1+\frac{6}{K}\,.
\end{align*}
Thus we obtain the result via Markov inequality:
\begin{align*}
    & \PP\bracket{\sum_{i=1}^k f^{\uparrow}_h(z_{i,h}^{\underline{{g}},\underline{\mpi}}|s_{i,h}^\dag,a_{i,h})-\log (\cT_{h,\mpi}^{\dag}g^{\downarrow}_{h+1})(z_{i,h}^{\underline{{g}},\underline{\mpi}}|s_{i,h}^\dag,a_{i,h})\geq \log\bracket{1/\delta}}\\
    \leq & \EE\mbracket{\exp\bracket{\sum_{i=1}^k \log\frac{f^{\uparrow}(z_{i,h}^{\underline{{g}},\underline{\mpi}}|s_{i,h}^\dag,a_{i,h})}{\cT_{h,\mpi}^{\dag}g^{\downarrow}(z_{i,h}^{\underline{{g}},\underline{\mpi}}|s_{i,h}^\dag,a_{i,h})}} }\exp(-\log (1/\delta))\\
    \leq& e^6\delta\,.
\end{align*}
Applying a union bound, for all $f^{\uparrow}$ and $g^{\downarrow}$, we have w.p. $1-\delta$, there exists a constant $c$,
\begin{align*}
    \sum_{i=1}^k \log f^{\uparrow}_h(z_{i,h}^{\underline{{g}},\underline{\mpi}}|s_{i,h}^\dag,a_{i,h})-\log (\cT_{h,{\mpi}}^{\dag}g^{\downarrow}_{h+1})(z_{i,h}^{\underline{{g}},\underline{\mpi}}|s_{i,h}^\dag,a_{i,h})\leq c\beta\,.
\end{align*}
We conclude our result by the definition of upper and lower brackets:
\begin{align*}
    &\sum_{i=1}^k \log f_h(z_{i,h}^{\underline{{g}},\underline{\mpi}}|s_{i,h}^\dag,a_{i,h})-\log (\cT_{h,\mpi}^{\dag}g_{h+1})(z_{i,h}^{\underline{{g}},\underline{\mpi}}|s_{i,h}^\dag,a_{i,h})\\
    \leq&  \sum_{i=1}^k \log f^{\uparrow}_h(z_{i,h}^{\underline{{g}},\underline{\mpi}}|s_{i,h}^\dag,a_{i,h})-\log (\cT_{h,\mpi}^{\dag}g^{\downarrow}_{h+1})(z_{i,h}^{\underline{{g}},\underline{\mpi}}|s_{i,h}^\dag,a_{i,h})\\
    \leq& c\beta\,.
\end{align*}
As a result, we have:
\begin{align*}
    \cT_{h,\mpi}^\dag f_{h+1}\in\cF_{h,k,f,\mpi}^{\MLE}
\end{align*}
by the definition of $\cF_{h,k,f,\mpi}^{\MLE}$
\end{proof}

We prove Condition \ref{lem:model free framework concentration} via induction. If $f_{h+1}^\mpi\in \cF_{h+1,k,f^\mpi,\mpi}^{\MLE}$, From Lemma \ref{lem:model free mle confidence}, $f_h^\mpi=\cT_{h,\mpi}^\dag f_{h+1}^\mpi\in \cF_{h,k,f^\mpi,\mpi}^{\MLE}$. Since $f_{H}^\mpi\in \cF_{H,k,f^\mpi,\mpi}^{\MLE}$ we have for every $h$, $f_h^\mpi\in\cF_{h,k,f^\mpi,\mpi} $. Thus $\bm{f}^\mpi\in\widehat{\cF}^{\MLE}_{k,\mpi}$. As a result, $Z^\mpi\in\widehat{\gZ}_{k,\mpi}$ for any $k\in[K]$ and $\mpi\in\Pi^\dag$.

\subsubsection{Proof of Condition \ref{lem:model free pigeon hole}}
In this section we prove the elliptical potential Condition \ref{con:mfelliptical} (Condition \ref{lem:model free pigeon hole} in the Appendix).

The following lemma is the standard result for MLE generalization bound.
\begin{lemma}[MLE concentration]\label{lem:model free mle concentration}
    
    We can bound the square TV distance of the bellman error for any $\bm{f}\in\cF_{k,\mpi}$
    \begin{align*}
        \sum_{i=1}^k\EE_{s_h^\dag,a_h\sim\mu^{\mpi^i}_h}\mbracket{\TV^2\bracket{f_h(s_h^\dag,a_h)||\mathcal{T}_{h,\mpi}^\dag f_{h+1}(s_h^\dag,a_h)}}\leq \mathcal{O}(\gamma^{\MLE})\,.
    \end{align*}
\end{lemma}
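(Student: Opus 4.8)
\textbf{Proof proposal for Lemma~\ref{lem:model free mle concentration} (MLE concentration for the value-based MLE estimator).}

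The plan is to reduce the statement to a standard MLE generalization bound, exactly as in the risk-neutral distributional RL analyses of \cite{wu2023distributional,wang2023benefits} and the general MLE toolbox of \cite{geer2000empirical,agarwal2020flambe}, but carried out at each horizon $h$ over the augmented state--action pairs drawn from the true visitation $\mu_h^{\mpi^i}$. First I would fix $h\in[H]$ and regard each episode $i\le k$ as providing one sample: conditioning on the filtration $\sigma_{i,h}$ that contains $(s_{i,h}^\dag,a_{i,h})$ and the sampling of $z_{i,h+1}^{\underline{f},\underline{\mpi}}$, the quantity $z_{i,h}^{\underline f,\underline\mpi}=z_{i,h+1}^{\underline f,\underline\mpi}+r_{i,h}$ is, up to the $\epsilon$-bracketing slack already absorbed in Lemma~\ref{lem:model free mle confidence}, a fresh draw from $\cT_{h,\mpi}^\dag f_{h+1}(\cdot\mid s_{i,h}^\dag,a_{i,h})$ when $f=f^\mpi$. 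The MLE confidence set $\cF_{h,k,f,\mpi}^{\MLE}$ keeps only those $f_h$ whose log-likelihood on these samples is within $\gamma^{\MLE}$ of the maximizer.

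The key steps, in order: (1) Invoke Lemma~\ref{lem:model free mle confidence} (the likelihood-ratio tail bound, already proved via a Markov/martingale supermartingale argument with the $(1+2\epsilon)$ bracketing factors) to control, uniformly over $\bm f\in\cF$, the deviation $\sum_{i\le k}\log\bigl[(\cT_{h,\mpi}^\dag f_{h+1})(z_{i,h}^{\underline f,\underline\mpi}\mid s_{i,h}^\dag,a_{i,h})/f_h(z_{i,h}^{\underline f,\underline\mpi}\mid s_{i,h}^\dag,a_{i,h})\bigr]\le \cO(\gamma^{\MLE})$ for all $\bm f\in\widehat\cF^{\MLE}_{k,\mpi}$; here I use that any $\bm f$ in the version space satisfies the empirical-likelihood constraint while $f_h^\mpi=\cT_{h,\mpi}^\dag f_{h+1}^\mpi$ is a competitor in the argmax, so the empirical log-likelihood gap between $f_h$ and $\cT_{h,\mpi}^\dag f_{h+1}$ is at most $\gamma^{\MLE}$. (2) Convert this empirical log-likelihood-ratio control into a population squared-Hellinger (equivalently, up to constants, squared-TV) bound by the classical argument: $-2\log\EE\sqrt{q/p}\ge \tfrac12\EE(\sqrt{p}-\sqrt q)^2$ combined with the exponential-moment/Markov tail, which yields $\sum_{i\le k}\EE_{(s_h^\dag,a_h)\sim\mu_h^{\mpi^i}}\bigl[H^2\!\bigl(f_h(s_h^\dag,a_h)\,\|\,\cT_{h,\mpi}^\dag f_{h+1}(s_h^\dag,a_h)\bigr)\bigr]\le \cO(\gamma^{\MLE})$. (3) Use $\TV^2\le 2H^2$ (the Hellinger--TV inequality) to pass to the stated total-variation form, and finally take a union bound over the $\epsilon$-brackets $\cF^\downarrow,\cF^\uparrow$ and the policy bracketing $\Pi^{\dag\downarrow}$ — this is precisely what the definition $\gamma^{\MLE}=\log\!\bigl(\cN_{[\cdot]}(\cF,\epsilon,\|\cdot\|_\infty)\cN_{[\cdot]}(\Pi^\dag,\epsilon,\|\cdot\|_1)/\delta\bigr)+K\epsilon$ accounts for, with the $K\epsilon$ term covering the accumulated per-episode bracketing error and the choice $\epsilon=1/K$ making it $\cO(1)$.

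I expect the main obstacle to be step (2) together with the bracketing bookkeeping in step (1): one must be careful that the "sample" $z_{i,h}^{\underline f,\underline\mpi}$ is generated from the \emph{normalized lower bracket} $\underline f_{h+1}$ under the \emph{normalized lower bracket policy} $\underline\mpi$ rather than from $\cT_{h,\mpi}^\dag f_{h+1}$ itself, so the clean conditional-expectation identity $\EE[\,\cdot\mid\sigma_{i,h}]=\cT_{h,\mpi}^\dag f_{h+1}$ holds only approximately, with multiplicative errors $(1+2\epsilon)^{O(1)}$ per step that must be propagated through the exponential-moment computation — this is exactly the computation already done inside the proof of Lemma~\ref{lem:model free mle confidence}, so the lemma can be cited as a black box, and the remaining work is the standard Hellinger-to-TV conversion plus union bound. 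No genuinely new idea is needed beyond observing that the augmented Bellman operator $\cT_{h,\mpi}^\dag$ plays the role of the "target conditional density" in the usual MLE analysis, which is legitimate by the distributional Bellman completeness Assumption~\ref{ass:mfbellman} ensuring $\cT_{h,\mpi}^\dag f_{h+1}\in\cF_h$.
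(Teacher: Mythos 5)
Your proposal is correct and follows essentially the same route as the paper: the version-space constraint plus distributional Bellman completeness (which places $\cT_{h,\mpi}^\dag f_{h+1}$ in $\cF_h$ as a feasible competitor in the argmax) controls the empirical log-likelihood gap, the MLE generalization bound (Lemma~\ref{lem:mle generalization bound}) converts this into the population squared-TV bound against the actual sampling distribution $\cT_{h,\underline{\mpi}}^\dag \underline{f}_{h+1}$, and the bracketing errors contribute only $\cO(K\epsilon)=\cO(1)$ with $\epsilon=1/K$. The only cosmetic differences are that the paper handles the bracket mismatch as an additive $\cO(\epsilon)$ TV correction applied after the generalization bound rather than propagating multiplicative $(1+2\epsilon)$ factors through the exponential-moment computation, and the relevant competitor in the likelihood maximization is $\cT_{h,\mpi}^\dag f_{h+1}$ for the given $\bm{f}$ in the version space rather than $f_h^{\mpi}$ as you wrote (though you then state the correct pairing).
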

\begin{proof}
  Since we have for any $\bm{f}\in\cF_{k,\mpi}$
    \begin{align*}
      &  \sum_{i=1}^k \log \bracket{\cT_{h,\underline{\mpi}}^\dag \underline{f}_{h+1}}(z_{i,h+1}^{\underline{{f}},\underline{\mpi}}|s_{i,h}^\dag,a_{i,h})-\log f_h(z_{i,h+1}^{\underline{{f}},\underline{\mpi}}|s_{i,h}^\dag,a_{i,h})\\
        \leq&\sum_{i=1}^k \log \bracket{\cT_{h,{\mpi}^{\downarrow}}^\dag {f}^\downarrow_{h+1}}(z_{i,h+1}^{\underline{{f}},\underline{\mpi}}|s_{i,h}^\dag,a_{i,h})-2\log(1-\epsilon)-\log f_h(z_{i,h+1}^{\underline{{f}},\underline{\mpi}}|s_{i,h}^\dag,a_{i,h})
    \end{align*}
    which holds by the normalization constant of $\underline{f}$ and $\underline{\pi}$. By the definition of the lower bracket function $f^\downarrow_{h+1}$, we have
    \begin{align*}
        &\sum_{i=1}^k \log \bracket{\cT_{h,{\mpi}^{\downarrow}}^\dag {f}^\downarrow_{h+1}}(z_{i,h+1}^{\underline{{f}},\underline{\mpi}}|s_{i,h}^\dag,a_{i,h})-2\log(1-\epsilon)-\log f_h(z_{i,h+1}^{\underline{{f}},\underline{\mpi}}|s_{i,h}^\dag,a_{i,h}) \\
        \leq& \sum_{i=1}^k \log \bracket{\cT_{h,{\mpi}}^\dag {f}_{h+1}}(z_{i,h+1}^{\underline{{f}},\underline{\mpi}}|s_{i,h}^\dag,a_{i,h})-2\log(1-\epsilon)-\log f_h(z_{i,h+1}^{\underline{{f}},\underline{\mpi}}|s_{i,h}^\dag,a_{i,h})\\
        \leq& \max_{f'\in\cF}\sum_{i=1}^k \log f'_h(z_{i,h}^{\underline{{f}},\underline{\mpi}}|s_{i,h}^\dag,a_{i,h})-\log f_h(z_{i,h+1}^{\underline{{f}},\underline{\mpi}}|s_{i,h}^\dag,a_{i,h})+\mathcal{O}(k\epsilon)\\
        \leq& \gamma^{\MLE}+\mathcal{O}(k\epsilon)\\
        =&\mathcal{O}(\gamma^{\MLE})\,,
    \end{align*}
    where the second inequality is due to the distributional bellman completeness, which ensures that $\cT_{h,{\mpi}}^\dag {f}_{h+1}\in\cF_h$. The last two inequalities are due to the construction of the confidence set and the choice of $\epsilon=1/K$.
    
    Since the conditional distribution of $z_{i,h+1}^{\underline{{f}},\underline{\mpi}}$ given $s_{i,h}^\dag,a_{i,h}$ is the same as $\mathcal{T}_{h,\underline{\mpi}}^\dag \underline{f}_{h+1}(s_h^\dag,a_h)$, from Lemma \ref{lem:mle generalization bound}, we have:
    \begin{align*}
\sum_{i=1}^k\EE_{s_h^\dag,a_h\sim\mu^{\mpi^i}_h}\mbracket{\TV^2\bracket{f_h(s_h^\dag,a_h)||\mathcal{T}_{h,\underline{\mpi}}^\dag \underline{f}_{h+1}(s_h^\dag,a_h)}}\leq\mathcal{O}(\gamma^{\MLE})\,.
    \end{align*}
    We also have:
    \begin{align*}
        &\TV\bracket{\mathcal{T}_{h,{\mpi}}^\dag f_{h+1}(s_h^\dag,a_h)||\mathcal{T}_{h,\underline{\mpi}}^\dag \underline{f}_{h+1}(s_h^\dag,a_h)}\\
        \leq &\TV\bracket{\mathcal{T}_{h,{\mpi}}^\dag f_{h+1}(s_h^\dag,a_h)||\cT_{h,\underline{\mpi}}^\dag f_{h+1}(s_h^\dag,a_h)}+\TV\bracket{\mathcal{T}_{h,\underline{\mpi}}^\dag \underline{f}_{h+1}(s_h^\dag,a_h)||\cT_{h,\underline{\mpi}}^\dag f_{h+1}(s_h^\dag,a_h)}\,.
    \end{align*}
    We can bound the first term as:
    \begin{align*}
        &\TV\bracket{\mathcal{T}_{h,{\mpi}}^\dag f_{h+1}(s_h^\dag,a_h)||\cT_{h,\underline{\mpi}}^\dag f_{h+1}(s_h^\dag,a_h)}\\
        \leq&\int_{s_{h+1}^\dag}\TT(s_{h+1}^\dag|s_h^\dag,a_h)\int_{a_{h+1}}\abs{\underline{\pi}_{h+1}(a_{h+1}|s_{h+1}^\dag)-\pi_{h+1}(a_{h+1}|s_{h+1}^\dag)}\int_z(f_{h+1}(z-r_h|s_{h+1}^\dag,a_{h+1})\\
        \leq&\max_{s_{h+1}^\dag}\norm{\underline{\pi}_{h+1}(\cdot|s_{h+1}^\dag)-\pi_{h+1}(\cdot|s_{h+1}^\dag)}_1\\
        \leq&\epsilon\,.
    \end{align*}
     We can also bound the second term as:
    \begin{align*}
        &\TV\bracket{\mathcal{T}_{h,\underline{\mpi}}^\dag \underline{f}_{h+1}(s_h^\dag,a_h)||\cT_{h,\underline{\mpi}^\dag} f_{h+1}(s_h^\dag,a_h)}\\
\leq&\int_{s_{h+1}^\dag}\TT(s_{h+1}^\dag|s_h^\dag,a_h)\int_{a_{h+1}}\underline{\pi}_{h+1}(a_{h+1}|s_{h+1}^\dag)\int_z \abs{\bracket{f_{h+1}-\underline{f}_{h+1}}(z-r_h|s_{h+1}^\dag,a_{h+1})}\\
\leq & \max_{s_{h+1}^\dag,a_{h+1}}\norm{f_{h+1}(s_{h+1}^\dag,a_{h+1})-\underline{f}_{h+1}(s_{h+1}^\dag,a_{h+1})}_1\\
\leq& \mathcal{O}(\epsilon)\,,
    \end{align*}
since $\underline{f}\leq (1+2\epsilon)f^\downarrow\leq(1+2\epsilon)f$ and $\underline{f}\geq f^\downarrow$ point wise. Then we can have $\norm{\underline{f}-f}_1=\int_z \abs{f(z)-\underline{f}(z)}\leq \int_z \max\sets{2\epsilon f, f-f^\downarrow}\leq \int_z 2\epsilon f+\int_z(f-f^\downarrow)\leq 3\epsilon$.

    We can conclude that: $\sum_{i=1}^k\EE_{s_h^\dag,a_h\sim\mu^{\mpi^i}_h}\mbracket{\TV^2\bracket{\mathcal{T}_{h,{\mpi}}^\dag f_{h+1}(s_h^\dag,a_h)||\mathcal{T}_{h,\underline{\mpi}}^\dag \underline{f}_{h+1}(s_h^\dag,a_h)}}\leq\mathcal{O}(k\epsilon)$. Thus, 
    \begin{align*}
        \sum_{i=1}^k\EE_{s_h^\dag,a_h\sim\mu^{\mpi^i}_h}\mbracket{\TV^2\bracket{f_h(s_h^\dag,a_h)||\mathcal{T}_{h,\mpi}^\dag f_{h+1}(s_h^\dag,a_h)}}\leq \mathcal{O}(\gamma^{\MLE})\,.
    \end{align*}
\end{proof}

Next, we present the distribution difference lemma for our model free analysis.
\begin{lemma}\label{lem:model free distribution difference}
We can bound the distance between the CDFs of the return by the bellman error of each step as:
\begin{align*}
    \norm{F_{Z}-F_{Z^{\mpi}}}_{\infty}\leq \sum_{h=1}^H \EE_{(s_h^\dag,a_h)\sim\mu^{\mpi}_h}\mbracket{\norm{f_h(s_h^\dag,a_h)-\mathcal{T}_{h,\mpi}^\dag f_{h+1}(s_h^\dag,a_h)}_1}\,.
\end{align*}
    
\end{lemma}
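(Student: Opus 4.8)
\textbf{Proof plan for Lemma~\ref{lem:model free distribution difference}.}

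The plan is to mimic the performance-difference decomposition already carried out in Lemma~\ref{lem:model free lsr difference}, but now carrying PDFs instead of CDFs and converting the per-step density mismatch into a total-variation term at the end. First I would write the CDF of the estimated return in the telescoped form used throughout: $F_Z(x) = \int_{a_1}\pi_1(a_1|s_1^\dag)F_1(x|s_1^\dag,a_1)$ (with $s_1^\dag=(s_1,0)$), and similarly for $F_{Z^\mpi}$, where $F_h$ denotes the CDF obtained by integrating the density $f_h$. The key algebraic identity is the one-step-back relation $F_h^\mpi = \cT_{h,\mpi}^\dag F_{h+1}^\mpi$ coming from the distributional Bellman equation; the corresponding relation holds at the level of densities, $f_h^\mpi = \cT_{h,\mpi}^\dag f_{h+1}^\mpi$.

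Then I would insert and subtract the operator applied to the estimated next-step density: at step $h$,
\begin{align*}
\norm{F_h(\cdot|s_h^\dag,a_h) - F_h^\mpi(\cdot|s_h^\dag,a_h)}_\infty
&\leq \norm{F_h(\cdot|s_h^\dag,a_h) - \cT_{h,\mpi}^\dag F_{h+1}(\cdot|s_h^\dag,a_h)}_\infty \\
&\quad + \norm{\cT_{h,\mpi}^\dag F_{h+1}(\cdot|s_h^\dag,a_h) - \cT_{h,\mpi}^\dag F_{h+1}^\mpi(\cdot|s_h^\dag,a_h)}_\infty\,.
\end{align*}
The second term, after integrating against $\mu^\mpi$, collapses to $\EE_{\mu^\mpi}\norm{F_{h+1}(s_{h+1}^\dag,a_{h+1}) - F_{h+1}^\mpi(s_{h+1}^\dag,a_{h+1})}_\infty$ because $\cT_{h,\mpi}^\dag$ is an average (a mixture) of shifted CDFs, so it is $\|\cdot\|_\infty$-nonexpansive and pulls the expectation over $(s_h^\dag,a_h)\sim\mu^\mpi$ forward to $(s_{h+1}^\dag,a_{h+1})\sim\mu^\mpi$ — exactly the manipulation done inside Lemma~\ref{lem:model free lsr difference}. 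Iterating this bound over $h=1,\dots,H$ (with $f_{H+1}=f_{H+1}^\mpi$ the degenerate terminal distribution so the last Bellman error vanishes) yields
\[
\norm{F_Z - F_{Z^\mpi}}_\infty \leq \sum_{h=1}^H \EE_{\mu^\mpi}\norm{F_h(\cdot|s_h^\dag,a_h) - \cT_{h,\mpi}^\dag F_{h+1}(\cdot|s_h^\dag,a_h)}_\infty\,.
\]

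The remaining — and I expect main — step is to dominate each CDF-level Bellman error $\norm{F_h(\cdot|s_h^\dag,a_h) - \cT_{h,\mpi}^\dag F_{h+1}(\cdot|s_h^\dag,a_h)}_\infty$ by the total-variation distance $\norm{f_h(s_h^\dag,a_h) - \cT_{h,\mpi}^\dag f_{h+1}(s_h^\dag,a_h)}_1$ between the corresponding densities. This follows from the standard fact that the sup-norm distance between two CDFs is at most the $\ell_1$ distance between their densities: for any $x$, $|F_h(x) - G_h(x)| = \big|\int_{-\infty}^x (f_h(z)-g_h(z))dz\big| \leq \int |f_h(z)-g_h(z)|dz$, and here $g_h := \cT_{h,\mpi}^\dag f_{h+1}$ is itself a genuine density (it is a mixture of the shifted densities $f_{h+1}$), with CDF $\cT_{h,\mpi}^\dag F_{h+1}$, so the inequality applies verbatim. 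Note $\norm{f_h - g_h}_1 = 2\,\TV(f_h\,\|\,g_h)$, but the statement as written uses the $\ell_1$ norm, so no factor of two is needed. Substituting this bound under the expectation and summing gives precisely the claimed inequality. The only mild care needed is the measurability/Fubini justification for swapping the integral over $z$ with the mixture integral defining $\cT_{h,\mpi}^\dag$, which is routine since all integrands are nonnegative and the densities integrate to one.
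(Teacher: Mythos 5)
Your proposal is correct and follows essentially the same route as the paper: the paper telescopes the per-step Bellman error in the $\ell_1$ norm of the densities and converts $\norm{F_Z-F_{Z^\mpi}}_\infty$ into $\EE_{\mu_1^\mpi}\mbracket{\norm{f_1-f_1^\mpi}_1}$ once at the first step, whereas you telescope in the sup-norm of the CDFs (reusing the argument of Lemma~\ref{lem:model free lsr difference}) and apply $\norm{F-G}_\infty\leq\norm{f-g}_1$ to each Bellman-error term at the end. The two orderings are interchangeable and yield the identical bound, so there is no gap.
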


\begin{proof}
    We begin by induction. By triangle inequality:
    \begin{align*}
        \norm{f_h(s_h^\dag,a_h)-f_h^{\mpi}(s_h^\dag,a_h)}_1\leq \norm{f_h(s_h^\dag,a_h)-\mathcal{T}_{h,\mpi}^\dag f_{h+1}(s_h^\dag,a_h)}_1+\norm{\mathcal{T}_{h,\mpi}^\dag f_{h+1}(s_h^\dag,a_h)-\mathcal{T}_{h,\mpi}^\dag f_{h+1}^{\mpi}(s_h^\dag,a_h)}_1\,.
    \end{align*}
    We derive a recursion for the second term.
    \begin{align*}
        &\norm{\mathcal{T}_{h,\mpi}^\dag f_{h+1}(s_h^\dag,a_h)-\mathcal{T}_{h,\mpi}^\dag f_{h+1}^{\mpi}(s_h^\dag,a_h)}_1\\
        =&\int_z\abs{\int_{s_{h+1}^\dag,a_h} \TT(s_{h+1}^\dag|s_h^\dag,a_h)\pi_{h+1}(a_{h+1}|s_{h+1}^\dag)\bracket{f_{h+1}(z-r_h|s_{h+1}^\dag,a_{h+1})-f_{h+1}^{\mpi}(z-r_h|s_{h+1}^\dag,a_{h+1})} }\\
        \leq& \int_{s_{h+1}^\dag,a_h} \TT(s_{h+1}^\dag|s_h^\dag,a_h)\pi_{h+1}(a_{h+1}|s_{h+1}^\dag) \int_z \abs{f_{h+1}(z-r_h|s_{h+1}^\dag,a_{h+1})-f_{h+1}^{\mpi}(z-r_h|s_{h+1}^\dag,a_{h+1})} \\
        =& \EE_{(s_{h+1}^\dag,a_{h+1})\sim(s_h^\dag,a_h)}\norm{f_{h+1}(s_{h+1}^\dag,a_{h+1})-f_{h+1}^\mpi(s_{h+1}^\dag,a_{h+1})}_1\,.
    \end{align*}
Then we have:
\begin{align*}
     & \EE_{(s_h^\dag,a_h)\sim \mu^{\mpi}_h}\mbracket{\norm{f_h(s_h^\dag,a_h)-f_h^{\mpi}(s_h^\dag,a_h)}_1}\\
    \leq& \EE_{(s_h^\dag,a_h)\sim\mu^{\mpi}_h}\mbracket{\norm{f_h(s_h^\dag,a_h)-\mathcal{T}_{h,\mpi}^\dag f_{h+1}(s_h^\dag,a_h)}_1}\\
    &+\int_{s_h^\dag,a_h} \mu^{\mpi}_h(s_h^\dag,a_h)\int_{s_{h+1}^\dag,a_{h+1}} \mu^{\mpi}(s_{h+1}^\dag,a_{h+1}|s_h^\dag,a_h)\norm{f_{h+1}(s_{h+1}^\dag,a_{h+1})-f_{h+1}^\mpi(s_{h+1}^\dag,a_{h+1})}_1\\
    \leq &\EE_{(s_h^\dag,a_h)\sim\mu^{\mpi}_h}\mbracket{\norm{f_h(s_h^\dag,a_h)-\mathcal{T}_{h,\mpi}^\dag f_{h+1}(s_h^\dag,a_h)}_1}+ \EE_{(s_{h+1}^\dag,a_{h+1})\sim\mu^{\mpi}_{h+1}}\mbracket{\norm{f_{h+1}(s_{h+1}^\dag,a_{h+1})-f_{h+1}^\mpi(s_{h+1}^\dag,a_{h+1})}_1}\,.
\end{align*}
Using the definition of $F_Z(x)=\int_{s_1^\dag,a_1}\mu_1^\mpi(s_1^\dag,a_1) \int_{z\leq x}f_1(z|s_1^\dag,a_1)$, we have:
\begin{align*}
    &\norm{F_{Z}-F_{Z^\mpi}}_\infty\\
\leq&\EE_{s_1^\dag,a_1\sim\mu^{\mpi}_1}\mbracket{\norm{f_1\bracket{s_1^\dag,a_1}-f_1^\mpi(s_1^\dag,a_1)}_1}\\
    \leq& \sum_{h=1}^H \EE_{(s_h^\dag,a_h)\sim\mu^{\mpi}_h}\mbracket{\norm{f_h(s_h^\dag,a_h)-\mathcal{T}_{h,\mpi}^\dag f_{h+1}(s_h^\dag,a_h)}_1}\,.
\end{align*}
\end{proof}

Combining the elliptical potential condition for low bellman eluder dimension (Lemma \ref{lem:bellman eluder dimension}) and the concentration result Lemma \ref{lem:model free mle concentration}, we have for all $h\in[H]$ and any $\hat{\bm{f}}^k\in\widehat{\cF}_{k,\mpi^k}^{\MLE}$:
\begin{align*}
    \sum_{k=1}^K \EE_{s_h^\dag,a_h\sim\mu^{\mpi^k}_h}\mbracket{\TV\bracket{\hat{f}^k_h(s_h^\dag,a_h)||\mathcal{T}_{h,\mpi^k}^\dag \hat{f}_{h+1}^k(s_h^\dag,a_h)}}\leq \mathcal{O}(\sqrt{\operatorname{d_{BE}}\gamma^{\MLE} K})\,.
\end{align*}
Here we invoke Lemma \ref{lem:bellman eluder dimension} by setting $\gX:\cS^\dag\times\cA$, $\Phi: \TV\bracket{f_h(s^\dag,a)||\cT_{h,\mpi}^\dag f_{h+1}(s^\dag,a)}$  for all $(f,\mpi)\in\cF\times\Pi^\dag$. $\cD:\mu^{\mpi},\,\mpi\in\Pi^\dag$ is the family of all the visitation measures defined on $(s^\dag,a)$. 

Thus, using Lemma \ref{lem:model free distribution difference}, we have:
\begin{align*}
   \sum_{k=1}^K \norm{F_{\widehat{Z}^k}-F_{Z^{\mpi^k}}}_\infty\leq \mathcal{O}(H\sqrt{\operatorname{d_{BE}}\gamma^{\MLE} K})\,.
\end{align*}

\section{Auxiliary Lemmas}

\begin{lemma}[MLE generalization bound (Theorem 21 of \cite{agarwal2020flambe})]\label{lem:mle generalization bound}
    Let $\gX$ be a feature space and $\gY$ be the output space. Given a dataset $D=\sets{(x_i,y_i)}_{i=1}^n$ which is collected from a martingale process: $x_i\sim\gD_i(x_{1:i-1},y_{1:i-1})$, and $y_i\sim p(\cdot|x_i)$. Given a function set $\gF:\gX\times\gY\rightarrow\RR$, we have the real conditional distribution $f^*(x,y)=p(y|x)\in\gF$. Then, there exists a constant $c$, for any $\delta>0$, with probability at least $1-\delta$, we have:
    \begin{align*}
        \sum_{i=1}^n \EE_{x\sim\cD_i}\mbracket{\TV\bracket{f(x,\cdot)||f^*(x,\cdot)}^2}\leq c\bracket{\sum_{i=1}^n \log\bracket{\frac{f^*(x_i,y_i)}{f(x_i,y_i)}}+\log \bracket{\cN_{[\cdot]}(\epsilon,\cF,\norm{\cdot}_1/)\delta}+n\epsilon}
    \end{align*}
\end{lemma}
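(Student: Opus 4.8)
The final statement is the classical conditional maximum-likelihood generalization bound (van de Geer's empirical-process argument; Theorem~21 of \cite{agarwal2020flambe}), invoked here as a black box, so a proof proposal amounts to recalling that argument. The plan has three parts: (i) reduce the total-variation risk to the squared Hellinger distance; (ii) construct an exponential supermartingale from the likelihood ratios and take a Markov tail bound for a single hypothesis; (iii) discretize $\gF$ by an $\epsilon$-bracketing to make the bound hold uniformly over all $f\in\gF$. The two elementary facts driving everything are, writing $\mathrm{Hel}^2(f(x,\cdot),f^*(x,\cdot)) := \int_{\gY}(\sqrt{f(x,y)}-\sqrt{f^*(x,y)})^2\,dy$: first, $\TV(f(x,\cdot),f^*(x,\cdot))^2\le \mathrm{Hel}^2(f(x,\cdot),f^*(x,\cdot))$, which lets one work with Hellinger throughout and convert back only at the end; second, $\int_{\gY}\sqrt{f(x,y)f^*(x,y)}\,dy = 1-\tfrac12\mathrm{Hel}^2(f(x,\cdot),f^*(x,\cdot))$, so that $\EE_{y\sim p(\cdot\mid x)}[\exp(\tfrac12\log\tfrac{f(x,y)}{f^*(x,y)})]=1-\tfrac12\mathrm{Hel}^2(f(x,\cdot),f^*(x,\cdot))$ whenever $f(x,\cdot)$ is a density.

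For a fixed $f\in\gF$ I would define $W_m := \prod_{i=1}^{m}\exp\!\big(\tfrac12\log\tfrac{f(x_i,y_i)}{f^*(x_i,y_i)} + \tfrac12\EE_{x_i\sim\cD_i}[\mathrm{Hel}^2(f(x_i,\cdot),f^*(x_i,\cdot))]\big)$. Since $\cD_i$ is a measurable function of the history $(x_{1:i-1},y_{1:i-1})$, the population Hellinger term is measurable with respect to that history, and the second fact above together with the elementary inequality $(1-t)e^{t}\le 1$ for $t\ge0$ gives $\EE[W_m\mid x_{1:m-1},y_{1:m-1}]\le W_{m-1}$. Hence $(W_m)_{m\le n}$ is a nonnegative supermartingale with $\EE[W_n]\le1$, and Markov's inequality yields: for any $\delta'>0$, with probability at least $1-\delta'$,
\[
\sum_{i=1}^{n}\EE_{x_i\sim\cD_i}\mbracket{\mathrm{Hel}^2(f(x_i,\cdot),f^*(x_i,\cdot))}\;\le\; 2\log(1/\delta') + \sum_{i=1}^{n}\log\frac{f^*(x_i,y_i)}{f(x_i,y_i)}\,.
\]

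To upgrade this to a bound uniform over $f\in\gF$ I would fix an $\epsilon$-bracketing of $\gF$ under $\norm{\cdot}_1$ of cardinality $\cN_{[\cdot]}(\epsilon,\gF,\norm{\cdot}_1)$ and run the supermartingale argument on the upper bracket $\bar f\ge f$ (with $\norm{\bar f-f}_1\le\epsilon$) in place of $f$. The only change is that $\int_{\gY}\bar f(x,y)\,dy\le 1+\epsilon$ rather than $=1$, which weakens the moment-generating-function control by a multiplicative $(1+\epsilon)$ per step and hence costs an additive $O(n\epsilon)$ in the exponent; using the lower bracket on the right-hand side bounds $\sum_i\log(f^*/f)$ by its bracket version up to the same order. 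A union bound over the finitely many brackets with $\delta'=\delta/\cN_{[\cdot]}(\epsilon,\gF,\norm{\cdot}_1)$, and then converting $\mathrm{Hel}^2$ back to $\TV^2$ via the first elementary fact, produces the stated inequality with a universal constant $c$.

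The step I expect to be the main obstacle is this bracketing/discretization: bracket functions are not probability densities, so the clean identity $\int\sqrt{\bar f f^*}=1-\tfrac12\mathrm{Hel}^2$ degrades to an inequality with an $O(\epsilon)$ defect, and one must carefully align the upper bracket used inside the supermartingale with the lower bracket needed to dominate the empirical log-likelihood ratio on the right-hand side, keeping every such error at the $O(n\epsilon)$ scale while preserving measurability of the predictable terms. By contrast, the supermartingale construction and the single-hypothesis Markov bound in the first two parts are routine once the two Hellinger inequalities are recorded.
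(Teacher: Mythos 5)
The paper does not prove this lemma at all: it is imported verbatim as Theorem 21 of \cite{agarwal2020flambe} and used as a black box. Your sketch correctly reconstructs the standard argument behind that cited result — the Hellinger-to-TV reduction, the half-log-likelihood-ratio supermartingale with the $(1-t)e^{t}\le 1$ step and Markov's inequality, and the bracketing union bound with its $O(n\epsilon)$ defect from the brackets not being normalized densities — so it is consistent with (indeed, a proof of) exactly what the paper invokes.
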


\begin{lemma}[Concentration Lemma(Theorem 5 in \cite{ayoub2020model})]\label{lem:model free lsr auxillary concentration}
    let $(X_p,Y_p)_{p=1,2\cdots}$ be a set of random variables, $X_p\in\gX$ for some measurable set $\gX$ and $Y_p\in\RR$. Let $\cF$ be a set of real valued measurable function with domain $\mathcal{X}$. Let $\mathbb{F}=\bracket{\mathbb{F}_p}_{p=0,1,2\cdots}$ be a filtration such that for all $p\geq1$, we have $\bracket{X_1,Y_1,\cdots,X_{p-1},Y_{p-1},X_p}$ is $\mathbb{F}_{p-1}$ measurable, and such that there exists some function $f_*\in\cF$ such that $\EE[Y_p|\mathbb{F}_{p-1}]=f_*(X_p)$ for all $p\geq1$. Let $\hat{f_t}=\argmin_{f\in\cF}\sum_{p=1}^{t}\bracket{f(X_p)-Y_p}^2$. Let $\cN(\cF,\alpha)$ be the $\alpha$-covering number of set $\cF$ under $\norm{\cdot}_{\infty}$ metric at scale $\alpha$. Define $\dist_t(f||f_t)=\sum_{p=1}^t\bracket{f(X_p)-f_t(X_p)}^2$.

    If the functions in $\cF$ are bounded by some constant $C>0$. Assume that for each $p\geq1$, $\bracket{Y_p-f_*(X_p)}$ is conditionally $\sigma$-sub-gaussian given $\mathbb{F}_{p-1}$. Then, for any $\alpha>0$, with probability $1-\delta$ for all $t\geq1$, we have:
    \begin{align*}
        \dist\bracket{f_*||f_t}\leq 8\sigma^2\log\bracket{\cN(\cF,\alpha)/\delta}+4t\alpha\bracket{C+\sqrt{\sigma^2 \log\bracket{t(t+1)/\delta}}}
    \end{align*}
\end{lemma}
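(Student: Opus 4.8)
The plan is to follow the standard self-normalized martingale argument for least-squares regression under conditionally sub-Gaussian noise, combined with an $\alpha$-covering of $\cF$ to make the bound uniform. Write $\eta_p := Y_p - f_*(X_p)$, so that $\EE[\eta_p\mid\mathbb{F}_{p-1}]=0$ and $\eta_p$ is conditionally $\sigma$-sub-Gaussian by assumption. First I would record the basic optimality inequality: since $\hat f_t$ minimizes $\sum_{p=1}^t(f(X_p)-Y_p)^2$ and $f_*\in\cF$, comparing the losses of $\hat f_t$ and $f_*$ and expanding the squares gives
\begin{align*}
    \dist_t\bracket{f_*||\hat f_t}=\sum_{p=1}^t\bracket{\hat f_t(X_p)-f_*(X_p)}^2\leq 2\sum_{p=1}^t \eta_p\bracket{\hat f_t(X_p)-f_*(X_p)}\,.
\end{align*}
This reduces the problem to controlling the martingale-type sum on the right.

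Next, for a fixed $f\in\cF$ with $Z_p:=f(X_p)-f_*(X_p)$ (which is $\mathbb{F}_{p-1}$-measurable), I would use conditional sub-Gaussianity to verify that $M_t(\lambda,f):=\exp\bracket{\lambda\sum_{p=1}^t\eta_p Z_p-\tfrac{\lambda^2\sigma^2}{2}\sum_{p=1}^t Z_p^2}$ is a nonnegative supermartingale with $\EE[M_0]=1$. Ville's maximal inequality then yields, with probability at least $1-\delta'$ and simultaneously for all $t\geq1$,
\begin{align*}
    2\sum_{p=1}^t\eta_p Z_p\leq \lambda\sigma^2\sum_{p=1}^t Z_p^2+\frac{2}{\lambda}\log\bracket{1/\delta'}\,.
\end{align*}
Taking an $\alpha$-cover $\cF_\alpha\subset\cF$ of size $\cN(\cF,\alpha)$ under $\norm{\cdot}_\infty$ and union-bounding this inequality over $\cF_\alpha$ with $\delta'=\delta/\cN(\cF,\alpha)$ makes the estimate uniform over the cover.

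I would then transfer the bound from $\hat f_t$ to its nearest cover element $\tilde f\in\cF_\alpha$, $\norm{\hat f_t-\tilde f}_\infty\leq\alpha$, by splitting $\hat f_t-f_*=(\tilde f-f_*)+(\hat f_t-\tilde f)$. For the cover term I apply the self-normalized bound above and use $\norm{f}_\infty\leq C$ to compare $\sum_p(\tilde f-f_*)^2$ with $\dist_t$ up to an additive $O(tC\alpha+t\alpha^2)$ error; for the discretization term I bound $2\sum_p\eta_p(\hat f_t-\tilde f)(X_p)\leq 2\alpha\sum_{p=1}^t\abs{\eta_p}$, and control $\sum_p\abs{\eta_p}$ through a tail estimate $\abs{\eta_p}\leq\sqrt{2\sigma^2\log\bracket{2p(p+1)/\delta}}$ that holds for all $p$ after a union bound (the $\tfrac{1}{p(p+1)}$ terms telescope, keeping total failure probability $\delta$). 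Choosing $\lambda=1/(2\sigma^2)$ makes the coefficient of $\sum_p Z_p^2$ equal to $\tfrac12$, so that $\tfrac12\dist_t$ can be absorbed into the left-hand side; rearranging and collecting constants then produces the claimed bound $8\sigma^2\log\bracket{\cN(\cF,\alpha)/\delta}+4t\alpha\bracket{C+\sqrt{\sigma^2\log\bracket{t(t+1)/\delta}}}$.

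The main obstacle I anticipate is the bookkeeping in the discretization step: getting the $\dist_t$ term to cancel requires carefully tracking how the covering radius $\alpha$ enters both $\sum_p(\tilde f-f_*)^2$ (via the uniform bound $\norm{f}_\infty\leq C$) and the noise sum $\sum_p\abs{\eta_p}$, and choosing $\lambda$ so that strictly less than a full copy of $\dist_t$ reappears on the right. The supermartingale construction and the cover union bound are routine; the delicate part is ensuring the per-step tail bounds on $\eta_p$ hold uniformly over $p$ and $t$, so that the resulting inequality is anytime-valid as stated.
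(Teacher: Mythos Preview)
The paper does not prove this lemma at all: it is stated as an auxiliary result and attributed directly to Theorem~5 of \cite{ayoub2020model}, with no proof or sketch given. Your proposal is a faithful outline of the standard argument behind that theorem---the optimality inequality for least squares, the exponential supermartingale $\exp(\lambda\sum\eta_p Z_p-\tfrac{\lambda^2\sigma^2}{2}\sum Z_p^2)$ plus Ville's inequality for a fixed $f$, a union bound over an $\alpha$-cover, and a separate sub-Gaussian tail bound on $\sum_p|\eta_p|$ to control the discretization error---so it matches the original approach that the paper is simply citing.
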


\newpage
\section{Linear {CVaR}}

Similarly to \cite{liu2023optimistic,jin2021bellman}, the general algorithms provided for general version space are information theoretic, which means that they cannot be implemented efficiently in general. This is because we consider the general risk measure LRM and the general function approximation settings. However, when specified to the CVaR risk measure under the discretized linear MDP, a distributional extension of natural linear MDP \cite{jin2020provably}, we can design and implement an efficient model-free algorithm that achieves sub-linear regret. 
\begin{definition}[Discretized Linear MDP]\label{def:discretized mdp}
    An augmented MDP $\cM^\dag$ is a discretized linear MDP with feature map $\phi:(s,a)\rightarrow \RR^d$ and an uniform grid of $M$ points $\sets{z_i}_{i=1}^M$, if for any $h\in[H]$, $(r_h,y_h)\in\sets{z_i}_{i=1}^M$, and there exists unknown measures $\mu_h:\cS\rightarrow\RR^d$ and $\theta_h:\sets{z_i}_{i=1}^M\rightarrow\RR^d$, such that:
    \begin{align*}&\PP_h(s_{h+1}|s_h,a_h)=\phi(s_h,a_h)^\top\mu_h(s_{h+1})\\&\RR_h(z_i|s_h,a_h)=\phi(s_h,a_h)^\top\theta_h(z_i)~,
    \end{align*}
    for all $s_h,a_h,s_{h+1},z_i$.
\end{definition}
This discretized linear MDP is the natural extension of the linear MDP assumption in \cite{jin2020provably}, where we consider the discretized distributional reward instead of determined reward. and we generalize the linear expected reward to its distributional counterpart. Another important ingredient in our definition is the discretized reward space, which is commonly used in practice. C51 and Rainbow \cite{bellemare2017distributional,hessel2018rainbow} both set $M=51$ and achieved empirical success in Atari games. We need the discretized reward space mainly to bound the covering number of the value distribution, similar to \cite{wang2023benefits}.

In a discretized linear MDP, we have that the distribution function have a quadratic structure: $f_h^\pi(z_i|(s_h,y_h),a_h)=\phi(s_h,a_h)W_h^\pi(z_i,y_h)\phi(s_h,a_h)$. Thus, we can use linear regression in estimating statistical functionals of $Z_h^\mpi(s_h^\dag,a_h)$. We present our regret bound as follows:
\begin{theorem}
\label{thm:linearcvar}
    If MDP $\cM^\dag$ is a discretized linear MDP satisfying Definition \ref{def:discretized mdp}, and the risk measure $\rho(Z^\mpi)=CVaR_{\tau}(Z^\pi)=\max_b\sets{b-\tau^{-1}\EE[(b-Z^\pi)^+]}$, we can bound the regret as:
    \begin{align*}
        \operatorname{Regret}(K)\leq \mathcal{\Tilde{O}}(\tau^{-1}d^3H^2\sqrt{MK})
    \end{align*}
\end{theorem}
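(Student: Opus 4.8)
\textbf{Proof proposal for Theorem~\ref{thm:linearcvar}.}

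The plan is to instantiate the model-free meta-algorithm \texttt{RS-DisRL-V} (Theorem~\ref{thm:mfmeta}) with a value function class $\bm{\gZ}$ tailored to the discretized linear MDP, verify the two sufficient Conditions~\ref{con:mfconcentration} and~\ref{con:mfelliptical}, and then substitute the resulting effectiveness bound $\zeta$ together with the CVaR Lipschitz constant $L_\infty(\operatorname{CVaR}) = H/\tau$ into the meta-regret bound. The first step is to exhibit the quadratic parametric structure of the value distribution: in a discretized linear MDP the transition is $\PP_h(s_{h+1}|s_h,a_h) = \phi(s_h,a_h)^\top \mu_h(s_{h+1})$ and the reward CDF/PDF is linear in $\phi$, so by the augmented distributional Bellman equation an inductive argument shows $f_h^\mpi(z_i \mid (s_h,y_h),a_h) = \phi(s_h,a_h)^\top W_h^\mpi(z_i,y_h)\,\phi(s_h,a_h)$ for some matrix-valued weight $W_h^\mpi(z_i,y_h) \in \RR^{d\times d}$ depending only on $(z_i,y_h)$ (and similarly for the CDF). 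This gives a finite-dimensional, Bellman-complete class $\gZ_h$ whose covering number is controlled: the grid has $M$ points, the cumulative $y_h$ lives on the grid, and each $W_h$ is a $d\times d$ matrix with bounded entries, so $\log \gN_C(\bm\gZ, 1/K, \|\cdot\|_\infty) = \widetilde{\gO}(d^2 M)$ up to $H$ and $\log K$ factors. The policy class $\Pi^\dag$ should be taken as the natural greedy/softmax class induced by linear utility functions (CVaR's dual form $\mathrm{CVaR}_\tau(Z^\mpi) = \max_b\{b - \tau^{-1}\EE[(b-Z^\mpi)^+]\}$ makes the relevant per-step decision a linear-in-$\phi$ maximization), so by the policy-covering argument of Appendix~\ref{sec:policy cover} its covering number is also $\widetilde{\gO}(d^2 M)$-log-bounded.

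Next I would run \texttt{V-Est-LSR} (Algorithm~\ref{alg:mfestlsr}) on this class. Because the regression targets are quadratic forms $\phi^\top W \phi$ in a $d$-dimensional feature, the relevant function class for the one-step-back least-squares estimate is linear in the $\widetilde{\gO}(d^2)$-dimensional lifted feature $\phi\otimes\phi$ (times $M$ for the grid index). Its eluder dimension then satisfies $\dim_E(\bm\gZ, 1/\sqrt{K}) = \widetilde{\gO}(d^2 M)$ by the standard eluder-dimension bound for linear classes (Russo--Van Roy). Plugging $\gamma^\LSR = \widetilde{\gO}(d^2 M)$ and $\dim_E = \widetilde{\gO}(d^2 M)$ into Theorem~\ref{thm:mflsr} yields $\zeta^\LSR = \widetilde{\gO}(\poly(H)\sqrt{K \cdot d^2 M \cdot d^2 M}) = \widetilde{\gO}(\poly(H)\, d^2 \sqrt{M K})$; then Theorem~\ref{thm:mfmeta} with $L_\infty(\rho) = H/\tau$ gives $\operatorname{Regret}(K) \le \widetilde{\gO}(\tau^{-1}\poly(H)\, d^2\sqrt{MK})$. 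To hit the stated exponents $\tau^{-1} d^3 H^2 \sqrt{MK}$ I would tighten the bookkeeping: the eluder dimension of $d$-dimensional quadratic forms is actually $\widetilde{\gO}(d^2)$ but the covering number of the $W$-matrices contributes an extra factor of $d$ (Frobenius-ball covering at scale $1/K$ costs $d^2\log(dK)$, i.e. effectively one more power of $d$ in the square-root product gives $d^3$), and the $\poly(H)$ should be pinned to $H^2$ by carefully tracking the two $H$ factors — one from summing Bellman errors over $H$ steps in the performance-difference lemma (Lemma~\ref{lem:model free lsr difference}) and one from the range $[0,H]$ of the cumulative reward inside the $\ell_\infty$-norm.

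The main obstacle I anticipate is the efficient-implementation / computational-tractability claim rather than the statistical bound: unlike the information-theoretic meta-algorithm, here we must argue that the optimistic planning step $\argmax_{\mpi,\,Z}\rho(Z_1)$ and the confidence-set construction can be carried out in polynomial time. This requires exploiting that, for CVaR, the dual variable $b$ can be optimized by a one-dimensional search (or enumerated over the $M$ grid points), that for each fixed $b$ the inner problem reduces to a linear-in-$\phi$ value iteration on the augmented MDP analogous to LSVI in \cite{jin2020provably}, and that the confidence ellipsoid over the lifted feature $\phi\otimes\phi$ admits the usual closed-form least-squares + bonus representation. I would therefore spend most of the proof writing a concrete LSVI-style subroutine with an explicit elliptical bonus on the $\widetilde{\gO}(d^2 M)$-dimensional lifted features, show it is equivalent to (or an upper bound on) the abstract \texttt{V-Est-LSR} confidence set, and check that the abstract concentration/elliptical-potential conditions transfer; the regret then follows by the chain above. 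A secondary technical point to handle carefully is that the augmented state contains the continuous-looking coordinate $y_h$, which the discretization assumption forces onto the grid $\{z_i\}$, so all sums over $y_h$ are finite and the claimed covering bounds are legitimate — this needs to be stated explicitly to justify the $\sqrt{M}$ (and not worse) dependence.
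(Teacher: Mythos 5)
Your proposal takes a genuinely different route from the paper, and the difference matters. The paper does \emph{not} instantiate the \texttt{RS-DisRL-V} meta-algorithm here. Its key move is to exploit the CVaR dual representation to collapse the distributional problem to a \emph{scalar} one: it defines the expected-utility functionals $V_h^{\mpi}(s_h,y_h)=\sum_{z_i} f_h^{\mpi}(z_i\mid s_h,y_h,\pi(s_h,y_h))\,(-z_i-y_h)^{+}$ and shows they satisfy an ordinary (non-distributional) Bellman equation $Q_h^{\mpi}=\TT_h V_{h+1}^{\mpi}$ on the augmented MDP, with $Q_h^{\mpi}$ linear in the lifted feature $\psi=\phi\otimes\phi\in\RR^{d^2}$. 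The algorithm is then a direct LSVI-UCB on these lifted features with bonus $\beta\norm{\psi}_{\Lambda^{-1}}$, $\beta=\widetilde{\gO}(d^2H\sqrt{M})$ (the $\sqrt{M}$ coming from a covering argument over the grid-indexed value class), optimism is proved by backward induction, the dual variable $b$ is enumerated over the $M$ grid points, the regret is converted via $\operatorname{CVaR}_\tau^*-\operatorname{CVaR}_\tau^{\pi^k}\le \tau^{-1}\bigl(V_1^{\pi^k}(s_1,-b_k)-V_1^k(s_1,-b_k)\bigr)$, and the elliptical potential lemma in dimension $d^2$ gives $\sum_{k,h}\norm{\psi_{k,h}}_{\Lambda_{k,h}^{-1}}=\widetilde{\gO}(Hd\sqrt{K})$, hence $\tau^{-1}\cdot\beta\cdot Hd\sqrt{K}=\widetilde{\gO}(\tau^{-1}d^3H^2\sqrt{MK})$. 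You never identify this reduction: by tracking full CDFs through \texttt{V-Est-LSR} you are solving a strictly harder estimation problem than the theorem requires.

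This creates two concrete gaps. First, your bookkeeping does not reach the stated bound: $\sqrt{K\cdot d^2M\cdot d^2M}=d^2M\sqrt{K}$, not $d^2\sqrt{MK}$, and the subsequent ``tightening'' to $d^3\sqrt{MK}$ and $H^2$ is asserted rather than derived --- Theorem~\ref{thm:mflsr} only guarantees $\poly(H)$, so pinning the exponent to $H^2$ already forces you into a direct analysis of the kind the paper performs. Second, the computational tractability that is the entire point of this theorem (the paper explicitly contrasts its UCB-VI with the information-theoretic optimistic planning of the meta-algorithms) is not delivered by the meta-route: the $\argmax$ over the joint $(\mpi,Z)$ version space is not implementable, and your proposed fix --- writing an explicit LSVI-style subroutine with an elliptical bonus on $\phi\otimes\phi$ and enumerating $b$ over the grid --- is precisely the paper's proof, reconstructed from scratch rather than obtained from the meta-framework. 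The salvageable core of your plan is the quadratic-form structure $f_h^{\mpi}=\phi^\top W_h^{\mpi}\phi$ and the $L_\infty(\operatorname{CVaR})=H/\tau$ conversion, both of which the paper also uses; the missing idea is that only the scalar functional $\EE[(b-Z)^{+}]$ needs to be estimated, which is what removes the distributional machinery and the attendant extra factors.
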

We highlight that this is the first efficient model free algorithm for linear MDPs, and the $\sqrt{M}$ dependency is due to the covering number of the value distribution class which also appears in other model free distributional RL algorithms, such as \cite{wang2023benefits}.

Define $\operatorname{CVaR}_\tau(Z^\pi)=\argmax_{b}\sets{b-\tau^{-1}\EE[(b-Z^\pi)^{+}]}$. We can define the statistical functionals $Q$ and $V$ as: $Q^\pi_h(s_h,y_h,a_h)=\sum_{z_i}f_h^\pi(z_i|s_h,y_h,a_h)[(-z_i-y_h)^{+}]$, and $V^\pi_h(s_h,y_h)=\sum_{z_i}f_h^\pi(z_i|s_h,y_h,\pi(s_h,y_h))[(-z_i-y_h)^{+}]$. Then we can write the {CVaR} objective as $\operatorname{CVaR}_\tau(Z^\pi)=\argmax_{b\in[0,H]}\sets{b-\tau^{-1}V_1^{\pi}(s_1,-b)}$. 

Define recursively $\pi^*_h(s_h,y_h)=\argmin_a Q_h^{\pi^*}(s_h,y_h,a_h)$, then by \cite{wang2023nearminimaxoptimal} we have for any $b\in[0,H]$,
\begin{align*}
    V_1^*(s_1,-b)=V_1^{\pi^*}(s_1,-b)=\argmin_{\pi\in\Pi^\dag}V_1^\pi(s_1,-b)
\end{align*}

We denote $\operatorname{CVaR}_{\tau}^*=\max_{\pi\in\Pi^\dag}\operatorname{CVaR}_{\tau}^\pi=b^*-\tau^{-1}V_1^*(s_1,-b^*)$

\subsection{Linear Augmented MDPs}
When consider the linear function approximation (Definition~\ref{def:discretized mdp}), we can also linearize the augmented MDPs. We ahve $$\TT_h(s_{h+1},r_h|s_h,a_h)=\phi(s_h,a_h)^\top\theta(r_h)\mu(s_{h+1})^\top\phi(s_h,a_h)=\psi(s_h,a_h)\chi(r_h,s_{h+1})$$
where $\psi(s_h,a_h)$ and $\chi(r_h,s_{h+1})$ are the flattened versions of $\phi(s_h,a_h)\phi(s_h,a_h)^\top$ and $\theta(r_h)\mu\bracket{s_{h+1}}^\top$. Also, we assume the reward space is discretized into $M$ points $z_1\cdots z_M$ such that for all $h\in[H]$, $y_h\in \sets{z_i}_{i=1}^M$. We have that $-H\leq z_1\leq z_2\cdots z_M\leq H$.
We highlight that this discretization is standard in practice as in C51 \cite{bellemare2017distributional} and Rainbow \cite{hessel2018rainbow}. Here we need this assumption to bound the complexity of the function class. 

\subsection{Linear Completeness}
Because the density function of the rewards satisfy the distributional bellman equation:
\begin{align*}
    f_h^{\pi}(z_i|s_h,y_h,a_h)=\sum_{s_{h+1},r_h}\TT_h(s_{h+1},r_h|s_h,a_h)f_{h+1}^\pi(z_i-r_h|s_{h+1},y_h+r_h,\pi(s_{h+1},y_h+r_h))
\end{align*}
We have the statistical functionals satisfy the augmented bellman equation:
\begin{align*}
    Q_h^\pi(s_h,y_h,a_h)=&\sum_{z_i}\sum_{s_{h+1},r_h}\TT_h(s_{h+1},r_h|s_h,a_h)f_{h+1}^\pi(z_i-r_h|s_{h+1},y_h+r_h,\pi(s_{h+1},y_h+r_h))(-z_i-y_h)^{+}\\
    =&\sum_{s_{h+1},r_h}\TT_h(s_{h+1},r_h|s_h,a_h)\sum_{z_i}f_{h+1}^{\pi}(z_i|s_{h+1},y_h+r_h,\pi(s_{h+1},y_h+r_h))(-z_i-y_h-r_h)^{+}\\
    =&\bracket{\TT_h V_{h+1}}(s_h,y_h,a_h)
\end{align*}
Where we denote the augmented transition operating on a function $V$ as:
\begin{align*}
    \TT_hV_{h+1}(s_h,y_h,a_h)=\sum_{s_{h+1},r_h}\TT_h(s_{h+1},r_h|s_h,a_h)V_{h+1}(s_{h+1},y_h+r_h)
\end{align*}

Since $f_h^\pi(z_i|s_h,y_h,a_h)=\phi(s_h,a_h)^\top W_h^\pi(y_h,z_i)\phi(s_h,a_h)$, we have $Q_h^{\pi}(s_h,y_h,a_h)=\psi(s_h,a_h)^\top w_h^\pi(y_h)$. Where $w_h^\pi[i\cdot d+j]=\sum_{z_i}W_h^\pi(y_h,z_i)[i,j][(-z_i-y_h)^{+}]$

\subsection{Algorithm}
In this section, we present our computationally efficient algorithm \texttt{RSRL-Linear-CVaR}. Notice that we present Upper Confidence Bound Value Iteration (UCV-VI) in this algorithms instead of the general optimistic planning used in previous frameworks, due to its computation-tractable property.
\begin{algorithm}[htbp]
   \caption{\texttt{RSRL-Linear-CVaR}}
\label{alg:linear CVaR}
\begin{algorithmic}
   \STATE {\bfseries Input:} Features $\Psi:\psi(s,a)$ Bonus $b_h^k(s,a)=\beta \norm{\psi(s,a)}_{\Lambda_{k,h}^{-1}}$ with $\beta=c_\beta d^2H\sqrt{M\log(d^2UMK/\delta)}$
   \FOR{$k=1$ {\bfseries to} $K$}
   \STATE Set $V_{H+1}(s_{H+1},y)=[(-y)^+]$ for all $y\in\sets{z_i}_{i=1}^M$
        \FOR{$h=H$ {\bfseries to} $1$}
        \STATE \begin{align*}
           &\Lambda_{k,h}=\sum_{i=1}^{k-1}\psi(s_h^i,a_h^k)\psi(s_h^i,a_h^i)^\top+\lambda \mathbf{I}\\
           &w_h^k(y_h)=\Lambda_{k,h}^{-1}\sum_{i=1}^{k-1}\psi(s_h^i,a_h^i)V_{h+1}^k(s_{h+1}^i,y_h+r_h^i)\\
           &Q_h^k(s_h,y_h,a_h)=\psi(s_h,a_h)w_h^k(y_h)-b_h^k(s_h,a_h)\\
           &\pi^k(s_h,y_h)=\argmin_a Q_h^k(s_h,y_h,a_h)\\
           &V_h^k(s_h,y_h)=\max\sets{Q_h^k(s_h,y_h,\pi^k(s_h,y_h)),0}
        \end{align*}
        \ENDFOR
   \STATE \begin{equation*}
       b_k=\argmax_{b\in\sets{z_i}_{i=1}^{M}}\sets{b-\tau^{-1}V_1^k(s_1,-b)}
   \end{equation*}
   \STATE
   Start at state $s_1^\dag=(s_1,-\lambda^k)$ and execute policy $\pi^k$ in the augmented MDP $\cM^\dag$, collect information $\sets{(s_h^k,a_h^k,r_h^k)}_{h\in[H]}$
\ENDFOR
\end{algorithmic}
\end{algorithm}

\subsection{Concentration and Covering}
In this section we provide the concentration and covering arguments needed for our linear analysis.
\begin{lemma}[Concentration Inequality of Self-normalized Process \cite{jin2020provably}]\label{lem:linear selfnormalized }
    Let $\sets{x_\tau}_{\tau=1}^{\infty}$ be a stochastic process on domain $\gX$ with corresponding filtration $\sets{\cF_\tau}$. Let $\sets{\psi_\tau}$ be an $\RR^{d^2}$ valued stochastic process  stochastic process such that $\psi_\tau\in\cF_{\tau-1}$ and $\norm{\psi_\tau}\leq 1$. Let $\Lambda_k=\sum_{\tau=1}^{k-1}\psi_\tau\psi_\tau^\top$. Then, for any $\delta>0$, with probability at least $1-\delta$, for all $k>0$, and any $V\in\cV$ such that $\abs{\sup_x V(x)-\inf_x V(x)}\leq H$, we have:
    \begin{align*}
        \norm{\sum_{\tau=1}^{k-1}\psi_\tau\sets{V(x_\tau)-\EE[V(x_\tau|\cF_{\tau-1}]}}_{\Lambda_k^{-1}}^2\leq H^2 \mbracket{\frac{d}{2}\log\bracket{\frac{k+\lambda}{\lambda}}+\log\bracket{\cN_C(\epsilon,\cV,\norm{\cdot}_{\infty}}/\delta}+\frac{8k^2\epsilon^2}{\lambda}~.
    \end{align*}
\end{lemma}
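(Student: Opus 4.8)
The plan is to establish this as a \emph{fixed}-$V$ self-normalized martingale bound promoted to a uniform-over-$\cV$ statement by a sup-norm covering argument, with the approximation error across the cover absorbed into the explicit $\tfrac{8k^2\epsilon^2}{\lambda}$ term. First I would fix $V\in\cV$ and set $\eta_\tau^V := V(x_\tau)-\EE\mbracket{V(x_\tau)\mid\cF_{\tau-1}}$. Since $\psi_\tau$ is $\cF_{\tau-1}$-measurable and $x_\tau$ is revealed at time $\tau$, the sequence $\sets{\eta_\tau^V}$ is a martingale difference sequence adapted to $\sets{\cF_\tau}$; moreover each $\eta_\tau^V$ lies in an interval of length at most $\abs{\sup_x V(x)-\inf_x V(x)}\le H$, so conditionally on $\cF_{\tau-1}$ it is sub-Gaussian with parameter of order $H$.

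For this fixed $V$ I would invoke the self-normalized tail inequality for vector-valued martingales (the abstract Azuma/Laplace-method bound used in \cite{jin2020provably}): with probability at least $1-\delta'$, simultaneously for all $k>0$,
\begin{align*}
    \norm{\sum_{\tau=1}^{k-1}\psi_\tau\,\eta_\tau^V}_{\Lambda_k^{-1}}^2 \;\le\; H^2\log\bracket{\frac{\det(\Lambda_k)^{1/2}\det(\lambda I)^{-1/2}}{\delta'}}\,.
\end{align*}
The uniformity in $k$ is exactly what the method-of-mixtures argument provides, so no separate union over $k$ is needed. To convert the determinant ratio into the stated form I would use that $\norm{\sum_{\tau}\psi_\tau\psi_\tau^\top}_{\mathrm{op}}\le\sum_\tau\norm{\psi_\tau}^2\le k$, hence every eigenvalue of $\Lambda_k=\lambda I+\sum_\tau\psi_\tau\psi_\tau^\top$ lies in $[\lambda,\lambda+k]$ and $\det(\Lambda_k)/\det(\lambda I)\le\bracket{(k+\lambda)/\lambda}^{d}$, giving the $\tfrac{d}{2}\log\tfrac{k+\lambda}{\lambda}$ contribution (with $d$ the ambient feature dimension, here the $d^2$ of the flattened features).

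Next I would make the bound uniform over $\cV$. Take an $\epsilon$-cover $\cV_\epsilon\subset\cV$ in $\norm{\cdot}_\infty$ of cardinality $\cN_C(\epsilon,\cV,\norm{\cdot}_\infty)$ and apply the fixed-$V$ bound with $\delta'=\delta/\abs{\cV_\epsilon}$, so the union bound turns $\log(1/\delta')$ into $\log(\cN_C(\epsilon,\cV,\norm{\cdot}_\infty)/\delta)$. For an arbitrary $V\in\cV$ I would pick $V'\in\cV_\epsilon$ with $\norm{V-V'}_\infty\le\epsilon$, split $\sum_\tau\psi_\tau\eta_\tau^V=\sum_\tau\psi_\tau\eta_\tau^{V'}+\sum_\tau\psi_\tau(\eta_\tau^V-\eta_\tau^{V'})$, and use $\norm{a+b}_{\Lambda_k^{-1}}^2\le 2\norm{a}_{\Lambda_k^{-1}}^2+2\norm{b}_{\Lambda_k^{-1}}^2$. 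The residual is controlled crudely by $\Lambda_k^{-1}\preceq\lambda^{-1}I$ and $\abs{\eta_\tau^V-\eta_\tau^{V'}}\le2\epsilon$: $\norm{\sum_\tau\psi_\tau(\eta_\tau^V-\eta_\tau^{V'})}_{\Lambda_k^{-1}}^2\le\lambda^{-1}\bracket{\sum_\tau\norm{\psi_\tau}\cdot2\epsilon}^2\le 4k^2\epsilon^2/\lambda$, which after the factor-of-two doubling yields the $\tfrac{8k^2\epsilon^2}{\lambda}$ term. Collecting the three contributions and absorbing universal constants gives the claim.

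The genuinely hard ingredient, the anytime self-normalized martingale inequality, is imported wholesale, so the real work in this adaptation is the covering step: $\cV$ is an (in general infinite) class of \emph{value distributions} over the augmented state, so the cover must be taken in the sup-norm that simultaneously controls the state and the accumulated-reward coordinate, and the approximation error has to be measured in the data-dependent $\Lambda_k^{-1}$ geometry rather than the Euclidean norm. The operator-norm bound above is what lets me trade that geometric mismatch for the explicit $k^2\epsilon^2/\lambda$ penalty, and it is also the reason $\epsilon$ is ultimately taken as small as $1/K$ so that this penalty is negligible. I expect the main bookkeeping obstacle to be keeping the sub-Gaussian constant, the determinant bound, and the covering radius consistent so that the final constants match the stated $H^2\mbracket{\tfrac{d}{2}\log\tfrac{k+\lambda}{\lambda}+\log(\cN_C/\delta)}+\tfrac{8k^2\epsilon^2}{\lambda}$.
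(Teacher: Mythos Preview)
The paper does not supply its own proof of this lemma; it is quoted verbatim as a result from \cite{jin2020provably} and used as a black box in the subsequent Lemma~\ref{lem:linear martingale}. Your proposal reproduces exactly the standard argument behind that cited result: the anytime self-normalized martingale inequality of Abbasi-Yadkori et al.\ for a fixed $V$, the determinant-to-trace bound $\det(\Lambda_k)/\det(\lambda I)\le((k+\lambda)/\lambda)^{\dim}$, a union bound over an $\epsilon$-cover of $\cV$, and the crude residual control via $\Lambda_k^{-1}\preceq\lambda^{-1}I$. This is precisely the route taken in Lemma~D.4 of \cite{jin2020provably}, so your approach is correct and matches the intended proof; the only bookkeeping to watch is that the ambient dimension here is $d^2$ (so the $\tfrac{d}{2}$ in the statement should really be $\tfrac{d^2}{2}$, a harmless imprecision in the paper's transcription) and that the $(a+b)^2\le 2a^2+2b^2$ split introduces a constant factor on the main term which is absorbed, as you note, into the unspecified constants.
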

\begin{lemma}[Lemma D.6 in \cite{jin2020provably}]\label{lem:linear convering number}

    Let $\gV$ denote a class of functions mapping from domain $\gX$ to $\RR$ with the following parametric form: 
    \begin{align*}
        V(\cdot)=w^\top\phi(\cdot)+\beta \norm{\phi(\cdot)}_{\Lambda^{-1}}
    \end{align*}
    where $\phi\in\RR^d$ are features on domain $\gX$.
    The parameters satisfy $\norm{w}\leq L$, $ \lambda_{\min}(\Lambda)\geq \lambda$, $\beta\in[0,B]$, and $\norm{\phi(\cdot)}\leq1$. Then, the log covering number can be bounded as:
    \begin{align*}
        \log\bracket{(\cN_C(\epsilon,\gV,\norm{\cdot}_\infty)}\leq d \log (1+4L/\epsilon)+d^2\log(1+8d^{1/2}B^2/\lambda\epsilon^2)
    \end{align*}
\end{lemma}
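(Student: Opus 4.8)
The plan is to prove this sup-norm covering bound by the standard $\epsilon$-net argument over the parameters defining $\gV$, after first absorbing the bonus term into a single matrix. Each $V \in \gV$ is determined by a triple $(w,\beta,\Lambda)$, and the key reparametrization is to set $A := \beta^2 \Lambda^{-1}$, so that $\beta\norm{\phi(\cdot)}_{\Lambda^{-1}} = \sqrt{\phi(\cdot)^\top A\,\phi(\cdot)}$ and hence $V(\cdot) = w^\top\phi(\cdot) + \sqrt{\phi(\cdot)^\top A\,\phi(\cdot)}$. This collapses the two bonus parameters $(\beta,\Lambda)$ into one symmetric positive-semidefinite matrix $A$, which is far more convenient to cover than $(\beta,\Lambda)$ separately.

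First I would establish a Lipschitz-type control of the sup-norm distance in terms of the parameter differences. For $V_1,V_2\in\gV$ with parameters $(w_1,A_1)$ and $(w_2,A_2)$, using the elementary inequality $\abs{\sqrt{x}-\sqrt{y}}\leq\sqrt{\abs{x-y}}$, Cauchy--Schwarz, and $\norm{\phi(\cdot)}\leq 1$, at every point of $\gX$ one has
$$\abs{V_1-V_2} \leq \abs{(w_1-w_2)^\top\phi} + \sqrt{\abs{\phi^\top(A_1-A_2)\phi}} \leq \norm{w_1-w_2} + \sqrt{\norm{A_1-A_2}_F},$$
where the matrix term is controlled by $\abs{\phi^\top(A_1-A_2)\phi}\leq\norm{A_1-A_2}_F\norm{\phi}^2\leq\norm{A_1-A_2}_F$. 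Taking the supremum over $\gX$ gives $\norm{V_1-V_2}_\infty \leq \norm{w_1-w_2} + \sqrt{\norm{A_1-A_2}_F}$, which reduces covering $\gV$ to covering the two parameter sets.

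Next I would bound the parameter ranges and form a product net. The weight $w$ lies in the Euclidean ball of radius $L$ in $\RR^d$, while $A=\beta^2\Lambda^{-1}$ satisfies $\norm{A}_{\op}\leq\beta^2/\lambda_{\min}(\Lambda)\leq B^2/\lambda$, hence $\norm{A}_F\leq\sqrt{d}\,\norm{A}_{\op}\leq\sqrt{d}\,B^2/\lambda$, so $A$ sits in a Frobenius ball of radius $\sqrt{d}\,B^2/\lambda$ in the $d^2$-dimensional matrix space. I take an $(\epsilon/2)$-net $\cC_w$ for the $w$-ball and an $(\epsilon^2/4)$-net $\cC_A$ (in Frobenius norm) for the $A$-ball; by the Lipschitz bound the product parameterization is an $\epsilon$-net, since $\epsilon/2 + \sqrt{\epsilon^2/4} = \epsilon$. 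Invoking the standard volumetric estimate that a radius-$R$ ball in $\RR^n$ has an $\eta$-net of size at most $(1+2R/\eta)^n$, with $(n,R,\eta)=(d,L,\epsilon/2)$ and $(d^2,\sqrt{d}\,B^2/\lambda,\epsilon^2/4)$ respectively, yields
$$\cN_C(\epsilon,\gV,\norm{\cdot}_\infty) \leq \bracket{1+4L/\epsilon}^d\cdot\bracket{1+8\sqrt{d}\,B^2/(\lambda\epsilon^2)}^{d^2},$$
and taking logarithms gives exactly the claimed bound.

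The main obstacle is the correct handling of the bonus term rather than any of the net counting: one must verify that the reparametrization $A=\beta^2\Lambda^{-1}$ is legitimate and, crucially, that $\abs{\sqrt{x}-\sqrt{y}}\leq\sqrt{\abs{x-y}}$ turns an additive perturbation of $A$ into an \emph{$\epsilon^2$-scale} covering requirement — this is precisely what produces the $\epsilon^2$ in the denominator and the $d^2$ exponent in the second factor. The only remaining care point is the distinction between external and internal nets under Definition~\ref{def:cover}, which is resolved in the routine way (an external $\epsilon/2$-net induces an internal $\epsilon$-net of the same order), and does not affect the stated bound.
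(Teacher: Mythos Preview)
Your proposal is correct and is exactly the standard argument from Jin et al.\ (2020, Lemma D.6): reparametrize the bonus as $A=\beta^{2}\Lambda^{-1}$, use $|\sqrt{x}-\sqrt{y}|\le\sqrt{|x-y|}$ to obtain the sup-norm Lipschitz bound $\norm{V_1-V_2}_\infty\le\norm{w_1-w_2}+\sqrt{\norm{A_1-A_2}_F}$, and then apply volumetric covering of the $w$-ball and the $A$-ball. The present paper does not give its own proof of this lemma---it simply cites it as Lemma D.6 of \cite{jin2020provably}---so there is no alternative approach in the paper to compare against; your derivation matches the cited one.
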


\begin{lemma}[Lemma B.2 in \cite{jin2020provably}]\label{lem:linear weight algo}
    For $h\in[H]$ and $k\in[K]$, if $V_{h+1}^k(s_{h+1},y_{h+1})\leq H$, then $\norm{w_h^k}_2\leq Hd\sqrt{k/\lambda}$
\end{lemma}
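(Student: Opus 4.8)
The plan is to reproduce the standard self-normalized weight-bound argument (the verbatim content of Lemma B.2 of \cite{jin2020provably}), carried over to the \emph{quadratic} feature $\psi(s,a)\in\RR^{d^2}$ obtained by flattening $\phi(s,a)\phi(s,a)^\top$. I start from the variational characterization $\norm{w_h^k}_2 = \max_{\norm{v}_2=1} v^\top w_h^k$, so it suffices to bound $v^\top w_h^k$ uniformly over unit vectors $v$. Plugging in the closed form computed in the algorithm, namely $w_h^k(y_h) = \Lambda_{k,h}^{-1}\sum_{i=1}^{k-1}\psi(s_h^i,a_h^i)V_{h+1}^k(s_{h+1}^i, y_h + r_h^i)$, yields
\begin{equation*}
    v^\top w_h^k = \sum_{i=1}^{k-1} \bracket{v^\top \Lambda_{k,h}^{-1}\psi(s_h^i,a_h^i)}\, V_{h+1}^k(s_{h+1}^i, y_h + r_h^i)\,,
\end{equation*}
where I abbreviate $\psi_i := \psi(s_h^i,a_h^i)$.

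Next I would invoke the hypothesis $V_{h+1}^k \leq H$ together with the nonnegativity enforced by the updates $V_h^k = \max\{Q_h^k, 0\}$ and $V_{H+1} = (-y)^+ \geq 0$, which give $0 \le V_{h+1}^k \le H$ and hence $\abs{V_{h+1}^k}\leq H$. This reduces the task to controlling $\sum_{i=1}^{k-1}\abs{v^\top \Lambda_{k,h}^{-1}\psi_i}$. The first key step is a Cauchy--Schwarz inequality in the inner product induced by $\Lambda_{k,h}^{-1}$: $\abs{v^\top \Lambda_{k,h}^{-1}\psi_i} \leq \norm{v}_{\Lambda_{k,h}^{-1}}\,\norm{\psi_i}_{\Lambda_{k,h}^{-1}}$. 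Because $\Lambda_{k,h} \succeq \lambda\mathbf{I}$ by construction, every unit vector satisfies $\norm{v}_{\Lambda_{k,h}^{-1}} \leq 1/\sqrt{\lambda}$, which pulls out a clean factor of $1/\sqrt{\lambda}$.

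I would then apply a second Cauchy--Schwarz, this time over the episode index, to obtain $\sum_{i=1}^{k-1}\norm{\psi_i}_{\Lambda_{k,h}^{-1}} \leq \sqrt{(k-1)\sum_{i=1}^{k-1}\norm{\psi_i}_{\Lambda_{k,h}^{-1}}^2}$, and bound the summed squared norms by a trace identity: $\sum_{i}\norm{\psi_i}_{\Lambda_{k,h}^{-1}}^2 = \Tr\bracket{\Lambda_{k,h}^{-1}\sum_i \psi_i\psi_i^\top} \leq \Tr\bracket{\Lambda_{k,h}^{-1}\Lambda_{k,h}} = d^2$, using $\sum_i \psi_i\psi_i^\top = \Lambda_{k,h} - \lambda\mathbf{I} \preceq \Lambda_{k,h}$ and the fact that $\psi_i$ lives in $\RR^{d^2}$. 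Chaining the three bounds gives $\norm{w_h^k}_2 \leq H \cdot \tfrac{1}{\sqrt{\lambda}} \cdot \sqrt{(k-1)\, d^2} \leq Hd\sqrt{k/\lambda}$, which is the stated inequality.

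Since this is the linear-MDP weight bound transplanted into the augmented setting, the argument is essentially routine and I do not anticipate a genuine obstacle; the only point that demands care is the dimension bookkeeping for the quadratic feature. Specifically, the trace of the identity here is $d^2$ rather than $d$, and it is exactly $\sqrt{d^2} = d$ that collapses to the advertised factor $d$ in the final bound. I would also note, as a sanity check, that $\norm{\psi(s,a)}_2 = \norm{\phi(s,a)\phi(s,a)^\top}_F = \norm{\phi(s,a)}_2^2 \leq 1$; this boundedness is not actually needed for the weight-norm estimate (only $\Lambda_{k,h}\succeq\lambda\mathbf{I}$ and $\sum_i\psi_i\psi_i^\top\preceq\Lambda_{k,h}$ are used), but it confirms $\psi$ is a legitimate bounded feature and keeps the quadratic construction consistent with the covering-number lemmas used elsewhere in the analysis.
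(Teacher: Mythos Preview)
Your proof is correct and is precisely the standard argument behind Lemma B.2 of \cite{jin2020provably}, adapted to the $d^2$-dimensional feature $\psi$; the paper does not supply its own proof of this lemma and simply invokes the cited result, so your approach coincides with the intended one. The one substantive check---that the trace bound is $d^2$ rather than $d$, yielding the factor $d=\sqrt{d^2}$ in the stated inequality---you carried out correctly.
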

\begin{lemma}[Lemma B.1 in \cite{jin2020provably}]\label{lem:linear weight}
    For any $h\in[H]$, $\norm{w_h^\pi}\leq 2Hd$
\end{lemma}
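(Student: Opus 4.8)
The plan is to mirror the weight-norm bound for linear MDPs (Lemma B.1 of \cite{jin2020provably}), adapted to the \emph{quadratic} feature $\psi(s,a)=\mathrm{vec}(\phi(s,a)\phi(s,a)^\top)$ that arises after augmenting and linearizing the MDP. First I would record the exact form of $w_h^\pi(y_h)$. By the linear completeness computation in this section, the augmented transition factorizes as $\TT_h(s_{h+1},r_h\mid s_h,a_h)=\psi(s_h,a_h)^\top\chi(r_h,s_{h+1})$ with $\chi(r_h,s_{h+1})=\mathrm{vec}(\theta_h(r_h)\mu_h(s_{h+1})^\top)$, and the statistical functional obeys the pure backup $Q_h^\pi=\TT_h V_{h+1}^\pi$. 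Substituting gives $Q_h^\pi(s_h,y_h,a_h)=\psi(s_h,a_h)^\top w_h^\pi(y_h)$ with
\begin{equation*}
w_h^\pi(y_h)=\sum_{s_{h+1},r_h}\chi(r_h,s_{h+1})\,V_{h+1}^\pi(s_{h+1},y_h+r_h)=\mathrm{vec}\!\bracket{W_h^\pi(y_h)},
\end{equation*}
where $W_h^\pi(y_h)=\sum_{s_{h+1},r_h}\theta_h(r_h)\mu_h(s_{h+1})^\top V_{h+1}^\pi(s_{h+1},y_h+r_h)$. Since vectorization is an isometry, $\norm{w_h^\pi(y_h)}_2=\norm{W_h^\pi(y_h)}_F$, so it suffices to bound the Frobenius norm of this matrix uniformly in $y_h$.

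Second, I would note that $\norm{V_{h+1}^\pi}_\infty\le H$. By definition $V_h^\pi(s_h,y_h)=\mathbb{E}_\pi\mbracket{(-Z_h-y_h)^+}$, where $Z_h$ is the reward-to-go; since $Z_h\in[0,H]$ and $y_h\ge -H$ on every reachable augmented state, the integrand $(-Z_h-y_h)^+$ lies in $[0,H]$, so $0\le V_h^\pi\le H$ for all $h$ (equivalently, by backward induction, each backup is a convex combination under the probability kernel $\TT_h(\cdot\mid s,a)$ and hence preserves the range $[0,H]$ starting from $V_{H+1}^\pi(s,y)=(-y)^+$).

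Finally, I would bound $\norm{W_h^\pi(y_h)}_F$ entrywise. With $|V_{h+1}^\pi|\le H$ from the previous step, the $(i,j)$ entry satisfies
\begin{equation*}
\abs{W_h^\pi(y_h)_{ij}}=\abs{\sum_{s_{h+1},r_h}\theta_h(r_h)_i\,\mu_h(s_{h+1})_j\,V_{h+1}^\pi(s_{h+1},y_h+r_h)}\le H\bracket{\sum_{r_h}\abs{\theta_h(r_h)_i}}\bracket{\sum_{s_{h+1}}\abs{\mu_h(s_{h+1})_j}}.
\end{equation*}
Summing the squares factorizes across the two indices, so $\norm{W_h^\pi(y_h)}_F^2\le H^2\bracket{\sum_i(\sum_{r_h}|\theta_h(r_h)_i|)^2}\bracket{\sum_j(\sum_{s_{h+1}}|\mu_h(s_{h+1})_j|)^2}$. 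Each bracket is the squared $\ell_2$ norm of the per-coordinate total variation of one of the two measures, which is $\le d$ under the (discretized) linear MDP normalization $\norm{|\theta_h|}_2\le\sqrt d$ and $\norm{|\mu_h|}_2\le\sqrt d$; hence $\norm{w_h^\pi(y_h)}_2=\norm{W_h^\pi(y_h)}_F\le Hd\le 2Hd$, uniformly in $y_h$. The main obstacle—and the step I would be most careful about—is that the argument $y_h+r_h$ couples $r_h$ with $s_{h+1}$ inside $V_{h+1}^\pi$, so $W_h^\pi(y_h)$ is \emph{not} a single outer product $(\sum_{r}\theta_h(r))(\sum_{s'}\mu_h(s'))^\top$; the entrywise estimate above sidesteps this because bounding $|V_{h+1}^\pi|\le H$ first decouples the two sums, and using the per-coordinate total-variation normalization (rather than a crude $\sum_{r}\norm{\theta_h(r)}$, which would leak an extra factor of the grid size $M$) is exactly what keeps the bound at $Hd$.
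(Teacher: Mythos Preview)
The paper does not provide its own proof of this lemma; it simply cites Lemma~B.1 of \cite{jin2020provably}, understood to be applied with the $d^2$-dimensional feature $\psi$ in place of the original $d$-dimensional $\phi$ (so that the Jin et al.\ bound $2H\sqrt{d}$ becomes $2Hd$). Your adaptation is correct and is exactly the natural extension of that argument to the quadratic-feature augmented setting: you write $w_h^\pi(y_h)=\mathrm{vec}(W_h^\pi(y_h))$, bound $\|V_{h+1}^\pi\|_\infty\le H$, and control $\|W_h^\pi(y_h)\|_F$ entrywise, correctly observing that the $y_h+r_h$ coupling is harmless once the value function is replaced by its sup-norm bound. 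In fact, because the reward distribution here is absorbed into $\chi$ (there is no separate additive $\theta_h$ term as in the risk-neutral linear MDP backup), you obtain the slightly sharper constant $Hd\le 2Hd$.

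The one implicit assumption worth flagging is the standard linear-MDP normalization you invoke, namely that the per-coordinate total variations satisfy $\sum_i(\sum_{r}|\theta_h(r)_i|)^2\le d$ and $\sum_j(\sum_{s'}|\mu_h(s')_j|)^2\le d$. The paper's Definition of discretized linear MDP does not state these bounds explicitly, but they are the standard regularity conditions from \cite{jin2020provably} and are precisely what is needed for the cited lemma to transfer; without them neither the paper's citation nor your argument would go through.
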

\begin{lemma}\label{lem:linear martingale}
    If $V_{h+1}^k\leq H$ for any $k\in[K]$, then there exists a constant $C$, for any $y_h\in\sets{z_i}_{i=1}^M$, $\delta>0$, with probability $1-\delta$, we can bound the self normalized martingale process as:
    \begin{equation*}
        \norm{\sum_{i=1}^{k-1}\psi_{i,h}\bracket{V_{h}^k(s_{h}^i,y_{h-1}+r_{h-1}^i)-\TT_{h-1} V_{h}^k(s_{h-1}^i,y_{h-1},a_{h-1}^i)}}_{\Lambda_{k,h}^{-1}}^2\leq C d^4 H^2M{\log(c_\beta d^2 KM/\delta)}
    \end{equation*}
   
\end{lemma}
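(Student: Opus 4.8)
The plan is to treat the vector-valued sum as a self-normalized martingale and to control it uniformly over the data-dependent value function $V_h^k$ by a covering argument, following the linear-MDP template of \cite{jin2020provably} adapted to the augmented, discretized setting. First I would fix the augmented coordinate $y_{h-1}\in\{z_i\}_{i=1}^M$ and observe that for any \emph{deterministic} value function $V$ of range $O(H)$, the scalars $V(s_h^i,y_{h-1}+r_{h-1}^i)-\TT_{h-1}V(s_{h-1}^i,y_{h-1},a_{h-1}^i)$ form a martingale difference sequence for the trajectory filtration: conditioned on $(s_{h-1}^i,a_{h-1}^i)$, hence on the predictable feature $\psi_{i,h}$ with $\|\psi_{i,h}\|\le1$, the conditional mean of $V(s_h^i,y_{h-1}+r_{h-1}^i)$ over the reward--transition is exactly $\TT_{h-1}V(s_{h-1}^i,y_{h-1},a_{h-1}^i)$ by definition of the augmented operator. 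This lets me invoke the self-normalized concentration inequality (Lemma~\ref{lem:linear selfnormalized}) with feature dimension $d^2$ and regularizer $\lambda$, which produces a bound of the form $H^2\big[\tfrac{d^2}{2}\log\tfrac{k+\lambda}{\lambda}+\log(\cN_C(\epsilon,\gV,\|\cdot\|_\infty)/\delta)\big]+8k^2\epsilon^2/\lambda$ holding simultaneously for every $V$ in a fixed class $\gV$; since the learned $V_h^k$ lies in $\gV$, the bound transfers to it.

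The second step is to pin down $\gV$ and estimate its covering number. By the UCB value-iteration update, $V_h^k(s_h,y_h)=\max\{\min_a[\psi(s_h,a)^\top w_h^k(y_h)-\beta\|\psi(s_h,a)\|_{\Lambda_{k,h}^{-1}}],0\}$, so each slice $s\mapsto V_h^k(s,y)$ has exactly the parametric form treated by Lemma~\ref{lem:linear convering number} (the $\min_a$ and the truncation at $0$ are $1$-Lipschitz in $\|\cdot\|_\infty$ and do not enlarge the cover). Using the boundedness hypothesis on the value functions together with Lemma~\ref{lem:linear weight algo} to get $\|w_h^k\|\le Hd\sqrt{k/\lambda}=:L$, and $\beta\le B$, Lemma~\ref{lem:linear convering number} yields $\log\cN_{\mathrm{single}}\le d^2\log(1+4L/\epsilon)+d^4\log(1+8d\,B^2/(\lambda\epsilon^2))$. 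The crucial observation is that the noise $V_h^k(s_h,y_{h-1}+r_{h-1})$, viewed as a function of $(s_h,r_{h-1})$ for fixed $y_{h-1}$, is determined by the $M$ slices $\{V_h^k(\cdot,y_{h-1}+z_i)\}_{i=1}^M$ simultaneously, since $r_{h-1}$ ranges over the grid; hence the relevant class $\gV$ is (covered by) the product of $M$ single-coordinate classes and $\log\cN_C(\epsilon,\gV,\|\cdot\|_\infty)\le M\log\cN_{\mathrm{single}}$. This product structure is precisely where the multiplicative factor $M$ enters.

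Finally I would assemble the pieces: choosing $\epsilon=\Theta(1/K)$ makes the tail $8k^2\epsilon^2/\lambda$ lower order and collapses the polynomial arguments of the logarithms into a single $\log(c_\beta d^2KM/\delta)$ factor after substituting $L,B,\beta=\mathrm{poly}(d,H,M,K)$ and $\lambda=\Theta(1)$. The dominant contribution $H^2\cdot M\log\cN_{\mathrm{single}}\lesssim d^4H^2M\log(c_\beta d^2KM/\delta)$ then gives the claimed bound, with the determinant term $\tfrac{d^2}{2}\log\tfrac{k+\lambda}{\lambda}$ and the union bound over $y_{h-1}\in\{z_i\}_{i=1}^M$ (and over $h\in[H]$ if one wants all layers at once) absorbed into the logarithm. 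I expect the main obstacle to be the covering-number bookkeeping: one must verify that (i) $V_h^k$ genuinely has range $O(H)$ so the self-normalized lemma applies with the stated $H^2$ prefactor, (ii) the quadratic feature dimension $d^2$ propagates correctly through Lemma~\ref{lem:linear convering number} to yield the $d^4$ term rather than $d^2$, and (iii) the product over the $M$ grid points is the correct accounting for the reward-dependence of the noise — mismatching any of these changes the final $d^4H^2M$ scaling.
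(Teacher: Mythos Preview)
Your proposal is correct and follows essentially the same route as the paper: invoke the self-normalized concentration (Lemma~\ref{lem:linear selfnormalized }) for the $d^2$-dimensional features $\psi$, bound the covering number of each slice $V_h^k(\cdot,z_i)$ via Lemma~\ref{lem:linear convering number} using $\|w_h^k\|\le Hd\sqrt{k/\lambda}$ from Lemma~\ref{lem:linear weight algo}, take the product over the $M$ grid points, and set $\epsilon=1/k$, $\lambda=1$. The paper is marginally tighter in that it notes the bonus matrix $\Lambda_{k,h}^{-1}$ is shared across the $M$ slices (so the $d^4$ term does not pick up a factor of $M$, only the $d^2$ weight term does), but this refinement does not change the final $d^4H^2M$ scaling and your accounting already suffices.
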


\begin{proof}
    According to Lemma \ref{lem:linear weight algo}, we have that $\norm{w_h^k}_2\leq Hd\sqrt{k/\lambda}$. According to Lemma \ref{lem:linear convering number}, we have for a fixed $z_i$ the covering number of the function class $V_h^k(s_h,z_i)$ can be bounded as (notice that $\psi\in\RR^{d^2}$):
\begin{align*}
    \log\bracket{(\cN_C(\epsilon,\gV(\cdot,z_i),\norm{\cdot}_\infty)}\leq d^2 \log (1+4Hd\lambda/k\epsilon)+d^4\log(1+8d\beta^2/\lambda\epsilon^2)
\end{align*}
Then we can bound the entire function class by regarding each $V(\cdot,z_i)$ an individual function. Thus, the total covering number can be bounded as:
\begin{align*}
    \log\bracket{(\cN_C(\epsilon,\gV,\norm{\cdot}_\infty)}\leq  Md^2 \log (1+4Hd\lambda/k\epsilon)+d^4\log(1+8d\beta^2/\lambda\epsilon^2)
\end{align*}
Thus, we can apply Lemma \ref{lem:linear selfnormalized }, and we have the result by choosing $\epsilon=1/k$, $\lambda=1$ and $\beta=c_\beta d^4HM\sqrt{\log(d^2UMK/\delta)}$.
\end{proof}

\subsection{Proof of Theorem~\ref{thm:linearcvar}}

Define the event
\ref{eq:linear optimism at h} at $h\in[H]$ and $k\in[K]$ as:
\begin{equation}\label{eq:linear optimism at h}\tag{Optimism}
   Q_{h}^{k}(s_{h},y_{h},a_{h})\leq Q_{h}^{*}(s_{h},y_{h},a_{h})
\end{equation}
Also, define 
\begin{align*}
        \psi(s,a)w_h^k(y)-Q_h^\pi(s,y,a)-\TT_h (V_{h+1}^k-V_{h+1}^\pi)(s,a,y)=\Delta_h^k(s,a) 
\end{align*}

Then we have the following concentration result:
\begin{lemma}\label{lem:linear concentration}
    If \ref{eq:linear optimism at h} holds at $h\in[H]$ and $k\in[K]$. We have that $\Delta_{h}^k(s,a)\leq\beta \norm{\psi(s,a)}_{\Lambda_{k,h}^{-1}}$.
\end{lemma}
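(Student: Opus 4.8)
## Proof Plan for Lemma \ref{lem:linear concentration}

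The plan is to expand the definition of $\Delta_h^k(s,a)$ using linear completeness and reduce the quantity to a self-normalized martingale term that is controlled by Lemma \ref{lem:linear martingale}. Recall that by linear completeness we have $Q_h^\pi(s,y,a) = (\TT_h V_{h+1}^\pi)(s,y,a) = \psi(s,a)^\top w_h$ for the true weight vector $w_h$ satisfying $w_h = \Lambda_{k,h}^{-1}\Lambda_{k,h} w_h$. First I would write
\begin{align*}
    \psi(s,a)^\top w_h^k(y) - (\TT_h V_{h+1}^k)(s,y,a)
    &= \psi(s,a)^\top \Lambda_{k,h}^{-1}\Bigl(\sum_{i=1}^{k-1}\psi(s_h^i,a_h^i)V_{h+1}^k(s_{h+1}^i, y+r_h^i)\Bigr) - (\TT_h V_{h+1}^k)(s,y,a)\,,
\end{align*}
and then, using $\TT_h V_{h+1}^k(s,y,a) = \psi(s,a)^\top \theta^k$ for the exact projection weight (which exists by Definition \ref{def:discretized mdp} applied to the augmented transition operator $\TT_h$), insert the regularization term and split into a ``bias'' piece from $\lambda \mathbf{I}$ and a ``noise'' piece. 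This is the standard LSVI decomposition adapted to the augmented/discretized linear MDP.

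Next I would bound each piece. For the noise piece, I would apply Cauchy--Schwarz in the $\Lambda_{k,h}^{-1}$ inner product:
\begin{align*}
    \Bigl|\psi(s,a)^\top \Lambda_{k,h}^{-1}\sum_{i=1}^{k-1}\psi(s_h^i,a_h^i)\bigl(V_{h+1}^k(s_{h+1}^i,y+r_h^i) - \TT_h V_{h+1}^k(s_h^i,y,a_h^i)\bigr)\Bigr|
    \le \norm{\psi(s,a)}_{\Lambda_{k,h}^{-1}}\cdot\Bigl\|\sum_{i=1}^{k-1}\psi(s_h^i,a_h^i)(\cdots)\Bigr\|_{\Lambda_{k,h}^{-1}}\,,
\end{align*}
and then use Lemma \ref{lem:linear martingale} (which gives the bound $Cd^4H^2M\log(c_\beta d^2 KM/\delta)$ on the squared norm of that martingale sum, valid since $V_{h+1}^k \le H$ follows from the clipping in the algorithm) together with the choice of $\beta = c_\beta d^2 H\sqrt{M\log(d^2 UMK/\delta)}$ so that $\beta^2$ dominates the martingale bound up to the constant $c_\beta$. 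For the bias piece, I would use $\norm{w_h}\le 2Hd$ from Lemma \ref{lem:linear weight} together with $\lambda = 1$ and $\norm{\psi}\le 1$ to show $|\lambda\psi(s,a)^\top\Lambda_{k,h}^{-1}w_h| \le \sqrt{\lambda}\norm{w_h}\norm{\psi(s,a)}_{\Lambda_{k,h}^{-1}} \le 2Hd\norm{\psi(s,a)}_{\Lambda_{k,h}^{-1}}$, which is again absorbed into $\beta\norm{\psi(s,a)}_{\Lambda_{k,h}^{-1}}$.

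Finally, I would observe that $\TT_h(V_{h+1}^k - V_{h+1}^\pi)(s,y,a) = (\TT_h V_{h+1}^k)(s,y,a) - Q_h^\pi(s,y,a)$, so that
$$\Delta_h^k(s,a) = \psi(s,a)^\top w_h^k(y) - (\TT_h V_{h+1}^k)(s,y,a)\,,$$
i.e. the cross terms involving $Q_h^\pi$ and $V_{h+1}^\pi$ cancel exactly, reducing $\Delta_h^k$ to precisely the LSVI estimation error analyzed above; combining the two bounds gives $\Delta_h^k(s,a) \le \beta\norm{\psi(s,a)}_{\Lambda_{k,h}^{-1}}$. The main obstacle I anticipate is making the martingale/covering argument in Lemma \ref{lem:linear martingale} truly uniform over the random quantities $y_h$ (which lie in the discretized grid $\{z_i\}$, hence only $M$ values — this is exactly why the $\sqrt{M}$ factor appears) and over $V_{h+1}^k$ (which depends on the data through $w_{h+1}^k$ and $\beta$), so that the concentration bound can be invoked at the data-dependent $V_{h+1}^k$ rather than a fixed function; this is handled by the covering-number bound of Lemma \ref{lem:linear convering number} applied to the quadratic-in-$\psi$ function class, but the bookkeeping of which $\delta$-net and which horizon/episode union bounds are taken is the delicate part. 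A secondary subtlety is confirming that the hypothesis $V_{h+1}^k \le H$ of Lemma \ref{lem:linear martingale} is maintained, which should follow inductively from the $\max\{\cdot,0\}$ clip and the fact that the functionals $(-z_i - y_h)^+$ are bounded by $H$.
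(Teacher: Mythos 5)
Your proposal is correct and follows the same overall route as the paper (LSVI-style decomposition of the regression weight, Cauchy--Schwarz against $\norm{\psi(s,a)}_{\Lambda_{k,h}^{-1}}$, the self-normalized martingale bound of Lemma~\ref{lem:linear martingale} with its covering argument, and a $\lambda$-regularization bias term), but your bookkeeping is genuinely cleaner. You first observe that since $Q_h^\pi = \TT_h V_{h+1}^\pi$, the definition of $\Delta_h^k$ collapses to $\Delta_h^k(s,a) = \psi(s,a)^\top w_h^k(y) - (\TT_h V_{h+1}^k)(s,y,a)$, so the whole quantity is a single LSVI estimation error for the data-dependent target $V_{h+1}^k$. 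The paper instead keeps $Q_h^\pi$ and $V_{h+1}^\pi$ in play and splits $w_h^k - w_h^\pi$ into three pieces ($q_1$: regularization bias against $w_h^\pi$; $q_2$: the martingale term; $q_3$: the $\TT_h(V_{h+1}^k - V_{h+1}^\pi)$ cross term, itself split into the exact term $p_1$ that cancels in $\Delta_h^k$ and a residual $p_2$). Your cancellation does this splitting once and for all, at the cost of needing a norm bound on the weight of $\TT_h V_{h+1}^k$ rather than on $w_h^\pi$; note that Lemma~\ref{lem:linear weight} as stated only covers $w_h^\pi$, so you should invoke the same boundedness argument (via $\norm{\sum_{s'}\chi(s',r)g(s')} \lesssim \sqrt{d}\norm{g}_\infty$ and $V_{h+1}^k \le H$) rather than cite that lemma verbatim --- this is exactly what the paper does for its $p_2$ term.

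One small correction: $V_{h+1}^k \le H$ does \emph{not} follow from the $\max\{\cdot,0\}$ clip in the algorithm, which only enforces non-negativity. It follows from the optimism hypothesis of the lemma, via $V_{h+1}^k \le V_{h+1}^* \le H$, which is precisely why the lemma is stated conditionally on the event \ref{eq:linear optimism at h} and why Lemma~\ref{lem:optimism} and this lemma are proved by a joint backward induction. Your final sentence attributes the upper bound to the clip; replace that justification with the optimism event and the argument is complete.
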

\begin{proof}
    Since we have:
\begin{align*}
    \psi(s_h^k,a_h^k)\TT_h V_{h+1}^\pi(s_h^k,a_h^k,y_h)=& \sum_{s_{h+1},r_h}\psi(s_h^k,a_h^k)\psi(s_h^k,a_h^k)^\top\chi(s_{h+1},r_h)V_{h+1}^\pi(s_{h+1},y_h+r_h)=\psi(s_h^k,a_h^k)\psi(s_h^k,a_h^k)^\top w_h^\pi(y_h)
\end{align*}
We bound the bias as:
\begin{align*}
    w_h^k(y_h)-w_h^\pi(y_h)=&\Lambda_{k,h}^{-1}\sets{-\lambda w_h^\pi+\mbracket{\sum_{i=1}^{k-1}\psi_{i,h}\bracket{V_{h+1}^k(s_{h+1}^i,y_h+r_h^i)-\TT_h V_{h+1}^\pi(s_h^i,a_h^i,y_h)}}}\\
    =&\underbrace{-\lambda \Lambda_{k,h}^{-1}w_h^{\pi}}_{q_1}+\underbrace{\Lambda_{k,h}^{-1}\sum_{i=1}^k\psi_{i,h}\bracket{V_{h+1}^k(s_{h+1}^i,y_h+r_h^i)-\TT_h V_{h+1}^k(s_h^i,a_h^i,y_h)}}_{q_2}\\
    &+\underbrace{\Lambda_{k,h}^{-1}\sum_{i=1}^k \psi_{i,h}\TT_h(V_{h+1}^k-V_{h+1}^\pi)(s_h^i,a_h^i,y_h)}_{q_3}
\end{align*}
We have according to Lemma \ref{lem:linear weight}:
\begin{align*}
    \abs{\psi(s_h,a_h)^\top q_1}\leq \sqrt{\lambda}\norm{w_h^\pi}\norm{\psi(s_h,a_h)}_{\Lambda_{k,h}^{-1}}\leq 2\sqrt{\lambda}Hd\norm{\psi(s_h,a_h)}_{\Lambda_{k,h}^{-1}}
\end{align*}
For the second term, on event \ref{eq:linear optimism at h}, we have that $V_{h+1}^k\leq V_{h+1}^*\leq H$. Apply Lemma \ref{lem:linear martingale} we have:
\begin{align*}
    \abs{\psi(s_h,a_h)^{\top}q_2}\leq  C d^2 H\sqrt{M\log(c_\beta d^2 KM/\delta)}\norm{\psi(s_h,a_h)}_{\Lambda_{k,h}^{-1}}
\end{align*}
For the third term, we have:
\begin{align*}
    {\psi(s_h,a_h)^\top q_3}=&{\psi(s_h,a_h)\Lambda_{k,h}^{-1}\sum_{i=1}^k \psi_{i,h}\psi_{i,h}^\top \sum_{s_{h+1},r_h}\chi(s_{h+1},r_h)(V_{h+1}^k-V_{h+1}^\pi)(s_{h+1},y_h+r_h)}\\
    =&\underbrace{\psi(s_h,a_h)^\top\sum_{s_{h+1},r_h}\chi(s_{h+1},r_h)(V_{h+1}^k-V_{h+1}^\pi)(s_{h+1},y_h+r_h) }_{p_1}\\
    &-\lambda\underbrace{\psi(s_h,a_h)\sum_{s_{h+1},r_h}\chi(s_{h+1},r_h)(V_{h+1}^k-V_{h+1}^\pi)(s_{h+1},y_h+r_h)}_{p_2}
\end{align*}
\begin{align*}
    p_1= \TT_h(V_{h+1}^k-V_{h+1}^\pi)(s_h,a_h)
\end{align*}
\begin{align*}
    \abs{p_2}\leq \sqrt{\lambda}d^2U(H-h+1))\norm{\psi(s_h,a_h)}_{\Lambda_{k,h}^{-1}}
\end{align*}
Thus, by choosing the appropriate constants, we have:
\begin{align*}
    \abs{\psi(s_h,a_h)^\top w_h^k(y_h)-Q_h^\pi(s_h,y_h,a_h)-\TT_h(V_{h+1}^k-V_{h+1}^\pi)(s_h,a_h,y_h)}\leq \beta \norm{\psi(s_h,a_h)}_{\Lambda_{k,h}^{-1}}
\end{align*}
\end{proof}
\begin{lemma}[Optimism]\label{lem:optimism}
    For any $h\in[H]$ and $k\in[K]$, event \ref{eq:linear optimism at h} holds
\end{lemma}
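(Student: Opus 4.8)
\textbf{Proof plan for Lemma~\ref{lem:optimism} (Optimism).}
The plan is to prove event~\eqref{eq:linear optimism at h} for all $h \in [H]$ and $k \in [K]$ by backward induction on $h$, for a fixed episode $k$, on the high-probability event that the concentration conclusion of Lemma~\ref{lem:linear martingale} holds simultaneously for all $h$ (so that Lemma~\ref{lem:linear concentration} is applicable whenever the inductive hypothesis gives us \eqref{eq:linear optimism at h} at the next level). First I would fix the base case $h = H+1$: by construction $V_{H+1}^k(s_{H+1},y) = (-y)^+ = V_{H+1}^*(s_{H+1},y)$, so there is nothing to prove at that level, and $V_{H+1}^k \le H$ holds trivially. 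For the inductive step, assume \eqref{eq:linear optimism at h} holds at step $h+1$ (equivalently $Q_{h+1}^k \le Q_{h+1}^*$ pointwise, which by taking the minimizing action and truncating at $0$ yields $V_{h+1}^k \le V_{h+1}^*$, hence $V_{h+1}^k \le H$ since $V_{h+1}^* \le H$).

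Next I would expand $Q_h^k$ using the definition in Algorithm~\ref{alg:linear CVaR}: $Q_h^k(s_h,y_h,a_h) = \psi(s_h,a_h)^\top w_h^k(y_h) - b_h^k(s_h,a_h)$, and compare it to $Q_h^*(s_h,y_h,a_h) = (\TT_h V_{h+1}^*)(s_h,y_h,a_h)$, using the augmented Bellman equation for the statistical functional $Q$ derived in the ``Linear Completeness'' subsection. Writing the difference as $\psi(s_h,a_h)^\top w_h^k(y_h) - Q_h^*(s_h,y_h,a_h)$ and adding and subtracting $\TT_h(V_{h+1}^k - V_{h+1}^*)(s_h,a_h,y_h)$, one gets
\[
  Q_h^k - Q_h^* \;=\; \Delta_h^k(s_h,a_h) \;+\; \TT_h(V_{h+1}^k - V_{h+1}^*)(s_h,a_h,y_h) \;-\; b_h^k(s_h,a_h),
\]
where $\Delta_h^k$ is exactly the quantity bounded in Lemma~\ref{lem:linear concentration}. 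Since the inductive hypothesis gives $V_{h+1}^k \le V_{h+1}^*$ pointwise and $\TT_h$ is monotone (it is an average against a transition kernel), the middle term is $\le 0$. Since $V_{h+1}^k \le H$ holds by the inductive hypothesis, Lemma~\ref{lem:linear concentration} applies and gives $\Delta_h^k(s_h,a_h) \le \beta \|\psi(s_h,a_h)\|_{\Lambda_{k,h}^{-1}} = b_h^k(s_h,a_h)$, so the first term cancels against $-b_h^k$. Combining, $Q_h^k(s_h,y_h,a_h) \le Q_h^*(s_h,y_h,a_h)$, which is \eqref{eq:linear optimism at h} at step $h$. This closes the induction; a union bound over the single concentration event (already absorbed into the failure probability $\delta$ in Lemma~\ref{lem:linear martingale}) makes the statement hold with probability at least $1 - \delta$ for all $h,k$.

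The main subtlety I anticipate is the circular-looking dependence between Lemma~\ref{lem:linear concentration} (which assumes \eqref{eq:linear optimism at h} at step $h$ in order to control $q_2$ and ensure $V_{h+1}^k \le H$) and Lemma~\ref{lem:optimism} (which we are trying to prove). This is resolved by the backward-induction structure: when proving optimism at step $h$, we only need the \emph{next-level} optimism \eqref{eq:linear optimism at h} at step $h+1$ to invoke Lemma~\ref{lem:linear concentration}'s hypothesis $V_{h+1}^k \le H$ — in fact a careful reading shows Lemma~\ref{lem:linear concentration}'s proof only uses $V_{h+1}^k \le V_{h+1}^* \le H$, which is the monotonicity consequence of optimism at step $h+1$, not at step $h$. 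The other routine point to check is that the bonus constant $\beta = c_\beta d^2 H \sqrt{M \log(d^2 U M K/\delta)}$ is chosen large enough to dominate all of $|\psi^\top q_1|$, $|\psi^\top q_2|$, and the $p_2$ part of $|\psi^\top q_3|$ in Lemma~\ref{lem:linear concentration}, which is a matter of tracking the explicit constants $2\sqrt\lambda H d$, $Cd^2 H\sqrt{M\log(\cdot)}$, and $\sqrt\lambda d^2 U H$ and absorbing them into $c_\beta$ with $\lambda = 1$.
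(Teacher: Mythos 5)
Your proposal is correct and follows essentially the same route as the paper's proof: backward induction on $h$, decomposing $Q_h^k - Q_h^*$ into the estimation error $\Delta_h^k$ (bounded by the bonus via Lemma~\ref{lem:linear concentration}) plus the propagated term $\TT_h(V_{h+1}^k - V_{h+1}^*)$, which is nonpositive by monotonicity and the inductive hypothesis. Your write-up is in fact more careful than the paper's, which omits the $-b_h^k$ term in its displayed decomposition and does not spell out the base case or the resolution of the apparent circularity with Lemma~\ref{lem:linear concentration}.
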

\begin{proof}
    We prove the Lemma via induction. For any $k\in[K]$, \ref{eq:linear optimism at h} holds at step $H$. Suppose that \ref{eq:linear optimism at h} holds at step $h+1$, then we have from Lemma \ref{lem:linear concentration}, 
    \begin{align*}
        Q_h^k(s_h,y_h,a_h)=&\psi(s_h,a_h)w_h^k(y_h)-\beta\norm{\psi(s,a)}_{\Lambda_{k,h}^{-1}}\\
        =& Q_h^\pi(s_h,y_h,a_h)+\TT_h(V_{h+1}^k-V_{h+1}^\pi)(s_h,a_h,y_h)+\Delta_h^k(s_h,a_h)\\
        \leq Q_h^\pi(s_h,y_h,a_h)
    \end{align*}
    So \ref{eq:linear optimism at h} holds at step $h$.
\end{proof}

\begin{proof}[Proof of Thoerem~\ref{thm:linearcvar}]
    On event \ref{eq:linear optimism at h} for all $h\in[H]$ and $k\in[K]$, we have that:
    \begin{align*}
        b_k-\tau^{-1}V_1^k(s_1,-b_k)\geq b^*-\tau^{-1}V_1^k(s_1,-b^*)\geq b^*-\tau^{-1}V_1^*(s_1,-b^*)=\operatorname{CVaR}_\tau^*
    \end{align*}
    Also, we have that $\operatorname{CVaR}_\tau^{\pi^k}=\argmax_b\sets{b-\tau^{-1}V_1^{\pi^k}(s_1,-b)}\geq b_k-\tau^{-1}V_1^{\pi^k}(s_1,-b_k)$
    Thus, the total regret can be bounded as:
    \begin{align*}
        \sum_{k=1}^K \operatorname{CVaR}_{\tau}^*-\operatorname{CVaR}_{\tau}^{\pi^k}\leq \sum_{k=1}^K
\tau^{-1}\bracket{V_1^{\pi^k}(s_1,-b_k)-V_1^k(s_1,-b_k)}    \end{align*}

Define $\delta_h^k=V_h^{\pi^k}(s_h^k,y_h^k)-V_h^k(s_h^k,y_h^k)$, and $\zeta_{h+1}^k=\EE[\delta_{h+1}^k|s_h^k,y_h^k,a_h^k]-\delta_{h+1}^k$.
According to Lemma \ref{lem:linear concentration}, we have that:
\begin{align*}
    \delta_h^k\leq\delta_{h+1}^k+\zeta_{h+1}^k+2\beta\norm{\psi_{k,h}}_{\Lambda_{k,h}^{-1}}
\end{align*}
Then we have:
\begin{align*}
    \sum_{k=1}^K V_1^{\pi^k}(s_1,-b_k)-V_1^k(s_1,-b_k)\leq\sum_{h=1}^H \sum_{k=1}^K\zeta_h^k+2\sum_{h=1}^H\sum_{k=1}^K\beta  \norm{\psi_{k,h}}_{\Lambda_{k,h}^{-1}}
\end{align*}
Since we have event \ref{eq:linear optimism at h}, we have $\abs{\delta_h^k}\leq H$. Thus, using the Hoeffding inequality, we have:
\begin{align*}
    \sum_{h=1}^H\sum_{k=1}^K \zeta_{h}^k \leq \mathcal{O}\bracket{H\sqrt{K\log(KH/\delta)}}
\end{align*}
Using the standard linear elliptical potential lemma, we obtain:
\begin{align*}
    \sum_{h=1}^H\sum_{k=1}^K \norm{\psi_{k,h}}_{\Lambda_{k,h}^{-1}}\leq \mathcal{O}(Hd\sqrt{K\log(KH/\delta)})
\end{align*}
Then we have the regret bounded as:
\begin{align*}
    \sum_{k=1}^K \operatorname{CVaR}_{\tau}^*-\operatorname{CVaR}_{\tau}^{\pi^k}\leq{\mathcal{\Tilde{O}}}\bracket{\tau^{-1}H^2d^3\sqrt{MK}}
\end{align*}

\end{proof}
\newpage
\subsection{Numerical Experiment Results}\label{sec:experiments}
In this section we provide the details of our numerical experiments. We construct a zero-mean MDP where the expected return for all the state-action pairs are $0$, thus risk-neutral algorithms such as LSVI-UCB of \cite{jin2020provably} will learn nothing. We also compare our results with the optimistic MDP algorithm of \cite{bastani2022regret}. For simplicity we constructed a toy MDP with $S=3$, $A=2$, $d=2$, $H=6$, $M=3$. The results are in Figure \ref{fig:figures}. From the figures we can see that the risk-neutral algorithm LSVI-UCB fails to learn anything, while the model-based algorithm of \cite{bastani2022regret} converges much slower than ours. Also, with smaller $\tau$ we have larger regret, which is consistent with previous analysis.
\begin{figure}[h!]

\begin{subfigure}{0.45\textwidth}
    \includegraphics[width=\textwidth]{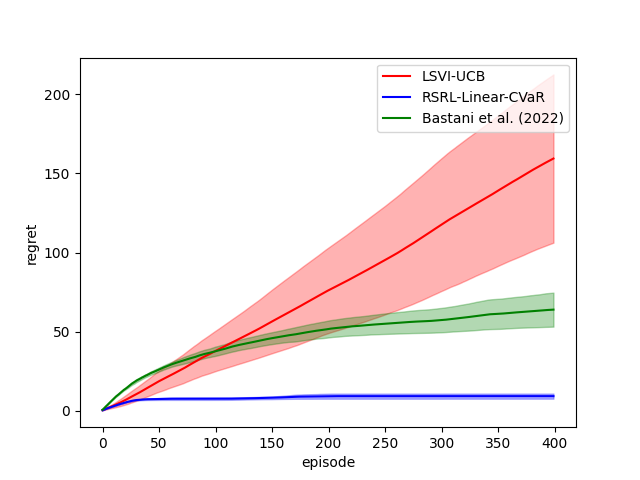}
    \caption{$\operatorname{CVaR}_{0.2}$}
    \label{fig:first}
\end{subfigure}
\hfill
\begin{subfigure}{0.45\textwidth}
    \includegraphics[width=\textwidth]{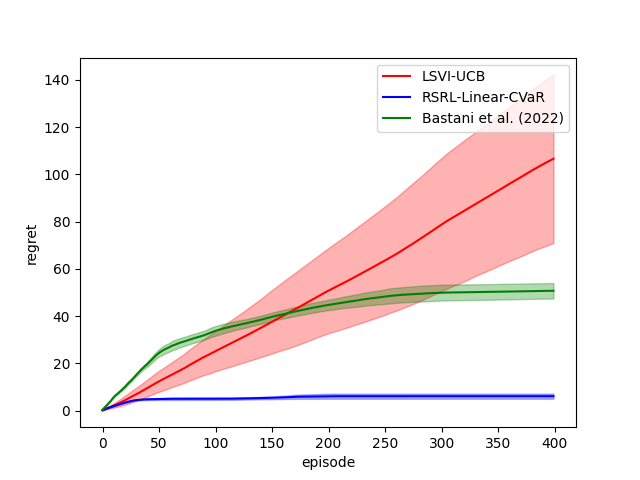}
    \caption{$\operatorname{CVaR}_{0.3}$.}
    \label{fig:second}
\end{subfigure}
\hfill
\begin{subfigure}{0.45\textwidth}
    \includegraphics[width=\textwidth]{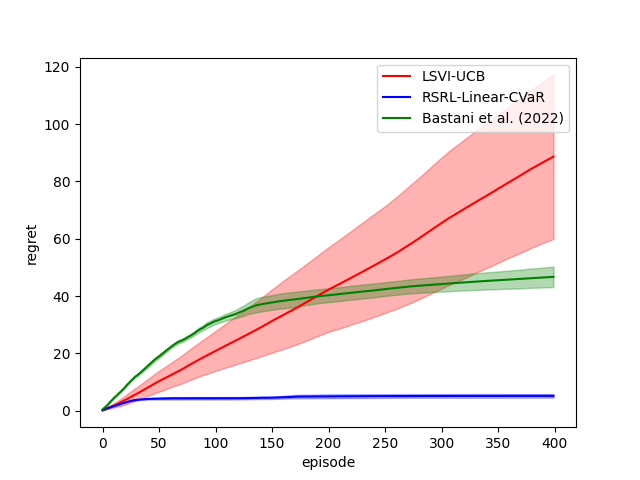}
    \caption{$\operatorname{CVaR}_{0.5}$}
    \label{fig:third}
\end{subfigure}
\hfill
\begin{subfigure}{0.45\textwidth}
    \includegraphics[width=\textwidth]{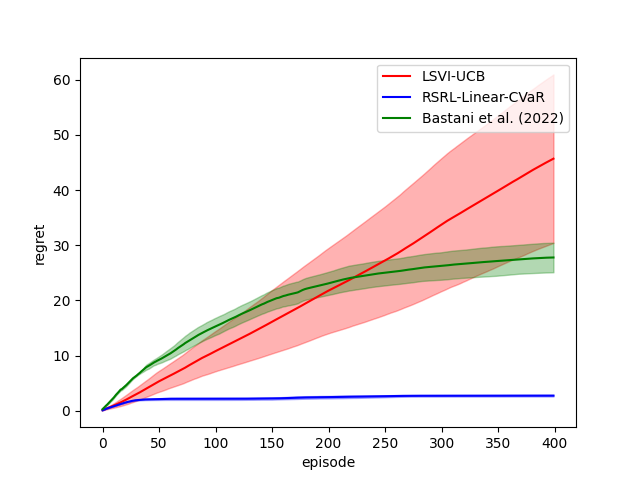}
    \caption{$\operatorname{CVaR}_{0.7}$}
    \label{fig:fourth}
\end{subfigure}
        
\caption{Comparison for different algorithms for the CVaR objective $\operatorname{CVaR}_{\tau}$ under different risk parameter $\tau$.}
\label{fig:figures}
\end{figure}

\end{document}

% This document was modified from the file originally made available by
% Pat Langley and Andrea Danyluk for ICML-2K. This version was created
% by Iain Murray in 2018, and modified by Alexandre Bouchard in
% 2019 and 2021 and by Csaba Szepesvari, Gang Niu and Sivan Sabato in 2022.
% Modified again in 2023 and 2024 by Sivan Sabato and Jonathan Scarlett.
% Previous contributors include Dan Roy, Lise Getoor and Tobias
% Scheffer, which was slightly modified from the 2010 version by
% Thorsten Joachims & Johannes Fuernkranz, slightly modified from the
% 2009 version by Kiri Wagstaff and Sam Roweis's 2008 version, which is
% slightly modified from Prasad Tadepalli's 2007 version which is a
% lightly changed version of the previous year's version by Andrew
% Moore, which was in turn edited from those of Kristian Kersting and
% Codrina Lauth. Alex Smola contributed to the algorithmic style files.